\newcommand{\argmax}{\mathop{\engmbox{\rm argmax}}}
\newcommand{\Rbb}{{\mathbb{R}}}
\newcommand{\fb}{{\bf f}}
\newcommand{\Fb}{{\bf F}}
\newcommand{\hb}{{\bf h}}
\newcommand{\gb}{{\mathbf{g}}}
\newcommand{\wb}{{\bf w}}
\newcommand{\sbb}{{\bf s}}
\newcommand{\xb}{{\bf x}}
\newcommand{\yb}{{\bf y}}
\newcommand{\zb}{{\bf z}}
\newcommand{\eb}{{\bf e}}
\newcommand{\kb}{{\bf k}}
\newcommand{\bb}{{\bf b}}
\newcommand{\EX}{{\mathbb{E}}}
\newcommand{\zetab}{{\bf \zeta}}
\newcommand{\Gb}{{\bf G}}
\newcommand{\Xb}{{\bf X}}
\newcommand{\nz}[1]{\|#1\|_0} 
\newcommand{\nf}[1]{\|#1\|_F} 
\newcommand{\nt}[1]{\|#1\|_2}
\newcommand{\ntb}[1]{\Big\|#1\Big\|_2}
\newcommand{\Vc}{{\cal V}}
\newcommand{\Ec}{{\cal E}}
\newcommand{\thetab}{{\mbox{\boldmath $\theta$}}}
\newcommand{\argmin}{\mathop{{\rm argmin}}}
\newcommand{\supp}{\mbox{{supp}}}
\newcommand{\Oc}{{\mathcal{O}}}
\newcommand{\ub}{{\mathbf u}}
\newcommand{\vb}{{\mathbf v}}
\newcommand{\cC}{{\cal C}}
\newcommand{\Sc}{{\mathcal{S}}}
\newcommand{\Mc}{{\cal M}}
\newcommand{\Nc}{{\mathcal{N}}}
\newcommand{\Pc}{{\mathcal{P}}}
\newcommand{\Hc}{{\cal H}}
\newsavebox\mybox
\newtheorem{theorem}{Theorem}
\renewcommand{\argmax}{\text{argmax}}
\newcommand{\refappendix}[1]{\hyperref[#1]{Appendix~\ref*{#1}}}
\newtheorem{lemma}[theorem]{Lemma}
\newtheorem{definition}{Definition}
\newtheorem{assumption}{Assumption}
\newtheorem{proposition}{Proposition}
\newtheorem{remark}{Remark}
\pgfplotsset{compat=1.18}
\definecolor{coral}{rgb}{0.8, 0.5, 0.31}
\definecolor{BenignGray}{gray}{0.35}
\definecolor{AttackRed}{rgb}{0.78,0.00,0.08}
\definecolor{CClipMagenta}{rgb}{0.80,0.00,0.50}
\definecolor{CClipBuck}{rgb}{0.55,0.00,0.55}
\definecolor{BulyanBuck}{rgb}{0.54, 0.17, 0.89}
\definecolor{RfaGreen}{rgb}{0.10,0.50,0.10}
\definecolor{RfaBuck}{rgb}{0.05,0.35,0.25}
\definecolor{CwMedBlue}{rgb}{0.00,0.30,0.80}
\definecolor{HuberBrown}{RGB}{150,134,54}
\colorlet{FedLAWColor}{orange!80}
\colorlet{BulyanColor}{cyan!98!blue}
\colorlet{KrumColor}{teal!80}
\colorlet{TrimmedColor}{red!80}
\colorlet{NoDefenceColor}{black}
\colorlet{CClipColor}{CClipMagenta}
\colorlet{CClipBuckColor}{CClipBuck}
\colorlet{BulyanBuckColor}{BulyanBuck}
\colorlet{RFAColor}{RfaGreen}
\colorlet{RFABuckColor}{RfaBuck}
\colorlet{CwMedColor}{black!50} 
\colorlet{HuberColor}{HuberBrown}
\pgfplotsset{
  /tikz/fedlaw/.style     ={FedLAWColor,     very thick, mark=*,        mark options={scale=1, fill=FedLAWColor}},
  /tikz/bulyan/.style     ={BulyanColor,     semithick, mark=o,         mark options={scale=1}},
  /tikz/krum/.style       ={KrumColor,       thin,      mark=square*,   mark options={scale=0.7}},
  /tikz/trimmed/.style    ={TrimmedColor,    thin,      mark=triangle*, mark options={scale=1}},
  /tikz/nodefence/.style  ={NoDefenceColor,  thin,      mark=diamond*,  mark options={scale=1}},
  /tikz/cclip/.style      ={CClipColor,      thin,      mark=star,      mark options={scale=1}},
  /tikz/cclipbuck/.style  ={CClipBuckColor,  thin,      mark=asterisk,  mark options={scale=1}},
  /tikz/bulyanbuck/.style ={BulyanBuckColor, thin,      mark=pentagon*, mark options={scale=1}},
  /tikz/rfa/.style        ={RFAColor,        thin,      mark=x,         mark options={scale=1.15}},
  /tikz/rfabuck/.style    ={RFABuckColor,    thin,      mark=+,         mark options={scale=1.1}},
  /tikz/cwmed/.style      ={CwMedColor,      thin,      mark=diamond*,  mark options={scale=1}},
  /tikz/huber/.style      ={HuberColor,      semithick, mark=triangle,  mark options={scale=1}},
}
\pgfplotsset{
  neurips style/.style={
    width=0.9\linewidth,
    height=5.3cm,
    xmin=0, xmax=100,
    line width=0.45pt,
    grid=major,
    grid style={solid, gray!30},
    legend cell align=left,
    legend style={
      font=\footnotesize,
      at={(0.97,0.5)}, anchor=north east,
      draw=none,
      fill=white, fill opacity=0.9,
      /tikz/every even column/.style={column sep=5pt},
    },
    legend columns=2,
  }
}
\newcommand{\MeanBand}[3]{%
  \addplot+[very thick, color=#1, smooth, mark=none, name path=#2mean, forget plot]
           table[x index=0, y index=1, col sep=space]{#3};
  \addplot+[draw=none, mark=none, name path=#2hi, forget plot]
           table[x index=0, y expr=\thisrowno{1}+\thisrowno{2}, col sep=space]{#3};
  \addplot+[draw=none, mark=none, name path=#2lo, forget plot]
           table[x index=0, y expr=\thisrowno{1}-\thisrowno{2}, col sep=space]{#3};
  \addplot[
    draw=none,
    fill=#1,
    fill opacity=0.1,
    forget plot
  ] fill between[of=#2hi and #2lo];
}
\newcommand{\plotSomeWithLegend}[4]{%
  \def\SeriesFile{tikz_data/acc_mean_v4/mean_#1_#2_q#3.dat}%
  \pgfplotsinvokeforeach{#4}{\pgfkeys{/serieslegend/##1}}%
}
\newcommand{\plotAttackWithLegend}[3]{%
  \plotSomeWithLegend{#1}{#2}{#3}{fedlaw,bulyan,bulyanbuck,krum,trimmed,cclip,cclipbuck,rfa,rfabuck,cwmed,huber,nodefence}%
}
\title{Byzantine-Robust Federated Learning \\with Learnable Aggregation Weights}
\author{%
  Javad Parsa\thanks{Equal contribution. Correspondence to: \texttt{javad.parsa@it.uu.se}} \\
  Uppsala University, Sweden \\
  \And
  Amir Hossein Daghestani\textsuperscript{*} \\
  KTH, Sweden \\
  \And
  André M. H. Teixeira \\
  Uppsala University, Sweden \\
  \And
  Mikael Johansson \\
  KTH, Sweden \\
}
\begin{document}
\maketitle
\begin{abstract}
Federated Learning (FL) enables clients to collaboratively train a global model without sharing their private data. However, the presence of malicious (Byzantine) clients poses significant challenges to the robustness of FL, particularly when data distributions across clients are heterogeneous. In this paper, we propose a novel Byzantine-robust FL optimization problem that incorporates adaptive weighting into the aggregation process. Unlike conventional approaches, our formulation treats aggregation weights as \textit{learnable parameters}, jointly optimizing them alongside the global model parameters. To solve this optimization problem, we develop an alternating minimization algorithm with strong convergence guarantees under adversarial attack. 
We analyze the Byzantine resilience of the proposed objective.
We evaluate the performance of our algorithm against state-of-the-art Byzantine-robust FL approaches across various datasets and attack scenarios. Experimental results demonstrate that our method consistently outperforms existing approaches, particularly in settings with highly heterogeneous data and a large proportion of malicious clients.
\end{abstract}
\section{Introduction}
Federated Learning (FL) is a distributed machine learning framework that enables multiple clients to collaboratively train a shared global model without transferring their private data to a central location \citep{9084352, 47976, li2020federated, pmlr-v54-mcmahan17a}. Instead of centralizing data, FL only exchanges model updates
such as gradients or parameters, between the clients and the central server. This architecture mitigates privacy risks and
supports applications with distributed data that is too costly or too sensitive to share~\citep{9084352}. In this paper, we focus on cross-silo FL, where a moderate number of long-lived clients (e.g., hospitals or financial institutions) participate in each round.

In a typical FL workflow, a central server initializes a global model and sends it to the clients. Each client trains the model locally on its private dataset and transmits only the resulting updates back to the server \citep{9084352, pmlr-v54-mcmahan17a}. The server aggregates these updates to improve the global model, and this cycle repeats across multiple communication rounds. Federated Averaging (FedAvg), one of the most widely adopted FL algorithms, computes a weighted average of client updates, accounting for the size of each client’s dataset \citep{pmlr-v54-mcmahan17a}. 

A significant challenge in FL arises from the presence of malicious clients, often referred to as Byzantine clients. 
Previous studies \citep{baruch2019little, fang2020local} have highlighted that the global model trained using FedAvg can be compromised when malicious clients deliberately send malicious model updates. Detecting such malicious behavior is inherently challenging due to the decentralized nature of FL, where the server has limited visibility into individual clients’ local data and training processes.
This challenge is further exacerbated in scenarios with heterogeneous data distributions, where each client's local dataset may differ significantly from others in terms of represented classes, feature distributions, or data volumes.
These variations in 
data distributions make it difficult to differentiate between benign updates influenced by data heterogeneity and corrupted updates sent by malicious clients \citep{cao2021provably, liu2023byzantine}.

Various studies \citep{yin2018byzantine, blanchard2017machine, liu2023byzantine, guerraoui2018hidden, 9721118, pmlr-v139-karimireddy21a} have proposed Byzantine-robust FL strategies to defend against malicious clients. Generally, robust aggregation methods can be clustered in three categories: distance-based \citep{blanchard2017machine}, statistic-based \citep{yin2018byzantine, farhadkhani2022byzantine, 10.5555/3618408.3619292}, and performance-based approaches \citep{xie2019zeno}.
Krum, which is a distance-based method, filters outliers by selecting updates with the smallest cumulative Euclidean distance to their neighbors \citep{blanchard2017machine}, while Median is a statistical method that 
replaces the mean operator with the median when aggregating local updates
\citep{yin2018byzantine}. Trimmedmean removes a fraction of the largest and smallest values for each parameter before computing the mean of the remaining values \citep{yin2018byzantine}. Bulyan combines Krum to select consistent updates and Median to refine them~\citep{guerraoui2018hidden}.

Defending against Byzantine attacks in heterogeneous settings presents substantial challenges~\citep{Karimireddy2020ByzantineRobustLO, liu2023byzantine}. To the best of our knowledge, all existing Byzantine-robust FL methods typically follow a similar approach: after identifying and removing malicious clients, they assign uniform aggregation weights to the benign clients, akin to FedAvg~\citep{blanchard2017machine, Shejwalkar2021ManipulatingTB, Karimireddy2020ByzantineRobustLO, liu2023byzantine, Mhamdi2018TheHV}.
While this approach simplifies the aggregation process, it poses a significant drawback in heterogeneous data settings. Specifically, after removing the malicious clients, the distribution of data examples with specific labels may become imbalanced among the remaining benign clients. Assigning uniform weights in such a scenario fails to adapt to this imbalance, potentially degrading accuracy by not giving sufficient attention to all labels.
This challenge becomes increasingly critical as the degree of data heterogeneity increases and label imbalance among benign clients becoming more pronounced.
 Addressing this challenge requires aggregation weights adjusted to the underlying data distribution. Figure~\ref{fig:attack-weights-intro} illustrates this phenomenon, showing how the aggregation weights of benign and malicious clients evolve under an inverse gradient attack and how adaptive weighting influences test accuracy.

We further demonstrate this relationship through empirical results in Section~\ref{sec:numstudy}.
%
\begin{figure*}[t]
\centering
\makebox[\textwidth][c]{

\begin{tikzpicture}
\pgfplotsset{compat=1.18}
\definecolor{BenignGray}{gray}{0.1}
\definecolor{AttackRed}{rgb}{0.78,0.00,0.08}
\definecolor{AttackOrange}{rgb}{1.0,0.55,0.0}

\pgfplotsset{
  performanceplot/.style={
    width=0.46\textwidth,
    height=4.2cm,
    xmin=0, xmax=200,
    ymin=0, ymax=92,          
    grid=major,
    line width=0.4pt,
    xlabel near ticks, ylabel near ticks,
    tick style={black,thin},
    legend style={
      at={(-0.30,1.75)}, anchor=north west,
      font=\scriptsize, draw=none,
      fill=white, fill opacity=0.9,
      legend columns=3,
    },
    legend cell align=left
  },
    weightplot/.style={
      width=0.46\textwidth,
      height=4.2cm,
      xmin=0, xmax=100,
      ymin=-0.01, ymax=0.20,
      yticklabel style={
        /pgf/number format/.cd, fixed, precision=2
      },
      grid=major,
      line width=0.4pt,
      xlabel near ticks, ylabel near ticks,
      tick style={black,thin},
      legend style={at={(0.13,1.5)}, anchor=north west,    
                    font=\scriptsize, draw=none,
                    fill=white, fill opacity=0.9,
                    legend columns=-1},                    
    },
}

\begin{groupplot}[
  group style={
    group size=2 by 1,
    horizontal sep=2cm
  },
]

\nextgroupplot[
  performanceplot,
  title={\textbf{Inverse Gradient Attack}},
  ylabel={Test acc.\,(\%)}
]
  \MeanBand{FedLAWColor}{inv_fedlaw}{tikz_data/acc_epoch/mnist_sparse_capped_inverse_q0.9.dat}
  \MeanBand{BulyanColor}{inv_bulyan}{tikz_data/acc_epoch/mnist_bulyan_inverse_q0.9.dat}
  \MeanBand{KrumColor}{inv_krum}{tikz_data/acc_epoch/mnist_krum_inverse_q0.9.dat}
  \MeanBand{TrimmedColor}{inv_trim}{tikz_data/acc_epoch/mnist_trimmed_inverse_q0.9.dat}
  
  \MeanBand{BulyanBuckColor}{inv_bulyanbuck}{tikz_data/acc_epoch/mnist_bulyan_bucketing_inverse_q0.9.dat}
  \MeanBand{CwMedColor}{inv_cwmed}{tikz_data/acc_epoch/mnist_cwmed_inverse_q0.9.dat}
  \MeanBand{HuberColor}{inv_huber}{tikz_data/acc_epoch/mnist_huber_inverse_q0.9.dat}

  \addlegendimage{FedLAWColor, very thick}
  \addlegendentry{\textbf{FedLAW (Ours)}}

  \addlegendimage{BulyanColor, very thick}
  \addlegendentry{Bulyan}

  \addlegendimage{BulyanBuckColor, very thick}
  \addlegendentry{Bulyan + Bucketing}

  \addlegendimage{KrumColor, very thick}
  \addlegendentry{Krum}

  \addlegendimage{TrimmedColor, very thick}
  \addlegendentry{Trimmed Mean}

  \addlegendimage{CwMedColor, very thick}
  \addlegendentry{Coordinate-wise Median}

  \addlegendimage{HuberColor, very thick}
  \addlegendentry{Huber Aggregator}

\nextgroupplot[
  weightplot,
  title={\textbf{FedLAW – Inverse Gradient}},
  ylabel={Client weight}
]
    \pgfplotstableread[col sep=space]{tikz_data/weights/inverse_new_wide.dat}\tblInv
    \foreach \c in {c0_benign,c1_benign,c4_benign,c5_benign,c6_benign,c7_benign}{
        \addplot+[FedLAWColor, no markers, forget plot]
                 table[x=epoch,y=\c]{\tblInv};
    }
    \foreach \c in {c2_inverse_new,c3_inverse_new,c8_inverse_new,c9_inverse_new}{
        \addplot+[BenignGray, line width=0.6pt,
              dash pattern=on 3pt off 1.5pt, no markers, forget plot]
                 table[x=epoch,y=\c]{\tblInv};
    }

  \addlegendimage{FedLAWColor, line width=.6pt}
  \addlegendentry{Benign}
  \addlegendimage{BenignGray, line width=.6pt, dash pattern=on 3pt off 1.5pt}
  \addlegendentry{Malicious}

\end{groupplot}


\node at ($(group c1r1.south)!0.5!(group c2r1.south) + (0,-0.6cm)$)
      {\footnotesize Epoch (global model update)};

\end{tikzpicture}
}

\vspace{-0.25cm}
\caption{%
Test accuracy and weight evolution on \textsc{MNIST} under the \textit{inverse gradient attack} (setting: $q = 0.9$, 40\% malicious; see Section~\ref{sec:numstudy}).  
\textbf{Left:} Average test accuracy $\pm$1 std over 200 epochs and 5 runs, evaluated across multiple methods with 200 clients.  
\textbf{Right:} Aggregation weights of individual clients during the first 100 epochs (10 clients; MLP, batch size 64, 3 local epochs).  
Benign clients quickly converge to stable, non-trivial weights, while malicious clients are consistently suppressed.%
}
\label{fig:attack-weights-intro}
\end{figure*}
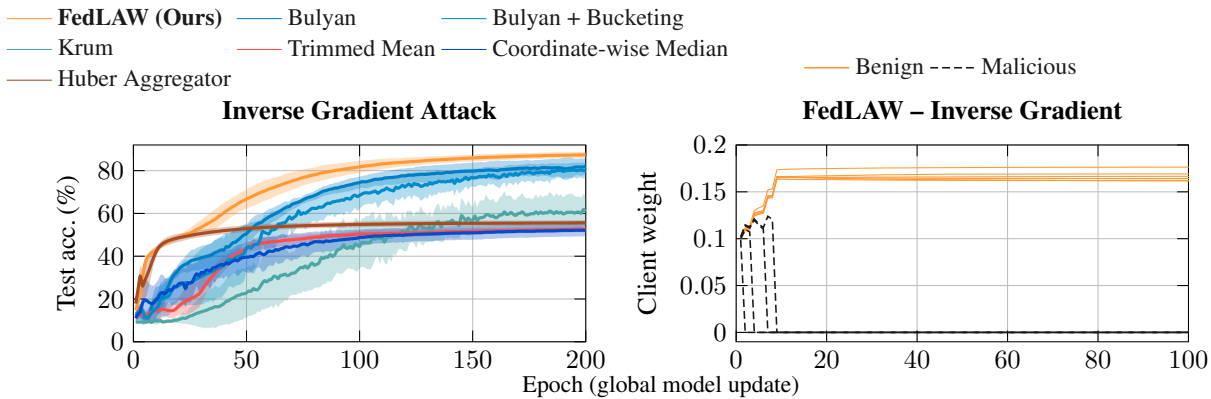

\textbf{Contributions.}\quad
In this paper, we propose a novel method for secure and robust FL that integrates adaptive weighting into the aggregation process. Unlike conventional methods, our approach treats aggregation weight selection as a part of the learning procedure, akin to global model parameters. 
\begin{itemize}
    \item First, we formulate the proposed optimization problem of jointly learning the global model parameters and the aggregation weights. Subsequently, we propose an algorithm to solve this problem using an alternating minimization approach that 
    involves two key steps: first, minimizing the objective with respect to aggregation weights, and second, minimizing it with respect to the model parameters.
    \item 
  We provide theoretical analyses demonstrating both the Byzantine resilience and convergence properties of our method. Theorem \ref{thm:byzres} shows that the proposed objective is robust to malicious agents. In Theorem~\ref{thm:cappedcon}, we establish convergence guarantees for learning the aggregation weights step. In Theorem~\ref{thm:unified_convergence}, we further prove that, under adversarial settings, the sequences generated by our algorithm, including both the aggregation weights and global model parameters, converge to a neighborhood of the optimum of the cost function.   
    \item We evaluate the performance of our method against state-of-the-art Byzantine-robust FL approaches, considering five types of attacks and two different datasets under varying levels of heterogeneity.
\end{itemize}
\section{Notation and Model}\label{sec:notmod}
\subsection{Notation}
Vectors and matrices are denoted by bold lowercase and uppercase letters, respectively. The support of a vector $\wb$, denoted by $\text{supp}(\wb)$, is the set of indices corresponding to the non-zero elements in $\wb$. The symbols $\nz{\wb}$ and $\nt{\wb}$ denote the $\ell_0$ pseudo-norm (i.e., the number of non-zero elements in $\wb$) and the $\ell_2$ norm of $\wb$, respectively. The inner product of two vectors $\xb$ and $\yb$ is represented by $\langle \xb, \yb \rangle$. Additionally, $\nabla_w f$ denotes the gradient of the function $f$ with respect to $\wb$. The vector $\xb_{\Lambda}$ is a subset of the vector $\xb$, which contains only the elements indexed by the entries in the vector $\Lambda$. The $i$-th element of the vector $\xb$ is denoted by $x_i$. Finally, $\vb_{k,i}$ denotes the vector associated with client $i$ at synchronous round $k$. { Throughout the paper, $\EX\{\cdot\}$ denotes expectation over the full-batch data distribution, while $\EX_{\text{batch}}\{\cdot\}$ denotes expectation with respect to mini-batch random sampling. }
\subsection{Model}\label{subsec:model}
We consider a federated learning system with a parameter server and $n$ clients, up to $b_f$ of which may be Byzantine (acting arbitrarily). In each synchronous round $k$, the server broadcasts the global model parameters $\thetab_k \in \Rbb^{d}$ to all clients. Each honest client $i$ computes a mini-batch gradient $\tilde{\vb}_{k,i} := \frac{1}{B} \sum_{z \in \xi_{k,i}} \nabla_{\theta} f_i(\thetab_k; z)$ { where $\xi_{k,i}\subset D_i$ is a random sample of size $B$ from its local dataset $D_i$.} This mini-batch gradient is an unbiased estimate of the client's full batch population gradient, $\vb_{k,i}$, satisfying:
    $ \EX_{\xi_{k,i}}\{\tilde{\vb}_{k,i}\} = \vb_{k,i} = \nabla_{\theta} f_i(\thetab_k)$ with {$\mathbb{E}_{D_i}\big\{\vb_{k,i}\big\}\;=\;\gb_{k,i}$.} Each Byzantine client $j$ may submit an arbitrary gradient vector $\bb_{k,j}$ and loss $\tilde{f}_{k,i}$. Attackers have full knowledge of the system and can collaborate \citep{lynch1996distributed}. The server receives $n$ gradient vectors, applies a robust aggregation rule to compute a single aggregated gradient $F_k$, and updates the model via the rule $\thetab_{k+1} = \thetab_k - \alpha F_k$.
\section{Byzantine-Robust Federated Learning with Learnable Weights}\label{sec:probfor}
\subsection{Problem Formulation}\label{subsec:probfor}
Traditional FL aims of finding a set of global model parameters $\thetab\in \mathbb{R}^d$ minimizing the training loss $f(\thetab) = \sum\limits_{i=1}^{n} w_i f_i(\thetab)$,
where $n$ is the number of clients and $f_i:\mathbb{R}^d\mapsto \mathbb{R}$ represents the loss function of the $i$-th client.
{The aggregation weights $w_i$ are fixed and satisfy $w_i\geq 0$ and $\sum_{i=1}^n w_i =1$. Byzantine resilience is typically introduced by adding functionality for detecting malicious clients and removing them from the learning process, effectively setting the corresponding $w_i$'s to zero.}

{
The key idea in our approach is to transform the binary detection and removal of suspicious clients into a continuous weight optimization process, effectively embedding the Byzantine defense into the learning objective itself. We do so by treating the weights} $\wb = [w_1,\dots,w_n]$ as decision variables and jointly optimize them with $\thetab$:
\begin{equation}
    \min_{\thetab\in\Rbb^{d}, \wb\in\Delta^+_{t,\ell_0}} \sum_{i=1}^{n} w_i f_i(\thetab)
    \label{eq:jointthetaw}
\end{equation}
where 
\begin{equation}
    \Delta^+_{t,\ell_0} = \{\wb\in\Rbb^{n}\mid \sum_{i=1}^{n} w_i = 1,  w_i\geq 0, w_i\leq t, \nz{\wb}\leq s\}.
\end{equation}
Here $\Delta^+_{t,\ell_0}$ denotes a sparse unit-capped simplex, and the $\ell_0$ pseudo norm of $\wb$ is utilized to achieve Byzantine robustness. 
Notably, if we set $t=1/(n-b_f)$ and $s = n-b_f$, the only feasible weight vectors in $\Delta^+_{t,\ell_0}$ are those where $b_f$ clients are excluded and all others are weighted equally in the objective (see Proposition~\ref{pro:sparsecappunit} in \S~\ref{sec:mathfound} of the Supplementary).

\subsection{Proposed Algorithm}\label{subsec:proalg}
{\color{blue}
}
Several techniques for solving (\ref{eq:jointthetaw}) already exist, including the BSUM 
algorithm by~\cite{razaviyayn2013bsum} and the prox-linear approach introduced in~\citep{drusvyatskiy2019efficiency}. These algorithms rely on alternating between updating the weights for fixed model parameters, and revising the model parameters for fixed weights. In our experience (see \S~\ref{sec:bsum_comparison} in the Supplementary), such updates tend to be too aggressive, and miss important couplings between the two variable blocks that are helpful for detecting malicious clients.
To address these challenges, we propose a new algorithm based on rewriting (\ref{eq:jointthetaw}) as a nested optimization problem~\citep{dempe2002foundations}:
\begin{equation}
    \min_{\wb\in\Delta^+_{t,\ell_0}} \min_{\thetab\in\Rbb^{d}} \sum_{i=1}^{n} w_i f_i(\thetab)
    \label{eq:jointthetawnest}
\end{equation}
To solve the inner optimization problem, we use a quadratic approximation $\hat{f}_i(\thetab)$ of $f_i(\thetab)$
\begin{equation}
    \hat{f}_i(\thetab; \thetab_k) = f_i(\thetab_k)+\langle\nabla_{\theta} f_i(\thetab_k), \thetab-\thetab_k\rangle+\frac{1}{2\alpha}\nt{\thetab-\thetab_k}^2.
    \label{eq:quadformf}
\end{equation}
Substituting this quadratic approximation into the inner optimization problem in~\eqref{eq:jointthetawnest} leads to:
\begin{equation}
    \thetab_{k+1}(\wb)=\argmin_{\thetab\in\Rbb^{d}}\sum_{i=1}^{n} w_i \hat{f}_i(\thetab; \thetab_k) = \thetab_k-\alpha\sum_{i=1}^{n} w_i \nabla f_i(\thetab_k) = \thetab_k - \alpha \Gb_k \wb,
    \label{eq:indicprojtheta1}
\end{equation}
where $\alpha$ is a step size and  $\Gb_k = [\nabla_{\theta} f_1(\thetab_k), \cdots, \nabla_{\theta} f_n(\thetab_k)]$. {Note that the model update depends on the aggregation weights. This dependence is accounted for in the outer optimization, which becomes}
%
\begin{equation}
    \wb_{k+1} = \argmin_{\wb\in\Delta^+_{t,\ell_0}}\quad\sum_{i=1}^{n}w_i f_i(\thetab_{k+1}(\wb)) = \argmin_{\wb\in\Delta^+_{t,\ell_0}}\quad\sum_{i=1}^{n}w_i f_i(\thetab_{k}-\alpha\Gb_k\wb),
    \label{eq:updateweight}
\end{equation}

The optimization in \eqref{eq:updateweight} minimizes the global objective by adjusting the weights $\wb \in \Delta^+_{t,\ell_0}$ considering the effect that they have on the parameter update
$\thetab_{k+1}(\wb) = \thetab_k - \alpha \Gb_k \wb$. In particular, the weighted combination of client gradients in \(\Gb_k \wb\) shapes the parameter update, coupling the choice of weights to the losses $f_i(\thetab_{k+1}(\wb))$. This formulation prioritizes clients whose gradients, represented by the columns of $\Gb_k$, align with the descent direction of $f_i(\thetab_k - \alpha \Gb_k \wb)$, as benign client gradients typically form a coherent cluster in the parameter space. In contrast, Byzantine clients tend to submit updates that deviate from this direction or behave inconsistently due to adversarial perturbations, making them outliers. The sparsity constraint $\Delta^+_{t,\ell_0}$ ensures that up to $n-s$ misaligned clients, including Byzantines, are excluded, enhancing robustness of the learning process (see \S~\ref{sec:bsum_comparison} in the Supplementary).




While \eqref{eq:indicprojtheta1} is a simple gradient descent step, the optimization in \eqref{eq:updateweight} is more challenging since both its objective and constraint set are non-convex. 
Nevertheless, it can be approached by first approximating
\begin{align}
    \Phi_k(\wb) &= 
    \sum_{i=1}^n w_i f_i(\thetab-\alpha \Gb_k\wb) 
    \label{eq:phikw}
\intertext{by the following quadratic function}
    \nonumber \hat{\Phi}_k(\wb) &= \Phi_k(\wb_k) + \langle \nabla_{\wb} \Phi_k(\wb_k), \wb-\wb_k\rangle + \frac{1}{2\beta}\Vert \wb-\wb_k\Vert_2^2
\intertext{for some positive step-size $\beta$. We then replace (\ref{eq:updateweight}) by}
     \wb_{k+1} &= \argmin_{\wb} \hat{\Phi}_k(\wb) + \delta_{\Delta_{t,\ell_0}^+}(\wb)
    \label{eq:findweighquadratic}
\end{align}
where $\delta_{\Delta^+_{t,\ell_0}}(\wb)$ is the indicator function of the set 
$\Delta^+_{t,\ell_0}$.
By completing the square and dropping constant terms, the above update is equivalent to 
\begin{equation}
    \wb_{k+1} = \argmin_{\wb}\frac{1}{2}\nt{\wb-\hb_k}^2+\delta_{\Delta^+_{t,\ell_0}}(\wb) = \text{prox}_{\Delta^+_{t,\ell_0}}(\hb_k),
      \label{eq:prounitorg}
\end{equation}

where $\hb_k = \wb_k-\beta \nabla_{\wb}\Phi_k(\wb_k)$ and the final equality follows from the definition of the proximal mapping in \eqref{eq:proximal} (see \S~\ref{sec:mathfound} in the Supplementary).
Thus, $\wb_{k+1}$ is the projection of $\hb_k$ onto the sparse unit capped simplex $\Delta^+_{t,\ell_0}$. Although this set is non-convex, the next result shows how the projection can be performed efficiently.
\begin{theorem}
Denote $P_{L_s}(\hb_k)$ as the operator selecting the $s$ largest elements of the vector $\hb_k$, and let $\Pc_{\Delta^+_t}$ be the projection operator onto the unit-capped simplex $\Delta^+_{t} = \{\wb\in\Rbb^{n}\mid \sum_{i=1}^{n} w_i = 1,  w_i\geq 0, w_i\leq t\}$, which has an efficient solution provided in Algorithm \ref{alg:cappedsim} (see Section~\ref{sec:capp simlex} in the Supplementary).
    
The problem \eqref{eq:prounitorg} is exactly solved by the three-step projection method below:
\begin{enumerate}
    \item \textbf{Sparsity enforcement} $\hb_{\lambda} = P_{L_s}(\hb_k)$
    \item \textbf{Support selection:} $\Sc^* = \text{supp}(\hb_{\lambda})$
    \refstepcounter{equation}      
    \label{eq:fgklhj}          
    \hspace{7.3cm}\noindent(\theequation)
    \item \textbf{Unit capped simplex projection:} ${\wb_{k+1}}_{\Sc^*} = \Pc_{\Delta^+_t}({\hb_{\lambda}}_{\Sc^*}), \quad {\wb_{k+1}}_{(\Sc^*)^{\complement}} = 0.$
\end{enumerate}
    \label{thm:cappedcon}
\end{theorem}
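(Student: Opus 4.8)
The plan is to recast the projection in \eqref{eq:prounitorg} as a two-level optimization: an outer combinatorial choice of a support set $\Sc$ obeying the $\ell_0$ budget $|\Sc|\le s$, and an inner continuous projection onto the unit-capped simplex restricted to $\Sc$. First I would fix such a support $\Sc$ and note that, for any $\wb$ with $\supp(\wb)\subseteq\Sc$, the objective separates as $\frac12\nt{\wb-\hb_k}^2 = \frac12\nt{\wb_{\Sc}-{\hb_k}_{\Sc}}^2 + \frac12\nt{{\hb_k}_{\Sc^{\complement}}}^2$, where the second term does not depend on $\wb$. Consequently the best weight supported on $\Sc$ is obtained by projecting ${\hb_k}_{\Sc}$ onto the convex capped simplex, and the restricted optimal value is $V(\Sc) = \frac12\nt{\Pc_{\Delta^+_t}({\hb_k}_{\Sc})-{\hb_k}_{\Sc}}^2 + \frac12\nt{{\hb_k}_{\Sc^{\complement}}}^2$. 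Problem \eqref{eq:prounitorg} thus reduces to minimizing $V(\Sc)$ over all admissible supports, with the inner projection handled by Algorithm~\ref{alg:cappedsim}.

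Second, I would show it is optimal to use a support of the full size $s$. If $\Sc\subseteq\Sc'$ with $|\Sc'|\le s$, then every feasible $\wb$ for $\Sc$ is also feasible for $\Sc'$ (pad with zeros on $\Sc'\setminus\Sc$), so $V(\Sc')\le V(\Sc)$; enlarging the support can only decrease the value. Hence a minimizer with $|\Sc|=s$ exists, provided the feasible set is nonempty, i.e. $st\ge 1$, which is exactly the condition for $\Delta^+_{t,\ell_0}$ to be nonempty.

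Third --- and this is the crux --- I would prove that among size-$s$ supports the top-$s$ set $\Sc^*=\supp(P_{L_s}(\hb_k))$ is optimal, by an exchange argument. Write $h_i$ for the $i$-th entry of $\hb_k$. Given any size-$s$ support $\Sc$, let $\wb^*$ attain $V(\Sc)$, and pick $p\in\Sc\setminus\Sc^*$ and $q\in\Sc^*\setminus\Sc$; by definition of $P_{L_s}$ we have $h_q\ge h_p$. Transfer the mass at $p$ to $q$ by setting $\tilde w_q=w^*_p$, $\tilde w_p=0$, and leaving all other entries unchanged. Since $w^*_p\le t$, the vector $\tilde\wb$ is feasible on $\Sc'=(\Sc\setminus\{p\})\cup\{q\}$, and a direct expansion of the two altered coordinates gives $\frac12\nt{\tilde\wb-\hb_k}^2 - \frac12\nt{\wb^*-\hb_k}^2 = w^*_p\,(h_p-h_q)\le 0$. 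Therefore $V(\Sc')\le\frac12\nt{\tilde\wb-\hb_k}^2\le V(\Sc)$. Iterating such swaps converts $\Sc$ into $\Sc^*$ without ever increasing $V$, so $V(\Sc^*)\le V(\Sc)$ for every admissible $\Sc$. Combining the three steps, the support $\Sc^*$ paired with the capped-simplex projection of ${\hb_k}_{\Sc^*}$ (and zeros elsewhere) attains the global minimum, which is precisely the stated three-step procedure.

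The main obstacle I anticipate is this last step: because the $\ell_0$ constraint renders $\Delta^+_{t,\ell_0}$ nonconvex, one cannot invoke convex-projection uniqueness and must instead decouple the discrete support selection from the continuous projection. The exchange argument is what makes this decoupling rigorous; the care required is to verify that each swap preserves \emph{all} constraints --- in particular the cap $w_i\le t$, which is maintained precisely because mass is \emph{transferred} rather than created --- and to fix a consistent tie-breaking rule in $P_{L_s}$ so that the top-$s$ set is well defined when $\hb_k$ has repeated entries.
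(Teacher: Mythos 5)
Your proposal is correct and follows essentially the same route as the paper's proof: the key step in both is the exchange argument showing that transferring mass from an index outside the top-$s$ set to one inside changes the squared distance by $w_p(h_p-h_q)\le 0$, after which the problem reduces to the convex capped-simplex projection on the selected support. Your write-up is somewhat more carefully structured (explicit bilevel decomposition into $V(\Sc)$, monotonicity under support enlargement, and attention to tie-breaking and cap preservation), but the decomposition and the key lemma coincide with the paper's two-step argument.
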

\begin{proof}
    See Section~\ref{app:appcap} in the Supplementary.
\end{proof}



After having updated the weights $\wb_{k+1}$ based on \eqref{eq:prounitorg}, the parameter vector $\thetab_k$ can be updated by substituting $\wb = \wb_{k+1}$ into \eqref{eq:indicprojtheta1}, yielding:
\begin{equation}
    {\thetab}_{k+1} = \thetab_k-\alpha\Gb_k \wb_{k+1}.
    \label{eq:closedtheta}
\end{equation}

The pseudo-code of the final algorithm is summarized in Algorithm \ref{alg:fedlaw} (see Supplementary).

To translate our algorithm procedure in the FL framework, at epoch \( k+1 \), the server broadcasts \( \thetab_k \) to clients, who return \( \nabla_{\theta} f_i(\thetab_k) \). The server constructs \( \Gb_k \), computes \( \tilde{\thetab}_{k+1} = \thetab_k - \alpha \Gb_k \wb_k \), and sends \( \tilde{\thetab}_{k+1} \) to clients. 
{Clients respond with \( \fb_{k+1} = [f_1(\tilde{\thetab}_{k+1}),\dots , f_n(\tilde{\thetab}_{k+1})]^\top \) and \( \tilde{\Gb}_{k+1} = [\nabla_{\theta} f_1(\tilde{\thetab}_{k+1}), \dots, \nabla_{\theta} f_n(\tilde{\thetab}_{k+1}) ]\).}
The server then updates \( \wb_{k+1} \) via \eqref{eq:fgklhj}{, with
\(\hb_k =  \wb_k + \alpha\beta \Gb_k^{\top}\tilde{\Gb}_{k+1}\wb_k - \beta \fb_{k+1}\),} 
and computes \( \thetab_{k+1} \) via \eqref{eq:closedtheta}. This requires two communication rounds: one for \( \thetab_k \) and gradients, another for \( \tilde{\thetab}_{k+1} \) and client responses. 

\begin{remark}
While our method requires two communication rounds per training epoch, this does not {necessarily} imply a doubling of total communication rounds compared to standard FL. We highlight a key factor that mitigate this cost. Introducing $\wb$ as a decision variable offers additional flexibility in minimizing the loss (see \eqref{eq:updateweight}), which accelerates convergence of the global model parameters (see Figure~\ref{fig:cifar_convergence_all} in the Supplementary).  To fairly compare methods, we evaluate performance based on total communication rounds. Figure~\ref{fig:cifar_convergence_all} shows that FedLAW achieves higher test accuracy within the same communication round. As a result, the total number of rounds required to reach a target accuracy by our method may be only marginally higher, or even lower than that of competing methods.
\label{rem:comcl}
\end{remark}

{\color{blue}
}

We conclude this section by discussing the server-side overhead relative to standard FL. The only extra step in our method is the aggregation weight update~\eqref{eq:fgklhj}, while the model update remains identical to that in standard FL. The following proposition quantifies the complexity of the extra step.
\begin{proposition}
\label{prop:complexity}
The memory complexity of server-side aggregation weight update based on \eqref{eq:fgklhj} is $\Oc(d n)$ per round, and the computational complexity of the projection \eqref{eq:fgklhj} in the server is $\Oc(n\min(s,\log n) + s^2)$, where $d$ is the number of model parameters, $n$ is the number of clients, and $s$ is the sparsity level.
\end{proposition}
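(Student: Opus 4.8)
The plan is to verify the two claims separately, treating the memory footprint and the arithmetic cost of the three-step projection \eqref{eq:fgklhj} independently, since the proposition asserts a bound for each.

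For the memory bound, I would enumerate every quantity the server must hold to evaluate \eqref{eq:fgklhj}. The vector $\hb_k = \wb_k + \alpha\beta\,\Gb_k^{\top}\tilde{\Gb}_{k+1}\wb_k - \beta\fb_{k+1}$ is assembled from the two client-gradient matrices $\Gb_k$ and $\tilde{\Gb}_{k+1}$, each of size $d\times n$, together with the $\Oc(n)$-dimensional objects $\wb_k$, $\fb_{k+1}$ and $\hb_k$ and the $\Oc(d)$-dimensional $\thetab_k$. The crucial observation is that $\Gb_k^{\top}\tilde{\Gb}_{k+1}\wb_k$ is never formed by materializing the $n\times n$ Gram matrix $\Gb_k^{\top}\tilde{\Gb}_{k+1}$; exploiting associativity, the server first computes $\ub = \tilde{\Gb}_{k+1}\wb_k\in\Rbb^{d}$ and then $\Gb_k^{\top}\ub\in\Rbb^{n}$, so only $\Oc(d)$ and $\Oc(n)$ scratch space is needed. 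Summing these contributions, the dominant term is the storage of the two gradient matrices, giving $\Oc(dn)$; every remaining quantity is of strictly lower order.

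For the arithmetic cost of the projection, I would analyse the three steps in turn. Step 1 extracts the $s$ largest entries of the $n$-vector $\hb_k$; here I would split on the relative size of $s$ and $\log n$: when $s\le\log n$, repeatedly extracting the running maximum $s$ times costs $\Oc(ns)$, whereas when $s>\log n$ a single full sort costs $\Oc(n\log n)$, so in either regime the cost is bounded by $\Oc(n\min(s,\log n))$. Step 2, recording the support $\Sc^*=\supp(\hb_{\lambda})$, is a by-product of Step 1 at cost $\Oc(s)$, which is absorbed. Step 3 applies $\Pc_{\Delta^+_t}$ only to the $s$ surviving coordinates ${\hb_{\lambda}}_{\Sc^*}$; invoking the complexity of Algorithm \ref{alg:cappedsim} on an $s$-dimensional input yields $\Oc(s^2)$. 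Adding the three contributions gives $\Oc(n\min(s,\log n)) + \Oc(s) + \Oc(s^2) = \Oc(n\min(s,\log n) + s^2)$, as claimed. I would also remark that the $\Oc(dn)$ cost of forming $\hb_k$ coincides in order with the ordinary weighted aggregation $\Gb_k\wb$ performed by standard FL, so it is not charged to the extra projection step.

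The step I expect to be the main obstacle is pinning down the $\Oc(s^2)$ bound for the capped-simplex projection in Step 3: unlike the plain simplex, the two-sided constraint $0\le w_i\le t$ forces the threshold search to consider simultaneously the coordinates clipped to $t$ and those zeroed out, and establishing that Algorithm \ref{alg:cappedsim} resolves this in at most $\Oc(s^2)$ operations (rather than a looser bound) requires a careful accounting of its inner loop over candidate clipped/zeroed index sets. Everything else --- the associativity trick for the memory claim and the case split for the top-$s$ selection --- is routine once the scope of each bound is fixed.
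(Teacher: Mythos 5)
Your proposal is correct and follows essentially the same route as the paper: enumerate the stored quantities and use the associativity trick to avoid the $n\times n$ Gram matrix for the $\Oc(dn)$ memory bound, then sum the costs of the three projection steps, with the top-$s$ selection bounded by $\Oc(n\min(s,\log n))$ and the capped-simplex projection by $\Oc(s^2)$ via Algorithm~\ref{alg:cappedsim}. The only cosmetic difference is that you justify the top-$s$ selection bound with an explicit case split while the paper cites a reference, and both treatments take the $\Oc(s^2)$ bound for the capped-simplex projection from \cite{wang2015projectioncappedsimplex} rather than re-deriving it.
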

\begin{proof}
    See Section~\ref{susec:comput} in the Supplementary.
\end{proof}


\section{Theoretical Analyses}\label{sec:proalgcon}
\subsection{Byzantine-resilient Analysis}



The main method proposed in~\eqref{eq:updateweight} is designed for Byzantine-resilient federated learning through optimized aggregation weights. The following theorem establishes its resilience property, as formalized in Definition~\ref{def:defbyz} (Section~\ref{sec:mathfound} in the Supplementary).

{The first step in our resilience analysis is to make the objective in \eqref{eq:updateweight} analytically tractable. We use Taylor's theorem with an exact remainder to reformulate it, revealing that it is governed by the quadratic form $\wb^T \Gb_k^T\Gb_k \wb$. By denoting $\vb_{k,i} = \nabla_{\theta} f_i(\thetab_{k})$, we can show this term is equivalent to an expression involving pairwise gradient distances:
\begin{equation*}
\begin{aligned}
\wb^T \Gb_k^T\Gb_k \wb 
&=-\sum_{i=1}^{n} w_i \nt{\vb_{k,i}}^2+\sum_{i=1}^n \sum_{j=1}^n w_i w_j\nt{\vb_{k,i}-\vb_{k,j}}^2.
\end{aligned}
\end{equation*}
From here, the proof exploits the terms $\nt{\vb_{k,i}-\vb_{k,j}}^2$ with a novel, especially tailored approach to establish Byzantine resilience. Although gradient differences are exploited in other Byzantine-resilient methods, here they appear naturally from the structure of \eqref{eq:updateweight} using our novel derivations. Note that the nested structure of \eqref{eq:jointthetawnest}, leading to \eqref{eq:updateweight}, is an essential building block. 

It is important to highlight that, unlike Definition \ref{def:defbyz}, we consider a more general setting where population losses and gradients are non-iid. To improve practicality, our analysis also focuses on the case where the aggregator relies on mini-batch gradients rather than full-batch gradients, which is a more realistic scenario.
}

We now state the formal assumptions for our theoretical analysis.

\begin{assumption}[Formal Setup for Theoretical Analysis]
\label{ass:formal_setup_unified}
We analyze the setting where each honest client $i \in \mathcal{H}$ provides a mini-batch gradient $\tilde{\vb}_{k,i}$ of size $B$. Let $\vb_{k,i} = \nabla_{\theta} f_i(\thetab_k)$ be the full batch population gradient. We assume the following:

\noindent \textbf{(A) Client loss functions}
\begin{enumerate}
    \item[(A1)] \textit{(Smoothness)} Each client loss function $f_i(\thetab)$ is $L_i$-smooth, with $L_{\max} = \max_{1 \le i \le n} L_i$.
    \item[(A2)] The weight objective function $\Phi_k(\wb)$ in \eqref{eq:phikw} with respect to $\wb$ is $L_w$-smooth.
\end{enumerate}
\noindent \textbf{(B) Step-size Schedule}
\begin{enumerate}
    \item[(B1)] Algorithm \ref{alg:fedlaw} employs a hybrid schedule where the step-size $\alpha$ is a fixed constant $0<\alpha <1/L_{\max}$, and the weight step-size $\{\beta_k\}$ is an adaptive schedule satisfying the standard Robbins-Monro conditions ($\beta_k>0, \beta_k\to 0, \sum\beta_k = \infty, \sum\beta_k^2 < \infty$) and $\beta_k < 1/L_w$.
\end{enumerate}
\noindent \textbf{(C) Stochastic Gradient Model}
\begin{enumerate}
    \item[(C1)] The deviation of any single-sample gradient from its full batch population is bounded by $\|\nabla f_i(\thetab_k; z) - \vb_{k,i}\| \leq R_k$ on one single data point $z$. 
\end{enumerate}
\noindent \textbf{(D) Population Heterogeneity}
\begin{enumerate}
    \item[(D1)] We assume the population gradients and losses at round $k$, $\{f_i(\thetab_k), \vb_{k,i}\}_{i \in \mathcal{H}}$ is $\EX\{\vb_{k,i}\} = \gb_{k,i}$ with $\gb = \frac{1}{|\Hc|}\sum_{i \in \mathcal{H}} \gb_{k,i}$, $m_{k,i} = \EX\{f_i(\thetab_k)\}$ with $m_{avg} = \frac{1}{|\Hc|}\sum\limits_{i \in \mathcal{H}} m_{k,i}$, and 
    \begin{itemize}
        \item Directional Heterogeneity: $\frac{1}{|\Hc|}\sum\limits_{i \in \mathcal{H}} \|\gb_{k,i} - \gb\|^2 \le H_k^2$.
        \item Magnitude Heterogeneity: $\Big|\|\gb_{k,i}\|^2 - \|\gb\|^2\Big| \le K_k^2$.
        \item Inter-Client Variance: $\EX\{\nt{\vb_{k,i}-\gb_{k,i}}^2\} \leq d\sigma_k^2$.
        \item Loss Heterogeneity: $ \frac{1}{|\Hc|}\sum_{i \in \mathcal{H}} |m_{k,i} - m_{avg}| \le \varepsilon_k$.
    \end{itemize}
\end{enumerate}
\noindent \textbf{(E) Byzantine Clients and Aggregator}
\begin{enumerate}
    \item[(E1)] Attackers submit arbitrary gradients $\bb_{k,i}$ and non-negative losses $\tilde{f}_{k,i} \ge 0$, subject to the norm constraint $\|\bb_{k,i}\| \le \max\limits_{j \in \mathcal{H}} \nt{\tilde{\vb}_{k,j}}$.
\end{enumerate}
\end{assumption}

\begin{theorem}[High-Probability Byzantine Resilience]
\label{thm:byzres}

Under Assumption~\ref{ass:formal_setup_unified} (excluding A2 and B1) and let assume $b_f$ min-batch gradient updates $ \tilde{\vb}_{k,i} $ are replaced with their Byzantine counterparts $\bb_{k,j} $ at any round $k$ with $2b_f+2\leq n$. Let the aggregator $\tilde{F}$ be computed as follows
\begin{equation}
\tilde{F} = \sum_{i\in\Hc} w_{k,i} \tilde{\vb}_{k,i}+\sum_{j\in\Hc^\complement} w_{k,j}\bb_{k, j}
\label{eq:definef}
\end{equation}
where $ \wb_k $ represents the optimal weight vector obtained from \eqref{eq:updateweight}, for a step-size $0<\alpha\leq \alpha_{\max}$ in which 
{ \begin{align}
&\alpha_{\max} = \min\Big\{\frac{1}{L_{\max}},\frac{C_{\text{het}}}{2L_{\max}\|\gb\|^2}\Big\},\quad \varepsilon_S = \sqrt{\frac{2R_k^2 \log(2d/\delta)}{B}}, \\\nonumber& C_{\text{het}} := \frac{4K_k^2 b_f}{n - b_f} + 2 H_k^2 + \frac{2d\sigma_k^2(2n - b_f - 2)}{n - b_f}+\frac{2\varepsilon_S^2 n}{n-b_f}
\end{align}}
    
Then, with probability at least $1-\delta$, the aggregator $\tilde{F}$ is Byzantine-resilient. Specifically, its expected bias with respect to the true global mean $\gb$ is bounded by:
\begin{equation}
    \|\EX\{\tilde{F}\}-\gb\| \le \eta_k,
    \label{eq:byzfinres}
\end{equation}
where the error bound $\eta_k$ is explicitly decomposed into the distinct sources of error:
\begin{equation}
    \eta_k =\sqrt{2b_f\Big( \frac{\varepsilon_k}{\alpha}+
    C_{\text{het}} + L_{\max}\alpha(\frac{H_k^2}{2} + \frac{K_k^2}{2}+ d\sigma_k^2+\varepsilon_S^2) \Big)}+\frac{2b_f}{\sqrt{n-b_f}}H_k+\varepsilon_S
    \label{eq:error_decomposition_final_main}
\end{equation}
    
Furthermore, if the signal-to-noise condition $\eta_k < \|\gb\|$ holds, then the angle $\zeta$ between the expected aggregated gradient $\EX\{\tilde{F}\}$ and the true global gradient $\gb$ is bounded by $\sin{\zeta} \le \frac{\eta_k}{\|\gb\|}$.
\end{theorem}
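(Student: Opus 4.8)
The plan is to exploit the identity stated in the theorem's preamble, which recasts the weight objective in \eqref{eq:updateweight} through the quadratic form $\wb^\top\Gb_k^\top\Gb_k\wb$ and its decomposition into pairwise gradient distances $\nt{\vb_{k,i}-\vb_{k,j}}^2$. The guiding intuition is that, because the honest population gradients are tightly clustered relative to the adversarial outliers, any minimizer $\wb_k$ of \eqref{eq:updateweight} is forced to place only a small aggregate mass on the Byzantine index set $\Hc^\complement$, and quantifying that mass is what ultimately controls the bias. First I would apply Taylor's theorem with an exact remainder to each $f_i(\thetab_k-\alpha\Gb_k\wb)$, turning the objective into $\sum_i w_i f_i(\thetab_k)-\alpha\,\wb^\top\Gb_k^\top\Gb_k\wb$ plus a remainder whose size is governed by the $L_{\max}$-smoothness of assumption (A1); substituting the stated identity then exposes the benign-clustering term explicitly.

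The central step is to bound the Byzantine weight mass $\sum_{j\in\Hc^\complement}w_{k,j}$. I would establish this by a comparison argument: evaluate the objective at the optimal $\wb_k$ and at a reference feasible point supported on the honest clients, then invoke optimality of $\wb_k$ (the hypothesis $2b_f+2\le n$ is exactly what makes such an honest-supported reference available and the majority benign). Under the heterogeneity controls of (D1)---directional $H_k$, magnitude $K_k$, inter-client variance $d\sigma_k^2$, and loss $\varepsilon_k$---together with the Byzantine norm constraint (E1), the pairwise-distance terms yield a bound in which the constant $C_{\text{het}}$ appears exactly, while the restriction $\alpha\le\alpha_{\max}$ is precisely what guarantees the clustering term dominates the magnitude contribution $L_{\max}\alpha\|\gb\|^2$ so the comparison remains informative. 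The outcome is a bound on the Byzantine mass by (essentially) the quantity appearing under the square root in $\eta_k$.

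With the weight mass controlled, I would decompose the bias $\EX\{\tilde F\}-\gb$ into three pieces: (i) the deviation of the honestly weighted population gradients from $\gb$, handled by (D1) and producing the $\tfrac{2b_f}{\sqrt{n-b_f}}H_k$ term; (ii) the Byzantine contribution $\sum_{j\in\Hc^\complement}w_{k,j}\bb_{k,j}$, bounded using the norm constraint (E1) and the mass bound from the previous step; and (iii) the mini-batch sampling error $\tilde\vb_{k,i}-\vb_{k,i}$. For piece (iii) I would apply a vector Hoeffding/Bernstein concentration inequality driven by the single-sample deviation bound $R_k$ of (C1), yielding $\varepsilon_S=\sqrt{2R_k^2\log(2d/\delta)/B}$; this is the sole source of the failure probability $\delta$. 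Summing the three contributions and collecting constants reproduces $\eta_k$ as written in \eqref{eq:error_decomposition_final_main}. The angle bound is then elementary geometry: given $\|\EX\{\tilde F\}-\gb\|\le\eta_k<\|\gb\|$, the extremal configuration places $\EX\{\tilde F\}$ tangent to the sphere of radius $\eta_k$ centered at $\gb$, whence $\sin\zeta\le\eta_k/\|\gb\|$.

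I expect the main obstacle to be the second step: converting optimality in the non-convex, sparsity-constrained problem \eqref{eq:updateweight} into a clean, constant-tracking bound on the Byzantine weight mass. One must simultaneously control the Taylor remainder from the quadratic approximation, keep $C_{\text{het}}$ and $\alpha_{\max}$ calibrated so the benign-clustering term beats the magnitude term, and carefully handle the statistical coupling between the data-dependent weights $\wb_k$ and the stochastic gradients when passing to expectations. The remaining pieces---concentration and the geometric angle bound---are comparatively routine once this core estimate is in place.
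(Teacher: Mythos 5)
There is a genuine gap in your central step. You propose to use the optimality of $\wb_k$ to show that the aggregate Byzantine mass $\sum_{j\in\Hc^\complement} w_{k,j}$ is small, and then to bound the Byzantine contribution by that mass times the norm cap from (E1). Neither half of this works. With $t=1/(n-b_f)$ and $s=n-b_f$, Proposition~\ref{pro:sparsecappunit} forces every selected client to receive weight exactly $1/(n-b_f)$, so the Byzantine mass equals $|\Lambda^b|/(n-b_f)$ where $\Lambda^b$ is the set of selected attackers; and $|\Lambda^b|$ admits no nontrivial bound from optimality, since an adversary who submits gradients mimicking honest ones can get all $b_f$ of them selected. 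Moreover, even if you had a mass bound, pairing it with the norm constraint $\|\bb_{k,j}\|\le\max_l\nt{\tilde\vb_{k,l}}$ yields a bias term of order $\tfrac{b_f}{n-b_f}\,\nt{\gb}$, which cannot be reconciled with the claimed $\eta_k$ (the entire role of the condition $\alpha\le\alpha_{\max}$ is to eliminate the $\nt{\gb}^2$ dependency, and $\eta_k$ contains no such term). The quantity the optimality comparison actually controls is different: comparing the minimizer against the honest-supported reference weights bounds the \emph{pairwise distances} $\sum_{i\in\Lambda^c}\sum_{j\in\Lambda^b}\EX\{\nt{\vb_i-\bb_j}^2\}$, i.e., any Byzantine gradient that survives selection must lie close to the honest cluster. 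The bias is then bounded by matching each selected $\bb_j$ with an honest surrogate $\eb_j$ and controlling $\nt{\bb_j-\eb_j}$ rather than $\nt{\bb_j}$ itself; this cancellation is what produces the square-root term in \eqref{eq:error_decomposition_final_main}, and it is absent from your argument.

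The remaining scaffolding of your proposal is sound and matches the paper: the Taylor expansion exposing $\wb^\top\Gb_k^\top\Gb_k\wb$, the comparison against an honest-supported feasible point (for which $2b_f+2\le n$ guarantees availability), the three-way split of the bias into selection-induced heterogeneity, Byzantine perturbation, and mini-batch sampling error, the vector Hoeffding bound giving $\varepsilon_S$ as the sole source of the failure probability $\delta$, and the elementary angle bound. To repair the proof you would replace the "mass bound" with the pairwise-distance bound: show via the comparison inequality that $\frac{1}{(n-b_f)^2}\sum_{i\in\Lambda^c}\sum_{j\in\Lambda^b}\EX\{\nt{\vb_i-\bb_j}^2\}$ is at most $\frac{2\varepsilon_k}{\alpha}+C_{\text{het}}+2L_{\max}\alpha(\cdot)$ after absorbing $2L_{\max}\alpha\nt{\gb}^2$ into $C_{\text{het}}$ using $\alpha\le\alpha_{\max}$, and then feed that into the surrogate decomposition of $\nt{\EX\{F\}-\gb}$.
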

    \begin{proof}
        See Section~\ref{sec:prbyzres} in the Supplementary.
    \end{proof}

Theorem~\ref{thm:byzres} provides a comprehensive resilience guarantee for the proposed method. The key insight is the error decomposition in Eq.~\eqref{eq:error_decomposition_final_main}, which shows that the total error $\eta^2_k$ is a sum of four distinct and well-characterized sources: loss heterogeneity, gradient heterogeneity, inter-client variance, and mini-batch sampling noise. The bound explicitly shows that the sampling noise is controllable, as it vanishes when the mini-batch size $B$ increases.

The assumptions underpinning this result are standard and practically justifiable. The condition $\|\bb_{k,i}\| \le \max\limits_{j \in \mathcal{H}} \nt{\tilde{\vb}_{k,j}}$ is enforced in practice by a standard gradient clipping mechanism on the server side: each incoming update is projected onto an $\ell_2$‑ball of radius 
$C$.  Honest updates remain untouched when 
$C$
 exceeds their typical norm, whereas malicious updates cannot exceed the threshold.  This adds no extra communication and is already commonplace in FedAvg, DP‑Fed, and similar protocols. Similarly, assuming bounded heterogeneity ($H^2_k, K^2_k$) is a standard prerequisite for any non-IID analysis. In addition, in practice $\varepsilon_k$ is typically very small even under a high heterogeneity, thereby mitigating the effect of the $1/\alpha$ scaling in \eqref{eq:error_decomposition_final_main}. {To support this empirically, we explicitly measured $\varepsilon_k$ under a large malicious fraction and extreme heterogeneity in Subsection \ref{subsec:eprsizeps}, which confirms this behavior.} Additionally, Byzantine resilience under the scenario where both the losses and gradients are replaced with Byzantine versions is established separately in Theorem~\ref{thm:byzresdatapois} (see Section \ref{sec:ByzResad} in the Supplementary).

{ \begin{remark}
Theorem~\ref{thm:byzres} is stated in a probabilistic form because, in practice, FedLAW operates on \emph{mini-batch} gradients rather than full-batch expectations. Mini-batch sampling introduces an additional source of randomness beyond the data distribution, and in an adversarial setting the attacker interacts with \emph{single realizations} of these samples, not their expectation. High-probability bounds therefore provide a more realistic Byzantine-resilience guarantee than purely deterministic, full-batch analyses.
\end{remark}}

\subsection{Convergence Analysis}
A key difficulty in Algorithm~\ref{alg:fedlaw} is the arbitrary behavior of the adversary, which complicates the convergence analysis of our alternating minimization scheme to optimize $\thetab$ and $\wb$ jointly. We tackle this by viewing the method through the lens of a hybrid step-size schedule. This perspective yields a strong stability guarantee: even under attack, the adaptive weights not only stabilize, but their limit is a critical point of the exact (non-approximated) objective in \eqref{eq:updateweight}. Building on this result, the following theorem establishes convergence to a neighborhood of the optimum of problem~\eqref{eq:jointthetaw} in both non-convex and strongly convex regimes under adversarial conditions.
\begin{theorem}
\label{thm:unified_convergence}
Let the sequences $\{(\thetab_k,\wb_k)\}_{k=1}^{\infty}$ be generated by Algorithm~\ref{alg:fedlaw}. 
Let the aggregator be $F_k(\wb)=\Gb_k \wb$ with bounded variance 
$\mathrm{Var}\!\big(F_k(\wb_k)\big)\le \sigma_{F,k}^2$, where $\Gb_k$ contain mini-batch honest gradients as well as arbitrary Byzantine gradients.
Assume Assumption~\ref{ass:formal_setup_unified} holds and suppose:
 \begin{itemize}
        \item $\|\EX\{F_k(\wb_k)\} - \gb_k\| \le \zeta_k$ where $\gb_k = \sum_{i=1}^n w_{k,i} \EX\{\vb_{k,i}\}$.
        \item As $k \to \infty$, we have $\zeta_k \to \zeta_\infty$, $\sigma_{F,k}^2 \to \sigma_{F,\infty}^2$, and $\sigma_{k}^2 \to \sigma_{\infty}^2$.
\end{itemize}
Then, the aggregation weights satisfy $\wb_{k+1} \to \wb_k$ as $k \to \infty$ with limits point $\wb^\star$, which is a critical point of \eqref{eq:updateweight} for $k \to \infty$. Moreover, the following convergence guarantees hold:
\begin{enumerate}
    \item \textbf{(L-smooth, Non-Convex Case)} The sequence generated by our algorithm converges to a neighborhood of a stationary point of the main loss function \eqref{eq:jointthetaw}. In particular, with probability at least $1-\delta$, the time-averaged squared gradient norm is bounded as:
    \begin{equation*}
        \lim_{T \to \infty} \sup \frac{1}{T}\sum_{k=0}^{T-1} \|\gb_k\|^2 \le \lim_{T \to \infty} \frac{\sum\limits_{k=1}^{T} C_{2,k}}{T C_1} = \Oc\left(\zeta^2_{\infty} + \sigma_{F,\infty}^2\right)
    \end{equation*}
     where $C_1 = \frac{1}{4}(1-\alpha L_{\max})>0$, $\varepsilon_S = \sqrt{\frac{2R_k^2 \log(2d/\delta)}{B}}$, and
\begin{align*}
\hspace{-1cm} C_{2,k} = 2\zeta_{k+1}\!\sqrt{K_k^2 + d\sigma_k^2 + \varepsilon_S^2}
        + \zeta_k^2 + \sigma_{F,k}^2 + \sigma_k\sqrt{d}\sqrt{\zeta_k^2 + \sigma_{F,k}^2}
        + \frac{(2\zeta_{k+1} + \zeta_k + 4C_1\zeta_k)^2}{4C_1}.
\end{align*}
    \item \textbf{(Strongly Convex and L-smooth Case)} If the client loss functions $f_i$ are $\mu$-strongly convex, then the algorithm converges to a neighborhood of the global optimum. Moreover, with probability at least $1-\delta$, we have
    \[ \lim_{k \to \infty} \sup \EX\{Q(\thetab_{k}, \wb_{k}) - Q^\star\} \le d\sigma_\infty^2 + \frac{C_{2,\infty}}{2\mu C_1} = \Oc\left(\zeta_{\infty}^2 + \sigma_{F,\infty}^2+\sigma^2_{\infty}\right). \]
    in which $Q(\thetab,\wb)$ denotes loss function in \eqref{eq:jointthetaw}. Also, $Q^\star$ is its global minimum when $\wb= \wb^\star$, evaluated under honest gradients. 
\end{enumerate}
In addition, the bound $\zeta_\infty \leq \eta_\infty$ holds, where $\eta_\infty$ is the upper bound from \eqref{eq:byzfinres} in Theorem \ref{thm:byzres}, representing the Byzantine resilience guarantee of the update rule \eqref{eq:updateweight} as $k \to \infty$.
\end{theorem}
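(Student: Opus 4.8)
The plan is to treat Algorithm~\ref{alg:fedlaw} as a hybrid (two-timescale) stochastic approximation scheme: the parameters $\thetab$ move on a fast timescale with the constant step $0<\alpha<1/L_{\max}$, while the weights $\wb$ move on a slow timescale with Robbins-Monro steps $\beta_k$. I would split the argument into three blocks, one per assertion. \textbf{Weight stabilization and the critical-point limit:} since $\Delta^+_{t,\ell_0}$ is bounded and each $f_i$ is $L_i$-smooth, the weight-gradient $\nabla_{\wb}\Phi_k(\wb_k)$ is uniformly bounded; because $\wb_k$ is already feasible, the proximal step \eqref{eq:prounitorg} gives $\nt{\wb_{k+1}-\wb_k}=\Oc(\beta_k)$, and $\beta_k\to 0$ forces the consecutive differences to vanish. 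To upgrade this to convergence to a critical point of the \emph{exact} objective \eqref{eq:updateweight}, I would invoke the ODE/projected stochastic-approximation argument: under $\sum\beta_k=\infty$, $\sum\beta_k^2<\infty$ and $\beta_k<1/L_w$ (A2), the slow iterates track the projected gradient flow of $\Phi_k$, whose rest points are exactly its constrained critical points; the use of $\hat\Phi_k$ is harmless because it matches $\Phi_k$ in value and gradient at $\wb_k$.

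\textbf{Non-convex rate.} Starting from $L_{\max}$-smoothness of $f$ and the update $\thetab_{k+1}=\thetab_k-\alpha F_k(\wb_{k+1})$, I would write the descent inequality, take conditional expectations, and decompose $F_k$ into its mean (controlled by the bias through $\nt{\EX\{F_k\}-\gb_k}\le\zeta_k$) and its fluctuation (controlled by $\sigma_{F,k}^2$). Using $\alpha<1/L_{\max}$ to extract the coefficient $C_1=\tfrac14(1-\alpha L_{\max})$ on $\nt{\gb_k}^2$, collecting the magnitude-heterogeneity and inter-client-variance terms $(K_k^2,d\sigma_k^2)$, the bias cross-terms, and the mini-batch term $\varepsilon_S$, I would telescope over $k=0,\dots,T-1$ and divide by $T$ to reach the stated bound; the $1-\delta$ factor enters through a vector concentration inequality on the mini-batch error yielding the deviation $\varepsilon_S=\sqrt{2R_k^2\log(2d/\delta)/B}$. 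Matching the exact constant $C_{2,k}$ is then pure bookkeeping of these terms.

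\textbf{Strongly convex rate and the closing inequality.} Under $\mu$-strong convexity I would combine the descent inequality with the PL-type bound $\nt{\nabla Q}^2\ge 2\mu(Q-Q^\star)$ to obtain a contraction $\EX\{Q_{k+1}-Q^\star\}\le(1-c)\,\EX\{Q_k-Q^\star\}+C_{2,k}$; passing to the limit and using $\beta_k\to0$ gives the steady-state neighborhood $d\sigma_\infty^2+C_{2,\infty}/(2\mu C_1)$. Finally, since $\zeta_k=\nt{\EX\{F_k(\wb_k)\}-\gb_k}$ is precisely the aggregator bias bounded by $\eta_k$ in Theorem~\ref{thm:byzres}, taking $k\to\infty$ yields $\zeta_\infty\le\eta_\infty$, which closes the statement.

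I expect the main obstacle to be the first block: rigorously certifying that the non-convex projected weight recursion converges to a critical point of the exact objective. The textbook ODE method assumes a convex feasible set, whereas $\Delta^+_{t,\ell_0}$ is non-convex through the $\ell_0$ constraint; the natural fix is to argue that the top-$s$ support selected by $P_{L_s}$ freezes for large $k$, so the recursion eventually becomes a convex projected stochastic approximation onto $\Delta^+_t$ and standard non-expansiveness applies. Justifying this freezing requires that the benign/Byzantine gradient-separation structure from Theorem~\ref{thm:byzres} survives both the adversarial perturbations and the mini-batch noise, which is the delicate coupling between the two variable blocks.
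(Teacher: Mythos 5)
Your overall architecture (descent inequality, bias/variance decomposition of the aggregator, telescoping, PL-type contraction for the strongly convex case) matches the paper's in spirit, and your strongly convex block is essentially the paper's argument. However, there is a genuine gap in your non-convex block: you treat the $\thetab$-recursion as single-block biased SGD on a fixed objective, but the objective $\sum_i w_{k,i} f_i(\thetab)$ changes every round because the weights are updated. Your telescoping of $f(\thetab_k)-f(\thetab_{k+1})$ therefore does not collapse; you must control the contribution of $\sum_i (w_{k+1,i}-w_{k,i}) f_i(\cdot)$ between consecutive iterates. The paper does this by building a joint Lyapunov function $Q_k=\sum_i w_{k,i}f_i(\thetab_k)+\delta_{\Delta^+_{t,\ell_0}}(\wb_k)$, writing a descent lemma in \emph{each} block, and adding them; this produces a cross-time-step coupling term $\Ec_{k+1}=\alpha\langle F_k(\wb_{k+1})-F_k(\wb_k),\,F_{k+1}(\wb_k)-F_{k+1}^c(\wb_k)\rangle$ whose bound (via the swapped-support decomposition of $\Gb_k(\wb_{k+1}-\wb_k)$) is exactly where the $2\zeta_{k+1}\sqrt{K_k^2+d\sigma_k^2+\varepsilon_S^2}$ term in $C_{2,k}$ comes from. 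Calling the constant-matching ``pure bookkeeping'' hides the fact that without the two-block Lyapunov argument this term never appears and the descent inequality does not close.

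Two smaller issues. First, for the weight limit you reach for the ODE method of stochastic approximation and then (correctly) worry that it fails on the non-convex set $\Delta^+_{t,\ell_0}$; the paper avoids this entirely with a much lighter argument: a contradiction using the fact that the coefficient $\tfrac{1}{2\beta_{k+1}}-\tfrac{L_w}{2}$ diverges while the summed error stays Ces\`aro-bounded forces $\EX\{\|\wb_{k+1}-\wb_k\|^2\}\to 0$, after which the limit point is identified as a critical point simply by passing to the limit in the prox optimality condition (the $\tfrac{1}{\beta}(\wb^\star-\wb^\star)$ term vanishes identically). Your proposed ``support freezing'' detour is unnecessary and would be hard to certify. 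Second, your closing step asserts that $\zeta_k$ ``is precisely the aggregator bias bounded by $\eta_k$ in Theorem~\ref{thm:byzres}'' at every $k$; this is not right, because Theorem~\ref{thm:byzres} bounds the bias only for the \emph{minimizer} of \eqref{eq:updateweight}, not for an arbitrary iterate $\wb_k$. The inequality $\zeta_\infty\le\eta_\infty$ is available only after you have established that the limit $\wb^\star$ is a critical/optimal point of \eqref{eq:updateweight}, which is how the paper orders the argument.
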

\begin{proof}
    See Section~\ref{app:appa} in the Supplementary.
\end{proof}

Theorem~\ref{thm:unified_convergence} establishes that our algorithm converges to a bounded neighborhood of the optimum, with the error radius scaling directly with the adversary’s influence. The final error bound is determined by the asymptotic bias and variance of the aggregator ($\zeta_\infty, \sigma_{F,\infty}$), highlighting a powerful property: as the aggregator improves (e.g., $\zeta_k \to 0$), the algorithm converges to increasingly precise solutions.

The key insight of our work lies in the formal link between these two results. Theorem~\ref{thm:byzres} provides a per-iteration resilience guarantee ($\eta_k$), while Theorem~\ref{thm:unified_convergence} strengthens this by showing that the asymptotic bias is bounded by the same guarantee, $\zeta_\infty \le \eta_\infty$. Together, they establish that the long-term bias of the converged system is controlled by the same mechanism that ensures step-wise robustness. This unifies the static resilience of the aggregator with the dynamic convergence of the algorithm, demonstrating that robustness at the aggregation level directly translates into stability and convergence at the system level.

This theoretical foundation is strongly supported by our empirical results. Our experiments (e.g., Figs.~\ref{fig:attack-weights-intro} and \ref{fig:attack-weights}) show the practical outcome of this theory: the algorithm effectively neutralizes attackers by learning to assign them zero weight, which is empirical evidence that the bias $\zeta_k$ converges to zero. Furthermore, the theoretical insights in Section~B explain the mechanism by which our objective function enables the detection of malicious clients, providing a clear rationale for the algorithm's observed success.

Furthermore, in Appendix~\ref{sec:computeLw}, we show that if each $f_i$ is 
$L_i$-smooth and has bounded gradients with upper bound $C$, then 
$\nabla_w \Phi_k(\wb)$ is $L_w$-Lipschitz continuous with
\begin{equation*}
L_w \;\leq\; \alpha C^2 \Big(n^{3/2} + n + \alpha n L_{\max} + 
\tfrac{\alpha n^2 L_{\max}\varrho}{2}\Big),
\end{equation*}
where
$\varrho = \sqrt{k t^2 + r^2}$ with $k = \lfloor 1/t \rfloor$ and 
$r = 1 - kt \in [0, t)$.
\section{Numerical Study}\label{sec:numstudy}
\subsection{Experimental Setup}\label{sub:expset}
\textbf{Datasets and Models:} We conduct experiments on the MNIST \citep{lecun-mnisthandwrittendigit-2010} and CIFAR10 \citep{krizhevsky2009learning} datasets. We train a 3-layer fully connected network on MNIST, and a 4-layer convolutional neural network with group normalization on CIFAR10.

\textbf{Data Distribution and Client Configuration:} We simulate a federated setting with 200 clients and distribute the data in a non-IID fashion using the method from \cite{cao2021provably}. To control the degree of data heterogeneity, we introduce a concentration parameter $q$: each training example with label $l$ is assigned to the $l$-th group with probability $q$, and to the remaining \( L - 1 \) groups with probability \( \frac{1 - q}{L - 1} \), where \( L = 10 \) is the total number of labels in our experiments. Within each group, data is uniformly distributed to clients. We consider $q \in \{0.6, 0.9\}$ to simulate moderate and high levels of data heterogeneity, respectively. The proportion of malicious clients varies across $\{0.1, 0.2, 0.3, 0.4\}$.

\textbf{Attack Types and Baselines:} We evaluate robustness under five adversarial attacks: label-flipping, inverse-gradient, backdoor, a combined (double) attack, and the LIE (Little Is Enough) attack. For comparison, we consider several baseline defenses: Krum, Trimmed Mean, Bulyan, Coordinate-wise Median (CwMed), CCLIP (Centered CLIPping \citep{Karimireddy2020ByzantineRobustLO}), RFA (Robust Federated Averaging \citep{9721118}), Huber aggregator \citep{10.1609/aaai.v38i19.30181} and Bucketing combined with Bulyan, RFA, or CCLIP. We also include FedAvg, the standard aggregation without defense.

\textbf{Evaluation Protocol:} Each dataset is split into 80\% training, 10\% validation, and 10\% testing. We report average test accuracy and malicious client detection accuracy over five independent runs. Standard deviations are provided in tables (not included here) for completeness.
Additional experimental details, including computational complexity, hyperparameter settings, and sensitivity analysis, are provided in Section~\ref{sec:addexp} of the Supplementary.
\subsection{Experimental Results}
\textbf{Results on MNIST (see Figure~\ref{fig:attack-plots-merged}.a and Table \ref{tab:merged_attack_clean}):} FedLAW delivers consistently strong results, typically surpassing robust baselines. The advantage is especially pronounced under severe contamination and heterogeneity; for example, under the inverse-gradient attack with 40\% malicious clients, FedLAW attains a test accuracy 3.6\% higher than the next-best defense. While several methods remain competitive at low attack rates, defenses such as RFA, RFA-bucketing, CClip, and CClip-bucketing deteriorate markedly as the attacker fraction grows, with some even diverging. For example, under the double attack, the accuracy of RFA and RFA-bucketing decreases by more than 31\% with rising heterogeneity, and both CClip and CClip-bucketing diverge, whereas FedLAW maintains robustness.

A defining strength of FedLAW is its stability. Whereas competing defenses degrade sharply, FedLAW’s accuracy remains notably consistent across attacker fractions, exhibiting graceful degradation. This robustness stems from a two-pronged design: it (i) identifies and removes malicious clients and (ii) adaptively reweights the remaining honest updates. Unlike traditional approaches that revert to uniform weights after filtering suspected clients, FedLAW continuously learns optimal weights, enhancing both robustness and representational fairness.

\textbf{Results on CIFAR10 (see Figure~\ref{fig:attack-plots-merged}.b and Table \ref{tab:merged_attack_clean_cifar10}):} The more complex CIFAR10 dataset presents additional challenges due to its higher dimensionality and visual diversity. Nevertheless, FedLAW consistently shows strong resilience across all attack types and configurations. Under label flipping with \(q=0.6\) and \(40\%\) adversaries, FedLAW reaches \(70.5\%\) accuracy, an \(+8.3\) percentage-point gain over the best baseline (Bulyan at \(62.2\%\)). For inverse-gradient attacks with \(q=0.9\) and \(40\%\) adversaries, it delivers a \(+3.1\) percentage-point improvement over the best baseline (\(59.38\%\) vs.\ \(56.24\%\)), while RFA and CClip variants diverge. In the challenging Double attack, FedLAW delivers the best results among state-of-the-art methods. Likewise, in Lie Attack, FedLAW performs on par with the strongest baselines, while substantially outperforming many others that suffer severe accuracy degradation.
A key observation is the rapid convergence of the 
weights. The weights $\wb$ typically stabilize within the first 20 rounds, after which further updates have negligible effect. 
\begin{figure*}[!t]
\vspace*{-0.5cm}
    \centering
  \resizebox{\textwidth}{!}{
    \begin{tikzpicture}
        \begin{groupplot}[
            group style={
                group size=4 by 5,    
                horizontal sep=1.4cm,
                vertical sep=0.8cm
              },
            width=0.27\textwidth,
            height=3.0cm,
            grid=major,
            grid style={solid, gray!30},
            tick label style={font=\scriptsize}, 
            label style={font=\footnotesize},
            title style={font=\footnotesize\bfseries, yshift=-1ex},
            legend style={
                draw=none,
                legend columns=6, 
                font=\scriptsize 
            },
            legend to name=sharedlegend, 
        ]

        \nextgroupplot[title={Flipping label, $\mathbf{q=0.6}$}]
        \plotAttackWithLegend{mnist}{flip_labels}{0.6}

        \nextgroupplot[title={Flipping label, $\mathbf{q=0.9}$}]
        \plotAttackWithLegend{mnist}{flip_labels}{0.9}

        \nextgroupplot[title={Flipping label, $\mathbf{q=0.6}$}]
        \plotAttackWithLegend{cifar10}{flip_labels}{0.6}

        \nextgroupplot[title={Flipping label, $\mathbf{q=0.9}$}]
        \plotAttackWithLegend{cifar10}{flip_labels}{0.9}

        \nextgroupplot[title={Inverse gradient, $\mathbf{q=0.6}$}]
        \plotAttackWithLegend{mnist}{inverse}{0.6}

        \nextgroupplot[title={Inverse gradient, $\mathbf{q=0.9}$}]
        \plotAttackWithLegend{mnist}{inverse}{0.9}

        \nextgroupplot[title={Inverse gradient, $\mathbf{q=0.6}$}]
        \plotAttackWithLegend{cifar10}{inverse}{0.6}

        \nextgroupplot[title={Inverse gradient, $\mathbf{q=0.9}$}]
        \plotAttackWithLegend{cifar10}{inverse}{0.9}

        \nextgroupplot[title={Backdoor attack, $\mathbf{q=0.6}$}]
        \plotAttackWithLegend{mnist}{backdoor}{0.6}

         \nextgroupplot[title={Backdoor attack, $\mathbf{q=0.9}$}]
        \plotAttackWithLegend{mnist}{backdoor}{0.9}

        \nextgroupplot[title={Backdoor attack, $\mathbf{q=0.6}$}]
        \plotAttackWithLegend{cifar10}{backdoor}{0.6}

           \nextgroupplot[title={Backdoor attack, $\mathbf{q=0.9}$}]
        \plotAttackWithLegend{cifar10}{backdoor}{0.9}

        \nextgroupplot[title={Double attack, $\mathbf{q=0.6}$}]
        \plotAttackWithLegend{mnist}{double_attack}{0.6}

        \nextgroupplot[title={Double attack, $\mathbf{q=0.9}$}]
        \plotAttackWithLegend{mnist}{double_attack}{0.9}

        \nextgroupplot[title={Double attack, $\mathbf{q=0.6}$}]
        \plotAttackWithLegend{cifar10}{double_attack}{0.6}
        
        \nextgroupplot[title={Double attack, $\mathbf{q=0.9}$}]
        \plotAttackWithLegend{cifar10}{double_attack}{0.9}

        \nextgroupplot[title={LIE attack, $\mathbf{q=0.6}$}]
        \plotAttackWithLegend{mnist}{lie_attack}{0.6}
        
        \nextgroupplot[title={LIE attack, $\mathbf{q=0.9}$}]
        \plotAttackWithLegend{mnist}{lie_attack}{0.9}

        \nextgroupplot[title={LIE attack, $\mathbf{q=0.6}$}]
        \plotAttackWithLegend{cifar10}{lie_attack}{0.6}

        \nextgroupplot[title={LIE attack, $\mathbf{q=0.9}$}]
        \plotAttackWithLegend{cifar10}{lie_attack}{0.9}

        \end{groupplot}

        \node at ($(group c2r1.north)!.5!(group c3r1.north) + (0,1cm)$) {\pgfplotslegendfromname{sharedlegend}};

         \node[anchor=north] at
  ($(group c2r5.south east)!.5!(group c3r5.south west) + (0,-0.45cm)$)
  {\bfseries\footnotesize Fraction of Malicious Clients};
        \node[rotate=90, yshift=1cm] at ($(group c1r3.west)$) {\bfseries\footnotesize Test Accuracy (\%)};
    \end{tikzpicture}
    }
        \caption{Defending against attacks on \textbf{MNIST} (left two columns) and \textbf{CIFAR10} (right two columns)}
         \label{fig:attack-plots-merged}
\end{figure*}
\section{Conclusion}
This paper introduces FedLAW, a Byzantine-robust Federated Learning framework that treats aggregation weights as learnable parameters, optimized alongside the global model. By enforcing a sparsity constraint, our method effectively neutralizes the influence of malicious clients while adaptively balancing contributions from benign clients.
To solve the resulting joint optimization problem, we develop an alternating minimization algorithm that updates weights and model parameters in tandem. We prove convergence guarantees and establish theoretical resilience to adversarial behavior.
Extensive empirical results on MNIST and CIFAR10 datasets under multiple attack scenarios demonstrate the robustness and effectiveness of FedLAW. Compared to existing Byzantine-robust algorithms, our method achieves consistently higher accuracy, especially in highly non-IID and adversarial environments. These findings highlight the benefits of integrating adaptive aggregation into the learning process, paving the way for more secure and equitable FL deployments.
\section*{Acknowledgement}
This work is supported by the Swedish Research Council under the grant 2023-05234, the Swedish Foundation for Strategic Research, the Knut and Alice Wallenberg Foundation, and Sweden’s Innovation Agency. The computations and data handling were enabled by resources provided by the National Academic Infrastructure for Supercomputing in Sweden (NAISS), partially funded by the Swedish Research Council through grant agreement no. 2022-06725.

\bibliography{myref}
\bibliographystyle{iclr2026_conference}

\newpage
\tableofcontents
\newpage

\appendix
\begin{algorithm}[tb]
  \caption{\textsc{ClientUpdate}}
  \label{alg:client}
  \begin{algorithmic}[1]
    \Require current global model $\thetab$; learning-rate $\alpha$; local epochs $E$;
            batch-size $B$; private data $\mathcal D_i$
    \State $\psi_i \gets \thetab$                                 \Comment{initialize local model parameters}
    \For{$e = 1$ \textbf{to} $E$}                                 \Comment{full local epochs}
      \ForAll{mini-batches $\mathcal B \subset \mathcal D_i$ of size $B$}
        \State $\psi_i \gets \psi_i-\alpha\nabla_\psi f_i(\psi_i,\mathcal B)$
      \EndFor
    \EndFor
    \State $\gb_i \gets -\dfrac{\psi_i-\thetab}{\alpha}$             \Comment{gradient at $\thetab$}
    \State $f_i \gets f_i(\psi_i)$
    \State \textbf{return} $(\gb_i,f_i)$
  \end{algorithmic}
\end{algorithm}
\begin{algorithm}[tb]
  \caption{FedLAW (Federated Learning with Learnable Aggregation Weights)}
  \label{alg:fedlaw}
  \begin{algorithmic}[1]
    \Require initial model $\thetab_0$; learning rates $\alpha,\beta$; total epochs $T$;
            local epochs $E$; batch-size $B$;
            sparsity level $s$; cap $t$; number of clients $n$
    \State initialize weights $\mathbf w_0 \gets \tfrac1n\mathbf1$
    \For{$k = 0$ \textbf{to} $T-1$}
      \Statex\hfill\textit{// -------- collect client updates -------- //}
      \State broadcast $\thetab_k$ to all clients
      \ForAll{clients $i\!=\!1{:}n$ \textbf{in parallel}}
        \State $(\gb_i,f_i) \gets \textsc{ClientUpdate}(\thetab_k,\alpha,E,B)$
      \EndFor
      \Statex\hfill\textit{// -------- server step 1 -------- //}
      \State $\gb_{\mathrm{agg}} \gets \sum_{i=1}^n w_{k,i} \, \gb_i$
      \State $\tilde{\thetab}_{k+1} \gets \thetab_k - \alpha\,\gb_{\mathrm{agg}}$
      \State broadcast $\tilde{\thetab}_{k+1}$ and collect $(\tilde{\gb}_i,\tilde{f}_i)$ via \textsc{ClientUpdate}
      \Statex\hfill\textit{// -------- server step 2 (weight update) -------- //}
      \State $\mathbf G \! = [\gb_1,\cdots,\gb_n]$, \; $\tilde{\mathbf G} \! = [\tilde \gb_1,\cdots,\tilde \gb_n]$
      \State $\tilde{\fb} = [\tilde{f}_1,\dots,\tilde{f}_n]^\top$
      \State $\mathbf h_k \gets \mathbf w_k + \alpha\beta\,\mathbf G^{\!\top}\tilde{\mathbf G}\,\mathbf w_k
                     - \beta\,\tilde{\fb}$
      \State $\mathbf{w}_{k+1} \gets \operatorname{Proj}_{\Delta^+_{t, \ell_0}}(\mathbf{h}_k)$ with $s$ via ~\eqref{eq:fgklhj}
      \Statex\hfill\textit{// -------- server step 3 (second model update) -------- //}
      \State $\gb_{\mathrm{agg}} \gets \sum_{i=1}^n w_{k+1,i} \, \gb_i$
      \State $\thetab_{k+1} \gets \thetab_k - \alpha\,\gb_{\mathrm{agg}}$
    \EndFor
    \State \textbf{return} $\thetab_T$
  \end{algorithmic}
\label{alg:example}
\end{algorithm}

\section{Mathematical Foundations}\label{sec:mathfound}
In this section, we first introduce key mathematical concepts used throughout the paper. We then present specific structural properties and feasibility results related to our proposed optimization framework.

We begin with the definition of proximal mapping.
\begin{definition}\citep{PariB14}
		Let $p\colon \text{dom}_{p} \rightarrow(-\infty,+\infty]$ be a proper and lower semi-continuous (PLSC) function. Then the proximal mapping of $p$ at $\xb\in \Rbb^{n}$ is defined as
		\begin{equation}
		\text{prox}_{p}(\xb)=\argmin_{\ub\in \text{dom}_p}\left\{\frac{1}{2}\nt{\xb-\ub}^2+p(\ub)\right\}.
		\label{eq:proximal}
		\end{equation}
	\end{definition}
\begin{definition}\citep{articlesubgradient}
The subdifferential of a PLSC function $g$ at $\xb \in \Rbb^n$ is defined as
\[
\partial g(\xb) \stackrel{\triangle}{=} \left\{ \zetab \in \Rbb^n \mid \exists \xb_k \to \xb, \, g(\xb_k) \to g(\xb), \, \zetab_k \to \zetab \in \partial \hat{g}(\xb_k) \right\}
\]
where $\partial \hat{g}(\xb_k)$ is the Fréchet subdifferential of $g$ at $\xb_k \in \Rbb^n$, defined as
\begin{align}
\partial \hat{g}(\xb_k) = \left\{ \zeta \in \Rbb^n \mid \liminf_{\vb \neq \xb, \vb \to \xb} \frac{1}{\nt{\vb-\xb}^2} \left[g(\vb) - g(\xb) - \langle \vb - \xb, \zeta \rangle \right] \geq 0 \right\}.
\end{align}
\end{definition}

\begin{definition}\citep{articlesubgradient}
    A point $\xb^*$ is called a critical point of a PLSC function $f(\xb)$ if $0 \in \partial f(\xb^*)$.
\end{definition}

\begin{definition}[Operator $P_{L_s}$]
The operator $P_{L_s}(\wb)$ leaves the $s$ largest elements (based on their values, not magnitudes) of the vector $\wb$ unaltered  and sets all other entries to zero. 
\end{definition}

\begin{lemma}[Descent lemma \citep{bolte:hal-00916090}]
Let $f: \Rbb^n \to \Rbb$ be a continuously differentiable function whose gradient $\nabla f$ is L-Lipschitz continuous. Then, for all $\xb, \yb \in \Rbb^n$:
\begin{equation}
  f(\yb) \leq f(\xb) + \langle \nabla f(\xb), \yb - \xb \rangle + \frac{L}{2} \nt{\yb-\xb}^2. 
\end{equation}
\label{lem:lsmooth} 
\end{lemma}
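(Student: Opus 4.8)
The plan is to prove this standard smoothness (quadratic upper-bound) inequality by reducing it to a one-dimensional problem along the segment joining $\xb$ and $\yb$, applying the fundamental theorem of calculus, and then controlling the resulting gradient difference with the $L$-Lipschitz hypothesis. First I would introduce the scalar auxiliary function $g(t) = f(\xb + t(\yb - \xb))$ on $[0,1]$. Since $f$ is continuously differentiable, $g$ is continuously differentiable with $g'(t) = \langle \nabla f(\xb + t(\yb - \xb)),\, \yb - \xb \rangle$, and the fundamental theorem of calculus then gives $f(\yb) - f(\xb) = g(1) - g(0) = \int_0^1 g'(t)\,\mathrm{d}t$.

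Next I would isolate the gap between the true increment and its first-order term. Writing $\langle \nabla f(\xb),\, \yb - \xb \rangle = \int_0^1 \langle \nabla f(\xb),\, \yb - \xb\rangle\,\mathrm{d}t$ and subtracting, the quantity to be bounded becomes
\begin{equation*}
f(\yb) - f(\xb) - \langle \nabla f(\xb),\, \yb - \xb\rangle = \int_0^1 \langle \nabla f(\xb + t(\yb - \xb)) - \nabla f(\xb),\, \yb - \xb\rangle\,\mathrm{d}t.
\end{equation*}
The key step is to bound the integrand pointwise: by Cauchy--Schwarz followed by the $L$-Lipschitz property of $\nabla f$, the inner product is at most $\nt{\nabla f(\xb + t(\yb - \xb)) - \nabla f(\xb)}\,\nt{\yb - \xb} \le L\,\nt{t(\yb-\xb)}\,\nt{\yb-\xb} = L t\,\nt{\yb-\xb}^2$.

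Finally, integrating this bound over $[0,1]$ yields $\int_0^1 L t\,\nt{\yb-\xb}^2\,\mathrm{d}t = \tfrac{L}{2}\nt{\yb-\xb}^2$, which is precisely the stated upper bound. There is no genuine obstacle here, as this is a textbook result; the only points requiring care are the correct setup of the one-dimensional reduction so that the fundamental theorem of calculus applies (justified by $C^1$ regularity of $f$), and the chaining of Cauchy--Schwarz with the Lipschitz estimate so that the factor $t$ is tracked correctly and produces the $\tfrac{1}{2}$ after integration.
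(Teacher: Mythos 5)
Your proof is correct and is the standard textbook argument for the descent lemma: reduce to one dimension along the segment, apply the fundamental theorem of calculus, and bound the integrand via Cauchy--Schwarz and the Lipschitz property of $\nabla f$. The paper does not prove this lemma itself but cites it from the literature, and your derivation matches the canonical proof given in the cited sources, with the factor $t$ correctly tracked to produce the $\tfrac{L}{2}$ constant.
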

\begin{definition}[($\zeta, b_f$)-Byzantine Resilience \citep{blanchard2017machine}]
Let $\zeta$  be any angular value in the interval $[0, \pi/2)$, and $b_f$ be any integer in $\{0, 1, \dots,   n\}$. Consider $n$ independent identically distributed (i.i.d.) random vectors $\vb_1, \dots, \vb_n$ in $\Rbb^d$  with $\vb_i \sim G$ where $\EX\{G\} = g$, and $b_f$ random vectors $\bb_1, \dots, \bb_{b_f}$ in $\Rbb^d$, possibly dependent on the $\vb_i$'s. 
A choice function $F$ is said to be $(\zeta, b_f)$-Byzantine resilient if, for any $1 \leq j_1 < \dots < j_{b_f} \leq n$, the vector
\begin{equation}
    F = F(\vb_1, \dots, \bb_{j_1}, \dots, \bb_{j_{b_f}}, \dots, \vb_n)
\end{equation}
satisfies:
\begin{enumerate}
    \item It maintains alignment with the expected gradient: 
    \[
    \langle \EX\{F\}, g \rangle \geq (1 - \sin\alpha) \|g\|^2 > 0.
    \]
    \item $\EX\{\|F\|^r\}$ (for $r = 2,3,4$) satiates the following condition:
    \[
    \EX\{ \|F\|^r\} \leq C \sum \EX\{ \|G\|^{r_i}\}, \quad \text{where } \sum r_i = r.
    \]
\end{enumerate}
\label{def:defbyz}
\end{definition}
\begin{definition}[Kurdyka-Łojasiewicz Property \citep{attouch2009convergence}]
\label{def:kl_property}
A function \( f: \mathbb{R}^d \to \mathbb{R} \cup \{+\infty\} \) satisfies the Kurdyka-Łojasiewicz (KL) property at a point \( \bar{x} \in \text{dom}(\partial f) \) if there exist \( \eta > 0 \), a neighborhood \( U \) of \( \bar{x} \), and a continuous concave function \( \phi: [0, \eta) \to [0, +\infty) \) with \( \phi(0) = 0 \), \( \phi \) differentiable, and \( \phi'(s) > 0 \) for \( s \in (0, \eta) \), such that for all \( x \in U \) with \( f(\bar{x}) < f(x) < f(\bar{x}) + \eta \),
\[
\phi'(f(x) - f(\bar{x})) \cdot \text{dist}(0, \partial f(x)) \geq 1,
\]
where \( \text{dist}(0, \partial f(x)) = \inf \{ \|v\| \mid v \in \partial f(x) \} \). A function satisfying the KL property at all points in \( \text{dom}(\partial f) \) is called a KL function.
\end{definition}

\begin{proposition}
    Consider the sparse-unit capped simplex
    \begin{equation}
        \Delta^+_{t,\ell_0} = \left\{\wb \in \mathbb{R}^n \mid \sum_{i=1}^n w_i = 1, w_i \geq 0, w_i \leq t, \|\wb\|_0 \leq s\right\}
        \label{eq:sparseunitmathfound}
    \end{equation}
    with $t = \frac{1}{{n-b_f}}$ and $s = n-b_f$. Then any weight vector \(\wb \in \Delta^+_{t,\ell_0}\) has exactly \(n - b_f\) non-zero weights, each equal to \(\frac{1}{n - b_f}\), and the remaining \(b_f\) weights are zero.
    \label{pro:sparsecappunit}
\end{proposition}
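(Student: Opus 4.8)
The plan is to exploit the tension between the cap $w_i \le t$ and the cardinality bound $\nz{\wb} \le s$ through an elementary squeeze argument. First I would fix an arbitrary feasible point $\wb \in \Delta^+_{t,\ell_0}$ and let $S = \supp(\wb)$ denote its support, so that $k := |S| = \nz{\wb} \le s = n-b_f$ and $w_i = 0$ for every $i \notin S$. Since the entries outside $S$ vanish, the affine constraint $\sum_{i=1}^n w_i = 1$ reduces to $\sum_{i \in S} w_i = 1$, which is the only fact I need to carry forward from the simplex structure.

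The core step is to chain the two active constraints. Applying the cap $w_i \le t$ to each of the $k$ supported coordinates, and using $k \le n-b_f$ together with $t = \frac{1}{n-b_f}$, yields
\[
1 \;=\; \sum_{i \in S} w_i \;\le\; k\,t \;\le\; (n-b_f)\,t \;=\; 1 .
\]
Because the extreme members of this chain are both equal to $1$, every inequality must hold with equality. The equality $k\,t = (n-b_f)\,t$ forces $k = n-b_f$, so the support has exactly $n-b_f$ elements and precisely $b_f$ coordinates are zero. The equality $\sum_{i \in S} w_i = k\,t$ with each summand bounded above by $t$ then forces $w_i = t = \frac{1}{n-b_f}$ for every $i \in S$: were even one supported coordinate strictly below $t$, the sum of the $k = n-b_f$ terms would be strictly less than $(n-b_f)\,t = 1$, contradicting $\sum_{i \in S} w_i = 1$. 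This establishes the claimed structure, and (if desired for completeness) the converse is immediate, since any such vector trivially satisfies all four defining constraints, so the feasible set is exactly the finite collection of these equal-weight points.

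I do not expect a genuine obstacle in this argument; it is a pigeonhole/squeeze observation rather than a substantive proof. The only point requiring care is to verify that equality propagates through \emph{both} inequalities in the chain simultaneously, so that one obtains both the exact support size and the exact common value of the nonzero weights; everything else follows by substitution of $t$ and $s$.
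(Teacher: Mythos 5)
Your proof is correct and uses essentially the same argument as the paper: both exploit that the cap $w_i \le t$ together with $|S| \le n-b_f$ forces $\sum_{i\in S} w_i \le (n-b_f)t = 1$, so equality must hold throughout, pinning down both the support size and the common value. The paper merely packages this as two separate proofs by contradiction rather than your single equality-propagation chain; the content is identical.
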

\begin{proof}
    Let $\wb$ have $k \leq n - b_f$ non-zero weights indexed by $\Sc$, so $\sum_{i \in \Sc} w_i = 1$, $0 < w_i \leq \frac{1}{n - b_f}$ for all $i\in\Sc$. If $k < n - b_f$, we have:
    \begin{equation*}
    \sum_{i=1}^n w_i \leq k \cdot \frac{1}{n - b_f} < (n - b_f) \cdot \frac{1}{n - b_f} = 1,
    \end{equation*}
    which cannot satisfy \(\sum_{i=1}^n w_i = 1\). Thus, $k = n - b_f$. Suppose at least one weight, say $w_1 < \frac{1}{n - b_f}$. Then:
\begin{equation*}
\sum_{i=2}^{n - b_f} w_i = 1 - w_1 > 1 - \frac{1}{n - b_f} = \frac{n - b_f - 1}{n - b_f}.
\end{equation*}
The maximum sum for the remaining \(n - b_f - 1\) weights, each capped at \(\frac{1}{n - b_f}\), is $\frac{n - b_f - 1}{n - b_f}$.
Since the required sum $\sum_{i=2}^{n - b_f} w_i > \frac{n - b_f - 1}{n - b_f}$ exceeds the maximum possible sum, this is impossible. Thus, $w_1 < \frac{1}{n - b_f}$ leads to a contradiction. Similarly, $w_i > \frac{1}{n - b_f}$ violates the upper bound $w_i \leq \frac{1}{n - b_f}$.

Hence, all $n - b_f$ non-zero weights must be $\frac{1}{n - b_f}$, and the remaining $b_f$ weights are zero, satisfying all constraints.
\end{proof}

In the following lemma, we determine when the sparse, unit-sum, $t$-capped simplex is non-empty, a key condition for selecting valid sparsity and cap parameters in the proposed method \eqref{eq:jointthetaw}. We also compute an upper bound of the Euclidean distance between any two vectors in this set that we will use  in later results.
\begin{lemma}[Feasibility and Distance in Sparse Unit-Capped Simplex]
Consider the sparse unit-capped simplex \eqref{eq:sparseunitmathfound} in which, $n \geq 2$, $s \in \{1, \dots, n\}$, and $t \in (0, 1]$. 
Then $\Delta^+_{t,\ell_0} \neq \emptyset$ iff $st \geq 1$, and for any $\wb_1, \wb_2 \in \Delta^+_{t,\ell_0}$,
\begin{equation}
\nt{\wb_1 - \wb_2} \leq \sqrt{2 \left(  k t^2 + r^2 \right)}.
\label{eq:tightuppbound}
\end{equation}
where $k = \lfloor 1/t \rfloor$ and $r = 1 - kt \in [0, t)$. If $1/t \in \mathbb{N}$, the bound simplifies to $\|\wb_1 - \wb_2\|_2 \leq \sqrt{2t}$.
\label{lem:tightupp}
\end{lemma}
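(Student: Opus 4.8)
The plan is to handle the feasibility claim and the distance bound separately, since they rest on different ideas.

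For feasibility I would argue both directions directly. For the forward implication, suppose $\wb \in \Delta^+_{t,\ell_0}$. Since $\wb$ has at most $s$ nonzero entries, each bounded by $t$, summing yields $1 = \sum_{i=1}^n w_i \le st$, hence $st \ge 1$. For the converse, assuming $st \ge 1$, I would exhibit an explicit feasible point: place $k = \lfloor 1/t\rfloor$ coordinates equal to $t$ and one coordinate equal to $r = 1 - kt$, with all remaining entries zero. By definition of the floor, $kt \le 1 < (k+1)t$, so $r \in [0,t)$, and the entries sum to $kt + r = 1$ with each entry in $[0,t]$. The support size is $k+1$ (or $k$ when $r = 0$), and $st \ge 1$ gives $s \ge 1/t \ge k$, with $s \ge k+1$ whenever $r > 0$ (as $s$ is an integer and then $s \ge 1/t > k$); thus the support constraint $\|\wb\|_0 \le s$ holds and the point lies in $\Delta^+_{t,\ell_0}$.

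For the distance bound I would first reduce to controlling a single vector's norm. Expanding the square and using that every coordinate of $\wb_1$ and $\wb_2$ is nonnegative (so the cross term $\langle \wb_1, \wb_2\rangle = \sum_i w_{1,i}w_{2,i} \ge 0$),
\begin{equation*}
\nt{\wb_1 - \wb_2}^2 = \nt{\wb_1}^2 + \nt{\wb_2}^2 - 2\langle \wb_1,\wb_2\rangle \le \nt{\wb_1}^2 + \nt{\wb_2}^2.
\end{equation*}
It therefore suffices to prove $\nt{\wb}^2 \le kt^2 + r^2$ for every $\wb \in \Delta^+_{t,\ell_0}$, since adding the two copies and taking square roots then gives the claim; the simplification for $1/t \in \mathbb{N}$ is immediate because then $r = 0$ and $kt^2 = t$. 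The core is the maximization of the convex objective $\sum_i w_i^2$ over the capped simplex $\{0 \le w_i \le t,\ \sum_i w_i = 1\}$, which I would carry out with an exchange (bang-bang) argument: if two coordinates both lie strictly inside $(0,t)$, then holding their sum fixed and pushing one toward its nearest bound strictly increases $w_i^2 + w_j^2$ and hence the objective. Iterating drives any feasible point to a canonical configuration with at most one interior coordinate, i.e.\ $m$ coordinates at $t$, one at $c = 1 - mt \in [0,t]$, and the rest zero. The requirement $c \in [0,t]$ forces $m = k$ and $c = r$, so every such configuration attains $\sum_i w_i^2 = kt^2 + r^2$; this is therefore the maximum. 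Crucially its support size is at most $k+1 \le s$, so the maximizer automatically respects the $\ell_0$ constraint, and the maximum over the full capped simplex coincides with that over $\Delta^+_{t,\ell_0}$. Combining with the reduction above gives $\nt{\wb_1 - \wb_2} \le \sqrt{2(kt^2 + r^2)}$.

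The main obstacle I anticipate is making the convex-maximization step rigorous: I must justify that the sum of squares is maximized at a vertex of the capped simplex and verify that these vertices all share the form ``$k$ entries at $t$, one at $r$.'' The exchange argument sidesteps an explicit vertex enumeration, but care is needed to confirm that the canonical configuration is forced (i.e.\ that $m = k$, $c = r$ is the only possibility consistent with $c \in [0,t]$) and that the sparsity constraint does not exclude the maximizer — both of which hinge on the already-established inequality $st \ge 1$.
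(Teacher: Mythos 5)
Your proof is correct and follows essentially the same route as the paper: the same two-direction feasibility argument with the explicit witness $[t,\dots,t,r,0,\dots,0]$, and the same reduction $\nt{\wb_1-\wb_2}^2 \le \nt{\wb_1}^2+\nt{\wb_2}^2$ via nonnegativity of the cross term. The only difference is that the paper simply asserts that $\max_{\wb}\nt{\wb}_2^2 = kt^2+r^2$ over the capped simplex, whereas you justify it with the exchange/bang-bang argument (and also verify the support-size condition $k+1\le s$ that the paper glosses over), so your version is a more complete rendering of the same proof.
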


\begin{proof}
($\Rightarrow$) If $\Delta^+_{t,\ell_0} \neq \emptyset$, then there exists a $t$-capped, $s$-sparse vector summing to $1$. Since each nonzero entry is at most $t$, we must have $st \geq 1$.

($\Leftarrow$) If $st \geq 1$, we can construct a feasible vector by setting $k = \lfloor 1/t \rfloor$, $r = 1 - kt$, and defining
\[
\wb^\star = [\underbrace{t, \dots, t}_{k \text{ times}}, r, 0, \dots, 0],
\]
which has at most $s$ nonzero entries and sums to $1$, so $\wb^\star \in \Delta^+_{t,\ell_0}$.  

The maximum $\ell_2$ norm in $\Delta^+_{t,\ell_0}$ is achieved by $\wb^\star = [\underbrace{t, \dots, t}_{k \text{ times}}, r, 0, \dots, 0]$ with $\|\wb^\star\|_2^2 = kt^2 + r^2$. For any $\wb_1, \wb_2 \in \Delta^+_{t,\ell_0}$, we have
\[
\nt{\wb_1 - \wb_2}^2 = \nt{\wb_1}^2 + \nt{\wb_2}^2 - 2 \langle \wb_1, \wb_2 \rangle \leq 2 (k t^2 + r^2),
\]
because $\langle \wb_1, \wb_2 \rangle \geq 0$ (all entries are non-negative). Taking square roots yields the result.
\end{proof}
\section{Optimizing the Joint Loss Function: FedLAW and BSUM}
\label{sec:bsum_comparison}

A core contribution of this paper is to address Byzantine-robust federated learning by formulating and solving the following joint optimization problem over global model parameters \(\thetab \in \mathbb{R}^d\) and aggregation weights \(\wb = [w_1, \dots, w_n]\):
\begin{equation}
    \min_{\thetab \in \mathbb{R}^{d}, \wb \in \Delta^+_{t,\ell_0}} \sum_{i=1}^{n} w_i f_i(\thetab),
    \label{eq:jointthetaw111}
\end{equation}
where \(\Delta^+_{t,\ell_0} = \{\wb \in \mathbb{R}^n \mid \sum_{i=1}^n w_i = 1, w_i \geq 0, w_i \leq t, \|\wb\|_0 \leq s\}\).

As detailed in Section~\ref{subsec:proalg}, our primary solution to this problem is the FedLAW algorithm, whose pseudocode is provided in Algorithm~\ref{alg:fedlaw}. As another contribution, we adapt the \textit{Block Successive Upper-bound Minimization (BSUM)} method~\citep{razaviyayn2013bsum} to solve \eqref{eq:jointthetaw111}, providing a baseline for our novel algorithm. By comparing FedLAW and BSUM, we demonstrate that FedLAW’s integrated use of loss and gradient information results in more robust and efficient learning under adversarial conditions. Below, we detail both methods and their comparative performance.

\subsection{BSUM: A Baseline for Comparison}
BSUM alternates between updating the global model parameters \(\thetab\) and client weights \(\wb\), minimizing a local upper bound approximation of the objective for each block while fixing the other. At iteration \(k+1\), it performs the following operations:

1. \textbf{Update \(\thetab\)}: With \(\wb = \wb_k\), solve:
   \begin{equation}
       \min\limits_{\thetab \in \mathbb{R}^{d}} \sum_{i=1}^n w_{k,i} f_i(\thetab).
       \label{eq:firsfixtheta}
   \end{equation}
   Assuming differentiable \(f_i(\thetab)\), we provide a quadratic upper bound $\hat{f}_i(\thetab)$ of $f_i(\thetab)$
\begin{equation}
    \hat{f}_i(\thetab; \thetab_k) = f_i(\thetab_k)+\langle\nabla_{\theta} f_i(\thetab_k), \thetab-\thetab_k\rangle+\frac{1}{2\alpha}\nt{\thetab-\thetab_k}^2.
    \label{eq:quadformf11}
\end{equation}
   where \(\alpha > 0\) is a step size. Substituting this quadratic upper bound into \eqref{eq:firsfixtheta} gives 
      \begin{equation}
    \tilde{\thetab}_{k+1}=\argmin_{\thetab\in\Rbb^{d}}\sum_{i=1}^{n} w_i \hat{f}_i(\thetab; \thetab_k) = \thetab_k - \alpha \Gb_k \wb_k,
    \label{eq:bsum_theta}
\end{equation}
in which $\Gb_k = [\nabla_{\theta} f_1(\thetab_k), \cdots, \nabla_{\theta} f_n(\thetab_k)]$.

2. \textbf{Update \(\wb\)}: With \(\thetab = \tilde{\thetab}_{k+1}\), solve:
   \begin{equation}
       \wb_{k+1} = \argmin_{\wb \in \Delta^+_{t,\ell_0}} \sum_{i=1}^n w_i f_i(\tilde{\thetab}_{k+1}).
       \label{eq:bsumweight}
   \end{equation}
   This is a linear program in \(\wb\), constrained by the sparse unit-capped simplex \(\Delta^+_{t,\ell_0}\). To ensure a fair comparison with our method, which uses the most recent weights to update $\thetab$ for enhanced robustness, we apply an extrapolation step, updating:
   
\[
\thetab_{k+1} = \thetab_k - \alpha \Gb_k \wb_{k+1}.
\]

This extrapolation aligns BSUM’s $\thetab$ update with our method’s strategy, using the latest weights $\wb_{k+1}$ optimized at $\tilde{\thetab}_{k+1}$ to reflect current client reliability assessments. By incorporating $\wb_{k+1}$, which downweights malicious clients via the sparse capped simplex, the update improves robustness against Byzantine attacks, ensuring that performance differences arise from 
the algorithm design rather than the precise ordering of updates. 

\subsection{FedLAW vs. BSUM: Capturing Joint Optimization Dynamics}
A key difference distinguishes our method (Equation \eqref{eq:updateweight}) from BSUM. In BSUM, \(\tilde{\thetab}_{k+1}\) is fixed when updating \(\wb\), simplifying \eqref{eq:bsumweight} to a linear objective. In contrast, our method minimizes:
\begin{equation}
    \sum_{i=1}^n w_i f_i(\thetab_k - \alpha \Gb_k \wb).
    \label{eq:proposed_weight}
\end{equation}
When the weight vector \(\wb\) inside \(f_i\) is fixed to \(\wb_k\), our method (Equation \eqref{eq:updateweight}) reduces to BSUM’s linear program (Equation \eqref{eq:bsumweight}), revealing that BSUM is a suboptimal approximation. This raises a key question: how does our method compare to BSUM in Byzantine-robust federated learning? BSUM updates \(\wb\) by minimizing \(\sum_{i=1}^n w_i f_i(\tilde{\thetab}_{k+1})\), relying solely on client losses \(f_i(\tilde{\thetab}_{k+1})\) to detect malicious clients. However, attacks like the inverse gradient attack, where malicious clients submit flipped gradients but benign losses, are not immediately detectable in losses, delaying BSUM’s exclusion of malicious clients and slowing convergence. In contrast, our method optimizes \(\sum_{i=1}^n w_i f_i(\thetab_k - \alpha \Gb_k \wb)\), leveraging both losses and gradients to enhance robustness.

{
 The objective in \eqref{eq:proposed_weight} finds the most {internally coherent client subgroup}, and the premise is not that the descent direction of a {single} client matters.

The objective in Eq.~\eqref{eq:proposed_weight} can be understood as an inner optimization problem. For a candidate subgroup defined by weights $w$:
\begin{enumerate}
    \item It first computes a {collective descent direction}, $d(w) = \Gb_k w$. This represents the consensus update direction proposed by the subgroup $w$.
    
    \item It then evaluates the quality of this direction by measuring the post-update empirical loss, $L(w) = \sum_i w_i f_i(\theta_k - \alpha d(w))$. This step checks if the collective descent direction $d(w)$ is actually effective for the same subgroup $w$ that proposed it.
\end{enumerate}

The $\arg\min_w$ in Eq. \eqref{eq:indicprojtheta1} operation therefore searches for the weight vector $w$ that defines a subgroup whose collective descent direction is most effective \emph{for itself}.
\begin{itemize}
    \item A coherent subgroup (e.g., benign clients, which tend to align due to similar gradient directions) will propose a descent direction $d(w)$ that is effective for all its members, achieving a low objective value.
    \item Including a malicious outlier poisons $d(w)$, making it a poor descent direction for the benign majority, thus penalizing that choice of $w$.
\end{itemize}
}

Our algorithm solves \eqref{eq:updateweight} via the projection in Equation \eqref{eq:prounitorg}, where \(\wb_{k+1} = \text{prox}_{\Delta^+_{t,\ell_0}}(\hb_k)\) and \(\hb_k = \wb_k + \beta \alpha \Gb_k^T \Tilde{\Gb}_{k+1} \wb_k - \beta \fb(\thetab_k - \alpha \Gb_k \wb_k)\). Here, \(\Tilde{\Gb}_{k+1} = [\nabla_{\theta} f_1(\thetab_k - \alpha \Gb_k \wb_k), \ldots, \nabla_{\theta} f_n(\thetab_k - \alpha \Gb_k \wb_k)]\) and \(\fb(\thetab_k - \alpha \Gb_k \wb_k) = [f_1(\thetab_k - \alpha \Gb_k \wb_k), \ldots, f_n(\thetab_k - \alpha \Gb_k \wb_k)]\). The vector \(\hb_k\) incorporates two factors for detecting malicious clients:
\begin{enumerate}
    \item \textbf{Losses} (\(\fb(\thetab_k - \alpha \Gb_k \wb_k)\)): This term, similar to BSUM, assesses client losses. It is effective against attacks like data poisoning, which directly alter losses, but is less reliable when losses remain benign despite malicious gradients.
    \item \textbf{Gradient Alignment} (\(\Gb_k^T \widetilde{\Gb}_{k+1}\)): This inner product quantifies the alignment between gradients at \(\thetab_k\) and \(\thetab_k - \alpha \Gb_k \wb_k\) across consecutive rounds. In attacks such as the inverse gradient attack, malicious gradients typically misalign in direction relative to benign ones, enabling early detection and exclusion through the sparsity constraint \(\|\wb\|_0 \leq s\).
\end{enumerate}

To substantiate this comparison, we provide an empirical evaluation of our method against BSUM (labeled as "FedLAW-BSUM" in the figures), offering evidence to support the theoretical advantages.

\noindent\textbf{Empirical comparison.} 
Figures~\ref{fig:attack‑acc‑combined} and~\ref{fig:attack-weights} compare our method with BSUM under four adversarial attack scenarios on MNIST: flipping label, inverse gradient, backdoor, and double attack. 

\textbf{Figure~\ref{fig:attack‑acc‑combined}} shows the test accuracy over 200 epochs. Across all attack settings, our method (FedLAW, dashed green) either matches or surpasses the performance of BSUM (solid blue), with the most significant improvements observed under gradient-based attacks. In particular, under the inverse gradient and double attack scenarios, FedLAW converges faster and reaches a higher final accuracy, highlighting its enhanced robustness in detecting and mitigating subtle gradient manipulations.

\textbf{Figure~\ref{fig:attack-weights}} further illustrates the client-weight evolution under these adversarial conditions. The top two rows correspond to FedLAW, and the bottom two rows to BSUM. Each subplot displays how the aggregation weight of each client evolves across 100 epochs. While both methods eventually suppress malicious clients (red/orange), FedLAW consistently achieves this suppression earlier, especially under inverse gradient and double attacks, due to its reliance on both loss and gradient information. In contrast, BSUM, which only considers losses, fails to immediately detect malicious clients whose losses remain close to benign ones, delaying their exclusion.

These findings demonstrate that FedLAW more effectively captures the joint optimization dynamics in \eqref{eq:jointthetaw111} by incorporating both loss and gradient-based alignment into the weight update process compared to BSUM. FedLAW not only detects malicious behavior earlier but also maintains better model accuracy and stability in the presence of sophisticated adversaries.

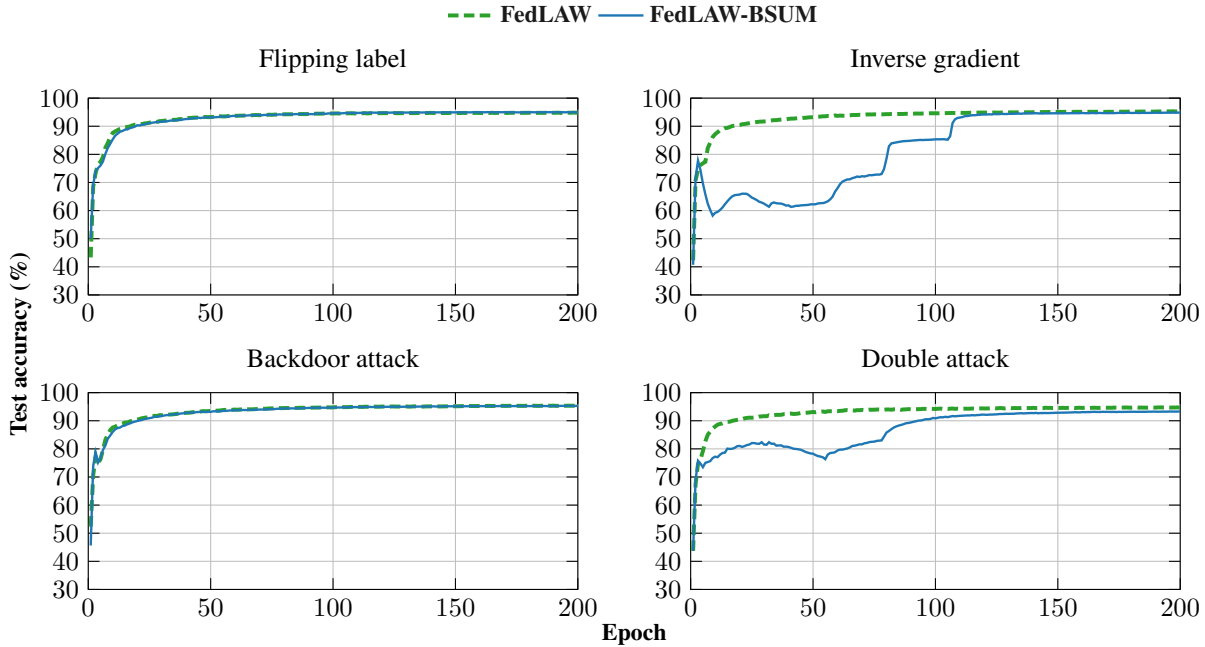
\begin{figure*}
\centering
\begin{tikzpicture}

\pgfplotsset{compat=1.18}

\definecolor{Blue}{rgb}{0.12,0.47,0.71}
\definecolor{Green}{rgb}{0.17,0.63,0.17}

\pgfplotsset{
  accBsum/.style   ={Blue,   solid, mark=none, line width=0.9pt},
  accSparse/.style ={Green, mark=none, dash pattern=on 4pt off 2pt, line width=1.5pt},
  accplot/.style={
    width=0.49\textwidth, height=4.2cm,
    xmin=0, xmax=200, ymin=30, ymax=100,
    grid=major,
    ytick={30,40,...,100},
    xlabel near ticks, ylabel near ticks,
    tick style={black, thin},
    legend style={font=\footnotesize, draw=none, fill=white, fill opacity=0.9},
    legend cell align=left, legend columns=-1
  }
}

\begin{groupplot}[
    group style={group size=2 by 2, horizontal sep=1.5cm, vertical sep=1.3cm},
    accplot,
    legend to name=sharedlegend
]

\nextgroupplot[title={Flipping label}]
\pgfplotstableread[col sep=space]{tikz_data/acc/flip_labels_acc.dat}\tblFL
\addplot+[accSparse] table[x=epoch,y=sparse]{\tblFL}; 
\addplot+[accBsum]   table[x=epoch,y=bsum  ]{\tblFL};

\nextgroupplot[title={Inverse gradient}]
\pgfplotstableread[col sep=space]{tikz_data/acc/inverse_new_acc.dat}\tblIG
\addplot+[accSparse] table[x=epoch,y=sparse]{\tblIG};
\addplot+[accBsum]   table[x=epoch,y=bsum  ]{\tblIG};

\nextgroupplot[title={Backdoor attack}]
\pgfplotstableread[col sep=space]{tikz_data/acc/backdoor_random_acc.dat}\tblBD
\addplot+[accSparse] table[x=epoch,y=sparse]{\tblBD};
\addplot+[accBsum]   table[x=epoch,y=bsum  ]{\tblBD};

\nextgroupplot[title={Double attack}]
\pgfplotstableread[col sep=space]{tikz_data/acc/double_attack_fixed_acc.dat}\tblDA
\addplot+[accSparse] table[x=epoch,y=sparse]{\tblDA};
\addplot+[accBsum]   table[x=epoch,y=bsum  ]{\tblDA};

\addlegendimage{accSparse}   \addlegendentry{\textbf{FedLAW}}
\addlegendimage{accBsum} \addlegendentry{\textbf{FedLAW-BSUM}}

\end{groupplot}

\node at ($(group c1r1.north)!0.5!(group c2r1.north) + (0,1.1cm)$)
      {\pgfplotslegendfromname{sharedlegend}};
\node[rotate=90] at ($(group c1r1.west)!0.5!(group c1r2.west) + (-0.9cm,0)$)
      {\bfseries\footnotesize Test accuracy (\%)};
\node            at ($(group c1r2.south)!0.5!(group c2r2.south) + (0,-0.6cm)$)
      {\bfseries\footnotesize Epoch};
\end{tikzpicture}

\caption{Test accuracy on MNIST for four attack settings ($q{=}0.9$, $10$ clients, $40\%$ malicious), contrasting
\textbf{FedLAW‑BSUM} (solid blue) with \textbf{FedLAW} (dashed green).}
\label{fig:attack‑acc‑combined}
\end{figure*}

\input{includes/per_client_weight_vs_epoch}

{ \section{Theoretical Analysis: A Multi-Layered Defense Against Strategic Attacks}
We address the critical question of whether a strategic attacker can manipulate our weight-learning process to gain influence. In this section, we present a theoretical analysis of our multi-layered defense: we study the dynamics of our method, design strategic attacks that attempt to deceive it, and examine how our approach responds to these mimicry attacks.

\subsubsection*{Layer 1: Foundational Constraints and Attack Model}
Our defense is built upon a set of foundational constraints that limit an attacker's power {a priori}.

\paragraph{Constraints.}
\begin{enumerate}
    \item \textbf{Loss preprocessing.} As a practical first line of defense, the server can filter naive attackers by detecting clients whose reported losses $\{\tilde{f}_{k+1,p}\}$ are statistical outliers (e.g., using a Median Absolute Deviation (MAD) test to flag anomalously low values).
    \item \textbf{Gradient norm bound.} We assume $\ell_2$ norm of Byzantine is bounded by $\ell_2$ norm of honest mini-batch gradients, which can be handled by gradient clipping.  
    \item \textbf{Weight Capping:} The optimization constraint $\wb \in \Delta_{t, \ell_0}^+$ ensures no client weight can exceed the cap $t$. To exactly exclude the $b_f$ clients, $t = \frac{1}{n-b_f}$. This provides a hard limit on the influence of any single client, even in a hypothetical worst-case scenario.
\end{enumerate}
\begin{remark}
The Layer~1 preprocessing is fully aligned with Assumption~\ref{ass:formal_setup_unified} and our theoretical analysis. The latter two components are explicitly incorporated in Theorem~\ref{thm:byzres}. In addition, the loss-based filtering step is consistent with the assumptions in Theorem~\ref{thm:byzres}, where we assume that the loss heterogeneity $\varepsilon_k$ among honest clients is small; this is also supported by the numerical study in Section~I.5. Hence, honest clients are unlikely to be excluded by this preprocessing step. Moreover, this filtering helps tighten the bounds in our Byzantine analysis: in \eqref{eq:byzbothclirentloss}, the term
\[
\Big(\sum_{i\in\Lambda^d} f_i(\thetab_k)-\sum_{j\in\Lambda^b} \tilde{f}_j(\thetab_k)\Big)
\]
directly appears in the upper bound on the bias of our aggregator, and by removing extreme outliers, we prevent this quantity from becoming large.
\end{remark}

We analyze the critical moment an attack begins. At round $k$, all clients are honest and weights are uniform ($\wb_k = \frac{1}{n}\mathbf{1}$). At round $k+1$, clients in a set $\Hc^\complement$ become Byzantine.

\subsubsection*{Layer 2: Theoretical analysis of the core detection mechanism}
With the attacker's power constrained bounded by $\ell_2$ norm of honest gradients, we now analyze how our algorithm (Eq. (10)) is designed to detect malicious intent. The server computes $\tilde{\thetab}_{k+1} = \boldsymbol{\theta}_k - \alpha G_k \wb_k$, then collects gradients and losses at $\tilde{\thetab}_{k+1}$ from honest clients ($\mathcal{H}$) and Byzantine clients ($\mathcal{H}^\complement$). Based on these, it builds the pre-projection vector $\hb_k$ with components:
\begin{equation}
    h_{k,j} = \frac{1}{n} + \frac{\beta\alpha}{n} \langle \tilde{\vb}_{k,j}, \tilde{\vb}_{\Hc}+\bb_{\Hc^{\complement}} \rangle - \beta {f}_j(\tilde{\thetab}_{k+1}).
\end{equation}
In the next step, the server keeps the $s$ largest entries of $\hb_k$ and sets all others to zero. For a Byzantine client $j \in \Hc^\complement$ to remain in the support, its score must dominate that of some honest client $i \in \Hc$, i.e.,
\begin{equation}
    h_{k,j}\geq h_{k,i}.
\end{equation}
The above inequality can be written by
\begin{equation}
    \Delta_{ji,k} = \frac{1}{\beta}(h_{k,j}-h_{k,i}) = \alpha \underbrace{\langle \tilde{\vb}_{k,j}-\tilde{\vb}_{k,i}, \tilde{\vb}_{\Hc}+\bb_{\Hc^{\complement}} \rangle}_{\text{alignment}} + {f}_i(\tilde{\thetab}_{k+1})-\hat{f}_j(\tilde{\boldsymbol{\theta}}_{k+1})\geq 0
    \label{eq:deltaijmal}
\end{equation}

We intentionally evaluate FedLAW in the most challenging setting, where in the first attack round it may happen that $\Delta_{ji,k} \ge 0$. Consequently, at round $k{+}1$ our method can temporarily assign client $j$ a slightly larger weight (still strictly bounded by the cap $t$).

Crucially, FedLAW is adaptive and iterative: any temporary over-weighting of a misaligned client strengthens its contribution to the aggregated gradient in the next round, which in turn accentuates its misalignment signal. Because $w_{k+1,j} > w_{k,j}$ and the weights at round $k{+}1$ are no longer uniform, the next pre-projection vector $\hb_{k+1}$ takes the form

\begin{align}
    \nonumber& h_{k+1,j} = w_{k+1,j}+\alpha\beta\langle \bb_{k+1,j}, \sum_{i\in\Hc} w_{k+1,i} \tilde{\vb}_{k+2,i}+\sum_{j\in\Hc^\complement} w_{k+1,j} \bb_{k+2,j}\rangle-\beta\hat{f}_j(\tilde{\thetab}_{k+2})\\&
    h_{k+1,i} = w_{k+1,i}+\alpha\beta\langle \tilde{\vb}_{k+1,i}, \sum_{i\in\Hc} w_{k+1,i} \tilde{\vb}_{k+2,i}+\sum_{j\in\Hc^\complement} w_{k+1,j} \bb_{k+2,j}\rangle-\beta{f}_i(\tilde{\thetab}_{k+2})
\end{align}
Then, we have
\begin{multline}
    \Delta_{ji,k+1} = w_{k+1,j}-w_{k+1,i}+\alpha\beta
    \underbrace{\langle \bb_{k+1,j}-\tilde{\vb}_{k+1,i},\,
    \sum_{i\in\Hc} w_{k+1,i}\tilde{\vb}_{k+2,i}
    +\sum_{j\in\Hc^\complement} w_{k+1,j}\bb_{k+2,j} \rangle}_{\text{misalignment term}} \\
    +\beta f_i(\tilde{\thetab}_{k+2})-\beta\hat{f}_j(\tilde{\thetab}_{k+2}).
    \label{eq:misaligterm}
\end{multline}
Next, we define the following score 
\begin{equation}
    s_{ji,k+1} = \alpha {\langle \bb_{k+1,j}-\tilde{\vb}_{k+1,i}, \tilde{\Fb} \rangle}+{f}_i(\tilde{\thetab}_{k+2})-\hat{f}_j(\tilde{\thetab}_{k+2})
    \label{eq:scoragg}
\end{equation}
in which $\tilde{\Fb} = \sum_{i\in\Hc} w_{k+1,i} \tilde{\vb}_{k+2,i}+\sum_{j\in\Hc^\complement} w_{k+1,j} \bb_{k+2,j}$ is our aggregator. 
 
Given \eqref{eq:scoragg}, the loss difference and the misalignment term are the two signals FedLAW uses to detect malicious clients. In \emph{data poisoning} (e.g., label flipping, backdoor), the attacker changes its data distribution. For a model $\tilde{\thetab}$ trained on clean data, this yields (i) a higher poisoned loss $\hat{f}_j(\tilde{\thetab}) > f_i(\tilde{\thetab})$ and (ii) a gradient $\bb_{k+1,j}$ that conflicts with the coherent honest cluster. Thus, \emph{both} terms in $s_{ji,k+1}$ work against the Byzantine client. In a \emph{model attack}, the data remain clean but the submitted gradient $\bb_{k+1,j}$ is corrupted (e.g., inverse gradient), so the loss is similar to honest clients, and the misalignment term in $s_{ji,k+1}$ is the main discriminator.

\paragraph{Mimic attacks:}
In the most challenging case, an adversary tries to cancel the negative misalignment term by slightly under-reporting its loss. It must choose $\hat{f}_j(\tilde{\thetab}_{k+1}) < f_j(\tilde{\thetab}_{k+1})$ while still sending a misaligned gradient. If $\hat{f}_j(\tilde{\thetab}_{k+1}) \ll f_j(\tilde{\thetab}_{k+1})$, this is easily caught by the loss preprocessing in Layer~1. The harder regime is when the reported loss is only slightly smaller than the honest level, so that
$f_i(\tilde{\thetab}_{k+1}) - \hat{f}_j(\tilde{\thetab}_{k+1})$
partially offsets the alignment penalty in \eqref{eq:deltaijmal}. In that first attack round, it may happen that $\Delta_{ji,k} \ge 0$, so at round $k{+}1$ FedLAW can temporarily assign client $j$ a somewhat larger weight (still bounded above by the cap $t$).

We leverage the proven Byzantine-resilience of our aggregator (Theorem 2). This theorem guarantees that the mean of the aggregated gradient, $\tilde{\Fb}$, maintains a positive alignment with the true honest mean, i.e.
\begin{equation}
    \gb_{k+1} = \frac{1}{n-b_f}\sum_{i\in\Hc}\EX\{\vb_{k+1,i}\}
\end{equation}
 An effective attack requires the malicious gradient $\bb_{k+1,j}$ to be misaligned with this honest direction. The honest mini-batch gradient $\tilde{\vb}_{k+1,i}$ is, by definition, coherent with the honest mean $\gb_{k+1}$, which is guaranteed by Theorem \ref{thm:byzres} to be in the same general direction as $\tilde{\Fb}$. Consequently, for an effective attack
\begin{equation}
    \langle \bb_{k+1,j}-\tilde{\vb}_{k+1,i}, \tilde{\Fb} \rangle<0
    \label{eq:misscore}
\end{equation}
So, in \eqref{eq:scoragg}, although the loss difference is positive, the misalignment term above remains negative, which helps to neutralize the effect of the attack. In a strong attack, this term becomes more negative, meaning the loss difference cannot be strongly positive; otherwise, the attack could be detected by loss preprocessing. Consequently, in this case the score $s_{ji,k+1}$ becomes negative and the weight of the Byzantine client starts to decrease. In this mimic attack, due to its more sophisticated nature, FedLAW may require more epochs to reduce the weights of malicious clients to zero compared to data poisoning and inverse-gradient attacks.

An adversary can try to weaken this effect by sending Byzantine gradients that are {aligned} with the average honest gradient, in which case the misalignment term in \eqref{eq:misaligterm} becomes close to zero. However, such an attack is inherently weak: since each client’s weight is capped by $t = 1/(n - b_f)$ (usually $n\gg b_f$) and the Byzantine gradients closely follow the honest direction, they cannot significantly steer the global model away from the benign trajectory. This behavior is consistent with our empirical results under the LIE attack in Fig.~\ref{fig:attack-plots-merged}. In LIE, Byzantine gradients are generated from the mean and cross-correlation of honest gradients (Sec.~\ref{subsec:expframework}), making them much more aligned with honest updates (so that the term \eqref{eq:misscore} becomes close to zero) than in label-flipping or inverse-gradient attacks. As a consequence, even the no-defense baseline performs relatively well in this setting. Moreover, under this type of attack, FedLAW achieves strong performance compared to other robust FL methods.

In summary, when the attacker uses misaligned gradients, the alignment term amplifies evidence against it and suppresses its weight; when the attacker aligns with honest gradients, the attack itself becomes too weak to cause substantial damage. The objective is thus structured so that what is “good” for the optimizer (low marginal score) coincides with what is “good” for the global model.

\subsubsection*{Layer 3: The Deterministic Enforcer – The Sparse Simplex}
Ultimately, the optimizer's decision is not heuristic. It solves a constrained optimization problem by selecting the $s$ clients with the highest scores. Our analysis proves that under any effective attack, the score of a Byzantine client is mathematically driven to be lower than its honest peers, either immediately (for data poisoning and model attacks) or within a few iterations (for strategic mimicry attacks). The projection onto the sparse simplex is the deterministic mechanism that enforces this ranking, pruning the low-scoring clients and ensuring the system's integrity.

We complement this analysis with an empirical study of a mimic--misaligned model attack in App.~\ref{app:mimic-misaligned} (Fig.~\ref{fig:mimic_misaligned_weights}), where we observe that strategic under-reporting of the loss only delays, but does not prevent, the collapse of malicious weights to (near) zero.

}

\section{Projection onto the unit-capped simplex}\label{sec:capp simlex}
The purpose here is to solve the following optimization problem:
\begin{equation}
     \xb_{\Delta^+_{t}} = \Pc_{\Delta^+_{t}}(\yb) = \argmin_{\xb\in\Delta^+_{t}}\frac{1}{2}\nt{\xb-\yb}^2
     \label{eq:cappaer}
\end{equation}
where $\Delta^+_t = \{\wb\in\Rbb^{n}\mid\sum_{i=1}^n w_i = 1, w_i\geq 0, w_i\leq t\}$. In \cite{wang2015projectioncappedsimplex}, using Karush–Kuhn–Tucker (KKT) conditions, a solution for the above projection is determined which is formulated in Algorithm \ref{alg:projection}.
\begin{algorithm}
\caption{Projection onto the unit capped simplex}
\label{alg:projection}
\begin{algorithmic}[1]
\Require $\yb \in \Rbb^{n}$ is sorted in ascending order: $y_1 \leq y_2 \leq \dots \leq y_n$
\State Set $y_0 = -\infty$ and $y_{n+1} = \infty$
\State Compute partial sums $T_0 = 0$ and $T_k = \frac{1}{t}\sum\limits_{j=1}^{k} y_j$ for $k = 1, \dots, n$
\For{$a = 0, 1, \dots, n$}
    \If{$(\frac{1}{t} == n - a)$ \textbf{and} $(y_{a+1} - y_a \geq t)$}
        \State Set $b = a$
        \State \textbf{break}
    \EndIf
    \For{$b = a+1, \dots, n$}
        \State Compute $\gamma = \frac{\frac{1}{t} + b - n + T_a - T_b}{b - a}$
        \If{$(\frac{y_a}{t} + \gamma \leq 0)$ \textbf{and} $(\frac{y_{a+1}}{t} + \gamma > 0)$ \textbf{and} $(\frac{y_b}{t} + \gamma < 1)$ \textbf{and} $(\frac{y_{b+1}}{t} \geq 1)$}
            \State \textbf{break}
        \EndIf
    \EndFor
\EndFor
\State \textbf{Output:} $\xb = [0, \dots, 0, y_{a+1} + t \gamma, \dots, y_b + t \gamma, t, \dots, t]$
\end{algorithmic}
\label{alg:cappedsim}
\end{algorithm}

For more details on this algorithm, see \cite{wang2015projectioncappedsimplex}. 
\section{Proof of Theorem \ref{thm:cappedcon}}\label{app:appcap}
To prove Theorem \ref{thm:cappedcon}, we draw inspiration from the analysis used in the proof of Theorem 2 in \cite{10.5555/3042817.3042920}. However, it is crucial to highlight a key difference: while \cite{10.5555/3042817.3042920} establishes convergence for the sparse projection onto the unit simplex, our goal is to demonstrate the convergence of the proposed method for the sparse unit-capped simplex, which introduces additional constraints.

To prove Theorem \ref{thm:cappedcon}, we use the following two steps.

\subsection*{Step 1: The $s$-largest elements should be in the solution}
Let $\wb$ be an optimal projection of $\hb_k$ onto $\Delta_{t,\ell_0}^+$, and assume that there exists an index $i$ among the \textit{s-largest} entries of $\hb_k$ such that $w_i = 0$. Suppose also that there exists an index $j \not\in \supp(P_{L_s}(\hb_k))$ where $w_j>0$. Define a new vector $\Tilde{\wb}$ by setting $ \Tilde{w}_j = 0$ and $ \Tilde{w}_i = w_j $, while keeping all other entries unchanged. Consequently
\begin{equation}
    \nt{\wb-\hb_k}^2 = \nt{\Tilde{\wb}-\hb_k}^2 + 2 w_j (h_k^i-h_k^j)
\end{equation}
Since $w_j (h_k^i-h_k^j)\geq 0$, it follows that $\nt{\wb-\hb_k}^2 \geq \nt{\Tilde{\wb}-\hb_k}^2$, contradicting the assumption that $\wb$ is the optimal projection. Thus, the $s$-largest coordinates of $\wb$ should be in the solution.

\subsection*{Step 2: Ensuring the simplex constraint is satisfied}
Once the support set $\Sc^*$ (of size $s$) in \eqref{eq:fgklhj} is determined, the optimal solution is obtained by projecting $\wb_{\Sc^*}$ onto the unit-capped simplex. For the projection we utilized the proposed method in \cite{wang2015projectioncappedsimplex} and established in Algorithm \ref{alg:cappedsim}. Since the proposed method used the KKT conditions for this projection, using \eqref{eq:fgklhj} and \eqref{eq:cappaer}, we have
\begin{equation}
   \nt{{\wb_{k+1}}_{\Sc^*}-{\hb_{\lambda}}_{\Sc^*}}\leq \nt{{\Tilde{\wb}-{\hb_{\lambda}}_{\Sc^*}}}, \forall \Tilde{\wb}\in \Delta^+_{t} 
   \label{eq:minproj}
\end{equation}

This guarantees that ${\wb_{k+1}}_{\Sc^*}$ is the optimal projection of ${\hb_{\lambda}}_{\Sc^*}$ onto $\Delta^+_{t}$.

At the final step we show that solutions with support $|\Sc| = s$ are as good as any other solutions with $|\Sc| < s$.
Suppose there exists a solution $\wb$ with support $|\Sc| < s$. Consider extending $|\Sc|$ to a set $|\Sc^{\prime}| = s$ by adding any elements and its protection onto unit-capped simplex results in $\wb_2$. Since according to \eqref{eq:minproj}, the new solution satisfies $\nt{{\wb^{\prime}}_{\Sc^{\prime}}-{\hb_{\lambda}}_{\Sc^{\prime}}}\leq \nt{{{\wb}_{\Sc}-{\hb_{\lambda}}_{\Sc}}}$. This results in $\nt{{\wb^{\prime}}-{\hb_{\lambda}}}\leq \nt{{\wb-{\hb_{\lambda}}}}$.

\section{Proof of Theorem \ref{thm:byzres}}\label{sec:prbyzres}
In this section, we present the proof of Theorem \ref{thm:byzres}. 

To simplify the notation, we omit the iteration index $k$ throughout this proof, as the analysis holds for any arbitrary round $k$. We thus denote the true gradient of client $i$ as $\vb_i := \nabla_{\theta} f_i(\thetab_k)$ (instead of $\vb_{k,i}$) and its mini\mbox{-}batch estimate as $\tilde{\vb}_i$ (instead of $\tilde{\vb}_{k,i}$).

{ \paragraph{Proof roadmap:}
The proof has three main steps. First, we apply Taylor’s theorem with exact remainder to rewrite \eqref{eq:updateweight} into an analytically tractable form, where gradients are mini-batch based; the stochasticity from mini-batch sampling is then controlled via Hoeffding-type concentration. Second, we introduce a full-batch aggregator $F$ (replacing mini-batch by full gradients in aggregator $\tilde{F}$) and decompose the bias
$\|\mathbb{E}\{\tilde{F}\}-\gb\|$ into a mini-batch term and a full-batch term, bounding the former by concentration. Third, for the full-batch part, we define an auxiliary selector vector ${\mathbf{s}}$ representing $\gb$ and show that the bias can be bounded in terms of discrepancies $\|\vb_i - \bb_j\|_2$ between honest and Byzantine gradients. Bounding these discrepancies via our heterogeneity quantities and combining both pieces yields the claimed high-probability bound.}

\subsection{Optimal solution of \eqref{eq:updateweight}}

To make the objective in \eqref{eq:updateweight} analytically tractable, our first step is to use Taylor's theorem with an exact remainder to reformulate the cost function. For any client~$i$, we can express the projected loss as:
\begin{align}
    f_i(\thetab_k-\alpha\Gb_k\wb) = f_i(\thetab_k)+\langle \nabla f_i(\thetab_k), -\alpha\Gb_k\wb\rangle + R_i(\wb)
    \label{eq:firstyu}
\end{align}
where the remainder term $R_i(\wb)$ is bounded by:
\[ |R_i(\wb)| \le \frac{L_{\max}}{2} \|\alpha \Gb_k \wb\|^2 = \frac{L_{\max}}{2} \alpha^2 \|\Gb_k \wb\|^2 \]
Using \eqref{eq:firstyu}, we have
\begin{align}
    \nonumber\sum_{i=1}^{n} w_i f_i(\thetab_k-\alpha\Gb_k\wb) &= \sum_{i=1}^{n} w_i f_i(\thetab_k)-\alpha\langle \sum_{i=1}^{n} w_i\nabla f_i(\thetab_k), \Gb_k\wb\rangle+\sum_{i=1}^{n} w_i R_i(\wb)\\
    &= \sum_{i=1}^{n} w_i f_i(\thetab_k)-\alpha \wb^T \Gb_k^T\Gb_k\wb + \sum_{i=1}^{n} w_i R_i(\wb).
    \label{eq:appfi}
\end{align}
 Using $\Gb_k = [\vb_{1},\cdots,\vb_{n}]$, in which $\vb_{i} = \nabla_{\theta} f_i(\thetab_k)$, we have
 \begin{align}
     \Gb_k^T\Gb_k = 
     \begin{bmatrix}
	\vb_{1}^T\vb_{1}& \vb_{1}^T\vb_{2}&\cdots & \vb_{1}^T\vb_{n} \\
	\vb_{1}^T\vb_{2}& \vb_{2}^T\vb_{2} &\cdots & \vb_{2}^T\vb_{n} \\
	\vdots & \vdots & \cdots & \vdots\\ 
	\vb_{1}^T\vb_{n}& \vb_{2}^T\vb_{n}&\cdots & \vb_{n}^T\vb_{n} 
	\end{bmatrix}.
	\label{eq:matrRb}
 \end{align}
  Substituting $\vb_i^T\vb_j = \frac{\nt{\vb_i}^2+\nt{\vb_j}^2-\nt{\vb_i-\vb_j}^2}{2}$ in the above matrix yields
 \begin{align}
     \Tilde{\Gb}_k = \Gb_k^T\Gb_k = \frac{1}{2}\Tilde{\Gb}_k^m-\frac{1}{2}\Tilde{\Gb}_k^d,
	\label{eq:matrRb}
 \end{align}
 in which $\Tilde{\Gb}_k^m$ and $\Tilde{\Gb}_k^d$ are given by
 \begin{align}
     \nonumber&\Tilde{\Gb}_k^d = \begin{bmatrix}
	0& \nt{\vb_{1}-\vb_{2}}^2&\cdots & \nt{\vb_{1}-\vb_{n}}^2 \\
	\nt{\vb_{1}-\vb_{2}}^2& 0 &\cdots & \nt{\vb_{2}-\vb_{n}}^2 \\
	\vdots & \vdots & \cdots & \vdots\\ 
	\nt{\vb_{1}-\vb_{n}}^2& \nt{\vb_{n}-\vb_{2}}^2&\cdots & 0 
	\end{bmatrix},\\& \Tilde{\Gb}_k^m = \begin{bmatrix}
	2{\nt{\vb_{1}}^2}& \nt{\vb_{1}}^2+\nt{\vb_{2}}^2&\cdots & \nt{\vb_{1}}^2+\nt{\vb_{n}}^2 \\
	\nt{\vb_{1}}^2+\nt{\vb_{2}}^2& 2{\nt{\vb_{2}}^2} &\cdots & \nt{\vb_{n}}^2+\nt{\vb_{2}} \\
	\vdots & \vdots & \cdots & \vdots\\ 
	\nt{\vb_{1}}^2+\nt{\vb_{n}}^2& \nt{\vb_{2}}^2+\nt{\vb_{n}}^2&\cdots & 2{\nt{\vb_{n}}^2} 
	\end{bmatrix}. 
 \end{align}
 By defining $\kb = [\nt{\vb_{1}}^2,\cdots,\nt{\vb_{n}}^2]^T$ and $\mathbf{1}_n = [1,\cdots,1]^T$, the matrix $\Tilde{\Gb}_k^m$ can be written by
 \begin{equation}
     \Tilde{\Gb}_k^m = \mathbf{1_n}\otimes\kb^T+\kb\otimes \mathbf{1_n}^T.
 \end{equation}
 Using the above representation and the property $\sum\limits_{i=1}^n w_i = 1$, we have
 \begin{equation}
     \wb^T\Tilde{\Gb}_k^m\wb = 2\sum_{i=1}^{n} w_i \nt{\vb_i}^2.
 \end{equation}
 Replacing \eqref{eq:matrRb} in \eqref{eq:appfi} and using the above equation, we obtain
\begin{align}
\sum_{i=1}^{n} w_i f_i(\thetab_k-\alpha\Gb_k\wb) 
&= \sum_{i=1}^{n} w_i f_i(\thetab_k)
   +\frac{\alpha}{2}\wb^T\Tilde{\Gb}_k^d\wb
   -\frac{\alpha}{2}\wb^T\Tilde{\Gb}_k^m\wb
   + \sum_{i=1}^{n} w_i R_i(\wb) \nonumber\\
&= \sum_{i=1}^{n} w_i f_i(\thetab_k)
   +\frac{\alpha}{2}\sum_{i=1}^n\sum_{j=1}^n w_i w_j\nt{\vb_i-\vb_j}^2 \nonumber\\
&\quad -\alpha\sum_{i=1}^{n} w_i \nt{\vb_i}^2
   + \sum_{i=1}^{n} w_i R_i(\wb) \nonumber\\
&= \sum_{i=1}^{n} w_i \bigl(f_i(\thetab_k)-\alpha \nt{\vb_i}^2\bigr)
   +\frac{\alpha}{2}\sum_{i=1}^n\sum_{j=1}^n w_i w_j\nt{\vb_i-\vb_j}^2 \nonumber\\
&\quad
   + \sum_{i=1}^{n} w_i R_i(\wb).
\label{eq:qwerty}
\end{align}
Let $J_{ }(\wb) = \sum_{i=1}^{n} w_i f_i(\thetab_k-\alpha\Gb_k\wb)$ be the true objective function and $J_{\text{approx}}(\wb) = \sum_{i=1}^{n} w_i (f_i(\thetab_k)-\alpha \nt{\vb_i}^2)+\frac{\alpha}{2}\sum_{i=1}^n \sum_{j=1}^n w_i w_j\nt{\vb_i-\vb_j}^2$ be the approximate objective from \eqref{eq:appfi}. Consequently,
\[ J_{ }(\wb) = J_{\text{approx}}(\wb) + \sum_{i=1}^n w_i R_i(\wb) = J_{\text{approx}}(\wb) + E_{\text{taylor}}(\wb), \]
 where $E_{\text{taylor}}(\wb)  = \sum_{i=1}^{n} w_i R_i(\wb)$.
 
 Substituting the cost function in \eqref{eq:updateweight} with the above expression, we obtain the following optimization problem
\begin{equation*}
    \min_{\wb\in\Delta^+_{t,\ell_0}}\quad\sum_{i=1}^{n} w_i (f_i(\thetab_k)-\alpha \nt{\vb_i}^2)+\frac{\alpha}{2}\sum_{i=1}^n \sum_{j=1}^n w_i w_j\nt{\vb_i-\vb_j}^2+ E_{\text{taylor}}(\wb).
\end{equation*}
{ To improve practicality, our analysis also focuses on the case where the aggregator relies on mini-batch gradients rather than full-batch gradients, which is a more realistic scenario. Replacing the full batch gradients and losses with mini-batch in above optimization problem results in 
\begin{equation}
    \wb^o = \argmin_{\wb\in\Delta^+_{t,\ell_0}}\quad\sum_{i=1}^{n} w_i (f_{i,b}(\thetab_k)-\alpha \nt{\tilde{\vb}_i}^2)+\frac{\alpha}{2}\sum_{i=1}^n \sum_{j=1}^n w_i w_j\nt{\tilde{\vb}_i-\tilde{\vb}_j}^2+ E_{\text{taylor}}(\wb).
    \label{eq:updateweightapp}
\end{equation}
where $f_{i,b}(\thetab_k)$ is the mini-batch loss and $\EX_{\text{batch}}\{f_{i,b}(\thetab_k)\} = f_i(\thetab_k)$.}
 To exclude $b_f$ clients, based on Proposition \ref{pro:sparsecappunit}, it requires setting $t = \frac{1}{n - b_f}$ and $s = n - b_f$ in the constraint set $\Delta^+_{t,\ell_0}$:
 \begin{align}
     w^o_i = \left\{
	     \begin{array}{lr}
	  		  \frac{1}{n-b_f},~\text{if}~~{i\in\Lambda_w}\\\\
	  		  0,~~~~~~~\text{if}~~{i\in\Lambda_w^{\complement}}.
	  	\end{array}\right.
	    \label{eq:softthr112}
	\end{align} 
    where $\Lambda_w = \text{supp}(\wb^o)$ and its complement is denoted by $\Lambda_w^{\complement}$. The indices in $\Lambda_w$ corresponding to benign and Byzantine clients are denoted by $\Lambda^c$ and $\Lambda^b$, respectively.

    
    Next, we need to check the first condition in Definition \ref{def:defbyz}, in which it is required to compute $\nt{\EX\{\tilde{F}\}-\gb}^2$. It is important to highlight that, unlike Definition \ref{def:defbyz}, we consider a more general setting where population losses and gradients are non-iid. In this case, our function $\tilde{F}$, which is a weighted summation of the gradients with the obtained weight $\wb^o$ from the optimization problem { \eqref{eq:updateweightapp} when some gradients are replaced by their Byzantine counterparts in \eqref{eq:updateweightapp}}, is given by
    \begin{equation}
        \tilde{F} = \tilde{F}(\tilde{\vb}_1,\cdots,\tilde{\vb}_{n-b_f},\bb_1,\cdots,\bb_{b_f}) = \frac{1}{n-b_f}\Big(\sum_{i\in\Lambda^c} \tilde{\vb}_i+\sum_{j\in\Lambda^b} \bb_j\Big),
        \label{eq:defineF}
    \end{equation} 
    where $\Lambda_w = \Lambda^c\bigcup\Lambda^b$. It is important to note that in the above equation, \( \tilde{\vb}_i \) is the mini-batch gradient of size $B$ for an honest client $i$  for \( i = 1, \dots, n-b_f \). To maintain consistency with the notation in Definition \ref{def:defbyz}, we denote the honest and Byzantine gradients as \( \tilde{\vb}_i \) and \( \bb_j \), respectively. Also, the full batch aggregator, $F$, is a theoretical construct, which is given by
    {
    \begin{equation}
        {F} = {F}({\vb}_1,\cdots,{\vb}_{n-b_f},\bb_1,\cdots,\bb_{b_f}) = \frac{1}{n-b_f}\Big(\sum_{i\in\Lambda^c} {\vb}_i+\sum_{j\in\Lambda^b} \bb_j\Big),
        \label{eq:defineF1}
    \end{equation}}
    \subsection{ High-Probability Bound via Hoeffding's Inequality}
    This section provides a formal proof for the high-probability bound on the intra-client sampling deviation.
\begin{lemma}[High-Probability Sampling Deviation Bound]
\label{lem:tight_hoeffding_bound}
Let $\tilde{\vb}_i$ be the mini-batch gradient of size $B$ for an honest client $i$, and let $\vb_i = \nabla_{\theta} f_i(\thetab_k)$ be the corresponding population gradient. We assume that the deviation of any single-sample gradient from its population mean is bounded:
\[ \|\nabla_\theta f_i(\thetab_k; z) - \vb_i\| \leq R_k. \]
Then, for any failure probability $\delta \in (0, 1)$, the following bound on the sampling deviation holds with probability at least $1-\delta$:
\[ \|\tilde{\vb}_i - \vb_i\| \leq \varepsilon_S, \quad \text{where} \quad \varepsilon_S = \sqrt{\frac{2R_k^2 \log(2d/\delta)}{B}}. \]
\end{lemma}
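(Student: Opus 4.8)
The plan is to prove the bound coordinatewise and then combine the coordinates by a union bound, since the factor $\log(2d/\delta)$ appearing in $\varepsilon_S$ is the tell-tale signature of exactly such an argument (the $d$ from a union over coordinates, the extra $2$ from a two-sided tail). I would start by writing the mini-batch gradient as the empirical mean $\tilde{\vb}_i = \frac{1}{B}\sum_{j=1}^{B} \nabla_{\theta} f_i(\thetab_k; z_j)$ over the $B$ independent samples of $\xi_{k,i}$, so that $\EX\{\tilde{\vb}_i\} = \vb_i$ by the stated unbiasedness. Fixing a coordinate $\ell \in \{1,\dots,d\}$, I consider the scalar random variables $X^{(\ell)}_j = [\nabla_{\theta} f_i(\thetab_k; z_j)]_\ell$, which are independent across $j$ with common mean $[\vb_i]_\ell$.

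The first substantive step is to bound the range of each $X^{(\ell)}_j$. This is immediate from assumption (C1): since the absolute value of any single coordinate is dominated by the Euclidean norm of the whole vector,
\[
\big| X^{(\ell)}_j - [\vb_i]_\ell \big| \le \nt{\nabla_{\theta} f_i(\thetab_k; z_j) - \vb_i} \le R_k,
\]
so each $X^{(\ell)}_j$ lies in an interval of width $2R_k$. I would then apply the two-sided Hoeffding inequality to the empirical mean $[\tilde{\vb}_i]_\ell = \frac{1}{B}\sum_{j=1}^{B} X^{(\ell)}_j$, obtaining for any $\tau > 0$
\[
\Pr\!\big( | [\tilde{\vb}_i]_\ell - [\vb_i]_\ell | \ge \tau \big) \le 2\exp\!\Big(-\frac{2B\tau^2}{(2R_k)^2}\Big) = 2\exp\!\Big(-\frac{B\tau^2}{2R_k^2}\Big).
\]
Taking a union bound over the $d$ coordinates gives a failure probability at most $2d\exp(-B\tau^2/(2R_k^2))$, and setting this equal to $\delta$ and solving for $\tau$ reproduces precisely $\tau = \sqrt{2R_k^2\log(2d/\delta)/B} = \varepsilon_S$. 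Hence, with probability at least $1-\delta$, every coordinate simultaneously satisfies $|[\tilde{\vb}_i - \vb_i]_\ell| \le \varepsilon_S$.

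The remaining and most delicate point is the passage from this per-coordinate control to the vector-norm statement $\nt{\tilde{\vb}_i - \vb_i} \le \varepsilon_S$, and I expect this to be the main obstacle. The coordinatewise Hoeffding/union-bound argument literally delivers $\ninf{\tilde{\vb}_i - \vb_i} \le \varepsilon_S$ on the high-probability event; this is the norm for which the constant $\log(2d/\delta)$ is exactly tight, and it is what the $2d$ factor naturally tracks. If a genuine $\ell_2$ guarantee is intended, the clean alternative is to bypass the coordinate decomposition entirely and apply a vector (Hilbert-space) Hoeffding/Azuma inequality of Pinelis type directly to the mean-zero, norm-bounded increments $\nabla_{\theta} f_i(\thetab_k; z_j) - \vb_i$ (each of norm at most $R_k$), which controls $\nt{\tilde{\vb}_i - \vb_i}$ without incurring a spurious $\sqrt{d}$ factor; reconciling the precise $\log(2d/\delta)$ constant with that route is then a matter of bookkeeping the dimensional and tail constants. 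The Hoeffding concentration and union-bound mechanics themselves are entirely routine, so the only care needed is in stating which norm the constant $\varepsilon_S$ actually certifies.
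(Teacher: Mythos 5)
Your main line of argument---coordinatewise Hoeffding plus a union bound over the $d$ coordinates---proves $\ninf{\tilde{\vb}_i - \vb_i} \le \varepsilon_S$, not the Euclidean-norm statement in the lemma, and you are right to flag this as the delicate point: passing from that $\ell_\infty$ control to $\nt{\tilde{\vb}_i-\vb_i}$ costs a factor of $\sqrt{d}$, which would inflate the bound to $\sqrt{2dR_k^2\log(2d/\delta)/B}$. So as written, the coordinatewise route does not establish the lemma, and the $\log(2d/\delta)$ in $\varepsilon_S$ is not, in this paper, the signature of a union bound over coordinates.

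The paper's proof is precisely the ``clean alternative'' you sketch at the end: it forms the zero-mean, norm-bounded increments $\mathbf{X}_j = \nabla_\theta f_i(\thetab_k;z_j)-\vb_i$ with $\nt{\mathbf{X}_j}\le R_k$ and directly invokes a vector-valued Hoeffding inequality for the Euclidean norm of the average, namely $\mathbb{P}\big(\big\|\tfrac1B\sum_{j}\mathbf{X}_j\big\| \ge t\big)\le 2d\exp\!\big(-Bt^2/(2R_k^2)\big)$, then solves $2d\exp(-B\varepsilon_S^2/(2R_k^2))=\delta$. The $2d$ there is a dimension prefactor of matrix-Hoeffding type, not a union bound over coordinates; and, as you observe, a Pinelis-type Hilbert-space inequality gives the same tail with prefactor $2$ instead of $2d$, so the lemma is in fact true (even with $\log(2/\delta)$ in place of $\log(2d/\delta)$) and the paper's $2d$ is a harmless over-count. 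If you commit to the vector inequality from the outset rather than leaving it as a contingency, your proof coincides with the paper's; the coordinatewise argument should either be dropped or explicitly downgraded to an $\ell_\infty$ statement.
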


\begin{proof}
The proof relies on a vector version of Hoeffding's inequality.
We define the zero-mean random variable $\mathbf{X}_j = \nabla_\theta f_i(\thetab_k; z_j) - \vb_i$. It is straightforward to show that $\EX\{\mathbf{X}_j\} = \mathbf{0}$. The sampling deviation is the average of $B$ such independent random variables:
\[ \tilde{\vb}_i - \vb_i = \frac{1}{B}\sum_{j=1}^B \mathbf{X}_j. \]

Based on the bounded deviation assumption, we know $\|\nabla f_i(\thetab_k; z) - \vb_i\| \leq R_k$.
Applying the Hoeffding's inequality for the average of $B$ independent, zero-mean random vectors $\mathbf{X}_j\in\Rbb^d$, where $\nt{\Xb_j} \le R_k$ for all $j$, states that for any $t > 0$:
\[ \mathbb{P}\left( \left\|\frac{1}{B}\sum_{j=1}^B \mathbf{X}_j\right\| \ge t \right) \le 2d \cdot \exp\left(-\frac{B t^2}{2R_k^2}\right). \]
In our case, the average is the sampling deviation, $\|\tilde{\vb}_i - \vb_i\|$. Substituting this into the general form gives:
\[ \mathbb{P}\left( \|\tilde{\vb}_i - \vb_i\| \ge t \right) \le 2d \cdot \exp\left(-\frac{B t^2}{2R_k^2}\right). \]

We set the failure probability to be at most $\delta$ and solve for the bound $t = \varepsilon_S$:
\[ \delta \ge 2d \cdot \exp\left(-\frac{B \varepsilon_S^2}{2R_k^2}\right). \]
Solving this inequality for $\varepsilon_S$:
\begin{align*}
    &\frac{\delta}{2d} \ge \exp\left(-\frac{B \varepsilon_S^2}{2R_k^2}\right) \\
    &\log\left(\frac{2d}{\delta}\right) \le \frac{B \varepsilon_S^2}{2R_k^2} \\
    &\frac{2R_k^2 \log(2d/\delta)}{B} \le \varepsilon_S^2.
\end{align*}
This shows that if we choose $\varepsilon_S = \sqrt{\frac{2R_k^2 \log(2d/\delta)}{B}}$, the probability of the deviation exceeding this value is at most $\delta$. Therefore, the bound holds with probability at least $1-\delta$.
\end{proof}
\subsection{Decomposing the Error with the Triangle Inequality}

We can relate the practical error to the full batch error by adding and subtracting the full batch aggregator, $F$ {(defined in \eqref{eq:defineF1})}:
\begin{align*}
    \|\EX\{\tilde{F}\} - \gb\| &= \|\EX\{\tilde{F} - F + F\} - \gb\| \\
    &= \| (\EX\{\tilde{F}\} - \EX\{F\}) + (\EX\{F\} - \gb) \| \\
    &\le \underbrace{\|\EX\{\tilde{F} - F\}\|}_{\text{Perturbation Error}} + \underbrace{\|\EX\{F\} - \gb\|}_{\text{Ideal Heterogeneity Error}}.
    \label{eq:triangle_decomp}
\end{align*}
We now bound each of these two terms separately.

\paragraph{Bounding the Perturbation Error:}
The first term, $\|\EX\{\tilde{F} - F\}\|$, represents how much the mini-batch noise perturbs the output of the aggregator. Consequently,
\[ \tilde{F} - F = \frac{1}{n-b_f}\sum_{i\in\Lambda^c} (\tilde{\vb}_i- \vb_i). \]
  With high probability $1-\delta$, using Lemma \ref{lem:tight_hoeffding_bound} we have $\|\tilde{\vb}_i - \vb_i\| \le \varepsilon_S$. This leads to a bound on the perturbation:
\begin{equation*}
    \|\tilde{F} - F\| \le \frac{|\Lambda^c|}{n-b_f} \varepsilon_S \leq \varepsilon_S,
\end{equation*}
Taking the expectation:
\begin{equation}
    \|\EX\{\tilde{F} - F\}\| \le \EX\{\|\tilde{F} - F\|\} \le \varepsilon_S.
    \label{eq:fitdyu}
\end{equation}
\paragraph{Bounding the Ideal Error:}
To compute the error $\|\EX\{F\} - \gb\|$, we first define  the vector $\sbb$ as follows
   \begin{equation}
       \sbb = \frac{1}{n-b_f}\sum_{i\in\Lambda^c} \vb_i+\frac{1}{n-b_f}\sum_{j =1}^{|\Lambda^b|} \eb_j
       \label{eq:ssb}
   \end{equation}
   in which $\eb_j$ is the $j$-elements in the set $\Ec$, which is constructed as follows:
   \begin{enumerate}
       \item If $|\Lambda^c|\geq |\Lambda^b|$, $\Ec$ is constructed by a random selection of $|\Lambda^b|$ elements from the set $\bar{\Vc} =\{\vb_j, \forall j\in\Lambda^c\} $.
       \item If $|\Lambda^c|< |\Lambda^b|$, $\Ec$ is constructed as
       \begin{align}
            \Ec = \left\{ \underbrace{\bar{\Vc}, \dots, \bar{\Vc}}_{z \text{ times}}, \Pc \right\}
        \end{align}
 in which $z = \lfloor \frac{|\Lambda^b|}{|\Lambda^c|} \rfloor$ and $\Pc$ is constructed as a random selection of $|\Lambda^b|-z|\Lambda^c|$ entries from $\bar{\Vc}$. 
 \end{enumerate}
 Taking the expectation from both sides of \eqref{eq:ssb} results in
 \begin{equation}
     \EX\{\sbb\} = \frac{1}{n-b_f}\sum_{i\in\Lambda^c} \EX\{\vb_i\}+\frac{1}{n-b_f}\sum_{j =1}^{|\Lambda^b|} \EX\{\eb_j\}+\frac{1}{n-b_f}\sum_{i\in\Lambda^d} \EX\{\vb_i\}-\frac{1}{n-b_f}\sum_{i\in\Lambda^d} \EX\{\vb_i\}
 \end{equation}
 in which $\Lambda^d = \{1,2,\dots,n-b_f\} \setminus \Lambda^c$. 
 Based on the definition of the set $\Ec$, using $\EX\{\vb_i\} = \gb_i$, $|\Lambda^c|+|\Lambda^b| = n-b_f$, and $\gb = \frac{1}{n-b}\sum_{j =1}^{n-b_f} \gb_j$ we have
\begin{align}
    \EX\{\sbb\} 
    &= \frac{1}{n-b_f}\sum_{i=1}^{n-b_f} \gb_i
       +\frac{1}{n-b_f}\sum_{j=1}^{|\Lambda^b|} \EX\{\eb_j\}
       -\frac{1}{n-b_f}\sum_{i\in\Lambda^d} \EX\{\vb_i\} \nonumber\\
    &= \gb+\frac{1}{n-b_f}\sum_{j=1}^{|\Lambda^b|} \gb_{\text{map}(j)}
       -\frac{1}{n-b_f}\sum_{i\in\Lambda^d} \gb_i.
\end{align}
 in which each vector $\eb_j$ is mapped to one of the vectors $\vb_i$ for all $1\leq i \leq n-b_f$, i.e. $\eb_j = \vb_{\text{map(j)}}$.
 Now, we compute $\nt{\EX\{F\}-\gb}^2$. Since $\EX\{\sbb\}+\frac{1}{n-b_f}\sum_{i\in\Lambda^d} \gb_i-\frac{1}{n-b_f}\sum_{j =1}^{|\Lambda^b|} \gb_{\text{map(j)}} = \gb$, 
 \begin{align}
     \nonumber&\nt{\EX\{F\}-\gb} = \nt{\EX\{F\}-\EX\{\sbb\}-\frac{1}{n-b_f}\sum_{i\in\Lambda^d} \gb_i+\frac{1}{n-b_f}\sum_{j =1}^{|\Lambda^b|} \gb_{\text{map(j)}}} \leq\\& \nt{\EX\{F-\sbb\}}+\frac{1}{n-b_f}\sum_{i\in\Lambda^d} \nt{\gb_i-\gb}+\frac{1}{n-b_f}\sum_{j =1}^{|\Lambda^b|} \nt{\gb_{\text{map(j)}}-\gb}
 \end{align}
 Since $\frac{1}{n-b_f}\sum_{i=1}^{n-b_f}\nt{\gb_i-\gb}^2\leq H_k^2$, using Cauchy-Schwarz inequality $\frac{1}{n-b_f}\sum_{i\in\Lambda^d} \nt{\gb_i-\gb} \leq \frac{1}{n-b_f}\sqrt{\sum_{i\in\Lambda^d} \nt{\gb_i-\gb}^2}\sqrt{|\Lambda_b|}\leq \frac{b_f}{\sqrt{n-b_f}}H_k$, we have
 \begin{equation}
     \nt{\EX\{F\}-\gb}\leq \nt{\EX\{F-\sbb\}}+\frac{2b_f}{\sqrt{n-b_f}}H_k
     \label{eq:main_inequaluity}
 \end{equation}
 Substituting $F$ and $\sbb$ in the above inequality results in
\begin{align}
\nt{\EX\{F-s\}}^2 
&= \nt{\EX\Big\{\frac{1}{n-b_f}\Big(\sum_{i\in\Lambda^c} \vb_i
   +\sum_{j\in\Lambda^b} \bb_j
   -\sum_{i\in\Lambda^c} \vb_i
   -\sum_{j=1}^{|\Lambda^b|} \eb_j\Big)\Big\}}^2 \nonumber\\
&= \nt{\EX\Big\{\frac{1}{n-b_f}\Big(\sum_{j\in\Lambda^b} \bb_j
   -\sum_{j=1}^{|\Lambda^b|} \eb_j\Big)\Big\}}^2.
\end{align}
 Using Jensen inequality, we have
\begin{align}
\nt{\EX\{F-s\}}^2 
&\leq \EX\Big\{\frac{1}{(n-b_f)^2}\ntb{\Big(\sum_{j\in\Lambda^b} \bb_j
   -\sum_{j=1}^{|\Lambda^b|} \eb_j\Big)}^2\Big\} \nonumber\\
&= \frac{1}{(n-b_f)^2}\EX\Big\{\ntb{\Big(\sum_{j=1}^{|\Lambda^b|} \bb_{\Lambda^b(j)}-\eb_j\Big)}^2\Big\} \nonumber\\
&\leq \frac{|\Lambda^b|}{(n-b_f)^2}\EX\Big\{\sum_{j\in\Lambda^b}\ntb{\bb_{\Lambda^b(j)}-\eb_j}^2\Big\}
\end{align}
 Based on the definition of the set $\Ec$, $\eb_j$ is one of the vectors in the set $\bar{\Vc}$ which includes all vectors $\vb_j$ for all $j\in\Lambda^c$. Consequently, 
 \begin{align}
     \nonumber&\nt{\EX\{F-s\}}^2\leq \frac{|\Lambda^b|}{(n-b_f)^2}\EX\Big\{\sum_{j\in\Lambda^b}\ntb{\bb_{\Lambda^b(j)}-\eb_j}^2\Big\}\leq \frac{|\Lambda^b|}{(n-b_f)^2}\EX\Big\{\sum_{i\in\Lambda^c} \sum_{j\in\Lambda^b}\nt{\vb_i-\bb_j}^2\Big\} = \\&\frac{|\Lambda^b|}{(n-b_f)^2}\sum_{i\in\Lambda^c} \sum_{j\in\Lambda^b}\EX\Big\{\nt{\vb_i-\bb_j}^2\Big\}.
     \label{eq:firstein}
 \end{align}
 {Thus, to bound the squared error of the aggregation rule, $\nt{\EX\{F-s\}}^2$,} we need to compute an upper bound for $\EX\Big\{\nt{\vb_i-\bb_j}^2\Big\}$.
The remainder of the proof addresses this step.

\paragraph{Proof of Theorem \ref{thm:byzres}}:
It is crucial to note that, if some mini-batch gradients $ \tilde{\vb}_i $ are replaced by their Byzantine counterparts $ \bb_i $, the optimization problem \eqref{eq:updateweightapp} involves a mixture of both Byzantine and mini-batch gradients. Furthermore, under the assumption stated in the above lemma, the aggregator has access to all honest mini-batch losses $f_{i,b}(\thetab_k)$. Without loss of generality, we assume that the last $b_f$ gradient vectors have been replaced with their Byzantine versions. As a result, the gradient matrix can be expressed as:
    \begin{equation}
        \Gb_k = [\tilde{\vb}_1,\cdots,\tilde{\vb}_{n-b_f},\bb_1,\cdots,\bb_{b_f}].
    \end{equation}
    {
    In the above equation, we denote the honest mini-batch and Byzantine gradients by $\tilde{\vb}_i$ and $\bb_j$, respectively, for $i = 1, \dots, n-b_f$.  
From \eqref{eq:defineF}, and noting that $\wb^o$ is the minimizer of \eqref{eq:updateweightapp}, we obtain
    \begin{align}
&\sum_{i=1}^{n} w_i^o f_{i,b}(\thetab_k)
 -\alpha\sum_{i\in\Lambda^c} w_i^o \nt{\tilde{\vb}_i}^2
 -\alpha\sum_{i\in\Lambda^b} w_i^o \nt{\bb_i}^2 \nonumber\\
&\quad +\frac{\alpha}{2}\Big(
  \sum_{i\in\Lambda^c}\sum_{j\in\Lambda^b} w_i^o w_j^o\nt{\tilde{\vb}_i-\bb_j}^2
 +\sum_{i\in\Lambda^c}\sum_{j\in\Lambda^c} w_i^o w_j^o\nt{\tilde{\vb}_i-\tilde{\vb}_j}^2 \nonumber\\
&\qquad
 +\sum_{i\in\Lambda^b}\sum_{j\in\Lambda^b} w_i^o w_j^o\nt{\bb_i-\bb_j}^2\Big)
 + E_{\text{taylor}}(\wb^o) \nonumber\\
&\leq \sum_{i=1}^{n} w_i^t f_{i,b}(\thetab_k)
 -\alpha\sum_{i=1}^{n-b_f} w_i^t\nt{\tilde{\vb}_i}^2 \nonumber\\
&\quad +\frac{\alpha}{2}\sum_{i=1}^{n-b_f}\sum_{j=1}^{n-b_f} w_i^t w_j^t\nt{\tilde{\vb}_i-\tilde{\vb}_j}^2
 + E_{\text{taylor}}(\wb^t)
\label{eq:optinebytr1}
\end{align}
    in which $\wb^t$ is a feasible set of weights given by
    \begin{align}
     w^t_i = \left\{
	     \begin{array}{lr}
	  		  \frac{1}{n-b_f},~\text{if}~~{1\leq i\leq n-b_f}\\\\
	  		  0,~~~~~~~\text{if}~~{n-b_f+1\leq i\leq n}.
	  	\end{array}\right.
	    \label{eq:softthr113}
   \end{align} 
   Based on the inequality \eqref{eq:optinebytr1}, we have
   \begin{align}
&\frac{\alpha}{2}\sum_{i\in\Lambda^c}\sum_{j\in\Lambda^b} w_i^o w_j^o\nt{\tilde{\vb}_i-\bb_j}^2 \nonumber\\
&\leq \sum_{i=1}^{n} w_i^t f_{i,b}(\thetab_k)-\sum_{i=1}^{n} w_i^o f_{i,b}(\thetab_k)+\alpha\sum_{i\in \Lambda^c} w_i^o \nt{\tilde{\vb}_i}^2+\alpha\sum_{i\in \Lambda^b} w_i^o \nt{\bb_i}^2-\alpha\sum_{i=1}^{n-b_f} w_i^t\nt{\tilde{\vb}_i}^2 \nonumber\\
&\quad+\frac{\alpha}{2}\sum_{i=1}^{n-b_f}\sum_{j=1}^{n-b_f} w_i^t w_j^t\nt{\tilde{\vb}_i-\tilde{\vb}_j}^2+E_{\text{taylor}}(\wb^t)-E_{\text{taylor}}(\wb^o).
\end{align}
   Replacing $w_i^o$ and $w_j^t$ based on \eqref{eq:softthr112} and \eqref{eq:softthr113} in the above inequality results in
   \begin{align}
&\frac{\alpha}{2(n-b_f)^2}\sum_{i\in\Lambda^c}\sum_{j\in\Lambda^b} \nt{\tilde{\vb}_i-\bb_j}^2 \nonumber\\
&\leq \frac{1}{n-b_f}\Big(\sum_{i\in\Lambda^d} f_{i,b}(\thetab_k)-\sum_{j\in\Lambda^b} f_{j,b}(\thetab_k)\Big)+\frac{\alpha}{n-b_f}\sum_{i\in \Lambda^b} \nt{\bb_i}^2-\frac{\alpha}{n-b_f}\sum_{i\in\Lambda^d} \nt{\tilde{\vb}_i}^2 \nonumber\\
&\quad+\frac{\alpha}{2(n-b_f)^2}\sum_{i=1}^{n-b_f}\sum_{j=1}^{n-b_f} \nt{\tilde{\vb}_i-\tilde{\vb}_j}^2+E_{\text{taylor}}(\wb^t)-E_{\text{taylor}}(\wb^o).
\end{align}
   in which $\Lambda^d = \{1,2,\dots,n-b_f\} \setminus \Lambda^c$, implying $|\Lambda^d| = n-b_f-|\Lambda^c| = |\Lambda^b|$.
   %
   %
   Using the assumption  $\|\bb_j\| \le \max_{l \in \mathcal{H}} \|\tilde{\vb}_l\|$, we have
   \begin{align}
&\frac{\alpha}{2(n-b_f)^2}\sum_{i\in\Lambda^c} \sum_{j\in\Lambda^b}
\nt{\tilde{\vb}_i-\bb_j}^2 \nonumber\\
&\leq
\frac{1}{n-b_f}\Big(\sum_{i\in\Lambda^d} f_{i,b}(\thetab_k)-\sum_{j\in\Lambda^b} f_{j,b}(\thetab_k)\Big)
+\frac{\alpha b_f}{n-b_f}\nt{\tilde{\vb}_{\operatorname{map}(k)}}^2
-\frac{\alpha}{n-b_f}\sum_{i\in\Lambda^d} \nt{\tilde{\vb}_i}^2 \nonumber\\
&\quad
+\frac{\alpha}{2(n-b_f)^2}\sum_{i=1}^{n-b_f} \sum_{j=1}^{n-b_f}
\nt{\tilde{\vb}_i-\tilde{\vb}_j}^2
+E_{\text{taylor}}(\wb^t)-E_{\text{taylor}}(\wb^o).
\label{eq:oneofeq}
\end{align}
   where map(k) that was selected by the optimizer. 
\subsection*{From mini-batch gradient to Full-batch gradients}   
In \eqref{eq:firstein}, we deal with the full batch gradients $\vb_i$ while in the above equation we have mini-batch gradient $\tilde{\vb}_i$, so we need to formulate the left-hand side of the inequality \eqref{eq:oneofeq} based on full batch gradients. 
Since the $\vb_i$ is unbiased estimator of mini-batch gradient $\tilde{\vb}_i$ over batches, i.e. $\EX_{\text{batch}}\{\tilde{\vb}_i\} = \vb_i$ (see Subsection \ref{subsec:model}). So, we have
\begin{align}
\nonumber\EX_{\text{batch}}\{\|\tilde{\vb}_{i} - b_{j}\|^2_2\} &= \EX_{\text{batch}}\{\|(\tilde{\vb}_{i} - \vb_{i}) + (\vb_{i} - b_{j})\|^2_2\} \\\nonumber
&= \EX_{\text{batch}}\{\|\tilde{\vb}_{i} - \vb_{i}\|^2_2 + 2\langle \tilde{\vb}_{i} - \vb_{i}, \vb_{i} - b_{j} \rangle + \|\vb_{i} - b_{j}\|^2_2\} \\\nonumber
&= \EX_{\text{batch}}\{\|\tilde{\vb}_{i} - \vb_{i}\|^2_2\} + 2\langle \EX_{\text{batch}}\{\tilde{\vb}_{i} - \vb_{i}\}, \vb_{i} - b_{j} \rangle + \|\vb_{i} - b_{j}\|^2_2 \\\nonumber
&= \EX_{\text{batch}}\{\|\tilde{\vb}_{i} - \vb_{i}\|^2_2\} + 2\langle \vb_{i} - \vb_{i}, \vb_{i} - b_{j} \rangle + \|\vb_{i} - b_{j}\|^2_2 \\\nonumber
&= \EX_{\text{batch}}\{\|\tilde{\vb}_{i} - \vb_{i}\|^2_2\} + 0 + \|\vb_{i} - b_{j}\|^2_2 \\\nonumber
&= \|\vb_{i} - b_{j}\|^2_2 + \EX_{\text{batch}}\{\|\tilde{\vb}_{i} - \vb_{i}\|^2_2\}.
\end{align}
 Based on the above equation and Lemma \ref{lem:tight_hoeffding_bound}, we have
 \begin{equation}
     \|\vb_{i} - b_{j}\|^2_2 \leq \EX_{\text{batch}}\{\|\tilde{\vb}_{i} - b_{j}\|^2_2\}\leq \|\vb_{i} - b_{j}\|^2_2+\varepsilon_S^2
     \label{eq:finequalityfirst}
 \end{equation}
Similarly, we can write
  \begin{align}
\nonumber\EX_{\text{batch}}\{\|\tilde{\vb}_{i} - \tilde{\vb}_{j}\|^2_2\} &= \EX_{\text{batch}}\{\|(\vb_{i} - \vb_{j}) + (\tilde{\vb}_{i} - \vb_{i}) - (\tilde{\vb}_{j} - \vb_{j})\|^2_2\} \\\nonumber
&= \|\vb_{i} - \vb_{j}\|^2_2 + \EX_{\text{batch}}\{\|\tilde{\vb}_{i} - \vb_{i}\|^2_2\} + \EX_{\text{batch}}\{\|\tilde{\vb}_{j} - \vb_{j}\|^2_2\} \\
&\quad - 2\EX_{\text{batch}}\{\langle \tilde{\vb}_{i} - \vb_{i}, \tilde{\vb}_{j} - \vb_{j} \rangle\}.
\label{eq:minibatchconv}
\end{align}
Since mini-batch samples for different clients are independent:
\begin{equation*}
\EX_{\text{batch}}\{\langle \tilde{\vb}_{i} - \vb_{i}, \tilde{\vb}_{j} - \vb_{j} \rangle\} = \langle \EX_{\text{batch}}\{\tilde{\vb}_{i} - \vb_{i}\}, \EX_{\text{batch}}\{\tilde{\vb}_{j} - \vb_{j}\} \rangle = 0.
\end{equation*}

Using the results of Lemma \ref{lem:tight_hoeffding_bound} and above equation in \eqref{eq:minibatchconv} results in:
\begin{equation}
\EX_{\text{batch}}\{\|\tilde{\vb}_{i} - \tilde{\vb}_{j}\|^2_2\} \leq \|\vb_{i} - \vb_{j}\|^2_2 + 2\varepsilon^2_S.
\label{eq:tyuiopo}
\end{equation}

Similarly, we have 
\begin{align}
\nonumber \EX_{\text{batch}}\{\|\tilde{\vb}_{i}\|^2_2\} &= \EX_{\text{batch}}\{\|\vb_{i} + (\tilde{\vb}_{i} - \vb_{i})\|^2_2\} \\\nonumber
&= \|\vb_{i}\|^2_2 + 2\langle \vb_{i}, \EX_{\text{batch}}\{\tilde{\vb}_{i} - \vb_{i}\} \rangle + \EX_{\text{batch}}\{\|\tilde{\vb}_{i} - \vb_{i}\|^2_2\}
\end{align}
Based on the above equation and using Lemma \ref{lem:tight_hoeffding_bound}, we have
\begin{equation}
   \|\vb_{i}\|^2_2 \leq \EX_{\text{batch}}\{\|\tilde{\vb}_{i}\|^2_2\} \leq \|\vb_{i}\|^2_2+\varepsilon_S^2
   \label{eq:lasdfgh}
\end{equation}
Taking the expectation over batches in \eqref{eq:oneofeq}, $\EX_{\text{batch}}\{f_{i,b}(\thetab_k)\} = f_i(\thetab_k)$ and replacing the expectation with the results \eqref{eq:finequalityfirst}, \eqref{eq:tyuiopo} and \eqref{eq:lasdfgh} yields
\begin{align}
&\frac{\alpha}{2(n-b_f)^2}
\sum_{i\in\Lambda^c}\sum_{j\in\Lambda^b}
\nt{{\vb}_i-\bb_j}^2 \nonumber\\
&\leq
\frac{1}{n-b_f}\Big(\sum_{i\in\Lambda^d} f_i(\thetab_k)
-\sum_{j\in\Lambda^b} f_j(\thetab_k)\Big)
+\frac{\alpha b_f}{n-b_f}
\EX_{\text{batch}}\{\nt{\tilde{\vb}_{\operatorname{map}(k)}}^2\}
-\frac{\alpha}{n-b_f}\sum_{i\in\Lambda^d} \nt{{\vb}_i}^2 \nonumber\\
&\quad
+\frac{\alpha}{2(n-b_f)^2}
\sum_{i=1}^{n-b_f}\sum_{j=1}^{n-b_f}
\nt{{\vb}_i-{\vb}_j}^2
+\alpha\varepsilon_S^2
+\EX_{\text{batch}}\{E_{\text{taylor}}(\wb^t)-E_{\text{taylor}}(\wb^o)\}.
\label{eq:oneofeq11}
\end{align}
   \subsection*{ Bounding the Loss Heterogeneity Term}
The first term on the right-hand side of the above inequality captures the effect of non-IID losses. We take its expectation:
\[ \EX\{\text{Loss Term}\} = \frac{1}{n-b_f}\left(\sum_{i\in\Lambda^d} m_i - \sum_{j\in\Lambda^b} m_j\right). \]
Let $m_k = m_{avg} + \delta_k$, where $\delta_k = m_k-m_{avg}$.
\begin{align*}
    \sum_{i\in\Lambda^d} m_i - \sum_{j\in\Lambda^b} m_j &= \sum_{i\in\Lambda^d} (m_{avg}+\delta_i) - \sum_{j\in\Lambda^b} (m_{avg}+\delta_j) \\
    &= (|\Lambda^d| - |\Lambda^b|)m_{avg} + \left(\sum_{i\in\Lambda^d} \delta_i - \sum_{j\in\Lambda^b} \delta_j\right).
\end{align*}
Since $|\Lambda^d| = |\Lambda^b|$, the $m_{avg}$ terms cancel perfectly. We are left with bounding the sum of the deviations. Taking the absolute value:
\begin{align*}
    \left| \sum_{i\in\Lambda^d} \delta_i - \sum_{j\in\Lambda^b} \delta_j \right| &\le \sum_{i\in\Lambda^d} |\delta_i| + \sum_{j\in\Lambda^b} |\delta_j| = \sum_{k \in \Lambda^d \cup \Lambda^b} |m_k - m_{avg}|.
\end{align*}
Since the set of selected and discarded clients is a subset of all honest clients, this sum is bounded by the total sum of deviations over all honest clients:
\[ \sum_{k \in \Lambda^d \cup \Lambda^b} |m_k - m_{avg}| \le \sum_{k \in \mathcal{H}} |m_k - m_{avg}|. \]
Using our new assumption, $\sum_{k \in \mathcal{H}} |m_k - m_{avg}| \le (n-b_f)\varepsilon_k$. Therefore:
\begin{equation}
    \Big| \EX\{\text{Loss Term}\} \Big| = \frac{1}{n-b_f} \left| \sum_{i\in\Lambda^d} \delta_i - \sum_{j\in\Lambda^b} \delta_j \right| \le \frac{(n-b_f)\varepsilon_k}{n-b_f} = \varepsilon_k.
    \label{eq:loss_term_bound_avg}
\end{equation}
\subsection*{Bounding the Pairwise Distance Term}
First, we find an upper bound for the average expected squared distance between any two honest clients.
\begin{align*}
    \EX\{\nt{\mathbf{v}_i - \mathbf{v}_j}^2\} &= \EX\{\nt{(\gb_i - \gb_j) + ((\mathbf{v}_i - \gb_i) - (\mathbf{v}_j - \gb_j))}^2\} \\
    &= \nt{\gb_i - \gb_j}^2 + \EX\{\nt{(\mathbf{v}_i - \gb_i) - (\mathbf{v}_j - \gb_j)}^2\}\\
    &= \nt{\gb_i - \gb_j}^2 + \text{Var}(\mathbf{v}_i - \mathbf{v}_j) \\
    &\le \nt{\gb_i - \gb_j}^2 + 4d\sigma_k^2.
\end{align*}
To bound the average of $\nt{\gb_i - \gb_j}^2$, we add and subtract $\gb$ and note that $\sum_k(\gb_k - \gb) = 0$:
\begin{align*}
    \frac{1}{(n-b_f)^2}\sum_{i,j \in \mathcal{H}} \nt{\gb_i - \gb_j}^2 &= \frac{1}{(n-b_f)^2}\sum_{i,j} \nt{(\gb_i - \gb) - (\gb_j - \gb)}^2 \\
    &= \frac{2(n-b_f)}{(n-b_f)^2}\sum_{k=1}^{n-b_f} \nt{\gb_k - \gb}^2 \le \frac{2}{n-b_f} \cdot (n-b_f)H_k^2 = 2H_k^2.
\end{align*}
Therefore, the average expected squared distance is bounded as:
\begin{equation}
    \frac{1}{(n-b_f)^2}\sum_{i,j \in \mathcal{H}} \EX\{\nt{\mathbf{v}_i - \mathbf{v}_j}^2\} \le 2H_k^2 + \frac{(4d\sigma_k^2)(n-b_f-1)}{n-b_f}.
    \label{eq:pairwise_bound}
\end{equation}

\subsection*{Bounding the Norm-Difference Term}

This section provides a formal proof for the bound on the term $\mathcal{D} = \sum_{i\in \Lambda^b} \EX\{\nt{\bb_i}^2]-\sum_{i\in\Lambda^d} \EX\{\nt{{\vb}_i}^2\}$, which is a key component of the main resilience proof for the practical mini-batch aggregator.

\begin{lemma}[Bound on the Mini-Batch Norm-Difference Term]
\label{lem:norm_diff_minibatch}
Under assumptions (1) and assume that $\|\bb_j\| \le \max_{l \in \mathcal{H}} \|\tilde{\vb}_l\|$.

Then, with probability at least $1-\delta$ (over the mini-batch sampling events), the norm-difference term $\mathcal{D}$ is bounded by:
\begin{equation}
\mathcal{D} \leq b_f(2K_k^2+ d\sigma_k^2 + \varepsilon_S^2). 
\label{eq:norm_diff_bound}
\end{equation}
\end{lemma}

\begin{proof}
To prove the lemma, we begin with the definition of $\mathcal{D}$. The attacker's gradient norm is bounded by $\max_{l \in \mathcal{H}} \|\tilde{\vb}_l\|$. Therefore, for each Byzantine client $j \in \Lambda^b$, its expected squared norm is bounded by the expected squared norm of some honest client `map(k)` that was selected by the optimizer. This allows us to write:
\begin{align}
    \nonumber\mathcal{D} &= \sum_{j\in \Lambda^b} \EX\{\nt{\bb_j}^2\}-\sum_{i\in\Lambda^d} \EX\{\nt{{\vb}_i}^2\} \\
    &\le  \left( b_f \EX\{\nt{\tilde{\vb}_{\text{map}(k)}}^2\} - \sum_{i\in\Lambda^d} \EX\{\nt{{\vb}_i}^2\} \right),
    \label{eq:definbed}
\end{align}

According to \eqref{eq:oneofeq11}, we need to incorporate the expectation over batch for the expectation term $\EX\{\nt{\tilde{\vb}_{\text{map}(k)}}^2\}$ in the right hand side of the above inequality, so the expectation in this term will be jointly over batch and data distribution. The mini-batch gradient $\tilde{\vb}_k$ is an unbiased estimate of the population gradient $\vb_k$, meaning $\EX_{\text{batch}}\{\tilde{\vb}_k\} = \vb_k$. The total expectation is taken over both the mini-batch sampling and the population distribution. We have:
\begin{align*}
    \EX\{\nt{\tilde{\vb}_k}^2\} &= \EX\{\nt{\vb_k + (\tilde{\vb}_k - \vb_k)}^2\} \\
    &= \EX\{\nt{\vb_k}^2 + 2\langle \vb_k, \tilde{\vb}_k - \vb_k \rangle + \nt{\tilde{\vb}_k - \vb_k}^2\}.
\end{align*}
The cross-term vanishes under the total expectation because the inner expectation over the batch is zero: $\EX\{\langle \vb_k, \tilde{\vb}_k - \vb_k \rangle\} = \EX\{\langle \vb_k, \EX_{\text{batch}}\{\tilde{\vb}_k\} - \vb_k \} = \EX\{\langle \vb_k, \vb_k - \vb_k \} = 0$.
Consequently,
\begin{equation}
    \EX\{\nt{\tilde{\vb}_k}^2\} = \EX\{\nt{\vb_k}^2\} + \EX\{\nt{\tilde{\vb}_k - \vb_k}^2\}.
    \label{eq:norm_decomposition_proof}
\end{equation}
We now substitute \eqref{eq:norm_decomposition_proof} into \eqref{eq:definbed}, resulting in
\begin{align*}
    \mathcal{D} &\le  \left[ b_f \left(\EX\{\nt{\vb_{\text{map}(k)}}^2\} + \EX\{\nt{\text{dev}_{\text{map}}}^2\}\right) - \sum_{i\in\Lambda^d}\EX\{\nt{\vb_i}^2\} \right] 
\end{align*}
where dev denotes the sampling deviation vector $(\tilde{\vb} - \vb)$.
We first bound the population level.
Using $\EX\{\nt{\mathbf{v}_k}^2\} = \nt{\gb_k}^2 + \text{Var}(\mathbf{v}_k) \le \nt{\gb_k}^2 + d\sigma_k^2$:
\begin{align*}
\sum_{i\in\Lambda^d}
\left(\EX\{\nt{\vb_{\text{map}(k)}}^2\}
-\EX\{\nt{\vb_k}^2\}\right)
&\leq
\sum_{i\in\Lambda^d}
\left((\nt{\gb_{\text{map}(k)}}^2+d\sigma_k^2)
-\nt{\gb_k}^2\right) \\
&=
\sum_{i\in\Lambda^d}
\left(\nt{\gb_{\text{map}(k)}}^2-\nt{\gb_k}^2\right)
+b_f d\sigma_k^2 .
\end{align*}
We now apply the Bounded Norm Deviation assumption ($K^2$) to the difference of squared norms:
\begin{align*}
    \Big|\nt{\gb_{\text{map(k)}}}^2 - \nt{\gb_i}^2\Big| &= \Big|(\nt{\gb_{\text{map(k)}}}^2 - \nt{\gb}^2) - (\nt{\gb_i}^2 - \nt{\gb}^2)\Big| \\
    &\le \Big|\nt{\gb_{\text{map(k)}}}^2 - \nt{\gb}^2\Big| + \Big|\nt{\gb_i}^2 - \nt{\gb}^2\Big| \le 2K_k^2.
\end{align*}
This gives the final, $\nt{\gb}$-independent bound for the norm-difference term:
\begin{equation}
   \sum_{k=1}^{b_f} \left( \EX\{\nt{\vb_{\text{map}(k)}}^2\} - \EX\{\nt{\vb_k}^2\} \right) \le \sum_{k=1}^{b_f} 2K_k^2+ b_f d\sigma_k^2 = b_f(2K_k^2+ d\sigma_k^2).
    \label{eq:norm_diff_bound1}
\end{equation}
Next, we bound the sampling deviation. The high-probability bound states that $\|\text{dev}\|^2 \le \varepsilon_S^2$ for any client. To get an upper bound on the difference, we have:
    \[ \EX\{\nt{\text{dev}_{\text{map}}}^2\} \le \varepsilon_S^2. \]
    Summing over the $b_f$ clients, this entire term is bounded by $b_f \varepsilon_S^2$.

Finally, combining the bounds for the two separated terms, we arrive at the final result. With probability at least $1-\delta$:
\begin{align*}
    \mathcal{D} \le \left( b_f(2K_k^2+ d\sigma_k^2) \right) + \left( b_f \varepsilon_S^2 \right) 
    = b_f(2K_k^2+ d\sigma_k^2 + \varepsilon_S^2).
\end{align*}
This completes the proof of Lemma \ref{lem:norm_diff_minibatch}.
\end{proof}
\subsection*{Bounding the Taylor Error Term}
The new term introduced by the exact analysis is $E_{\text{new}} = \left(E_{\text{taylor}}(\wb^t) - E_{\text{taylor}}(\wb_{ }^o)\right)$. We now bound its magnitude.
\begin{align*}
    |E_{\text{new}}| &= \left| \left( \sum w_i^t R_i(\wb^t) - \sum w_{i }^o R_i(\wb_{ }^o) \right) \right| \\
    &\le \left( \sum w_i^t |R_i(\wb^t)| + \sum w_{i }^o |R_i(\wb_{ }^o)| \right) \\
    &\le \frac{L_{\max}}{2}\alpha^2 \left( \|\Gb_k \wb^t\|^2 + \|\Gb_k \wb_{ }^o\|^2 \right) \\
    &= \frac{L_{\max}\alpha^2}{2} \left( \|\Gb_k \wb^t\|^2 + \|\Gb_k \wb_{ }^o\|^2 \right)
\end{align*}
To get a concrete bound, using the assumption $\nt{\bb_i}\leq\max_{l\in\Hc}\nt{\tilde{\vb}_l}$, and since $\wb$ is in the simplex, $\|\frac{1}{n-b_f}\Big(\sum_{i\in\Lambda^c} \tilde{\vb}_i+\sum_{j\in\Lambda^b} \bb_j\Big)\|^2 \le  \frac{1}{n-b_f}\Big(\sum_{i\in\Lambda^c} \nt{\tilde{\vb}_i}^2+\sum_{j\in\Lambda^b} \nt{\bb_j}^2\Big) \le \frac{1}{n-b_f}\Big(\sum_{i\in\Lambda^c} \nt{\tilde{\vb}_i}^2+b_f \nt{\tilde{\vb}_{\text{map(k)}}}^2\Big)$. Similarly, $\|\Gb_k \wb^o\|^2 \leq \frac{1}{n-b_f}\Big(\sum_{i=1}^{n-b_f} \nt{\tilde{\vb}_i}^2\Big)$. Consequently,
\begin{align}
    \nonumber&\EX\{|E_{\text{new}}|\} \le \frac{L_{\max}\alpha^2}{2(n-b_f)} \Big(\sum_{i=1}^{n-b_f} \EX\{\nt{\tilde{\vb}_i}^2\}+\sum_{i\in\Lambda^c} \EX\{\nt{\tilde{\vb}_i}^2\}+b_f \EX\{\nt{\tilde{\vb}_{\text{map(k)}}}^2\}\Big)\\&\leq \frac{L_{\max}\alpha^2}{2(n-b_f)} \Big(\sum_{i=1}^{n-b_f} \EX\{\nt{{\vb}_i}^2\}+\sum_{i\in\Lambda^c} \EX\{\nt{{\vb}_i}^2\}+b_f \EX\{\nt{{\vb}_{\text{map(k)}}}^2\}\Big)+L_{\max}\alpha^2\varepsilon_S^2&&\hspace{-2mm}\text{by using}~\eqref{eq:norm_decomposition_proof}
    \label{eq:tayerr}
\end{align}
Rearranging and using the assumption that $\text{Var}(\vb_i) \le d\sigma_k^2$, we get:
\[
\EX\{\|\vb_i\|^2\} = \|\gb_i\|^2 + \text{Var}(\vb_i) \le \|\gb_i\|^2 + d\sigma_k^2
\]
We now sum the above result over all $n-b_f$ honest clients:
\[
\sum_{i=1}^{n-b_f} \EX\{\|\vb_i\|^2\} \le \sum_{i=1}^{n-b_f} (\|\gb_i\|^2 + d\sigma_k^2) = \left( \sum_{i=1}^{n-b_f} \|\gb_i\|^2 \right) + (n-b_f)d\sigma_k^2
\]
The final piece is to bound the sum of squared population gradients using the heterogeneity constant $H$ and the mean gradient $\gb$. We use the "add and subtract $\gb$" trick:
\begin{align*}
\sum_{i=1}^{n-b_f} \|\gb_i\|^2 &= \sum_{i=1}^{n-b_f} \|\gb_i - \gb + \gb\|^2 \\
&= \sum_{i=1}^{n-b_f} \left( \|\gb_i - \gb\|^2 + 2\langle \gb_i - \gb, \gb \rangle + \|\gb\|^2 \right) \\
&= \sum_{i=1}^{n-b_f} \|\gb_i - \gb\|^2 + 2\left\langle \sum_{i=1}^{n-b_f}(\gb_i - \gb), \gb \right\rangle + \sum_{i=1}^{n-b_f} \|\gb\|^2
\end{align*}
The middle term is zero because $\sum_{i=1}^{n-b_f}(\gb_i - \gb) = (\sum \gb_i) - (n-b_f)\gb = (n-b_f)\gb - (n-b_f)\gb = 0$. This leaves us with:
\[
\sum_{i=1}^{n-b_f} \|\gb_i\|^2 = \sum_{i=1}^{n-b_f} \|\gb_i - \gb\|^2 + (n-b_f)\|\gb\|^2
\]
Using the heterogeneity assumption from your proof, $\sum_{i=1}^{n-b_f} \|\gb_i - \gb\|^2 \le (n-b_f)H_k^2$, we get our final bound on this sum:
\begin{equation}
\sum_{i=1}^{n-b_f} \|\gb_i\|^2 \le (n-b_f)H_k^2 + (n-b_f)\|\gb\|^2
\label{eq:computeg}
\end{equation}
To compute the upper bound for $\sum_{i\in\Lambda^c} \EX\{\nt{{\vb}_i}^2\}+b_f \EX\{\nt{{\vb}_{\text{map(k)}}}^2\}$, we have
\begin{align}
&\sum_{i\in\Lambda^c} \EX\{\nt{{\vb}_i}^2\}
+b_f \EX\{\nt{{\vb}_{\text{map}(k)}}^2\} \nonumber\\
&\leq
(n-b_f)d\sigma_k^2
+\sum_{i\in\Lambda^c} \Big|\nt{\gb_i}^2-\nt{\gb}^2\Big|
+b_f\Big|\nt{\gb_{\text{map}(k)}}^2-\nt{\gb}^2\Big|
+(n-b_f)\nt{\gb}^2 \nonumber\\
&\leq
(n-b_f)d\sigma_k^2+(n-b_f)K_k^2+(n-b_f)\nt{\gb}^2 .
\label{eq:firytuy}
\end{align}

Replacing \eqref{eq:computeg} and \eqref{eq:firytuy} into \eqref{eq:tayerr} gives us:
\begin{equation}
\EX\{|E_{\text{new}}|\} \le L_{\max}\alpha^2 \left( \|\gb\|^2 + \frac{H_k^2}{2} + \frac{K_k^2}{2}+ d\sigma_k^2+\varepsilon_S^2 \right)
\label{eq:finteyerr}
\end{equation}
\subsection*{Assembling the Result for ideal error}
We substitute the bounds from \eqref{eq:loss_term_bound_avg}, \eqref{eq:pairwise_bound}, \eqref{eq:norm_diff_bound}, and \eqref{eq:finteyerr} into the expectation of the main inequality \eqref{eq:oneofeq11}.
\begin{align*}
&\frac{\alpha}{2(n-b_f)^2}
\sum_{i\in\Lambda^c, j\in\Lambda^b} \EX\{\nt{\vb_i-\bb_j}^2\} \\
&\leq \varepsilon_k
+ \frac{\alpha}{n-b_f}\left( b_f(2K_k^2+ d\sigma_k^2+\varepsilon_S^2) \right)
+ \frac{(2\alpha d\sigma_k^2)(n-b_f-1)}{n-b_f} 2d\sigma_k^2 \\
&\quad
+\alpha H_k^2
+L_{\max}\alpha^2\left( \|\gb\|^2 + \frac{H_k^2}{2} + \frac{K_k^2}{2}+ d\sigma_k^2+\varepsilon_S^2 \right)
+\alpha\varepsilon_S^2 \\
&= \varepsilon_k+\frac{2K_k^2\alpha b_f}{n-b_f}
+\frac{\varepsilon_S^2\alpha b_f}{n-b_f}
+ \alpha H_k^2
+ \frac{(\alpha d\sigma_k^2)(2n-b_f-2)}{n-b_f} \\
&\quad
+L_{\max}\alpha^2 \left( \|\gb\|^2 + \frac{H_k^2}{2} + \frac{K_k^2}{2}+ d\sigma_k^2+\varepsilon_S^2 \right)
+\alpha\varepsilon_S^2.
\end{align*}
Dividing the entire inequality by $\alpha/2$, we obtain the final bound on the expected distance:
\begin{align}
&\frac{1}{(n - b_f)^2} \sum_{i \in \Lambda^c} \sum_{j \in \Lambda^b}
\EX \left\{ \| \mathbf{v}_i - \mathbf{b}_j \|^2 \right\} \nonumber\\
&\leq \frac{2\varepsilon_k}{\alpha}+
\left( \frac{4K_k^2 b_f}{n - b_f}+\frac{2\varepsilon_S^2 b_f}{n-b_f} + 2 H_k^2+2\varepsilon_S^2 \right)
+  \frac{(2 d \sigma^2)(2n - b_f - 2)}{n - b_f} \nonumber\\
&\quad
+2L_{\max}\alpha \left( \|\gb\|^2 + \frac{H_k^2}{2} + \frac{K_k^2}{2}+ d\sigma_k^2+\varepsilon_S^2 \right).
\label{eq:firyhi}
\end{align}
   To eliminate the $\|\gb\|^2$ dependency and using $\alpha\leq\frac{1}{L_{\max}}$, we absorb the $2L_{\max}\alpha \|\gb\|^2$ part of the Taylor error into the same budget:
\[
    2L_{\max}\alpha \|\gb\|^2 \le C_{\text{het}} \implies \alpha \le \min\{\frac{1}{L_{\max}},\frac{C_{\text{het}}}{2L_{\max}\|\gb\|^2}\}\
\]
where where the constant "absorption budget" $C_{\text{het}}$ is defined as:
\[
    C_{\text{het}} := \frac{4K_k^2 b_f}{n - b_f} + 2 H_k^2 + \frac{2d\sigma_k^2(2n - b_f - 2)}{n - b_f}+\frac{2\varepsilon_S^2 n}{n-b_f}
\]

 Using inequality \eqref{eq:firyhi} with $2L_{\max}\alpha \|\gb\|^2 \le C_{\text{het}}$ in \eqref{eq:firstein} and noting that $|\Lambda^d|\leq b_f$ results in
 \begin{equation}
     \nt{\EX\{F-s\}}\leq \sqrt{b_f\Big( \frac{2\varepsilon_k}{\alpha}+
    2C_{\text{het}} + 2L_{\max}\alpha(\frac{H_k^2}{2}+\frac{K_k^2}{2} + d\sigma_k^2+\varepsilon_S^2) \Big)}
 \end{equation}
 Incorporating the above inequality in \eqref{eq:main_inequaluity}, we obtain
 \begin{equation}
     \nt{\EX\{F\}-\gb}\leq \sqrt{2b_f\Big( \frac{\varepsilon_k}{\alpha}+
    C_{\text{het}} + L_{\max}\alpha(\frac{H_k^2}{2}+\frac{K_k^2}{2} + d\sigma_k^2+\varepsilon_S^2) \Big)}+\frac{2b_f}{\sqrt{n-b_f}}H_k
 \end{equation}
 Combining the above results with \eqref{eq:fitdyu} gives us
 \begin{equation}
     \nt{\EX\{\tilde{F}\}-\gb}\leq \underbrace{\sqrt{2b_f\Big( \frac{\varepsilon_k}{\alpha}+
    C_{\text{het}} + L_{\max}\alpha(\frac{H_k^2}{2}+\frac{K_k^2}{2} + d\sigma_k^2+\varepsilon_S^2) \Big)}+\frac{2b_f}{\sqrt{n-b_f}}H_k+\varepsilon_S}_{\eta}
 \end{equation}
 By assumption, $\eta<\nt{\gb}$, i.e. $\EX\{\tilde{F}\}$ belongs to a ball centered at $\gb$ with radius $\eta$. This implies
 \begin{equation}
     \langle \EX\{\tilde{F}\}, \gb\rangle \geq ( \nt{\gb}-\eta) \nt{\gb} = (1-\sin\alpha)\nt{\gb}^2
     \label{eq:finproof}
 \end{equation}
 To finalize the proof, we need to verify the second condition of Definition \ref{def:defbyz} using our method. To do this, we have
 \begin{equation}
     \nt{\tilde{F}} = \nt{ \frac{1}{n-b_f}\Big(\sum_{i\in\Lambda^c} \tilde{\vb}_i+\sum_{j\in\Lambda^b} \bb_j\Big)} = \frac{1}{n-b_f}\nt{\sum_{i\in\Lambda^c} \tilde{\vb}_i+\sum_{j\in\Lambda^b} \bb_j}
 \end{equation}
 Using the triangle inequality in the above equation and $\nt{\bb_i}^2\leq \max_{l\in\Hc}\nt{\tilde{\vb}_l}^2$ results in
 \begin{align}
     \nt{\tilde{F}}\leq\frac{1}{n-b_f}\Big(\sum_{i\in\Lambda^c}\nt{\tilde{\vb}_i}+\sum_{j\in\Lambda^b}\nt{\bb_j}\Big)\leq \frac{1}{n-b_f}\Big(\sum_{i\in\Lambda^c}\nt{\tilde{\vb}_i}+b_f\max_{l\in\Hc}\nt{\tilde{\vb}_{l}}\Big)
 \end{align}
  Using $\max_{l \in \mathcal{H}} \|\tilde{\vb}_l\| \le \sum_{l \in \mathcal{H}} \nt{\tilde{\vb}_l}$, we have
\begin{align*}
\nt{\tilde{F}}
&\le \frac{1}{n-b_f}
\left( \sum_{i\in\Lambda^c} \nt{\tilde{\vb}_i}
+ b_f \max_{l \in \mathcal{H}} \|\tilde{\vb}_l\| \right) \\
&\le \frac{1}{n-b_f}
\left( \sum_{l \in \mathcal{H}} \nt{\tilde{\vb}_l}
+ b_f \sum_{l \in \mathcal{H}} \nt{\tilde{\vb}_l} \right)
= \frac{1 + b_f}{n-b_f} \sum_{l \in \mathcal{H}} \nt{\tilde{\vb}_l}.
\end{align*}
 Finally, using the inequality $(a+b)^r \le 2^{r-1}(a^r+b^r)$ and the multinomial theorem, we have
  \begin{align}
     \nonumber&\nt{\tilde{F}}^r\leq\frac{(1+b_f)^r}{(n-b_f)^r}\Big(\sum_{i=1}^{n-b_f}\nt{\vb_i}+(n-b_f)\varepsilon_S\Big)^r \leq\\& \frac{2^{r-1} (1+b_f)^r}{(n-b_f)^r}\Big((n-b_f)^r\varepsilon_S^r+\sum_{\substack{r_1+\dots+r_n = r\\r_1,\dots,r_n\geq 0}} \frac{n!}{r_1! r_2! \dots r_n!} \nt{\vb_1}^{r_1}\nt{\vb_2}^{r_2}\dots\nt{\vb_n}^{r_n}\Big)
 \end{align}
 Since $G_1,\dots,G_n$ are independent, we obtain
 \begin{equation}
     \EX\{\nt{\tilde{F}}^r\}\leq\frac{2^{r-1} (1+b_f)^r}{(n-b_f)^r}\sum_{\substack{r_1+\dots+r_n = r\\r_1,\dots,r_n\geq 0}} \frac{n!}{r_1! r_2! \dots r_n!} \prod_{i=1}^n \EX\{\nt{G_i}^{r_i}\}+2^{r-1} (1+b_f)^r\varepsilon_S^r
 \end{equation}
This concludes the proof of Theorem \ref{thm:byzres}. 
\section{Byzantine Resilience Against Adversarial Loss and Gradient}\label{sec:ByzResad}
\begin{theorem}
\label{thm:byzresdatapois}
        If all the assumptions stated in Assumption \ref{ass:formal_setup_unified} hold, and suppose $b_f$ mini-batch gradient updates $\tilde{\vb}_i $ and the corresponding loss values $ f_i $ are replaced with their Byzantine counterparts $ \bb_i $ and $\Tilde{f}_i $, respectively (e.g. data poisoning attack) with $2b_f+2\leq n$. Then, for a step-size $0<\alpha\leq \alpha_{\max}$ in which $\alpha_{\max} = \min\Big\{\frac{1}{L_{\max}},\frac{C_{\text{het}}}{2L_{\max}\|\gb\|^2}\Big\}$, we have
        \begin{align}
        \nt{\EX\{\tilde{F}\}-\gb}
        &\leq
        \underbrace{
        \begin{aligned}
        &\sqrt{2b_f\Big(
        \frac{1}{\alpha}(\varepsilon_k+\frac{b_f}{n-b_f}m_{avg})
        +C_{\text{het}}
        +L_{\max}\alpha(\frac{H_k^2}{2}+\frac{K_k^2}{2}
        +d\sigma_k^2+\varepsilon_S^2)
        \Big)} \\
        & +\frac{2b_f}{\sqrt{n-b_f}}H_k+\varepsilon_S
        \end{aligned}
        }_{\tilde{\eta}} .
        \label{eq:byzresdatapois}
        \end{align}
        and if $\Tilde{\eta}<\nt{\gb}$, we have
 \begin{equation}
     \langle \EX\{\tilde{F}\}, \gb\rangle \geq ( \nt{\gb}-\Tilde{\eta}) \nt{\gb} = (1-\sin\alpha)\nt{\gb}^2.
 \end{equation}
        \label{lem:lastlem}
\end{theorem}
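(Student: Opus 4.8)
The plan is to mirror the proof of Theorem~\ref{thm:byzres} almost line for line, since the two settings differ only in what the aggregator learns through the reported losses. First I would repeat the reformulation of the weight objective in \eqref{eq:updateweight} via Taylor's theorem with exact remainder, producing the split into $J_{\text{approx}}(\wb)$ and $E_{\text{taylor}}(\wb)$, and fix $t = 1/(n-b_f)$, $s = n-b_f$ so that the optimal weight vector $\wb^o$ takes the uniform-on-support form \eqref{eq:softthr112}. Comparing $\wb^o$ against the honest feasible point $\wb^t$ from \eqref{eq:softthr113} then yields the same master inequality \eqref{eq:oneofeq} on the pairwise honest--Byzantine gradient distances. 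The one structural change is that, because both the gradients and the losses are poisoned, the loss values entering the objective at Byzantine indices $j\in\Lambda^b$ are now the arbitrary adversarial values $\tilde{f}_j$ rather than the true $f_j(\thetab_k)$.

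The heart of the argument is therefore a re-examination of the loss heterogeneity term. In Theorem~\ref{thm:byzres} this term was $\frac{1}{n-b_f}\big(\sum_{i\in\Lambda^d} m_i - \sum_{j\in\Lambda^b} m_j\big)$, and the two copies of $m_{avg}$ cancelled because $|\Lambda^d|=|\Lambda^b|=b_f$, leaving the clean bound $\varepsilon_k$. Now the term reads $\frac{1}{n-b_f}\big(\sum_{i\in\Lambda^d} m_i - \sum_{j\in\Lambda^b}\tilde{f}_j\big)$. Invoking only the non-negativity constraint $\tilde{f}_j\ge 0$ from Assumption~\ref{ass:formal_setup_unified} (E1), the adversary makes this expression largest by reporting $\tilde{f}_j = 0$, so I would discard the negative Byzantine sum and bound $\frac{1}{n-b_f}\sum_{i\in\Lambda^d} m_i$. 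Writing $m_i = m_{avg}+\delta_i$ and using $|\Lambda^d| = b_f$ splits this into the deterministic piece $\frac{b_f}{n-b_f}m_{avg}$ and the deviation piece, the latter again controlled by $\varepsilon_k$ through the loss-heterogeneity assumption. This is precisely the replacement of $\varepsilon_k$ by $\varepsilon_k + \frac{b_f}{n-b_f}m_{avg}$ visible in \eqref{eq:byzresdatapois}.

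All other ingredients transfer without change, since they depend only on the gradients and on the norm constraint $\|\bb_j\|\le\max_{l\in\Hc}\nt{\tilde{\vb}_l}$, never on the reported losses: the pairwise gradient bound \eqref{eq:pairwise_bound}, the mini-batch norm-difference bound of Lemma~\ref{lem:norm_diff_minibatch}, the Taylor-error bound \eqref{eq:finteyerr}, and the high-probability sampling bound from Lemma~\ref{lem:tight_hoeffding_bound}. Substituting the modified loss bound together with these unchanged bounds into the expectation of \eqref{eq:oneofeq}, dividing by $\alpha/2$, and absorbing $2L_{\max}\alpha\nt{\gb}^2$ into $C_{\text{het}}$ under $\alpha\le\alpha_{\max}$ reproduces the bound on $\nt{\EX\{F-\sbb\}}$; combining with the offset $\frac{2b_f}{\sqrt{n-b_f}}H_k$ from \eqref{eq:main_inequaluity} and the perturbation term $\varepsilon_S$ from \eqref{eq:fitdyu} assembles exactly $\tilde{\eta}$. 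Under $\tilde{\eta}<\nt{\gb}$, the same geometric step as in \eqref{eq:finproof} places $\EX\{\tilde{F}\}$ in the ball of radius $\tilde{\eta}$ about $\gb$, giving $\langle\EX\{\tilde{F}\},\gb\rangle\ge(\nt{\gb}-\tilde{\eta})\nt{\gb}$.

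The main obstacle, and the only genuinely new step, is the loss-term analysis: one must recognize that without honest losses at the Byzantine indices the $m_{avg}$ cancellation fails, and that the adversary's worst case is driven by the non-negativity floor $\tilde{f}_j=0$ rather than by any upper bound on the poisoned loss. I would also check that this worst case does not couple to the gradient terms --- it does not, because the loss and gradient contributions enter \eqref{eq:oneofeq} additively and are bounded separately.
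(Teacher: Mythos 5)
Your proposal is correct and follows essentially the same route as the paper: the paper likewise reuses the master inequality from Theorem~\ref{thm:byzres} with the Byzantine losses $\tilde{f}_j$ substituted at indices $j\in\Lambda^b$, drops the $-\sum_{j\in\Lambda^b}\tilde{f}_j$ term via non-negativity, and bounds $\frac{1}{n-b_f}\sum_{i\in\Lambda^d} m_i$ by writing $m_i = m_{avg}+(m_i-m_{avg})$ to obtain the replacement of $\varepsilon_k$ by $\varepsilon_k+\frac{b_f}{n-b_f}m_{avg}$, with all gradient-dependent bounds carried over unchanged. You correctly isolated the single genuinely new step (the failed $m_{avg}$ cancellation) and the fact that the loss and gradient contributions decouple additively.
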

\begin{proof}
    The proof of the above lemma closely follows the approach used in Theorem~\ref{thm:byzres}. However, unlike Theorem~\ref{thm:byzres}, which considers only the presence of Byzantine gradient updates, this theorem accounts for both Byzantine gradient updates and Byzantine loss values, requiring additional care in the analysis.

The main difference between the results of both loss and gradient attacks, compared to the model attack, is that some losses are replaced with their Byzantine versions. So, based on \eqref{eq:oneofeq}, we need to replace the losses of adversarial clients with the Byzantine version, resulting in
    \begin{align}
    \nonumber
    &\frac{\alpha}{2(n-b_f)^2}\sum_{i\in\Lambda^c} \sum_{j\in\Lambda^b} \nt{\vb_i-\bb_j}^2
    \leq
    \frac{1}{n-b_f}\Big(\sum_{i\in\Lambda^d} f_i(\thetab_k)-\sum_{j\in\Lambda^b} \Tilde{f}_j(\thetab_k)\Big)+\\
    &\underbrace{
    \begin{aligned}
    &\frac{\alpha}{n-b_f}\sum_{i\in \Lambda^b} \nt{\bb_i}^2
    -\frac{\alpha}{n-b_f}\sum_{i\in\Lambda^d} \nt{\vb_i}^2
    +\frac{\alpha}{2(n-b_f)^2}\sum_{i=1}^{n-b_f} \sum_{j=1}^{n-b_f} \nt{\vb_i-\vb_j}^2\\
    &\quad
    +E_{\text{taylor}}(\wb^t)-E_{\text{taylor}}(\wb^o)
    \end{aligned}
    }_{term 1}
    \label{eq:byzbothclirentloss}
    \end{align}
    Using the assumptions that  $\nt{\bb_i}^2\leq \max\limits_{l\in\Hc}\nt{\tilde{\vb}_l}^2$, the byzantine losses $\Tilde{f}_j$ are nonnegative, and using $m_i = \EX\{f_i(\thetab)\}$,  we have
    \begin{align}
       \frac{1}{(n-b_f)^2}\sum_{i\in\Lambda^c} \sum_{j\in\Lambda^b} \EX\{\nt{\vb_i-\bb_j}^2\} \leq \underbrace{\frac{2}{\alpha(n-b_f)}\sum_{i\in\Lambda^d} m_i}_{term 2} + \frac{2}{\alpha}\EX\{term 1\}.
       \label{eq:maindouble}
   \end{align}
   It is worth noting that $term 1$ is exactly the same as the term following the loss difference in the right-hand side of the inequality \eqref{eq:oneofeq}. Using \eqref{eq:firyhi}, we can write
    \begin{align}
    \frac{2}{\alpha}\EX\{term 1\}
    &= \left( \frac{4K_k^2 b_f}{n - b_f}
    +\frac{2\varepsilon_S^2 b_f}{n-b_f} + 2 H_k^2 \right)
    + \frac{(2 d \sigma_k^2)(2n - b_f - 2)}{n - b_f} \nonumber\\
    &\quad
    +2L_{\max}\alpha \left( \|\gb\|^2 + \frac{H_k^2}{2}
    + \frac{K_k^2}{2}+ d\sigma_k^2+\varepsilon_S^2 \right).
    \label{eq:sameresult}
    \end{align}
   Now we need to bound the term $term 2$. Based on Assumption \ref{ass:formal_setup_unified}, we know $\frac{1}{n-b_f}\sum_{i=1}^{n-b_f} |m_i-m_{avg}|\leq\varepsilon_k$ and using $|\Lambda^d|\leq b_f$, we have
   \begin{align}
\frac{2}{\alpha(n-b_f)}
\sum_{i\in\Lambda^d} m_i-m_{avg}+m_{avg}
&\leq
\frac{2}{\alpha(n-b_f)}
\Big(\sum_{i\in\Lambda^d} |m_i-m_{avg}|+b_f m_{avg}\Big) \nonumber\\
&\leq
\frac{2}{\alpha}
\left(\varepsilon_k+\frac{b_f}{n-b_f}m_{avg}\right).
\label{eq:term2}
\end{align}
   Using \eqref{eq:sameresult} and \eqref{eq:term2} in \eqref{eq:maindouble} yields to
\begin{align}
&\frac{1}{(n-b_f)^2}
\sum_{i\in\Lambda^c}\sum_{j\in\Lambda^b}
\EX\{\nt{\vb_i-\bb_j}^2\} \nonumber\\
&\leq
\frac{2}{\alpha}\left(\varepsilon_k+\frac{b_f}{n-b_f}m_{avg}\right)
+\left( \frac{4K_k^2 b_f}{n - b_f}
+\frac{2\varepsilon_S^2 b_f}{n-b_f} + 2 H_k^2 \right)
+  \frac{(2 d \sigma_k^2)(2n - b_f - 2)}{n - b_f} \nonumber\\
&\quad
+2L_{\max}\alpha \left( \|\gb\|^2 + \frac{H_k^2}{2}
+ \frac{K_k^2}{2}+ d\sigma_k^2+\varepsilon_S^2 \right).
\end{align}
   The rest of the proof is the same as the one used in the proof of Theorem~\ref{thm:byzres}. Consequently,
\begin{align}
\nt{\EX\{\tilde{F}\}-\gb}
&\leq \sqrt{2b_f\Big( \frac{1}{\alpha}(\varepsilon_k+\frac{b_f}{n-b_f}m_{avg})
+C_{\text{het}}
+L_{\max}\alpha(\frac{H_k^2}{2}+\frac{K_k^2}{2}+d\sigma_k^2+\varepsilon_S^2) \Big)}
\nonumber\\
&\quad
+\frac{2b_f}{\sqrt{n-b_f}}H_k+\varepsilon_S .
\end{align}
 By assumption, $\Tilde{\eta}<\nt{\gb}$, i.e. $\EX\{\tilde{F}\}$ belongs to a ball centered at $\gb$ with radius $\Tilde{\eta}$. This implies
 \begin{equation}
     \langle \EX\{\tilde{F}\}, \gb\rangle \geq ( \nt{\gb}-\Tilde{\eta}) \nt{\gb} = (1-\sin\alpha)\nt{\gb}^2.
 \end{equation}
 The remainder of the proof proceeds by following the same reasoning as the argument presented after \eqref{eq:finproof}. 

This concludes the proof of Theorem \ref{lem:lastlem}. 
\end{proof}

\section{Proof of Theorem \ref{thm:unified_convergence} and Determining Lipschitz Constant $L_w$}\label{app:appa}
\subsection{{Proof of Theorem \ref{thm:unified_convergence} for L-smooth (Item 1)}}
In this section, we provide the proof of Item 1 of Theorem \ref{thm:unified_convergence}.

\begin{theorem}
    Consider the cost function \eqref{eq:jointthetaw} under Assumption \ref{ass:formal_setup_unified}. The sequence $\{\thetab_k, \wb_k\}_{k=1}^{\infty}$ generated by Algorithm \ref{alg:example} satisfies the corresponding result of Item 1 of Theorem \ref{thm:unified_convergence}.
    \label{thm:smoothnoconvex}
\end{theorem}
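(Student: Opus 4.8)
The plan is to prove Item~1 by deriving a one-step descent inequality for a \emph{time-varying} potential and then telescoping it, treating the aggregator $F_k(\wb_{k+1}) = \Gb_k\wb_{k+1}$ as a biased stochastic gradient of the weighted loss. I take as potential $\Psi_k := Q(\thetab_k,\wb_k) = \sum_i w_{k,i} f_i(\thetab_k)$, which is nonnegative, and decompose its one-step change by adding and subtracting $f^{(k+1)}(\thetab_k)$, where $f^{(k+1)}(\thetab) := \sum_i w_{k+1,i} f_i(\thetab)$:
\[
\Psi_{k+1}-\Psi_k = \big(f^{(k+1)}(\thetab_{k+1}) - f^{(k+1)}(\thetab_k)\big) + \big\langle \wb_{k+1}-\wb_k,\ \fb(\thetab_k)\big\rangle .
\]
The first bracket is the descent produced by the model update $\thetab_{k+1}=\thetab_k-\alpha F_k(\wb_{k+1})$, and the second is the drift induced by the weight change.

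First I would bound the model-descent bracket. Since each $f_i$ is $L_i$-smooth, $f^{(k+1)}$ is $L_{\max}$-smooth, so the descent lemma (Lemma~\ref{lem:lsmooth}) gives
\[
f^{(k+1)}(\thetab_{k+1}) \le f^{(k+1)}(\thetab_k) - \alpha\langle \nabla f^{(k+1)}(\thetab_k),\, F_k(\wb_{k+1})\rangle + \tfrac{L_{\max}\alpha^2}{2}\|F_k(\wb_{k+1})\|^2 .
\]
Here $\nabla f^{(k+1)}(\thetab_k) = \sum_i w_{k+1,i}\nabla f_i(\thetab_k)$ equals $\gb_k$ up to the vanishing weight drift (since $\wb_{k+1}\to\wb_k$), while $\EX\{F_k(\wb_{k+1})\}$ is within $\zeta_{k+1}$ of the corresponding population-weighted gradient and has variance $\sigma_{F,k}^2$. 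Taking conditional expectation, splitting $F_k$ into mean plus zero-mean noise, and applying Young's inequality to the bias cross term and to the quadratic noise term (each split costing a factor $1/2$, which is exactly the source of $C_1=\tfrac14(1-\alpha L_{\max})$ and of the requirement $0<\alpha<1/L_{\max}$), the signal term $\|\gb_k\|^2$ survives with coefficient $\alpha C_1$. The second moment $\EX\|F_k-\gb_k\|^2\le \zeta_k^2+\sigma_{F,k}^2$ together with the inter-client variance $d\sigma_k^2$ and the sampling noise $\varepsilon_S^2$ (controlled with probability $1-\delta$ through the Hoeffding bound of Lemma~\ref{lem:tight_hoeffding_bound}) reproduces the middle group $\zeta_k^2+\sigma_{F,k}^2+\sigma_k\sqrt d\sqrt{\zeta_k^2+\sigma_{F,k}^2}$ of $C_{2,k}$.

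Next I would bound the weight-drift bracket by $\|\wb_{k+1}-\wb_k\|\,\|\fb(\thetab_k)\|$; since the weights lie in the compact set $\Delta^+_{t,\ell_0}$ and gradients are bounded, $\|\fb(\thetab_k)\|$ is controlled by the magnitude-heterogeneity, variance, and sampling bounds, yielding the factor $\sqrt{K_k^2+d\sigma_k^2+\varepsilon_S^2}$ and hence the term $2\zeta_{k+1}\sqrt{K_k^2+d\sigma_k^2+\varepsilon_S^2}$; combining the remaining bias contributions through a completing-the-square step produces the last piece $\tfrac{(2\zeta_{k+1}+\zeta_k+4C_1\zeta_k)^2}{4C_1}$ of $C_{2,k}$. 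Rearranging the per-step inequality into $C_1\|\gb_k\|^2 \le \tfrac{\Psi_k-\Psi_{k+1}}{\alpha}+C_{2,k}$, summing from $0$ to $T-1$, using $\Psi_k\ge 0$, and dividing by $T$ telescopes the potential away, leaving $\tfrac1T\sum_k\|\gb_k\|^2 \le \tfrac{\Psi_0}{\alpha C_1 T}+\tfrac{1}{C_1 T}\sum_k C_{2,k}$. Because $\zeta_k\to\zeta_\infty$, $\sigma_{F,k}^2\to\sigma_{F,\infty}^2$, $\sigma_k^2\to\sigma_\infty^2$, the Cesàro limit of $C_{2,k}$ is $C_{2,\infty}$, giving the stated $\limsup$ bound of order $\Oc(\zeta_\infty^2+\sigma_{F,\infty}^2)$.

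The main obstacle I anticipate is controlling the weight-drift bracket and, more fundamentally, justifying $\|\wb_{k+1}-\wb_k\|\to 0$ so that the drift does not destroy the telescoping. Because the feasible set $\Delta^+_{t,\ell_0}$ is nonconvex its projection is not nonexpansive, so the usual contraction argument fails; instead I would invoke the hybrid step-size schedule (Assumption~\ref{ass:formal_setup_unified}(B1)), treating the recursion $\wb_{k+1}=\text{prox}_{\Delta^+_{t,\ell_0}}(\wb_k-\beta_k\nabla_{\wb}\Phi_k(\wb_k))$ as a Robbins--Monro stochastic-approximation scheme, where $\beta_k\to0$ with $\sum\beta_k=\infty$, $\sum\beta_k^2<\infty$ forces the increments to vanish and the iterates to a critical point of \eqref{eq:updateweight}, establishing the stabilization claim that precedes the two items. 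Additional care is required with the index bookkeeping between $\wb_k$ and $\wb_{k+1}$ (the model update uses the freshly computed weights while the bias and variance bounds are stated at the previous index), which is precisely why both $\zeta_k$ and $\zeta_{k+1}$ appear in $C_{2,k}$.
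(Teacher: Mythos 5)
Your high-level toolbox (descent lemma, Young's inequality producing $C_1=\tfrac14(1-\alpha L_{\max})$, telescoping, Ces\`aro limits, and a Robbins--Monro argument for the weight increments) matches the paper's, but your decomposition is genuinely different and it introduces gaps that the paper's route is specifically designed to avoid. The paper does not split $\Psi_{k+1}-\Psi_k$ into a model-descent bracket at the new weights plus a loss-space drift $\langle\wb_{k+1}-\wb_k,\fb(\thetab_k)\rangle$. Instead it passes through the intermediate iterate $\tilde\thetab_{k+1}=\thetab_k-\alpha\Gb_k\wb_k$, uses the \emph{optimality} of $\tilde\thetab_{k+1}$ and of $\wb_{k+1}$ in their respective surrogate subproblems, and adds the two resulting inequalities. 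This buys two things your plan lacks: (i) a negative term $-\bigl(\tfrac{1}{2\beta_{k+1}}-\tfrac{L_w}{2}\bigr)\|\wb_{k+1}-\wb_k\|^2$ that absorbs the weight drift and later yields $\|\wb_{k+1}-\wb_k\|^2\to0$ \emph{as a consequence} of the telescoped inequality, and (ii) a cross term $\Ec_{k+1}=\alpha\langle F_k(\wb_{k+1})-F_k(\wb_k),\,F_{k+1}(\wb_k)-F^c_{k+1}(\wb_k)\rangle$ whose conditional structure lets the step-$(k+1)$ bias $\zeta_{k+1}$ be extracted cleanly via $\EX_k[\EX_{k+1}[\cdot]]$.

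The concrete gaps in your version are the following. First, the term $2\zeta_{k+1}\sqrt{K_k^2+d\sigma_k^2+\varepsilon_S^2}$ in $C_{2,k}$ cannot come from bounding $\langle\wb_{k+1}-\wb_k,\fb(\thetab_k)\rangle$ by $\|\wb_{k+1}-\wb_k\|\,\|\fb(\thetab_k)\|$: that is a product of a weight increment with the \emph{loss} vector, which carries no aggregator bias $\zeta_{k+1}$ and whose natural bound involves $m_{avg}$ and $\varepsilon_k$, not the gradient-magnitude quantities $K_k^2$, $d\sigma_k^2$, $\varepsilon_S^2$. In the paper that term arises from $\EX\|\Gb_k(\wb_{k+1}-\wb_k)\|$ (a gradient-space object, bounded through the client-swap structure of the sparse simplex) multiplied by the step-$(k+1)$ Byzantine bias; your decomposition has no mechanism that produces it, so your claimed reconstruction of $C_{2,k}$ does not go through. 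Second, your model-descent bracket is driven by $F_k(\wb_{k+1})$, but the bias and variance hypotheses are stated for $F_k(\wb_k)$ against $\gb_k=\sum_i w_{k,i}\EX\{\vb_{k,i}\}$; moreover $\wb_{k+1}$ depends on the step-$(k+1)$ mini-batches and on the same noise entering $\Gb_k$, so the ``mean plus zero-mean noise'' split of $F_k(\wb_{k+1})$ is not licensed by conditioning on the step-$k$ filtration. Closing this requires precisely the bound on $F_k(\wb_{k+1})-F_k(\wb_k)$ that the paper proves (its Lemma on the cross term), which you would have to import anyway. Third, because your drift term is first order in $\|\wb_{k+1}-\wb_k\|$ and you never invoke the prox-step optimality inequality, you must establish $\|\wb_{k+1}-\wb_k\|\to0$ and a uniform bound on $\|\fb(\thetab_k)\|$ \emph{before} telescoping; the first can indeed be salvaged from $\beta_k\to0$ together with bounded $\nabla_{\wb}\Phi_k$, but the second is not among the stated assumptions, and in any case this inverts the logical order the paper uses (there, the vanishing of the increments is deduced from the summed descent inequality, and the limit point is then shown to be a critical point of the exact weight objective, a conclusion your sketch does not reach).
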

\begin{proof}
To streamline notation and maintain consistency with Assumption~\ref{ass:formal_setup_unified}, we denote the true population gradient by $\vb_{k,i} = \nabla_{\theta} f_i(\thetab_k)$, and the mini-batch gradient by $\tilde{\vb}_{k,i}$.
It is important to note that, in the optimality condition of our method—since it is solved on the server side—we work with mini-batch gradients, whereas in the descent lemma we analyze the true gradients.

Since each $f_i$ is continuously differentiable and its gradient $\nabla_{\theta} f_i$ is $L_i$-Lipschitz continuous, Lemma~\ref{lem:lsmooth} directly implies that
\begin{align}
    f_i(\Tilde{\thetab}_{k+1})\leq f_i({\thetab}_{k})+\langle \vb_{k,i}, \Tilde{\thetab}_{k+1}-{\thetab}_{k}\rangle+\frac{L_i}{2}\nt{\Tilde{\thetab}_{k+1}-{\thetab}_{k}}^2.
\end{align}
Multiplying both sides of the above inequality by $w_{k,i}$ and summing over $i = 1$ to $n$, we have:
\begin{align}
    \sum_{i=1}^n w_{k,i} f_i(\Tilde{\thetab}_{k+1})\leq \sum_{i=1}^n w_{k,i} f_i({\thetab}_{k})+\langle \sum_{i=1}^n w_{k,i}\vb_{k,i}, \Tilde{\thetab}_{k+1}-{\thetab}_{k}\rangle+\frac{\sum\limits_{i=1}^n w_{k,i} L_i}{2}\nt{\Tilde{\thetab}_{k+1}-{\thetab}_{k}}^2.
    \label{eq:firsteq}
\end{align}
It is important to note that in the above inequality, all gradients $\{\vb_{k,i}\}_{i=1}^n$ are honest ones; otherwise, the descent lemma breaks. We use the notation $F^c_k(\wb_k) = \sum_{i=1}^n w_{k,i}\vb_{k,i}$  for the all honest gradients.

In the first step of our proposed method to find the $\thetab$, we utilize \eqref{eq:indicprojtheta1}. Since, $\Tilde{\thetab}_{k+1} = \thetab_k-\alpha\Gb_k\wb_k = \thetab_k-\alpha F_k(\wb_k)$ is the minimizer of \eqref{eq:indicprojtheta1} when $\wb = \wb^k$, we can write
\begin{align}
    \langle \sum_{i\in\Hc} w_{k,i}\tilde{\vb}_{k,i}+\sum_{i\in\Hc^{\complement}} w_{k,i}\bb_{k,i}, \Tilde{\thetab}_{k+1}-\thetab_k\rangle+\frac{1}{2\alpha}\nt{\Tilde{\thetab}_{k+1}-\thetab_k}^2\leq 0.
\end{align}
Here, the term $\sum_{i\in\Hc} w_{k,i}\tilde{\vb}_{k,i}+\sum_{i\in\Hc^{\complement}} w_{k,i}\bb_{k,i}$ accounts for both the contributions from the set of honest clients $\Hc$ and the complement set $\Hc^{\complement}$. Crucially, because the optimization problem \eqref{eq:indicprojtheta1} is solved at the server side, the gradients of Byzantine clients may be arbitrarily substituted by $\bb_{k,i}$. This explains why the inequality above involves $\bb_{k,i}$ rather than the true gradients.

Adding the above inequality to \eqref{eq:firsteq} gives us
\begin{align}
    \nonumber&\sum_{i=1}^n w_{k,i} f_i(\Tilde{\thetab}_{k+1})+\langle F_k(\wb_k), -\alpha F_k(\wb_k)\rangle+\frac{1}{2\alpha}\nt{\Tilde{\thetab}_{k+1}-\thetab_k}^2\leq \sum_{i=1}^n w_{k,i} f_i({\thetab}_{k})+\\&\langle F^c_k(\wb_k), -\alpha F_k(\wb_k)\rangle+\frac{\sum\limits_{i=1}^n w_{k,i} L_i}{2}\nt{\Tilde{\thetab}_{k+1}-{\thetab}_{k}}^2.
\end{align}
Simplifying this inequality and using $\sum\limits_{i=1}^n w_{k,i} L_i\leq \sum\limits_{i=1}^n w_{k,i}L_{\max} \leq 
L_{\max}$, we have
\begin{align}
\sum_{i=1}^n w_{k,i} f_i(\Tilde{\thetab}_{k+1})
&\leq \sum\limits_{i=1}^n w_{k,i} f_i({\thetab}_{k})
+ \alpha\langle F_k(\wb_k)-F^c_k(\wb_k), F_k(\wb_k)\rangle \nonumber\\
&\quad
+ \left(\frac{L_{\max}}{2}-\frac{1}{2\alpha}\right)
\alpha^2\nt{F_k(\wb_k)}^2.
\label{eq:maintheta}
\end{align}

On the other hand using Lemma \ref{lem:lsmooth} for $\wb^T\fb(\thetab_{k}-\alpha\Gb_k\wb)$ results in 
\begin{align}
    \nonumber&\sum_{i=1}^{n} w_{k+1,i} f_i(\thetab_{k}-\alpha\Gb_k\wb_{k+1})\leq \sum_{i=1}^{n} w_{k,i} f_i(\thetab_{k}-\alpha\Gb_k\wb_{k})+\\& \langle \nabla_w \wb^T_k\fb(\thetab_{k}-\alpha\Gb_k\wb_k), \wb_{k+1}-\wb_k\rangle+\frac{L_w}{2}\nt{\wb_{k+1}-\wb_k}^2.
\end{align}
In terms of $\thetab_{k+1} = \thetab_{k}-\alpha\Gb_k\wb_{k+1}$ and $\Tilde{\thetab}_{k+1} = \thetab_{k}-\alpha\Gb_k\wb_k$, this inequality reads 
\begin{align}
    \sum_{i=1}^{n} w_{k+1,i} f_i(\thetab_{k+1})\leq \sum_{i=1}^{n} w_{k,i} f_i(\Tilde{\thetab}_{k+1})+ \langle \nabla_w \wb^T_k\fb(\Tilde{\thetab}_{k+1}), \wb_{k+1}-\wb_k\rangle+\frac{L_w}{2}\nt{\wb_{k+1}-\wb_k}^2
    \label{eq:descent1}
\end{align}
In the above inequality, since we are writing a descent lemma, all gradients are true. 

It is straightforward to show that the optimization problem \eqref{eq:prounitorg} is equivalent to the following formulation:
\begin{align}
    \wb_{k+1} = \argmin_{\wb}\quad\langle \nabla_{w}\wb^T_k \fb(\thetab_{k}-\alpha\Gb_k\wb_k), \wb-\wb_k\rangle+\frac{1}{2\beta_{k+1}}\nt{\wb-\wb_k}^2 +\delta_{\Delta^+_{t,\ell_0}}(\wb).
    \label{eq:equvalentforweight}
\end{align}
The term $\nabla_{w}\wb^T_k \fb(\thetab_{k}-\alpha\Gb_k\wb_k) = \fb(\tilde{\thetab}_{k+1})-\alpha \Gb_k^T\tilde{\Gb}_{k+1}\wb_k$ in the above inequality. However, the similar term in \eqref{eq:descent1} does not include Byzantine data, so we just replace $\Hc = \{1,\cdots,n\}$ in \eqref{eq:descent1}, using notation $\tilde{\Gb}^c_{k+1}$ instead of $\tilde{\Gb}_{k+1}$. 

From \eqref{eq:equvalentforweight}, since $\wb_{k+1}$ minimizes the objective function, its corresponding objective value is less than or equal to that of any other feasible choice, including $\wb = \wb_k$. This yields the inequality:
\begin{align}
    \langle \nabla_{w}\sum_{i=1}^n w_{k,i} f_i(\Tilde{\thetab}_{k+1}), \wb_{k+1}-\wb_k\rangle+\frac{1}{2\beta_{k+1}}\nt{\wb_{k+1}-\wb_k}^2 +\delta_{\Delta^+_{t,\ell_0}}(\wb_{k+1})\leq \delta_{\Delta^+_{t,\ell_0}}(\wb_{k})
\end{align}
Adding the above inequality to \eqref{eq:descent1} yields:
\begin{align}
    &\nonumber\sum_{i=1}^{n} w_{k+1,i} f_i(\thetab_{k+1})+\delta_{\Delta^+_{t,\ell_0}}(\wb_{k+1})\leq \sum_{i=1}^{n} w_{k,i} f_i(\Tilde{\thetab}_{k+1})+\\&\langle \alpha \Gb_k^T(\tilde{\Gb}_{k+1}-\tilde{\Gb}^c_{k+1})\wb_k, \wb_{k+1}-\wb_k\rangle
    +(\frac{L_w}{2}-\frac{1}{2\beta_{k+1}})\nt{\wb_{k+1}-\wb_k}^2+\delta_{\Delta^+_{t,\ell_0}}(\wb_{k})
\end{align}
It is straightforward to show that the above inequality can be written
\begin{align}
    &\nonumber\sum_{i=1}^{n} w_{k+1,i} f_i(\thetab_{k+1})+\delta_{\Delta^+_{t,\ell_0}}(\wb_{k+1})\leq \sum_{i=1}^{n} w_{k,i} f_i(\Tilde{\thetab}_{k+1})+\\
    &\nonumber\alpha \langle \Fb_k(\wb_{k+1}) - \Fb_k(\wb_k), \Fb_{k+1}(\wb_k) - \Fb_{k+1}^c(\wb_k) \rangle \\
    &+(\frac{L_w}{2}-\frac{1}{2\beta_{k+1}})\nt{\wb_{k+1}-\wb_k}^2+\delta_{\Delta^+_{t,\ell_0}}(\wb_{k})
\end{align}
Adding inequality \eqref{eq:maintheta} to the above inequality yields:
\begin{align}
    \nonumber&\sum_{i=1}^{n} w_{k+1,i} f_i(\thetab_{k+1})+\sum_{i=1}^{n} w_{k,i} f_i(\Tilde{\thetab}_{k+1})+\delta_{\Delta^+_{t,\ell_0}}(\wb_{k+1})\leq \sum_{i=1}^{n} w_{k,i} f_i({\thetab}_{k})+\sum_{i=1}^{n} w_{k,i} f_i(\Tilde{\thetab}_{k+1})+\\&\nonumber\alpha \langle \Fb_k(\wb_{k+1}) - \Fb_k(\wb_k), \Fb_{k+1}(\wb_k) - \Fb_{k+1}^c(\wb_k) \rangle+ \alpha \langle \Fb_k(\wb_{k}) - \Fb_k^c(\wb_k), \Fb_{k}(\wb_k)\rangle+\\&
    (\frac{L_w}{2}-\frac{1}{2\beta_{k+1}})\nt{\wb_{k+1}-\wb_k}^2+(\frac{L_{\max}}{2}-\frac{1}{2\alpha})\nt{\thetab_{k+1}-\thetab_k}^2+\delta_{\Delta^+_{t,\ell_0}}(\wb_{k})
\end{align}
Simplifying the above inequality results in:
\begin{align}
    &Q_{k+1} \leq Q_k +  \underbrace{\alpha \langle \Fb_k(\wb_{k+1}) - \Fb_k(\wb_k), \Fb_{k+1}(\wb_k) - \Fb_{k+1}^c(\wb_k) \rangle}_{\Ec_{k+1}}\nonumber\\
    &\quad+ 
    \underbrace{\alpha \langle \Fb_k(\wb_{k}) - \Fb_k^c(\wb_k), \Fb_{k}(\wb_k)\rangle}_{\mathcal{I}_k}\nonumber \\
    &\quad + \left(\frac{L_{\max}}{2} - \frac{1}{2\alpha}\right)\alpha^2 \underbrace{\nt{\Fb_k(\wb_{k+1})}^2}_{\mathcal{F}_k} 
    \quad + \left(\frac{L_w}{2} - \frac{1}{2\beta_{k+1}}\right) \nt{\wb_{k+1} - \wb_k}^2,
    \label{eq:importeqrt}
\end{align}
where $Q_{k+1} = \sum_{i=1}^n w_{k+1,i}f_i(\thetab_{k+1})+\delta_{\Delta^+_{t,\ell_0}}(\wb_{k+1})$.
To guarantee the exclusion of exactly $b_f$ clients, we set $t=1/s$ and $s=n-b_f$ (see proposition \ref{pro:sparsecappunit}). With this choice, the nonzero weights become $1/(n-b_f)$, ensuring that our algorithm removes precisely $b_f$ clients.

To continue the proof, we take the expectation of the above inequality over all sources of randomness. 
We first focus on deriving an upper bound for the expectation of the cross-time-step error term:  
\[
\Ec_{k+1} = \alpha \left\langle F_k(\wb_{k+1}) - F_k(\wb_k), \, F_{k+1}(\wb_k) - F_{k+1}^c(\wb_k) \right\rangle,
\]
which is formalized in the following lemma.

\begin{lemma}
\label{lem:cross_term_bound_full_detail}
Let Item B2 of Assumptions \ref{ass:formal_setup_unified} hold. The expected error is bounded by:
\begin{equation}
    \EX[\Ec_{k+1}] \le {2\alpha\zeta_{k+1}} \left( \|\gb_k\| + \sqrt{K_k^2+ d\sigma_k^2+\varepsilon_S^2} \right).
    \label{eq:firstepsik}
\end{equation}
\end{lemma}

\begin{proof}
The proof proceeds by first separating the randomness from different time steps.

Let $\EX_k[\cdot]$ denote the expectation conditioned on all information up to step $k$. We rewrite the total expectation using the law of total expectation, $\EX[\cdot] = \EX_k[\EX_{k+1}[\cdot]]$. Let $E_{Byz}^{k+1} = F_{k+1}(\wb_k) - F_{k+1}^c(\wb_k)$.
\begin{align*}
    \EX[\Ec_{k+1}] &= \alpha \cdot \EX_k\left[ \langle F_k(\wb_{k+1}) - F_k(\wb_k), \EX_{k+1}[E_{Byz}^{k+1}] \rangle \right].
\end{align*}
The inner expectation is the bias of the aggregator at step $k+1$ using the weights from step $k$. By our  Bias Assumption, its norm is bounded by $\zeta_{k+1}$. Applying the Cauchy-Schwarz inequality:
\begin{align}
    \EX[\Ec_{k+1}] \le \alpha\zeta_{k+1} \cdot \EX[\|\Delta F_k\|],
    \label{eq:intermediate_cs_bound_final}
\end{align}
where $\Delta F_k = F_k(\wb_{k+1}) - F_k(\wb_k)$. The problem is now reduced to finding an upper bound for $\EX[\|\Delta F_k\|]$.

The change in the aggregator is $\Delta F_k = \Gb_k(\wb_{k+1}-\wb_k)$. Let $m_k$ be the number of clients whose weights are swapped between iterations. Under the simplified weight structure, $\Delta F_k = \frac{1}{n-b_f} ( \sum_{i \in \mathcal{A}_k^c} \tilde{\vb}_{k,i}+\sum_{i \in \mathcal{A}_k^b} \bb_{k,i} - \sum_{j \in \mathcal{R}_k^c} \tilde{\vb}_{k,j}-\sum_{j \in \mathcal{R}_k^b} \bb_{k,j} )$ in which $\bb_{k,j}$ is Byzantine gradient of client j at round $k$. The change in weights from $\wb_k$ to $\wb_{k+1}$ occurs because some clients are removed from the active set and replaced by others. We formally define these sets of swapped clients:
\begin{itemize}
    \item Let $\mathcal{A}_k = \text{supp}(\wb_{k+1}) \setminus \text{supp}(\wb_k)$ be the set of "added" clients.
    \item Let $\mathcal{R}_k = \text{supp}(\wb_k) \setminus \text{supp}(\wb_{k+1})$ be the set of "removed" clients.
\end{itemize}
Since the size of the active set is constant at $s=n-b_f$, we have $|\mathcal{A}_k| = |\mathcal{R}_k| = m_k$. We can further partition these sets into honest ($\Hc$) and Byzantine ($\Hc^\complement$) clients:
\begin{itemize}
    \item Added honest/Byzantine: $\mathcal{A}_k^c = \mathcal{A}_k \cap \Hc$, $\mathcal{A}_k^b = \mathcal{A}_k \cap \Hc^\complement$.
    \item Removed honest/Byzantine: $\mathcal{R}_k^c = \mathcal{R}_k \cap \Hc$, $\mathcal{R}_k^b = \mathcal{R}_k \cap \Hc^\complement$.
\end{itemize}
Under the simplified weight structure where non-zero weights are $1/(n-b_f)$, the change in the aggregator $\Delta F_k$ can be written explicitly as:
\[ \Delta F_k = \frac{1}{n-b_f} \left( \left(\sum_{i \in \mathcal{A}_k^c} \tilde{\vb}_{k,i} + \sum_{i \in \mathcal{A}_k^b} \bb_{k,i}\right) - \left(\sum_{j \in \mathcal{R}_k^c} \tilde{\vb}_{k,j} + \sum_{j \in \mathcal{R}_k^b} \bb_{k,j}\right) \right). \]

To bound $\EX[\|\Delta F_k\|]$, we first bound the expected squared norm, $\EX[\|\Delta F_k\|^2]$, using the assumption $\|\bb_{k,j}\| \le \max_{i \in \mathcal{H}} \nt{\tilde{\vb}_{k,i}}$ and then use Jensen's inequality.
\begin{align*}
    &\EX[\|\Delta F_k\|^2] \le \frac{2}{(n-b_f)^2} \left( \EX\left\{\left\|\sum_{i \in \mathcal{A}_k^c} \tilde{\vb}_{k,i} + \sum_{i \in \mathcal{A}_k^b} \bb_{k,i}\right\|^2\right\} \right.\\
    &\left.\quad + \EX\left\{\left\|\sum_{i \in \mathcal{R}_k^c} \tilde{\vb}_{k,i} + \sum_{i \in \mathcal{R}_k^b} \bb_{k,i}\right\|^2\right\} \right) \\
    &\le \frac{2}{(n-b_f)^2} \left( m_k \Big(\sum_{i \in \mathcal{A}_k^c} \EX\{\|\tilde{\vb}_{k,i}\|^2\}+|\mathcal{A}_k^b|\EX\{\nt{\tilde{\vb}_{k,\text{map(i)}}}^2\}\Big) \right.\\
    &\left.\quad + m_k \Big(\sum_{j \in \mathcal{R}_k} \EX\{\|\tilde{\vb}_{k,i}\|^2\}+|\mathcal{R}_k^b|\EX\{\nt{\tilde{\vb}_{k,\text{map(i)}}}^2\}\Big) \right).
\end{align*}
where $\text{map(i)} = \argmax_{i\in\Hc}\nt{\tilde{\vb}_{k,i}}$.
The next step is to bound the expected squared norm of an arbitrary client's gradient, $\EX[\|\tilde{\vb}_{k,i}\|^2]$. For an honest client $ i$ , using \eqref{eq:norm_decomposition_proof}, and $\EX\{\|{\vb}_{k,i}\|^2\} \le \|\gb_k\|^2 + K_k^2+ d\sigma_k^2$
\[ \EX[\|\tilde{\vb}_{k,i}\|^2] \le \|\gb_k\|^2 + K_k^2+ d\sigma_k^2+\varepsilon_S^2. \]
Substituting this in:
\begin{align*}
    \EX[\|\Delta F_k\|^2] \le
     \frac{4 m_k^2 (\|\gb_k\|^2 + K_k^2+ d\sigma_k^2+\varepsilon_S^2)}{(n-b_f)^2}.
\end{align*}
Using Jensen's inequality, $\EX[X] \le \sqrt{\EX[X^2]}$, we get the bound on the expected norm:
\begin{equation}
    \EX[\|\Delta F_k\|] \le \sqrt{\frac{4 m_k^2 (\|\gb_k\|^2 + K_k^2+ d\sigma_k^2+\varepsilon_S^2)}{(n-b_f)^2}} .
    \label{eq:delta_f_bound_state_dependent}
\end{equation}

The bound in \eqref{eq:delta_f_bound_state_dependent} depends on $\|\gb_k\|$. To split it, we use the inequality $\sqrt{x+y} \le \sqrt{x} + \sqrt{y}$ for $x,y \ge 0$.
\begin{align*}
   \sqrt{\|\gb_k\|^2 + K_k^2+ d\sigma_k^2+\varepsilon_S^2}
    \le \sqrt{\|\gb_k\|^2} + \sqrt{K_k^2+ d\sigma_k^2+\varepsilon_S^2} = \|\gb_k\| + \sqrt{K_k^2+ d\sigma_k^2+\varepsilon_S^2}.
\end{align*}
Let $C_K = \sqrt{K_k^2+ d\sigma_k^2+\varepsilon_S^2}$. Substituting this into the bound for $\EX[\|\Delta F_k\|]$:
\[ \EX[\|\Delta F_k\|] \le \frac{2 m_k}{n-b_f} (\|\gb_k\| + C_K). \]

using the above inequality from \eqref{eq:intermediate_cs_bound_final} and $m_k\leq n-b_f$, we have:
\begin{align*}
    \EX[\Ec_{k+1}] &\le \alpha\zeta_{k+1} \cdot \EX[\|\Delta F_k\|] \\
    &\le \alpha\zeta_{k+1} \cdot \frac{2 m_k}{n-b_f} (\|\gb_k\| + C_K) \\
    &\leq {2\alpha\zeta_{k+1}} \|\gb_k\| + {2\alpha\zeta_{k+1} C_K}.
\end{align*}
This completes the proof of Lemma \ref{lem:cross_term_bound_full_detail}. 
\end{proof}

Next, we focus on deriving the upper bound of the expectation of $\mathcal{I}_k$ in \eqref{eq:importeqrt}.
\begin{lemma}[Bound on the Single-Step Bias Term]
\label{lem:single_step_bias_bound}
Let Assumptions \ref{ass:formal_setup_unified} hold.
Then, the expectation of the single-step bias term is bounded by:
\begin{align}
\EX_k\{\alpha \langle F_k - F_k^c, F_k \rangle\} \le \alpha \left( \zeta_k\|\gb_k\| + \zeta_k^2 + \sigma_{F,k}^2+\sigma_k\sqrt{d} \sqrt{\zeta_k^2 + \sigma_{F,k}^2} \right). 
\label{eq:ineik}
\end{align}
\end{lemma}

\begin{proof}
The derivation proceeds by decomposing the inner product with respect to the true gradient $\gb_k$. The constant $\alpha$ can be handled at the end. We focus on finding an upper bound for $\mathcal{I}_k = \EX_k[\langle F_k - F_k^c, F_k \rangle]$.

First, we have
\begin{align*}
    \langle F_k - F_k^c, F_k \rangle &= \langle (F_k - \gb_k) - (F_k^c - \gb_k), \gb_k + (F_k - \gb_k) \rangle.
\end{align*}
We can expand this inner product, which results in four terms:
\begin{align*}
    \langle F_k - F_k^c, F_k \rangle = \underbrace{\langle F_k - \gb_k, \gb_k \rangle}_{\text{Term 1}} &+ \underbrace{\|F_k - \gb_k\|^2}_{\text{Term 2}} \\
    &- \underbrace{\langle F_k^c - \gb_k, \gb_k \rangle}_{\text{Term 3}} - \underbrace{\langle F_k^c - \gb_k, F_k - \gb_k \rangle}_{\text{Term 4}}.
\end{align*}

We now take the conditional expectation $\EX_k[\cdot]$ of each of the four terms.
\begin{itemize}
    \item \textbf{Term 1:} Since $\gb_k$ is deterministic at step $k$, we have:
    \[ \EX_k[\langle F_k - \gb_k, \gb_k \rangle] = \langle \EX_k[F_k] - \gb_k, \gb_k \rangle. \]
    By the Cauchy-Schwarz inequality and the bounded bias assumption:
    \[ \langle \EX_k[F_k] - \gb_k, \gb_k \rangle \le \|\EX_k[F_k] - \gb_k\| \cdot \|\gb_k\| \le \zeta_k \|\gb_k\|. \]
    \item \textbf{Term 2:} This is the Mean Squared Error (MSE) of our aggregator, which decomposes into squared bias and variance:
    \[ \EX_k[\|F_k - \gb_k\|^2] = \|\EX_k[F_k] - \gb_k\|^2 + \text{Var}_k(F_k) \le \zeta_k^2 + \sigma_{F,k}^2. \]
    \item \textbf{Term 3:} Since the clean aggregator $F_k^c$ is unbiased ($\EX_k[F_k^c] = \gb_k$), this term's expectation is zero:
    \[ \EX_k[\langle F_k^c - \gb_k, \gb_k \rangle] = \langle \EX_k[F_k^c] - \gb_k, \gb_k \rangle = \langle \gb_k - \gb_k, \gb_k \rangle = 0. \]
    \item \textbf{Term 4:} For the final cross-term, we use the Cauchy-Schwarz inequality for random vectors, $|\EX[\langle X, Y \rangle]| \le \sqrt{\EX[\|X\|^2]\EX[\|Y\|^2]}$.
    \begin{align*}
        -\EX_k[\langle F_k^c - \gb_k, F_k - \gb_k \rangle] &\le |\EX_k[\langle F_k^c - \gb_k, F_k - \gb_k \rangle]| \\
        &\le \sqrt{\EX_k[\|F_k^c - \gb_k\|^2]} \cdot \sqrt{\EX_k[\|F_k - \gb_k\|^2]} \\
        &\le \sqrt{d\sigma_k^2} \cdot \sqrt{\zeta_k^2 + \sigma_{F,k}^2} = \sigma_k \sqrt{d(\zeta_k^2 + \sigma_{F,k}^2)}.
    \end{align*}
Summing these four bounds gives the result for $\mathcal{I}_k$. Multiplying by $\alpha$ completes the proof of Lemma \ref{lem:single_step_bias_bound}.
\end{itemize}
\end{proof}
Next, we focus on deriving the lower bound of the expectation of $\mathcal{F}_k$ in \eqref{eq:importeqrt}.
\begin{lemma}[Bound on the Expected Squared Aggregator Norm]
\label{lem:second_moment_bound}
Let Assumptions \ref{ass:formal_setup_unified} hold.
Then, we have:
\begin{equation}
   \EX_k[\nt{F_k}^2] \ge \nt{\gb_k}^2 - 2\zeta_k\nt{\gb_k}.
   \label{eq:fkineq}
\end{equation} 
\end{lemma}

\begin{proof}
We know
\begin{equation}
    \EX_k[\nt{F_k}^2] = \text{Var}_k(F_k) + \nt{\EX_k[F_k]}^2.
    \label{eq:var_decomp_lower}
\end{equation}
Since $\text{Var}_k(F_k) \ge 0$, we can therefore drop this term, resulting in
\begin{equation}
    \EX_k[\nt{F_k}^2] \ge \nt{\EX_k[F_k]}^2.
    \label{eq:var_inequality_lower}
\end{equation}
Using the inequality $\|a+b\|^2 \ge (\|a\| - \|b\|)^2 = \|a\|^2 - 2\|a\|\|b\| + \|b\|^2$, we can write
\begin{align*}
    \nt{\EX_k[F_k]}^2 = \nt{\gb_k + (\EX_k[F_k] - \gb_k)}^2 &\ge \nt{\gb_k}^2 - 2\nt{\gb_k}\nt{\EX_k[F_k] - \gb_k} + \nt{\EX_k[F_k] - \gb_k}^2.
\end{align*}

We now use our aggregator's bias assumption, $ \|\EX_k[F_k] - \gb_k\| \le \zeta_k$ . We can substitute this into the inequality:
\begin{align*}
    \nt{\EX_k[F_k]}^2 &\ge \nt{\gb_k}^2 - 2\zeta_k\nt{\gb_k} + \nt{\EX_k[F_k] - \gb_k}^2.
\end{align*}
Since the final term is non-negative, we can drop it from the right-hand side. This gives us:
\begin{equation}
    \nt{\EX_k[F_k]}^2 \ge \nt{\gb_k}^2 - 2\zeta_k\nt{\gb_k}.
    \label{eq:reverse_triangle_result_lower}
\end{equation}
Using \eqref{eq:reverse_triangle_result_lower} in \eqref{eq:var_inequality_lower} yields to
\[ \EX_k[\nt{F_k}^2] \ge \nt{\EX_k[F_k]}^2 \ge \nt{\gb_k}^2 - 2\zeta_k\nt{\gb_k}. \]
This completes the proof of Lemma \ref{lem:second_moment_bound}.
\end{proof}

Now we return to \eqref{eq:importeqrt}. We take the expectation $\EX[\cdot]$ and substitute the bounds for $\Ec_{k+1}$, $\mathcal{I}_k$ and $\mathcal{F}_k$ from \eqref{eq:firstepsik}, \eqref{eq:ineik}, and \eqref{eq:fkineq}, respectively.
\begin{align*}
    \EX\{Q_{k+1}\} \le \EX\{Q_k\} &+ {2\alpha\zeta_{k+1}} \left( \|\gb_k\| + \sqrt{K_k^2+ d\sigma_k^2+\varepsilon_S^2} \right) && \text{(from } \Ec_k\text{)} \\
    &+ \left( \alpha (\zeta_k\|\gb_k\| + \zeta_k^2 + \sigma_{F,k}^2 + \sigma_k\sqrt{d}\sqrt{\zeta_k^2 + \sigma_{F,k}^2}) \right) && \text{(from } \mathcal{I}_k\text{)} \\
    &-C_\alpha\alpha^2 \left(\nt{\gb_k}^2 - 2\zeta_k\nt{\gb_k} \right) && \text{(from} \mathcal{F}_k) \\
    &- C_{\beta_{k+1}} \EX_k\{\nt{\wb_{k+1} - \wb_k}^2\}. && \text{(Weight Descent)}
\end{align*}

Let $C_\alpha = (\frac{1}{2\alpha} - \frac{L_{\max}}{2}) > 0$ and $C_{\beta_{k+1}} = (\frac{1}{2\beta_{k+1}} - \frac{L_{w}}{2}) > 0$, resulting in $\alpha<\frac{1}{L_{\max}}$ and $\beta_{k+1}<\frac{1}{L_{w}}$. 
We group terms by their dependence on $\|\gb_k\|$. Let $B_k\alpha = \alpha(2\zeta_{k+1} +\zeta_k+ 2C_\alpha\alpha\zeta_k)$ be the coefficient of the rebound term. Let $\mathcal{C}_{err,k+1} = 2\zeta_{k+1}\sqrt{K_k^2+ d\sigma_k^2+\varepsilon_S^2}+\Big(\zeta_k^2 + \sigma_{F,k}^2 + \sigma_k\sqrt{d}\sqrt{\zeta_k^2 + \sigma_{F,k}^2}\Big)$ which collects all constant error terms.
\begin{align*}
    \EX\{Q_{k+1}\} \le \EX\{Q_k\} - C_\alpha \alpha^2 \nt{\gb_k}^2 + B_k\alpha\|\gb_k\| + \mathcal{C}_{err,k+1}\alpha - C_{\beta_{k+1}} \EX[\nt{\wb_{k+1} - \wb_k}^2].
\end{align*}

The term $B_k\|\gb_k\|$ is positive and could counteract the main descent. We use Young's inequality, $ab \le \frac{\gamma}{2}a^2 + \frac{1}{2\gamma}b^2$, on $B_k\|\gb_k\|$:
\[ B_k\alpha\|\gb_k\| \le \frac{\gamma}{2}\|\gb_k\|^2 + \frac{B_k^2\alpha^2}{2\gamma}. \]
Here, $\gamma$ is a free parameter. We make a standard strategic choice to ensure descent: we set the "rebound" from Young's inequality to be half of the main descent, i.e., $\frac{\gamma}{2} = \frac{1}{2} C_\alpha\alpha^2$, so $\gamma = C_\alpha\alpha^2$. The combined coefficient of $\|\gb_k\|^2$ becomes:
\[ -C_\alpha\alpha^2 + \frac{\gamma}{2} = -C_\alpha\alpha^2 + \frac{C_\alpha\alpha^2}{2} = -\frac{C_\alpha\alpha^2}{2} = -\left(\frac{\alpha}{4} - \frac{L_{\max}\alpha^2}{4}\right). \]
This is strictly negative for $\alpha < 1/L_{\max}$, guaranteeing descent. The other constant term from Young's inequality is $\frac{B_k^2}{2\gamma}$. Consequently,
\begin{equation} 
\EX\{Q_{k+1}\} \le \EX\{Q_k\} - C_1\alpha\|\gb_k\|^2 + C_{2,k}\alpha- C_{\beta_{k+1}} \EX[\nt{\wb_{k+1} - \wb_k}^2], 
\label{eq:firstdescent}
\end{equation}
where $C_1 = \frac{1}{4}(1 - L_{\max}\alpha) > 0$ for $0<\alpha < 1/L_{\max}$, and $C_{2,k}$ is the constant that groups all bias and variance terms: $C_{2,k} = \mathcal{C}_{err,k+1} + \frac{B_k^2}{4 C_1}$.

Taking the total expectation from the above inequality, since $\gb_k$ is deterministic, and sum from $k=0$ to $T-1$:
\begin{equation}
    C_1\alpha \sum_{k=0}^{T-1} \|\gb_k\|^2+ \sum_{k=0}^{T-1} C_{\beta_{k+1}} \EX[\nt{\wb_{k+1} - \wb_k}^2] \le (\EX[Q_0] - \EX[Q_T]) + \sum_{k=1}^{T} C_{2,k}\alpha.
    \label{eq:twotermssep}
\end{equation}

Now, we keep the first term of the left hand side of the inequality \eqref{eq:twotermssep}, resulting in
\begin{equation*}
    C_1\alpha \sum_{k=0}^{T-1} \|\gb_k\|^2 \le (\EX[Q_0] - \EX[Q_T]) + \sum_{k=1}^{T} C_{2,k}\alpha.
\end{equation*}
Let $Q^\star = \inf_{\thetab, \wb} Q(\thetab, \wb)$ be the minimum value of our objective, which we assume is bounded. By definition, for any $k$, $Q_T \ge Q^\star$, and therefore:
\begin{equation}
    \EX[Q_0] - \EX[Q_T] \le Q_0 - Q^\star.
    \label{eq:optimgap}
\end{equation} 
Substituting this back into our main sum:
\[ C_1\alpha \sum_{k=0}^{T-1} \|\gb_k\|^2 \le (Q_0 - Q^\star) + \sum_{k=1}^{T} C_{2,k}\alpha. \]
Dividing by $TC_1\alpha$:
\[ \frac{1}{T}\sum_{k=0}^{T-1} \|\gb_k\|^2 \le \frac{Q_0 - Q^\star}{T C_1\alpha} + \frac{\sum\limits_{k=1}^{T} C_{2,k}}{T C_1}. \]

Taking the limit as $T \to \infty$, the first term on the right-hand side vanishes, leaving the final bound on the average of the squared gradients:
\begin{equation} 
\lim_{T \to \infty} \sup \frac{1}{T}\sum_{k=0}^{T-1} \|\gb_k\|^2 \le \lim_{T \to \infty} \frac{\sum\limits_{k=1}^{T-1} C_{2,k}}{T C_1}. 
\label{eq:inficonver}
\end{equation}

Now, we provide a detailed analysis of the order of this final error rate.

Since $C_1=\Oc(1)$, the final error rate is determined by $C_{2,k}$. Based on the definition of $C_{2,k}$, we have
\begin{equation}
C_{2,k} = \underbrace{ 2\zeta_{k+1}\sqrt{K_k^2+ d\sigma_k^2+\varepsilon_S^2}+\Big(\zeta_k^2 + \sigma_{F,k}^2 + \sigma_k\sqrt{d}\sqrt{\zeta_k^2 + \sigma_{F,k}^2}\Big)}_{\cC_{err,k+1}}+\underbrace{\frac{(2\zeta_{k+1} +\zeta_k+ 2C_\alpha\alpha)^2}{4C_1}}_{\frac{B_k^2}{4C_1}}
\end{equation}

As $k \to \infty$, we have $\zeta_k \to \zeta_\infty$, $\sigma_{F,k}^2 \to \sigma_{F,\infty}^2$, and $\sigma_{k}^2 \to \sigma_{\infty}^2$. This implies the limits of the coefficients are:
\begin{align}
    \nonumber&\lim_{k \to \infty} \cC_{err,k+1} =
    2\zeta_{\infty}\sqrt{K_k^2+ d\sigma_\infty^2+\varepsilon_S^2}
    +\Big(\zeta_\infty^2 + \sigma_{F,\infty}^2
    + \sigma_\infty\sqrt{d}\sqrt{\zeta_\infty^2 + \sigma_{F,\infty}^2}\Big) \\
    \nonumber&\qquad = \Oc(\zeta_\infty^2+\sigma^2_{F,\infty}) \\
    & \lim_{k \to \infty} \frac{B_k^2}{4C_1}
    = \frac{(2\zeta_{\infty} +\zeta_\infty+ 4C_1\zeta_\infty)^2}{4C_1}
    = \Oc({\zeta^2_\infty})
    \label{eq:limitsc2}
\end{align}
For the upper bound in \eqref{eq:inficonver}, we use a fundamental result from analysis: if a sequence $x_k$ converges to a limit $L$, then its Cesàro mean (average) $\frac{1}{T}\sum x_k$ also converges to $L$. Since we established that $C_{2,k}$ converges to $C_{2,\infty}$. Using \eqref{eq:limitsc2}, we have
\begin{equation} 
\lim_{T \to \infty} \sup \frac{1}{T}\sum_{k=0}^{T-1} \|\gb_k\|^2 \le \Oc(\zeta_\infty^2+\sigma^2_{F,\infty}). 
\label{eq:mainresultlsmooth}
\end{equation}

Now, we need to show that for the hybrid step-size schedule in Assumption \ref{ass:formal_setup_unified}, the aggregation weight $\wb_{k+1}$ converges to $\wb_k$ as $k\rightarrow\infty$.
\begin{lemma}
\label{lem:weight_updates_vanish_summable}
Consider the inequality \eqref{eq:twotermssep} with the hybrid step-size schedule in Assumption \ref{ass:formal_setup_unified}. The expected weight updates will vanish:
\[ \lim_{k \to \infty} \EX[\|\wb_{k+1}-\wb_k\|^2] = 0. \]
\end{lemma}

\begin{proof}[Proof by Contradiction]
Let $x_k := \EX[\|\wb_{k+1}-\wb_k\|^2]$. 
We begin from \eqref{eq:twotermssep} with neglecting non-negative term $C_1\alpha \sum_{k=0}^{T-1} \|\gb_k\|^2$ from left hand side of inequality and using \eqref{eq:optimgap}:
\begin{equation}
    \sum_{k=0}^{T-1} \left(\frac{1}{2\beta_{k+1}} - \frac{L_w}{2}\right) x_k \le Q_0 - Q^\star + \sum_{k=0}^{T-1} C_{2,k}\alpha.
    \label{eq:const_alpha_weight_sum}
\end{equation}
Assume for contradiction that $\{x_k\}$ does not converge to 0. This implies the existence of a constant $\epsilon > 0$ and an infinite set of indices $K$ such that $x_k \ge \epsilon$ for all $k \in K$ with $K_T = K \cap \{0, \dots, T-1\}$. Following the standard contradiction argument, this leads to the inequality:
\[
\epsilon \sum_{k \in K_T} \left(\frac{1}{2\beta_{k+1}} - \frac{L_w}{2}\right) \le Q_0 - Q^\star + \alpha \sum_{k=0}^{T-1} C_{2,k}.
\]
Taking the limit of the above inequality when $T\rightarrow\infty$ give us: 
\[
\epsilon \lim_{T\rightarrow \infty} \frac{1}{T} \sum_{k \in K_T} \left(\frac{1}{2\beta_{k+1}} - \frac{L_w}{2}\right) \le \lim_{T\rightarrow \infty} \frac{Q_0 - Q^\star}{T} + \alpha \lim_{T\rightarrow \infty} \frac{1}{T} \sum_{k=0}^{T-1} C_{2,k}.
\]
The first term on the right-hand side of the above inequality converges to zero as $T\rightarrow\infty$ because $Q_0-Q^\star$ is bounded. The second term is a Cesàro mean of the sequence $\{C_{2,k}\}$, and as shown converges to $C_{2,\infty}$.
The key step is to analyze the asymptotic growth of both sides. So, we have
\[
\epsilon \cdot \lim_{T\to\infty} \frac{1}{T} \sum_{k \in K_T} \left(\frac{1}{2\beta_{k+1}} - \frac{L_w}{2}\right) \le \alpha C_{2,\infty}.
\]
Since $\beta_{k+1} \to 0$, the terms $(1/(2\beta_{k+1}) - L_w/2) \to \infty$. The Cesàro mean (average) of a sequence that diverges to infinity also diverges to infinity. Thus, the left-hand side is infinite, leading to the contradiction $\infty \le \alpha C_{2,\infty}$.
The assumption must be false, and therefore $\lim_{k\to\infty} x_k = 0$. This concludes the proof of Lemma \ref{lem:weight_updates_vanish_summable}.
\end{proof}

Now, we need to show that the aggregation weight $\wb_k$ converges to the critical point of \eqref{eq:updateweight} as $k\rightarrow\infty$. As we know $\lim_{k\rightarrow\infty}\wb_k = \wb^\star$, which is a fixed point.
Using \eqref{eq:findweighquadratic}, we have
\begin{equation}
    \wb^\star = \argmin_{\wb} \Phi_\infty(\wb^\star) + \langle \nabla \Phi_\infty(\wb^\star), \wb-\wb^\star \rangle + \frac{1}{2\beta_\infty} \nt{\wb-\wb^\star}^2+\delta_{\Delta^+_{t,\ell_0}}(\wb)
\end{equation}
Using the optimality condition for the above equation gives us
\begin{equation}
    0\in \nabla \Phi_\infty(\wb^\star)+\frac{1}{\beta_\infty}(\wb^\star-\wb^\star)+ \partial\delta_{\Delta^+_{t,\ell_0}}(\wb^\star)
\end{equation}
where $\partial$ denotes subgradient. As $k\rightarrow\infty$, $\beta_\infty\rightarrow 0$, so $\frac{1}{\beta_\infty}(\wb^\star-\wb^\star) = 0$. Using this point results in
\begin{equation}
      0\in \nabla \Phi_k(\wb^\star)+ \partial\delta_{\Delta^+_{t,\ell_0}}(\wb^\star),
\end{equation}
which is exactly the optimality condition for \eqref{eq:updateweight}. Since this represents the optimal solution of \eqref{eq:updateweight} as $k \to \infty$, it also satisfies the Byzantine resilience bound \eqref{eq:byzfinres} in the limit. Hence, we conclude that $\zeta_\infty \leq \eta$.

This concludes the proof of Theorem \ref{thm:smoothnoconvex}.
\end{proof}

\subsection{{Proof of Theorem \ref{thm:unified_convergence} for L-smooth and strongly convex (Item 2)}}

We extend the previous analysis to the case where each loss function $f_i(\thetab)$ is $\mu$-strongly convex.

\begin{theorem}
\label{thm:strong_convex_result}
    Consider the cost function \eqref{eq:jointthetaw} under the assumptions of Theorem \ref{thm:unified_convergence} and Item 2. The sequence $\{\thetab_k, \wb_k\}_{k=1}^{\infty}$ generated by Algorithm \ref{alg:example} satisfies the corresponding result of Item 2 of Theorem \ref{thm:unified_convergence}.
\end{theorem}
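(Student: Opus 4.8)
The plan is to inherit the per-step descent inequality from the non-convex analysis and upgrade it to a geometric contraction using strong convexity. I would begin from \eqref{eq:firstdescent},
\[
\EX\{Q_{k+1}\} \le \EX\{Q_k\} - C_1\alpha\|\gb_k\|^2 + C_{2,k}\alpha - C_{\beta_{k+1}}\EX\{\nt{\wb_{k+1}-\wb_k}^2\},
\]
which remains valid verbatim because its derivation used only $L$-smoothness and the server-side optimality conditions, neither of which is altered by the additional strong-convexity hypothesis. Since $C_{\beta_{k+1}}>0$, the final term is non-positive and may be discarded, leaving a relation between the expected objective and the squared gradient norm.

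The new ingredient is the Polyak--\L{}ojasiewicz (PL) inequality. As the weights $\wb_k \in \Delta^+_{t,\ell_0}$ form a convex combination (non-negative, summing to one) and each $f_i$ is $\mu$-strongly convex, the map $\thetab \mapsto Q(\thetab,\wb_k)=\sum_i w_{k,i}f_i(\thetab)$ is itself $\mu$-strongly convex with honest gradient $\gb_k$ at $\thetab_k$. Strong convexity then yields $\|\gb_k\|^2 \ge 2\mu\big(Q(\thetab_k,\wb_k)-Q_k^\star\big)$, where $Q_k^\star = \min_\theta Q(\theta,\wb_k)$. Substituting this lower bound for $-C_1\alpha\|\gb_k\|^2$, subtracting $Q^\star$, and splitting $Q_k-Q_k^\star=(Q_k-Q^\star)+(Q^\star-Q_k^\star)$ converts the additive descent into the contraction
\[
\EX\{Q_{k+1}-Q^\star\} \le (1-2\mu C_1\alpha)\,\EX\{Q_k-Q^\star\} + 2\mu C_1\alpha\,(Q_k^\star - Q^\star) + C_{2,k}\alpha,
\]
with contraction factor $0 < 2\mu C_1\alpha < 1$ guaranteed by $\alpha < 1/L_{\max}$ together with $\mu \le L_{\max}$.

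To close the recursion I would invoke the weight-convergence results already established for the non-convex case (Lemma~\ref{lem:weight_updates_vanish_summable} and the subsequent identification $\wb_k \to \wb^\star$), which hold here unchanged since they do not rely on strong convexity. Because the weights stabilize, the moving target $Q_k^\star$ converges to the fixed target $Q^\star$, and the residual $Q_k^\star - Q^\star$ is controlled asymptotically by the Inter-Client Variance assumption $\EX\{\nt{\vb_{k,i}-\gb_{k,i}}^2\}\le d\sigma_k^2$, whose limit supplies the $d\sigma_\infty^2$ floor. Applying the standard sequence lemma (if $a_{k+1}\le(1-\rho)a_k+b_k$ with $b_k\to b_\infty$ and $0<\rho<1$, then $\limsup_k a_k \le b_\infty/\rho$) with $\rho = 2\mu C_1\alpha$ then gives $\limsup_k \EX\{Q_k-Q^\star\} \le d\sigma_\infty^2 + \tfrac{C_{2,\infty}}{2\mu C_1}$. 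The order estimate $\Oc(\zeta_\infty^2+\sigma_{F,\infty}^2+\sigma_\infty^2)$ follows directly by reusing $C_{2,\infty}=\Oc(\zeta_\infty^2+\sigma_{F,\infty}^2)$ from \eqref{eq:limitsc2} and noting $d\sigma_\infty^2 = \Oc(\sigma_\infty^2)$.

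The main obstacle I anticipate is the careful handling of the moving minimum $Q_k^\star$. The PL inequality naturally references the optimum $\min_\theta Q(\theta,\wb_k)$ at the \emph{current} weights, whereas the theorem's target $Q^\star$ is the honest optimum at the \emph{limiting} weights $\wb^\star$. Rigorously bounding $Q_k^\star - Q^\star$, and demonstrating that its asymptotic contribution is exactly the variance floor $d\sigma_\infty^2$ rather than an uncontrolled residual, is the delicate step; it is precisely here that the weight-convergence machinery and the inter-client variance assumption must be combined.
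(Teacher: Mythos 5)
Your overall architecture matches the paper's proof of Item~2 exactly: both start from the descent inequality \eqref{eq:firstdescent}, both establish that $\thetab\mapsto\sum_i w_{k,i}f_i(\thetab)$ is $\mu$-strongly convex because the weights are a convex combination, both lower-bound $\nt{\gb_k}^2$ via a PL-type inequality to convert additive descent into a contraction with factor $1-2\mu C_1\alpha$, both discard the non-positive weight-descent term, and both invoke the weight-convergence machinery (Lemma~\ref{lem:weight_updates_vanish_summable} and $\wb_k\to\wb^\star$) together with a standard limiting-recursion argument. However, there is one genuine gap in your accounting of the $d\sigma_\infty^2$ floor, and it sits precisely at the step you yourself flag as delicate.

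You write the PL inequality directly as $\nt{\gb_k}^2 \ge 2\mu\bigl(Q(\thetab_k,\wb_k)-Q_k^\star\bigr)$ and then attribute the $d\sigma_\infty^2$ term to the moving-minimum residual $Q_k^\star-Q^\star$. Both halves of this are off. First, $\gb_k=\sum_i w_{k,i}\EX\{\vb_{k,i}\}$ is the \emph{mean} of the honest aggregated gradient, whereas strong convexity controls the squared norm of the actual (random, under Assumption~\ref{ass:formal_setup_unified}(D1)) gradient $\nabla_\theta F(\thetab_k,\wb_k)=\sum_i w_{k,i}\nabla f_i(\thetab_k)$. Taking expectations of the pointwise inequality gives $2\mu\,\EX\{Q_k-Q(\thetab^\star,\wb_k)\}\le\EX\{\nt{\nabla_\theta F(\thetab_k,\wb_k)}^2\}=\nt{\gb_k}^2+\sigma_{g,k}^2$ as in \eqref{eq:variance_decomp}, and it is the variance term $\sigma_{g,k}^2\le d\sigma_k^2$ (Lemma~\ref{lem:global_var}) that must be subtracted to obtain the lower bound \eqref{eq:per_iterate_bound_strong} on $\nt{\gb_k}^2$. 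This bias--variance gap between $\EX\{\nt{\cdot}^2\}$ and $\nt{\EX\{\cdot\}}^2$ is the actual source of the $d\sigma_\infty^2$ floor. Second, the residual $\EX\{Q^\star(\wb_k)-Q^\star\}$ that you propose to bound by the inter-client variance in fact converges to \emph{zero} by continuity of $Q$ and $\wb_k\to\wb^\star$; it contributes nothing asymptotically. As written, your derivation would either produce an unjustifiably strong bound without the $d\sigma_\infty^2$ term (because the PL step applied to $\gb_k$ is not valid) or be unable to rigorously extract $d\sigma_\infty^2$ from $Q_k^\star-Q^\star$. The fix is mechanical: insert the variance decomposition of Lemma~\ref{lem:global_var} at the PL step, after which the rest of your argument goes through and coincides with the paper's.
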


\begin{proof}
To prove the above theorem, since $f_i(\thetab)$ is a $\mu$ strongly convex function, we need to show that the $Q(\thetab,\wb)$ is also $\mu$- strongly convex with respect to $\thetab$, which is formulated in the following lemma.

\begin{lemma}
\label{lem:lyapunov_strong_convexity_concise}
Let the function be $Q(\thetab, \wb) = \sum_{i=1}^{n} w_i f_i(\thetab) + \delta_{\Delta^+_{t,\ell_0}}(\wb)$. If each $f_i(\thetab)$ is $\mu$-strongly convex with respect to $\thetab$ and $\wb$ is on the unit sparse capped simplex, then $Q(\thetab, \wb)$ is also $\mu$-strongly convex with respect to $\thetab$.
\end{lemma}

\begin{proof}
The proof analyzes convexity with respect to $\thetab$ for a fixed, valid $\wb$. The term $\delta_{\Delta^+_{t,\ell_0}}(\wb)$ is constant with respect to $\thetab$, and adding a constant does not affect the convexity or the strong convexity parameter of a function. Therefore, $Q(\thetab, \wb)$ is $\mu$-strongly convex with respect to $\thetab$ if the weighted sum $H(\thetab, \wb) = \sum_{i=1}^{n} w_i f_i(\thetab)$ is $\mu$-strongly convex.

We use the definition that a function $h(\thetab)$ is $\mu$-strongly convex if $h(\thetab) - \frac{\mu}{2}\|\thetab\|^2$ is convex. We analyze this for $H(\thetab, \wb)$:
\begin{align*}
    H(\thetab, \wb) - \frac{\mu}{2}\|\thetab\|^2 &= \left( \sum_{i=1}^{n} w_i f_i(\thetab) \right) - \left( \sum_{i=1}^{n} w_i \right) \frac{\mu}{2}\|\thetab\|^2 \\
    &= \sum_{i=1}^{n} w_i \left( f_i(\thetab) - \frac{\mu}{2}\|\thetab\|^2 \right).
\end{align*}
By assumption, each function $f_i(\thetab)$ is $\mu$-strongly convex, so each term $(f_i(\thetab) - \frac{\mu}{2}\|\thetab\|^2)$ is convex. Since the weights $w_i \ge 0$, the expression above is a non-negative weighted sum of convex functions, which is itself a convex function.

Thus, $H(\thetab, \wb)$ is $\mu$-strongly convex. As established, this implies that $Q(\thetab, \wb)$ is also $\mu$-strongly convex with respect to $\thetab$.
\end{proof}
To continue the proof, we define:
\begin{align}
    F(\thetab_k,\wb_k) = \sum_{i =1}^n w_{k,i} f_i(\thetab_k)+\delta_{\Delta^+_{t,\ell_0}}(\wb_k),\quad
    \nabla_{\theta} F(\thetab_k,\wb_k) = \sum_{i =1}^n w_{k,i} \nabla_{\theta} f_i(\thetab_k)
    \label{eq:trueloss}
\end{align}
in which $f_i(\thetab_k)$ and $\nabla_{\theta} f_i(\thetab_k)$ for all $1\leq i\leq n$ denote the honest losses and gradients. 
Additionally, according to Lemma \ref{lem:lyapunov_strong_convexity_concise}, $F(\thetab,\wb)$ is $\mu$-strongly convex with respect to $\thetab$.

Next, we need to determine the variance of the global true gradients, which is formalized in the following lemma. Furthermore, as mentioned, we exclude $b_f$ clients, so in our algorithm we have $t = 1/s$ and $s= n-b_f$.

\begin{lemma}
\label{lem:global_var}
Under Assumption \ref{ass:formal_setup_unified}, the variance of the global true gradient is bounded:
\[ \sigma_{g,k}^2 := var(\nabla_{\theta} F(\thetab_k,\wb_k)) = \EX\left[\nt{\nabla_{\theta} F(\thetab_k,\wb_k) - \gb_k}^2\right] \le d\sigma_k^2. \]
\end{lemma}
\begin{proof}
We start from the definition of $\sigma_{g,k}^2$ and substitute the definition of $F(\thetab_k,\wb_k)$:
\begin{align*}
    \sigma_g^2 &= \EX\left[\nt{\nabla \left(\sum_{i=1}^n w_{k,i} f_i(\thetab_k)\right) - \EX\left[\nabla \left(\sum_{i=1}^n w_{k,i} f_i(\thetab_k)\right)\right]}^2\right] \\
    &= \EX\left[\nt{\sum_{i=1}^n w_{k,i} \nabla f_i(\thetab_k) - \sum_{i=1}^n w_{k,i} \EX[\nabla f_i(\thetab_k)]}^2\right] && \text{by linearity of } \nabla, \EX \\
    &=  \EX\left[\nt{\sum_{i =1}^n w_{k,i}\left(\nabla f_i(\thetab_k) - \EX[\nabla f_i(\thetab_k)]\right)}^2\right].
\end{align*}
We use the inequality $\nt{\sum\limits_{i=1}^N \mathbf{x}_i}^2 \le N \sum\limits_{i=1}^N \nt{\mathbf{x}_i}^2$ and the property capped in $\Delta^+_{t,\ell_0}$, i.e. $w_{k,i}^2\leq t^2$. Here, $N\leq s$.
\begin{align*}
    \sigma_{g,k}^2 &\le s t^2 \EX\left[ \sum_{i \in \supp(\wb_k)} \nt{\nabla f_i(\thetab_k) - \EX[\nabla f_i(\thetab_k)]}^2\right] \\
    &= s t^2 \sum_{i \in \supp(\wb_k)} \EX\left[\nt{\nabla f_i(\thetab_k) - \EX[\nabla f_i(\thetab_k)]}^2\right] && \text{by linearity of } \EX \\
    &= s t^2 \sum_{i \in \supp(\wb_k)} var(\nabla f_i(\thetab_k)).
\end{align*}
Now, we apply the given bound $var(\nabla f_i(\thetab_k)) \le d\sigma_k^2$:
\begin{align*}
    \sigma_{g,k}^2 &\le s t^2 \sum_{i \in \supp(\wb_k)} (d\sigma_k^2) \\
    &\leq (s t)^2 d\sigma_k^2.
\end{align*}
Replacing $s = n-b_f$ and $t = 1/s$ gives us the proof of Lemma \ref{lem:global_var}.
\end{proof}

Since $F(\thetab,\wb)$ in \eqref{eq:trueloss} is $\mu$-strongly function with respect to $\thetab$,
applying this to generated sequence $\{\thetab_k,\wb_k\}$ by Algorithm \ref{alg:fedlaw} yields to
\[
    2\mu(F(\thetab_k,\wb_k) - F(\thetab^\star,\wb_k)) \le \nt{\nabla_{\theta} F(\thetab_k,\wb_k)}^2.
\]
Since the adversary just replaces the honest gradients with Byzantine gradients, so $F(\thetab_k,\wb_k) = Q(\thetab_k,\wb_k)$. Using this point and taking the expectation from both side of the above inequality yields to
\begin{equation}
    2\mu \cdot \EX\{Q(\thetab_k,\wb_k) - Q(\thetab^\star,\wb_k)\}\le \EX\left[\nt{\nabla_{\theta} F(\thetab_k,\wb_k)}^2\right].
    \label{eq:strong_convex_main}
\end{equation}

 The term $\EX[\nt{\nabla_{\theta} F(\thetab_k,\wb_k)}^2]$ can be written
\begin{align}
    \EX\left[\nt{\nabla_{\theta} F(\thetab_k,\wb_k)}^2\right] &= \nt{\EX[\nabla_{\theta} F(\thetab_k,\wb_k)]}^2 + \EX\left[\nt{\nabla_{\theta} F(\thetab_k,\wb_k) - \EX[\nabla_{\theta} F(\thetab_k,\wb_k)]}^2\right] \nonumber \\
    &= \nt{\gb_k}^2 + var(\nabla_{\theta} F(\thetab_k,\wb_k)) \nonumber \\
    &= \nt{\gb_k}^2 + \sigma_{g,k}^2. \label{eq:variance_decomp}
\end{align}

Substituting the decomposition from \eqref{eq:variance_decomp} back into our main inequality \eqref{eq:strong_convex_main}:
\[
    2\mu \EX\{Q(\thetab_k,\wb_k) - Q(\thetab^\star,\wb_k)\} \le \nt{\gb_k}^2 + \sigma_{g,k}^2.
\]
Using Lemma \ref{lem:global_var} to bound $\sigma_g^2 \le  d\sigma_k^2$, we get a lower bound on $\nt{\gb_k}$:
\begin{equation}
    2\mu\EX\{Q(\thetab_k,\wb_k) - Q(\thetab^\star,\wb_k)\}- d\sigma^2_k \le \nt{\gb_k}^2.
    \label{eq:per_iterate_bound_strong}
\end{equation}
According to \eqref{eq:firstdescent}, by neglecting the negative term $- \sum_{k=0}^{T-1} C_{\beta_{k+1}}\EX\{\nt{\wb_{k+1} - \wb_k}^2\}$ and replacing $\nt{\gb_k}^2$ by the lower bound \eqref{eq:per_iterate_bound_strong}, we have
\begin{equation}
    E_{k+1} \le (1-a) E_k + a  d\sigma_k^2 + a \EX\{Q^\star(\wb_k) - Q^\star\} +C_{2,k}\alpha.
    \label{eq:recurrencefin}
\end{equation}
where $E_{k+1} = \EX\{Q(\thetab_{k+1},\wb_{k+1})-Q^\star\}$ and $E_{k} = \EX\{Q(\thetab_{k},\wb_{k})-Q^\star\}$ where $Q^\star = Q(\thetab^\star,\wb^\star)$ and $\wb^\star$ be a limit point of the sequence, i.e. $\lim_{k\rightarrow \infty} \wb_k = \wb^*$
\begin{align*}
    &\thetab^\star = \argmin_{\thetab} F(\thetab, \wb^*) && \text{by using \eqref{eq:trueloss}}
\end{align*}

This allows us to analyze the asymptotic behavior of the recurrence in \eqref{eq:recurrencefin}. The inequality is a standard form of a Robbins-Siegmund-type lemma, $E_{k+1} \le (1-a)E_k + b_k$, where:
\begin{itemize}
    \item $a = 2\mu C_{1} \alpha$.
    \item $b_k = a \left( d\sigma_k^2 + \EX\{Q^\star(\wb_k) - Q^\star\} \right) + C_{2,k}\alpha$.
\end{itemize}
The convergence of such a sequence is determined by the asymptotic behavior of the error term $b_k$ relative to the descent term $a$. Specifically, we analyze the limit of their ratio:
\[
\lim_{k \to \infty} \frac{b_k}{a} = \lim_{k \to \infty} \left( d\sigma_k^2 + \EX\{Q^\star(\wb_k) - Q^\star\} + \frac{C_{2,k}}{2\mu C_{1}} \right).
\]
From Lemma \ref{lem:weight_updates_vanish_summable}, we have established $\wb_{k+1}\rightarrow\wb_k$ with limit point $\wb^\star$ and using the continuity of $Q$, we have $\lim_{k \to \infty} \EX\{Q^\star(\wb_k) - Q^\star\} = 0$. Consequently,
\[
\limsup_{k \to \infty} \frac{b_k}{a} = d\sigma_\infty^2 + \frac{C_{2,\infty}}{2\mu C_1}.
\]
In this case, the optimization error converges to a neighborhood of the optimum, with the size of the error ball given by this limit:
\[ \limsup_{k \to \infty} E_k \le d\sigma_\infty^2 + \frac{C_{2,\infty}}{2\mu C_1} = \Oc\left(\zeta_{\infty}^2 + \sigma_{F,\infty}^2+\sigma^2_{\infty}\right). \]
This completes the proof of Lemma \ref{thm:strong_convex_result}.
\end{proof}

\subsection{{Determining Lipschitz Constant $L_w$}}\label{sec:computeLw}
In the following lemma, we derive a bound for the Lipschitz constant $L_w$, assuming $\nabla_{\theta} f_i$ is Lipschitz continuous and $\nt{\nabla_{\theta} f_i(\thetab)} \leq C$ for all $\thetab$.
\begin{lemma}
Let \( f_i : \mathbb{R}^d \to \mathbb{R} \) be continuously differentiable with \(\nabla_{\theta} f_i\) being \(L_i\)-Lipschitz continuous and bounded, \(\|\nabla_{\theta} f_i(\thetab)\| \leq C\) for all \(\thetab \in \mathbb{R}^d\) and \(i = 1, \dots, n\). Define:
\begin{equation}
h(\mathbf{w}) = \sum_{i=1}^n w_i f_i\left( \thetab - \alpha \sum_{j=1}^n w_j \nabla_{\theta} f_j(\thetab) \right),
\label{eq:hgfhk}
\end{equation}
for \(\mathbf{w} \in \Delta^+_{t,\ell_0} = \{ \mathbf{w} \in \mathbb{R}^n : \sum_i w_i = 1, w_i \geq  0, w_i \leq t, \nz{\wb}\leq s \}\), \(\alpha > 0\), and fixed \(\thetab\). Then, \(\nabla_w h(\mathbf{w})\) is \(L_w\)-Lipschitz continuous with:
\begin{equation}
L_w \leq \alpha C^2 (n^{3/2} + n + \alpha n L_{\max} + \alpha n^2 \frac{L_{\max}\varrho}{2}),
\end{equation}
where \(L_{\max} = \max\limits_{i=1,\dots,n} L_i\), and $\varrho = \sqrt{2 \left(  k t^2 + r^2 \right)}$  as defined in Equation~\eqref{eq:tightuppbound} of Lemma~\ref{lem:tightupp}.
\label{lem:smoothness}
\end{lemma}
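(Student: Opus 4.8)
The plan is to obtain the bound on $L_w$ by differentiating $h$ in closed form and then estimating $\nt{\nabla_w h(\wb_1)-\nabla_w h(\wb_2)}$ directly, term by term. I deliberately avoid bounding a Hessian and integrating it along the segment $[\wb_2,\wb_1]$: the feasible set $\Delta^+_{t,\ell_0}$ is non-convex, so such a segment need not remain feasible, whereas a direct estimate of the gradient difference is valid for any $\wb_1,\wb_2\in\Rbb^n$ and uses feasibility only through norm bounds. First I would fix $\thetab$ and write $\thetab(\wb)=\thetab-\alpha\Gb\wb$ with the \emph{constant} matrix $\Gb=[\nabla_\theta f_1(\thetab),\dots,\nabla_\theta f_n(\thetab)]$. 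The uniform bound $\nt{\nabla_\theta f_i(\cdot)}\le C$ gives $\|\Gb\|_{\mathrm{op}}\le\nf{\Gb}\le\sqrt n\,C$, and the identical bound holds for the perturbed gradient matrix $\tilde{\Gb}(\wb):=[\nabla_\theta f_1(\thetab(\wb)),\dots,\nabla_\theta f_n(\thetab(\wb))]$.

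A product-rule/chain-rule computation, using $\partial_{w_k}\thetab(\wb)=-\alpha\nabla_\theta f_k(\thetab)$, then yields the closed form
\[
\nabla_w h(\wb)=\fb(\thetab(\wb))-\alpha\,\Gb^{\top}\tilde{\Gb}(\wb)\,\wb ,
\]
where $\fb(\thetab(\wb))=[f_1(\thetab(\wb)),\dots,f_n(\thetab(\wb))]^{\top}$; this is exactly the expression behind $\hb_k$ in Algorithm~\ref{alg:fedlaw}. For $\wb_1,\wb_2\in\Delta^+_{t,\ell_0}$ I set $\thetab_\ell:=\thetab(\wb_\ell)$ and use the key identity $\thetab_1-\thetab_2=-\alpha\Gb(\wb_1-\wb_2)$, which gives $\nt{\thetab_1-\thetab_2}\le\alpha\sqrt n\,C\,\nt{\wb_1-\wb_2}$. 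Splitting
\[
\nabla_w h(\wb_1)-\nabla_w h(\wb_2)=\big[\fb(\thetab_1)-\fb(\thetab_2)\big]-\alpha\big[\Gb^{\top}\tilde{\Gb}(\wb_1)\wb_1-\Gb^{\top}\tilde{\Gb}(\wb_2)\wb_2\big],
\]
I bound the two brackets separately.

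For the loss-value bracket, each $f_i$ is $C$-Lipschitz (bounded gradient), so every coordinate of $\fb(\thetab_1)-\fb(\thetab_2)$ is at most $C\,\nt{\thetab_1-\thetab_2}$; passing from coordinates to the Euclidean norm and inserting the estimate for $\nt{\thetab_1-\thetab_2}$ produces the $n^{3/2}$ contribution. For the bilinear bracket I telescope,
\[
\tilde{\Gb}(\wb_1)\wb_1-\tilde{\Gb}(\wb_2)\wb_2=\tilde{\Gb}(\wb_1)(\wb_1-\wb_2)+\big[\tilde{\Gb}(\wb_1)-\tilde{\Gb}(\wb_2)\big]\wb_2 .
\]
The first piece, via $\|\Gb\|_{\mathrm{op}},\|\tilde{\Gb}\|_{\mathrm{op}}\le\sqrt n\,C$, contributes the $n$ term. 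The second piece is where the $L_{\max}$-Lipschitzness of the gradients enters: $\|\tilde{\Gb}(\wb_1)-\tilde{\Gb}(\wb_2)\|_{\mathrm{op}}\le\sqrt n\,L_{\max}\,\nt{\thetab_1-\thetab_2}\le\alpha n L_{\max}C\,\nt{\wb_1-\wb_2}$, and I control $\nt{\wb_2}$ uniformly over the feasible set by $\varrho$ from Lemma~\ref{lem:tightupp} and Eq.~\eqref{eq:tightuppbound}. This yields the two $L_{\max}$ contributions (one linear in $\nt{\wb_1-\wb_2}$ and one scaled by $\varrho$). Collecting the four contributions and factoring out $\alpha C^2$ gives the claimed bound; the specific powers of $n$ and the factor $\varrho/2$ simply track the crude $\ell_2\le\ell_1$ and Frobenius-for-operator-norm relaxations and the feasible-set norm bound.

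The main obstacle is the term $[\tilde{\Gb}(\wb_1)-\tilde{\Gb}(\wb_2)]\wb_2$: it is genuinely second order, since perturbing $\wb$ moves the evaluation point $\thetab(\wb)$, which then propagates through the gradient map. Controlling it requires both the Lipschitz-gradient assumption (to bound the change of $\tilde{\Gb}$) and a uniform bound on $\nt{\wb_2}$ over the \emph{non-convex} feasible region — precisely the role played by $\varrho$ in Lemma~\ref{lem:tightupp}, and the reason that lemma is invoked here. Everything else is routine once the closed form of $\nabla_w h$ and the identity $\thetab_1-\thetab_2=-\alpha\Gb(\wb_1-\wb_2)$ are in hand, completing the proof of Lemma~\ref{lem:smoothness}.
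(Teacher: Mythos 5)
Your proposal is correct and follows the same skeleton as the paper's proof: the same closed form $\nabla_w h(\wb)=\fb(\thetab(\wb))-\alpha\,\Gb^{\top}\tilde{\Gb}(\wb)\wb$, the same split into a loss-value bracket and a bilinear bracket, and the same telescoping of the bilinear term. The differences are at the level of individual estimates, and they change only the bookkeeping of the final constant. For the loss bracket the paper applies the descent lemma to each $f_i$, picking up a quadratic remainder $\tfrac{L_i}{2}\nt{\zb_2-\zb_1}^2$; after absorbing one factor of $\nt{\wb_1-\wb_2}$ into $\varrho$, that remainder is precisely the source of the term $\alpha n^2 L_{\max}\varrho/2$. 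You instead use the $C$-Lipschitzness of $f_i$, which yields only the $n^{3/2}$ contribution and no remainder. Conversely, for the piece $[\tilde{\Gb}(\wb_1)-\tilde{\Gb}(\wb_2)]\wb_2$ the paper exploits $\sum_i w_{i,2}=1$ to obtain the $\alpha n L_{\max}$ term with no $\varrho$ at all, whereas your operator-norm bound times $\nt{\wb_2}\le \varrho/\sqrt{2}$ gives a single term of order $\alpha^2 n^{3/2}L_{\max}C^2\varrho$; so your attribution of ``the two $L_{\max}$ contributions'' to that one piece does not match the paper's accounting, in which the $\varrho$-scaled term lives in the loss bracket. None of this is a gap: since $n^{3/2}\varrho/\sqrt{2}\le n^{2}\varrho/2$ for $n\ge 2$ (which Lemma~\ref{lem:tightupp} assumes), your total is dominated by the stated $L_w$, so the lemma still follows, and your intermediate bound is in fact slightly tighter. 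Your observation that one should estimate the gradient difference directly rather than integrate a Hessian along a possibly infeasible segment of the non-convex set $\Delta^+_{t,\ell_0}$ is consistent with what the paper does.
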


\begin{proof}
Define \(\zb(\mathbf{w}) = \thetab - \alpha \Gb \mathbf{w}\), where \(\Gb = [\nabla_{\theta} f_1(\thetab), \dots, \nabla_{\theta} f_n(\thetab)] \in \mathbb{R}^{d \times n}\). Then:
\[
h(\mathbf{w}) = \sum_{i=1}^n w_i f_i(\zb(\mathbf{w})).
\]
The gradient is:
\begin{equation*}
\nabla_w h(\mathbf{w}) = \mathbf{f}(\zb) - \alpha \Gb^T \tilde{\Gb}(\mathbf{w}) \mathbf{w},
\end{equation*}
where \(\mathbf{f}(\zb) = [f_1(\zb), \dots, f_n(\zb)]^T\), and \(\tilde{\Gb}(\mathbf{w}) = [\nabla_{\theta} f_1(\zb), \dots, \nabla_{\theta} f_n(\zb)] \in \mathbb{R}^{d \times n}\). We need to show that:
\begin{equation*}
\nt{\nabla_w h(\mathbf{w}_2) - \nabla_w h(\mathbf{w}_1)} \leq L_w \nt{\wb_2-\wb_1}.
\end{equation*}
We compute
\begin{align*}
\nabla_w h(\mathbf{w}_2) - \nabla_w h(\mathbf{w}_1) = \mathbf{f}(\zb_2) - \mathbf{f}(\zb_1) - \alpha \Gb^T (\tilde{\Gb}(\mathbf{w}_2) \mathbf{w}_2 - \tilde{\Gb}(\mathbf{w}_1) \mathbf{w}_1),
\end{align*}
where \(\zb_j = \thetab - \alpha \Gb \mathbf{w}_j\), \(j=1,2\). Thus:
\begin{align}
\nt{\nabla_w h(\mathbf{w}_2) - \nabla_w h(\mathbf{w}_1)} \leq \nt{\fb(\zb_2) - \fb(\zb_1)} + \alpha \nt{\Gb^T (\tilde{\Gb}(\wb_2) \wb_2 - \tilde{\Gb}(\wb_1) \wb_1)}.
\label{eq:twoupp}
\end{align}

\textbf{First Term} ($\nt{\fb(\zb_2) - \fb(\zb_1)} $): Since \(f_i\) is \(L_i\)-smooth, the descent lemma gives:
\begin{align}
f_i(\zb_2) - f_i(\zb_1) \leq \nabla_{\theta} f_i(\zb_1)^T (\zb_2 - \zb_1) + \frac{L_i}{2} \|\zb_2 - \zb_1\|^2.
\label{eq:z2z1}
\end{align}
Since \(\zb_2 - \zb_1 = \alpha \Gb (\mathbf{w}_1 - \mathbf{w}_2)\), and \(\nf{\Gb} \leq \sqrt{n} C\) (as \(\|\nabla_{\theta} f_i(\thetab)\| \leq C\)):
\begin{align*}
\|\zb_2 - \zb_1\| \leq \alpha \sqrt{n} C \nt{\wb_2-\wb_1}.
\end{align*}
Similarly, we can write 
\begin{align}
f_i(\zb_1) - f_i(\zb_2) \leq \nabla_{\theta} f_i(\zb_2)^T (\zb_1 - \zb_2) + \frac{L_i}{2} \nt{\zb_2-\zb_1}^2.
\label{eq:z1z2}
\end{align}
Since \(\|\nabla_{\theta} f_i(\zb_1)\|_2 \leq C\), using \eqref{eq:z2z1}, \eqref{eq:z1z2}, and the result \eqref{eq:tightuppbound} in Lemma~\ref{lem:tightupp} $\nt{\wb_2-\wb_1}\leq \varrho$ where $\varrho = \sqrt{2 \left(  k t^2 + r^2 \right)}\leq \sqrt{2}$:
\begin{align*}
|f_i(\zb_2) - f_i(\zb_1)| \leq \alpha (\sqrt{n} + \frac{L_i\varrho}{2} n\alpha) C^2 \nt{\wb_2-\wb_1}.
\end{align*}
 Thus:
\begin{align}
\nonumber&\|\fb(\zb_2) - \mathbf{f}(\zb_1)\|_2 \leq \sum_{i=1}^n |f_i(\zb_2) - f_i(\zb_1)| \leq \alpha \sum_{i=1}^n (\sqrt{n} + \frac{L_i\varrho}{2} n\alpha) C^2 \nt{\wb_2-\wb_1} \leq\\& \alpha (n^{3/2} + n^2\alpha\frac{L_{\max}\varrho}{2}) C^2 \nt{\wb_2-\wb_1}.
\label{eq:firstterm}
\end{align}

\textbf{Second Term} ($\nt{\Gb^T (\tilde{\Gb}(\wb_2) \wb_2 - \tilde{\Gb}(\wb_1) \wb_1)}$):
\begin{align*}
\tilde{\Gb}(\mathbf{w}_2) \mathbf{w}_2 - \tilde{\Gb}(\mathbf{w}_1) \mathbf{w}_1 = \sum_{i=1}^n w_{i,2} \left( \nabla_{\theta} f_i(\zb_2) - \nabla_{\theta} f_i(\zb_1) \right) + \sum_{i=1}^n (w_{i,2} - w_{i,1}) \nabla_{\theta} f_i(\zb_1).
\end{align*}
Using the Lipschitz property of $\nabla_{\theta} f_i$
\begin{align*}
\|\nabla_{\theta} f_i(\zb_2) - \nabla_{\theta} f_i(\zb_1)\|_2 \leq L_i \nt{\zb_2 - \zb_1} \leq L_i \alpha \sqrt{n} C \nt{\wb_2-\wb_1}.
\end{align*}
Utilizing the property $\sum_{i=1}^n w_{i,2}L_i\leq L_{\max}\sum_{i=1}^n w_{i,2} = L_{\max}$
\begin{align}
\left\| \sum_{i=1}^n w_{i,2} \left( \nabla_{\theta} f_i(\zb_2) - \nabla_{\theta} f_i(\zb_1) \right) \right\|_2 \leq \sum_{i=1}^n w_{i,2} L_i \alpha \sqrt{n} C \nt{\wb_2-\wb_1} \leq \alpha L_{\max} \sqrt{n} C \nt{\wb_2-\wb_1}.
\label{eq:qwer1}
\end{align}
Also, we have
\begin{align}
\left\| \sum_{i=1}^n (w_{i,2} - w_{i,1}) \nabla_{\theta} f_i(\zb_1) \right\|_2 \leq \sum_{i=1}^n |w_{i,2} - w_{i,1}| C \leq C \sqrt{n} \nt{\wb_2-\wb_1}.
\label{eq:qwer2}
\end{align}
Combining \eqref{eq:qwer2} and \eqref{eq:qwer1} gives us
\begin{align*}
&\|\Gb^T (\tilde{\Gb}(\mathbf{w}_2) \mathbf{w}_2 - \tilde{\Gb}(\mathbf{w}_1) \mathbf{w}_1)\|_2 \leq \nf{\Gb} (\alpha L_{\max} \sqrt{n} C + C \sqrt{n}) \nt{\wb_2-\wb_1} \leq\\& (\alpha L_{\max} n C^2 + C^2 n) \nt{\wb_2-\wb_1}.
\end{align*}
So, we have
\begin{align}
\alpha \|\Gb^T (\tilde{\Gb}(\mathbf{w}_2) \mathbf{w}_2 - \tilde{\Gb}(\mathbf{w}_1) \mathbf{w}_1)\|_2 \leq \alpha (\alpha L_{\max} n C^2 + C^2 n) \nt{\wb_2-\wb_1}.
\label{eq:secondterm}
\end{align}

{Replacing \eqref{eq:firstterm} and \eqref{eq:secondterm} in \eqref{eq:twoupp}}:
\begin{align}
L_w \leq \alpha C^2 (n^{3/2} + n + \alpha n L_{\max} + \alpha n^2 \frac{L_{\max}\varrho}{2}).
\label{eq:findlw}
\end{align}
Thus, \(\nabla_w h(\mathbf{w})\) is \(L_w\)-Lipschitz continuous.
\end{proof}
The Lipschitz constant $L_w$ can be determined in two ways. A more precise approach is to compute $L_w$ directly by analyzing the gradient of $h(\wb)$ defined in \eqref{eq:hgfhk}, though this may be computationally complex. Alternatively, when direct computation is challenging, $L_w$ can be bounded as in \eqref{eq:findlw} under the assumptions of Lemma \ref{lem:smoothness}.

\section{Additional Experimental Setups and Results}\label{sec:addexp}
\subsection{Experimental Framework Overview}\label{subsec:expframework}

\textbf{Implementation Details}: We implemented our federated learning framework using Python 3.12.7 and PyTorch as the primary deep learning library. All experiments were conducted on GPU-accelerated hardware to ensure efficient training and evaluation. Specifically, we utilized CUDA for GPU support and ran our models on a variety of high-performance GPUs, including NVIDIA A100, NVIDIA A40, and Tesla V100-SXM2-32GB. These resources allowed for large-scale parallelization and significantly reduced computation time during training. For further implementation details, we have shared the complete source code.

\textbf{Data Heterogeneity and Malicious clients}: In this study, we utilize two datasets, MNIST and CIFAR10, both of which contain ten labels. As described in Subsection \ref{sub:expset}, the number of groups corresponds to the number of labels, which is ten. A training example with label $l$ is assigned to group $l$ with probability $q$, and to other groups with probability $\frac{1-q}{9}$. Each group consists of a subset of clients, and since this study involves 200 clients divided into ten groups, each group contains 20 clients.

For selecting malicious clients, we adopt a group-oriented approach. Specifically, we randomly select 
$ n_{gm} = \lceil \frac{\text{number of malicious}}{\text{number of clients per group}} \rceil $ groups. 
Malicious clients are first chosen from within a single group. If there are remaining malicious clients to be assigned, 
we select them from other groups, repeating this process until all malicious clients have been selected. 

For example, for a fraction of malicious clients $ 0.3 $ in this study, the number of malicious clients is 60. 
Since the number of clients per group is 20, the malicious clients are selected from three random groups. 

Now, a question may arise: why is this methodology employed instead of randomly selecting malicious clients? 
In fact, this methodology is a specific case of random selection and represents one of the most difficult and challenging cases. 
Assume the fraction of malicious clients is $ 0.3 $ and the selected random groups are $ i $, $ j $, and $ k $ 
($ i \neq j \neq k $). It is straightforward to show that the attack corrupts all datasets with labels $ i $, $ j $, 
and $ k $ with probability $ q + 2 \frac{1-q}{9} = \frac{2}{9} + \frac{7q}{9} $, which can be a high probability, 
especially for non-IID data with a high degree of non-IIDness, as considered in the numerical study ($q=\{0.6, 0.9\})$. 

In this case, if we cannot detect the malicious clients, they can significantly reduce the test accuracy. 
In contrast, if the malicious clients are selected randomly, they may be distributed among all groups 
(for example, uniform selection). In this scenario, the attack may have a minor effect because the benign 
data dominates the malicious data. 

Furthermore, we examine the effect of group-oriented malicious selection compared to random selection on FedAvg. 
We observed that the test accuracy of FedAvg drops significantly for group-oriented selection compared to random selection. 
In summary, this explanation demonstrates that group-oriented selection is one of the most difficult and 
challenging scenarios for detecting malicious clients. This allows us to compare the proposed method to 
state-of-the-art Byzantine-robust FL approaches in a highly challenging setting.

\textbf{Attacks}:
\begin{itemize}
    \item \textbf{Flipping label}: Malicious clients train the model using a poisoned dataset, where the label of each class $l$ is changed to $L-l-1$, where $L$ represents the total number of labels (in our study $L$ is 10).
    \item \textbf{Backdoor attack}: Malicious clients train the model using a poisoned dataset, where a black square of size $ 8 \times 8 $ pixels is added to the center of the image, and its label is randomly changed to a label between 0 and $ L-1 $.
    \item \textbf{Inverse gradient}: Malicious clients compute gradients based on their local datasets to minimize the loss function and then flip the sign of their gradients.
    \item \textbf{Global parameter attack}: At each round, the server sends the global parameters to the clients. Malicious clients add Gaussian noise, $\Nc(\nu_1\mu, \nu_2\sigma^2)$, into the global parameters $\thetab$, where $\mu$ represents the mean of the global parameters, $\sigma$ denotes their standard deviation, $\nu_1$ and $\nu_2 > 0$ are arbitrary real-valued constants. In this study, we set $\nu_1 = -5$ and $\nu_2 = 1.5$. 
    \item \textbf{Double attack}: In this scenario, during a communication round, an attack targets a fraction of clients. In a subsequent communication round, a different attack affects a separate set of clients that were not impacted by the initial attack. Specifically, this study assumes that the first attack, an inverse gradient, takes place during the second communication round, targeting 50\% of the malicious clients. Later, in the fifth communication round, a global random parameter attack is executed, affecting the remaining 50\% of malicious clients who were not involved in the first attack. For instance, if the proportion of malicious clients is 40\%, then 20\% of the clients are impacted by the inverse gradient attack, while the random parameter attack targets the other 20\%.
    \item \textbf{LIE (Little Is Enough) attack}: Malicious clients first compute the coordinate-wise mean $\mathbf{\mu}_{k}$ and 
standard deviation $\mathbf{\sigma}_{k}$ of the client updates in round $k$. 
They then submit forged updates of the form
\[
\mathbf{b}_{k,j} = \mathbf{\mu}_{k} + z\,\mathbf{\sigma}_{k}, \quad \forall j \in \mathcal{H}^\complement,
\]
where $z > 0$ is a small constant ensuring that the forged updates remain within 
a plausible range. In our experiments, we use the original stealth bound for \(z\) as proposed in \cite{baruch2019little}.

\end{itemize}

\subsection{Hyperparameter Settings}\label{subsec:hyperparams}
We detail the hyperparameter configurations used for all state-of-the-art methods and our proposed algorithm. 
For Krum, the retained clients were set to \((1 - \text{fraction of malicious}) \times \text{total clients} - 2\), and for Trimmed Mean, the trimming fraction matched the fraction of malicious clients. 
All methods with tunable hyperparameters (e.g., Bulyan, FedLAW, CClip, Huber) were tuned via grid search using a dedicated validation split (80\,\% for training, 10\,\% for validation, and 10\,\% for testing). 
The chosen values corresponded to the settings that achieved the highest validation accuracy after 200 communication rounds on MNIST and 400 rounds on CIFAR-10. 
All methods shared a common training configuration with the learning rate \(\alpha = 0.01\), a batch size of 64 for the MNIST dataset and 16 for CIFAR-10, and 3 local epochs. 
The total number of communication rounds used to update the global model parameters was 200 for MNIST and 400 for CIFAR-10. 
An overview of the selected hyperparameters for each method is summarized in Table~\ref{tab:best_hparams}.

\begin{table}[h]
  \centering
  \footnotesize
  \caption{Selected hyper-parameters for all defences.}
  \label{tab:best_hparams}
  \begin{tabular}{lcc}
    \toprule
    \textbf{Method} & \textbf{Hyper-parameter(s)} & \textbf{Chosen value} \\ 
    \midrule
    Krum           & Retained clients          & \((1 - \text{fraction of malicious}) \times n - 2\) \\[0.6em]
    Trimmed Mean   & Trimming fraction         & fraction of malicious \\[0.6em]
    Bulyan         & Candidate pool size       & grid: 20/40/50 \\
                   & Inner aggregation size    & \((1 - \text{fraction of malicious}) \times n - 2\) \\[0.6em]
    FedLAW         & $\beta$                   & grid: $1\times10^{-2}-1\times10^{-4}$ \\
                   & Sparsity budget $s$       & \((1 - \text{fraction of malicious}) \times n\) \\
                   & Client weight upper bound $t$ & \(1/(s - 10)\) \\ [0.6em]
    CClip          & Clipping radius $\tau$ & grid: 0.1, 10 \\
                   & Fixed-point iterations & 1 \\[0.6em]
    CClip + Bucketing & Clipping radius $\tau$ & grid: 0.1, 10 \\
                      & Fixed-point iterations & 1 \\
                      & Bucketing factor & 2 \\[0.6em]
    RFA (Geometric Median) & Smoothing $\nu$ & $10^{-6}$ \\
                   & Iterations $R$ & 3 \\[0.6em]
    RFA + Bucketing & Same as RFA & $\nu=10^{-6},\;R=3$ \\
                   & Bucketing factor & 2 \\[0.6em]
    Huber          & Threshold $\tau$ & grid: 0.12, 0.2 \\[0.6em]
    Coordinate-wise Median & – & – \\
    \bottomrule
  \end{tabular}
\end{table}

\paragraph{FedLAW: sensitivity to \boldmath{$\beta$}.}
We evaluated the impact of the FedLAW hyperparameter~\(\beta\) under the
\emph{flipping label} attack with a strongly non-IID split
(\(q=0.9\)) and 40\,\% malicious clients.
The test accuracy (mean $\pm$ standard deviation across runs) is reported in
Table~\ref{tab:beta_sweep_best_combined}; accuracy curves with error bars appear in
Figs.~\ref{fig:beta_sweep_mnist} and \ref{fig:beta_sweep_cifar}.

For MNIST, accuracy remains consistently high across a wide range of~\(\beta\),
staying above \(86\,\%\) for all \(\beta \ge 6.3\times10^{-4}\).
Performance peaks around \(87.7\,\%\) when \(\beta = 10^{-2}\), but values
between \(6.3\times10^{-4}\) and \(10^{-2}\) yield nearly indistinguishable results.

For CIFAR-10, accuracy generally stays in the 55–60\% band, with some variance
at a few values of~\(\beta\).
The best mean accuracy of \(59.7\,\%\) is achieved at \(\beta = 10^{-2}\),
and most settings in the \(10^{-3}\!-\!10^{-2}\) range perform comparably.

Overall, FedLAW is robust to the choice of \(\beta\); near-optimal performance
is obtained without extensive tuning, especially for \(\beta\) in the
\(10^{-3}\!-\!10^{-2}\) range.

\begin{table}[h]
  \centering
  \footnotesize
  \caption{FedLAW $\beta$-sweep with 40\,\% flipping label attack (\(q{=}0.9\)). 
           Reported values are mean test accuracy $\pm$ standard deviation across 5 runs.}
  \label{tab:beta_sweep_best_combined}
  \begin{tabular}{lcc}
    \toprule
    \textbf{$\beta$} & \textbf{MNIST Acc. (\%)} & \textbf{CIFAR-10 Acc. (\%)} \\
        \midrule
    $1\times10^{-2}$      & 87.67 $\pm$ 0.97 & 59.66 $\pm$ 0.77 \\
    $6.3\times10^{-3}$    & 86.80 $\pm$ 0.97 & 57.83 $\pm$ 1.80 \\
    $4.0\times10^{-3}$    & 86.54 $\pm$ 0.46 & 58.91 $\pm$ 0.01 \\
    $1.6\times10^{-3}$    & 86.43 $\pm$ 0.64 & 59.12 $\pm$ 0.42 \\
    $1.0\times10^{-3}$    & 86.27 $\pm$ 2.69 & 54.23 $\pm$ 7.52 \\
    $6.3\times10^{-4}$    & 86.97 $\pm$ 0.64 & 58.03 $\pm$ 0.24 \\
    $4.0\times10^{-4}$    & 83.59 $\pm$ 5.31 & 59.11 $\pm$ 2.08 \\
    $1.0\times10^{-4}$    & 84.12 $\pm$ 4.61 & 55.17 $\pm$ 0.16 \\
    $6.3\times10^{-5}$    & 82.49 $\pm$ 4.40 & 55.27 $\pm$ 0.39 \\
    $4.0\times10^{-5}$    & 85.19 $\pm$ 6.63 & 55.50 $\pm$ 1.43 \\
    $1.6\times10^{-5}$    & 83.67 $\pm$ 5.00 & 58.00 $\pm$ 1.80 \\
    $1.0\times10^{-5}$    & 84.50 $\pm$ 1.99 & 53.90 $\pm$ 4.44 \\
    \bottomrule
  \end{tabular}
\end{table}

\begin{figure}[t]
  \centering
  \begin{tikzpicture}
    \begin{axis}[
        width=0.8\linewidth,
        height=0.5\linewidth,
        xmode=log,
        xlabel={$\beta$},
        ylabel={Accuracy (\%)},
        ymin=0, ymax=100,
        grid=both, minor tick num=1
    ]
      \addplot+[
        mark=o,
        error bars/.cd,
        y dir=both, y explicit,
      ] coordinates {
        (1e-05,84.50)          +- (0,1.99)
        (1.5848932e-05,83.67)  +- (0,5.00)
        (3.9810717e-05,81.86)  +- (0,6.63)
        (6.3095734e-05,82.49)  +- (0,4.40)
        (1e-04,84.12)          +- (0,4.61)
        (0.00039810717,83.59)  +- (0,5.31)
        (0.00063095734,86.97)  +- (0,0.99)
        (0.001,86.27)          +- (0,2.69)
        (0.0015848932,86.43)   +- (0,0.64)
        (0.0039810717,86.80)   +- (0,0.97)
        (0.0063095734,86.97)   +- (0,0.64)
        (0.01,87.67)           +- (0,0.63)
      };
    \end{axis}
  \end{tikzpicture}
  \caption{FedLAW sensitivity to $\beta$ on MNIST
           (\(q{=}0.9\), 40\% flipping label attack). Error bars denote ±1 standard deviation across 5 runs.}
  \label{fig:beta_sweep_mnist}
\end{figure}

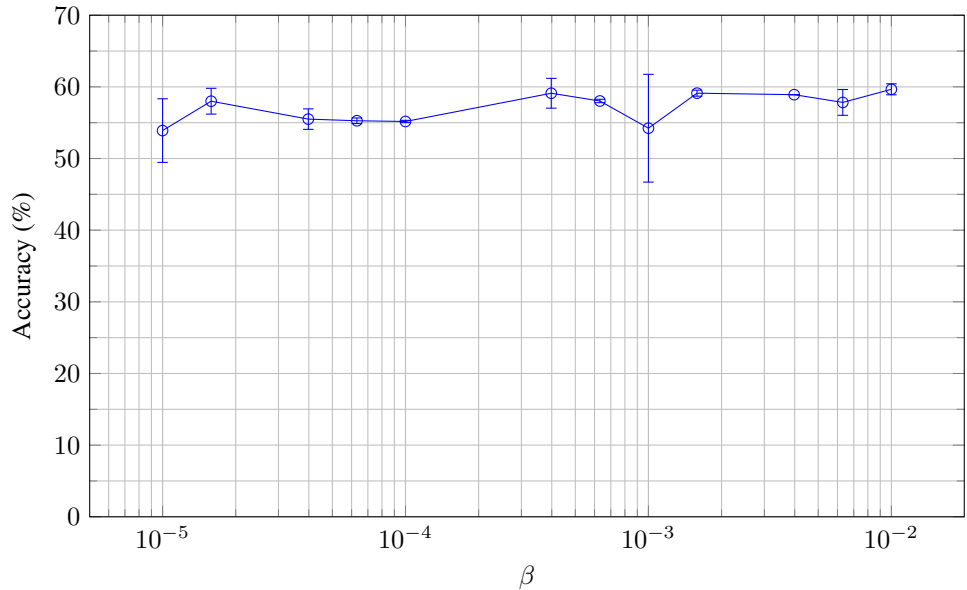
\begin{figure}[t]
  \centering
  \begin{tikzpicture}
    \begin{axis}[
        width=0.8\linewidth,
        height=0.5\linewidth,
        xmode=log,
        xlabel={$\beta$},
        ylabel={Accuracy (\%)},
        ymin=0, ymax=70,
        grid=both, minor tick num=1
    ]
      \addplot+[mark=o, error bars/.cd, y dir=both, y explicit] 
        coordinates {
          (1e-05,53.90) +- (0,4.44)
          (1.5848932e-05,58.00) +- (0,1.80)
          (3.9810717e-05,55.50) +- (0,1.43)
          (6.3095734e-05,55.27) +- (0,0.39)
          (1e-04,55.17) +- (0,0.16)
          (0.00039810717,59.11) +- (0,2.08)
          (0.00063095734,58.03) +- (0,0.24)
          (0.001,54.23) +- (0,7.52)
          (0.0015848932,59.12) +- (0,0.42)
          (0.0039810717,58.91) +- (0,0.01)
          (0.0063095734,57.83) +- (0,1.80)
          (0.01,59.66) +- (0,0.77)
        };
    \end{axis}
  \end{tikzpicture}
  \caption{FedLAW sensitivity to $\beta$ on CIFAR‑10
           (\(q{=}0.9\), 40\% flipping label attack). Error bars denote ±1 standard deviation across 5 runs.}
  \label{fig:beta_sweep_cifar}
\end{figure}

\paragraph{Bulyan: sensitivity to \emph{inner aggregation size} and \emph{candidate pool size}.}
In our main experiments, we fix the inner aggregation size to \((1 - \text{fraction of malicious}) \times \text{total clients} - 2\) and tune only the candidate pool size.  
However, in Fig.~\ref{fig:bulyan2d_mnist}, we sweep both hyperparameters to study their joint effect.  
Under the flipping label attack with 40\,\% malicious clients and \(q{=}0.9\), we observe that \emph{inner aggregation size} has limited impact, whereas performance is notably more sensitive to the \emph{candidate pool size}.  
For the MNIST dataset, the best or near-best results are typically achieved when the candidate pool size is set to 40 or 50.
It is important to note that these experiments violate Bulyan's theoretical guarantee, which requires the number of Byzantine clients \(f\) to satisfy \(f < (n-3)/4\), where \(n\) is the total number of clients~\citep{guerraoui2018hidden}.
Theoretical results guarantee that, under this assumption, the deviation of each aggregated coordinate from any honest one is bounded by \(O(\sigma/\sqrt{d})\), where \(\sigma\) is the variance among honest updates and \(d\) is the model dimension.
Since the assumption is violated in our setting, the theoretical bound does not formally apply.  
Nevertheless, as shown in Fig.~\ref{fig:bulyan2d_mnist}, Bulyan still achieves strong empirical robustness in highly adversarial conditions.

\pgfplotstableread[col sep=space]{tikz_data/mnist_bulyan_sensitivity.dat}\bulyan

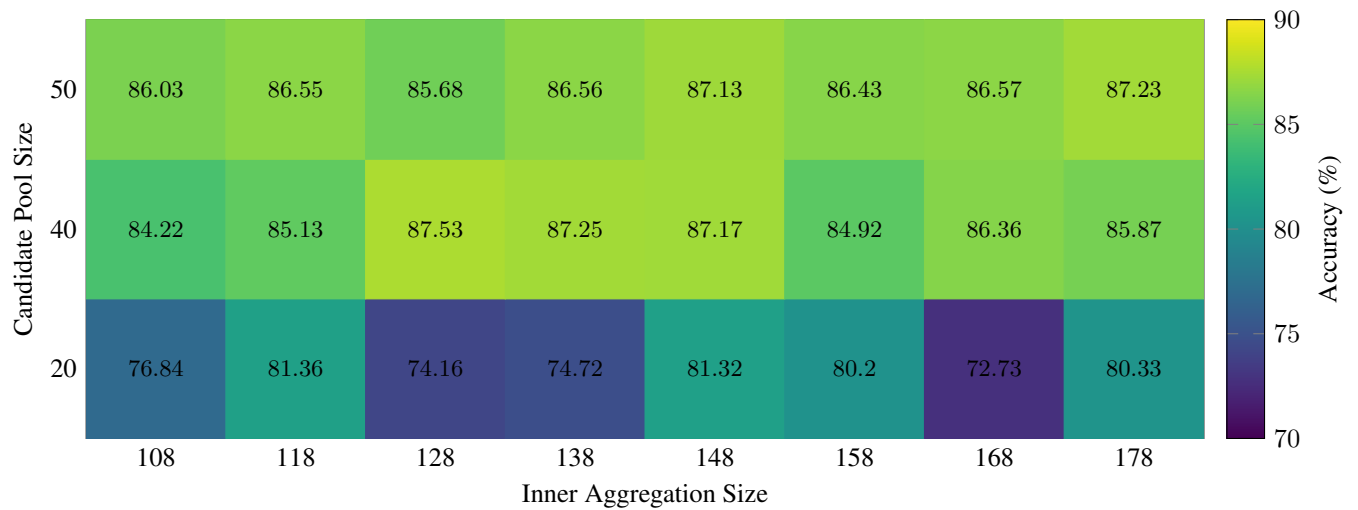
\begin{figure}[t]
  \centering
  \resizebox{\textwidth}{!}{%
  \begin{tikzpicture}
    \begin{axis}[
      width=0.9\linewidth,
      view={0}{90},
      xmin=1, xmax=8,
      ymin=1, ymax=3,
      xtick={1,2,3,4,5,6,7,8},
      xticklabels={108,118,128,138,148,158,168,178},
      ytick={1,2,3},
      yticklabels={20,40,50},
      xlabel={Inner Aggregation Size},
      ylabel={Candidate Pool Size},
      enlarge x limits={abs=0.5},
      enlarge y limits={abs=0.5},
      axis equal image,
      scale only axis,
      colormap/viridis,
      colorbar,
      colorbar style={ylabel={Accuracy (\%)}},
      point meta min=70,
      point meta max=90,
      nodes near coords,
      nodes near coords align=center,
      every node near coord/.append style={
        font=\footnotesize, text=black, anchor=center,
      },
    ]
      \addplot [
        matrix plot*,
        mesh/rows=3,
        mesh/cols=8,
        draw=none,
        point meta=explicit,
      ] table [
        x expr={
          ifthenelse(\thisrow{krum_factor}==108,1,
          ifthenelse(\thisrow{krum_factor}==118,2,
          ifthenelse(\thisrow{krum_factor}==128,3,
          ifthenelse(\thisrow{krum_factor}==138,4,
          ifthenelse(\thisrow{krum_factor}==148,5,
          ifthenelse(\thisrow{krum_factor}==158,6,
          ifthenelse(\thisrow{krum_factor}==168,7,
          8)))))))
        },
        y expr={
          ifthenelse(\thisrow{bulyan_factor}==20,1,
          ifthenelse(\thisrow{bulyan_factor}==40,2,3))
        },
        meta=accuracy,
      ] {\bulyan};
    \end{axis}
  \end{tikzpicture}}
  \caption{Bulyan sensitivity to inner aggregation size and candidate pool size on MNIST under the flipping label attack (\(q{=}0.9\), 40\,\% malicious clients).}
  \label{fig:bulyan2d_mnist}
\end{figure}

{ \paragraph{Choosing the Sparsity Level $s$.} In FedLAW, enforcing sparsity does not require exact knowledge of the number of Byzantine clients $b_f$. When an upper bound on $b_f$ is available, we set $s = n - b_f$ and choose the cap $t$ according to Lemma~\ref{lem:tightupp} with $t \ge 1/s$, selecting a specific value via validation. When $b_f$ is unknown, a practical strategy is to initialize $s = n$ and gradually reduce it across epochs, which implements a soft exclusion mechanism that avoids early over-pruning while still driving the weights of adversarial clients toward zero. In this case, $s$ can be treated as a tunable hyperparameter that controls the effective sparsity level and can be selected via cross-validation.}

\subsection{FedLAW: Client-weight dynamics under four adversarial settings.}\label{subsec:weighdyn}
To investigate how FedLAW responds to various adversarial behaviors during training, we analyze the evolution of client aggregation weights in four distinct attack settings: \emph{flipping label}, \emph{inverse gradient}, \emph{backdoor attack}, and a \emph{double attack} scenario. We conduct the experiments on the MNIST dataset with a non-iid partitioning factor of $q = 0.9$ across 10 clients, of which 40\% (i.e., 4 clients) are malicious. In the double attack setting, two adversaries apply inverse gradient manipulation while the other two send randomly perturbed global parameters. Each client trains a three-layer fully connected MLP using a batch size of 64 for three local epochs per round. The aggregation is performed using FedLAW with sparse weighting.

Figure~\ref{fig:attack-weights} displays the per-client weight trajectories over 100 global training epochs. Grey curves denote benign clients, red curves indicate malicious clients executing single-strategy attacks, and in the double attack panel, blue curves represent global parameter attackers while red curves indicate inverse gradient attackers. Across all settings, FedLAW effectively distinguishes between benign and malicious behavior. Benign clients consistently receive stable, high weights, while malicious clients are rapidly down-weighted, either immediately (in flip-label and inverse gradient attacks) or gradually (in the backdoor and double attack settings). The results demonstrate FedLAW's ability to suppress diverse attack strategies in real-time during training.

\subsection{Evaluation of Malicious Client Detection}\label{subsec:malindacc}

To comprehensively evaluate the performance of our method in detecting malicious clients, we report four standard classification metrics: \textit{Precision}, \textit{Recall}, \textit{F1 Score}, and \textit{Accuracy}. We define the counts of true positives (TP), false positives (FP), false negatives (FN), and true negatives (TN) using the following strategy, where $\Mc$ represents the set of malicious clients.

Formally, for each client \(i\), we define the following indicator variables:

\begin{align}
    \text{TP}_i = \left\{
    	\begin{array}{ll}
    	1, & \text{if}~i \in \Mc~\text{and}~w_i \leq \varepsilon, \\
    	0, & \text{otherwise},
    	\end{array}
    \right.
\end{align}

\begin{align}
    \text{FP}_i = \left\{
    	\begin{array}{ll}
    	1, & \text{if}~i \notin \Mc~\text{and}~w_i \leq \varepsilon, \\
    	0, & \text{otherwise},
    	\end{array}
    \right.
\end{align}

\begin{align}
    \text{FN}_i = \left\{
    	\begin{array}{ll}
    	1, & \text{if}~i \in \Mc~\text{and}~w_i > \varepsilon, \\
    	0, & \text{otherwise},
    	\end{array}
    \right.
\end{align}

\begin{align}
    \text{TN}_i = \left\{
    	\begin{array}{ll}
    	1, & \text{if}~i \notin \Mc~\text{and}~w_i > \varepsilon, \\
    	0, & \text{otherwise}.
    	\end{array}
    \right.
\end{align}

Here:
\begin{itemize}
    \item \textbf{True Positives (TP):} Counts the number of \emph{malicious clients} that are correctly identified as malicious by the method.
    \item \textbf{False Positives (FP):} Counts the number of \emph{benign clients} that are incorrectly flagged as malicious.
    \item \textbf{False Negatives (FN):} Counts the number of \emph{malicious clients} that are mistakenly identified as benign.
    \item \textbf{True Negatives (TN):} Counts the number of \emph{benign clients} that are correctly identified as benign.
\end{itemize}

The total counts are computed as:

\[
\text{TP} = \sum_{i=1}^{n} \text{TP}_i, \quad
\text{FP} = \sum_{i=1}^{n} \text{FP}_i, \quad
\text{FN} = \sum_{i=1}^{n} \text{FN}_i, \quad
\text{TN} = \sum_{i=1}^{n} \text{TN}_i.
\]

Using these counts, we compute:

\begin{itemize}
    \item \textbf{Precision:}
    \[
    \text{Precision} = \frac{\text{TP}}{\text{TP} + \text{FP}},
    \]
    which measures the proportion of clients flagged as malicious that are actually malicious.

    \item \textbf{Recall:}
    \[
    \text{Recall} = \frac{\text{TP}}{\text{TP} + \text{FN}},
    \]
    which measures the proportion of actual malicious clients that are successfully detected.

    \item \textbf{F1 Score:}
    \[
    \text{F1 Score} = 2 \times \frac{\text{Precision} \times \text{Recall}}{\text{Precision} + \text{Recall}},
    \]
    providing a balance between precision and recall.

    \item \textbf{Accuracy:}
    \[
    \text{Accuracy} = \frac{\text{TP} + \text{TN}}{n},
    \]
    which measures the overall fraction of correctly classified clients (both malicious and benign), where \(n\) is the total number of clients.
\end{itemize}

We set $\varepsilon = 10^{-4}$ to ensure that clients with very small aggregation weights are identified as malicious for evaluation. This threshold is appropriate, as fewer than 200 clients typically remain after excluding detected malicious ones. In the equal-weight case, benign clients receive weights above $5 \times 10^{-3}$, well above $\varepsilon$, allowing a clear distinction between benign and malicious clients.

Table~\ref{tab:fedlaw_results_comparison} summarizes FedLAW’s performance in detecting malicious clients on MNIST and CIFAR-10 under four attack types: Flipping Label, Inverse Gradient, Backdoor Attack, and Double Attack. We evaluate across different data heterogeneity levels ($q = 0.6$ and $q = 0.9$) and malicious client fractions (0.1 to 0.4), using Precision, Recall, F1 Score, and Accuracy. Results are reported as the mean ± standard deviation across five independent runs.

Recall and Accuracy are especially important: high Recall ensures most malicious clients are caught, which boosts Accuracy. This is critical when the malicious fraction is high. Precision helps avoid false positives, but its impact is less critical in practice when benign clients are the majority.

FedLAW consistently shows strong detection capabilities. For Flipping Label, Inverse Gradient, and Backdoor Attacks, often it achieves Recall values close to or equal to 1.0, often resulting in high Accuracy ($\geq $0.92).
CIFAR-10 generally yields better results than MNIST, likely due to its more complex features, which may help us in the detection procedure. Precision and F1 scores are also high (typically $\geq 0.9$), confirming robust overall performance.

The Double Attack poses a greater challenge due to its hybrid and dynamic nature. On CIFAR-10 with $q = 0.9$ and 40\% malicious clients, FedLAW still secures solid Recall (0.899), but lower Precision (0.848) and F1 (0.872) pull Accuracy down to 0.897. On MNIST with $q = 0.6$ and 10\% malicious clients, Recall falls to 0.7 and Accuracy to 0.940. At low malicious fractions, even minor misclassifications can significantly affect metrics, though their practical impact remains limited due to the low number of malicious. It is important to note that FedLAW's performance under the double attack is slightly lower compared to other attack types, reflecting the increased complexity of this scenario. The double attack is used specifically to assess the method's robustness in the most demanding settings.

In summary, FedLAW provides reliable and accurate detection of malicious clients across all four attack types and under highly heterogeneous data distributions, conditions that closely resemble real-world federated learning settings. Its consistently high Recall and Accuracy make it an effective and practical defense mechanism for secure federated learning systems.

 {\setlength{\tabcolsep}{2.5pt} 
\renewcommand{\arraystretch}{1.1} 

\scriptsize 

\begin{longtable}{|c|cccc|cccc|}
\caption{Test accuracy (\%) on MNIST under five attack types, two non‑IID levels ($q$) and varying fractions of malicious clients. Reported as mean $\pm$ std over five runs and are also depicted in Figure \ref{fig:attack-plots-merged}.a.}
\label{tab:merged_attack_clean}\\
\hline
\textbf{Algorithm} & \multicolumn{8}{c|}{\textbf{Fraction of Malicious Clients}} \\
\cline{2-9}
& \multicolumn{4}{c|}{\textbf{$q = 0.6$}} & \multicolumn{4}{c|}{\textbf{$q = 0.9$}} \\
\cline{2-9}
& 0.1 & 0.2 & 0.3 & 0.4 & 0.1 & 0.2 & 0.3 & 0.4 \\
\hline
\endfirsthead 

\multicolumn{9}{c}{{\bfseries Table \thetable\ (continued) from previous page}} \\
\hline
\textbf{Algorithm} & \multicolumn{8}{c|}{\textbf{Fraction of Malicious Clients}} \\
\cline{2-9}
& \multicolumn{4}{c|}{\textbf{$q = 0.6$}} & \multicolumn{4}{c|}{\textbf{$q = 0.9$}} \\
\cline{2-9}
& 0.1 & 0.2 & 0.3 & 0.4 & 0.1 & 0.2 & 0.3 & 0.4 \\
\hline
\endhead 

\hline \multicolumn{9}{|r|}{{Continued on next page}} \\ \hline
\endfoot 

\hline
\endlastfoot 

\multicolumn{9}{|c|}{\textbf{Attack: Flipping Label}} \\
\hline
FedLAW & 92.63$\pm$0.07 & 92.71$\pm$0.05 & 92.39$\pm$0.07 & 92.22$\pm$0.20 & 89.60$\pm$0.24 & 89.55$\pm$0.57 & 88.30$\pm$0.28 & 87.45$\pm$0.26 \\
Bulyan & 92.06$\pm$0.04 & 91.86$\pm$0.32 & 91.68$\pm$0.20 & 91.14$\pm$0.45 & 89.01$\pm$0.06 & 87.59$\pm$0.95 & 87.03$\pm$0.66 & 87.53$\pm$0.35 \\
Bulyan-Bucketing & 92.18$\pm$0.24 & 91.89$\pm$0.35 & 91.69$\pm$0.36 & 91.22$\pm$0.28 & 88.61$\pm$0.37 & 87.41$\pm$1.09 & 85.99$\pm$0.38 & 85.36$\pm$1.37 \\
Krum & 86.08$\pm$0.21 & 86.29$\pm$0.80 & 86.37$\pm$0.56 & 86.43$\pm$0.24 & 76.46$\pm$0.84 & 76.72$\pm$1.26 & 76.16$\pm$0.96 & 75.55$\pm$0.60 \\
Trimmed Mean & 92.15$\pm$0.29 & 91.58$\pm$0.17 & 91.02$\pm$0.32 & 90.13$\pm$0.75 & 88.54$\pm$0.34 & 86.54$\pm$0.06 & 82.13$\pm$1.26 & 71.21$\pm$3.74 \\
CClip & 92.44$\pm$0.25 & 91.89$\pm$0.39 & 88.97$\pm$0.54 & 72.70$\pm$2.53 & 87.00$\pm$0.49 & 78.31$\pm$0.76 & 66.36$\pm$0.68 & 54.45$\pm$1.97 \\
CClip-Bucketing & 92.46$\pm$0.05 & 91.55$\pm$0.39 & 89.51$\pm$0.75 & 73.56$\pm$3.36 & 86.72$\pm$0.34 & 79.11$\pm$0.69 & 63.50$\pm$1.92 & 54.00$\pm$2.45 \\
RFA & 92.60$\pm$0.16 & 92.34$\pm$0.24 & 92.06$\pm$0.38 & 89.90$\pm$1.41 & 88.60$\pm$0.47 & 86.22$\pm$0.82 & 72.43$\pm$3.79 & 55.20$\pm$2.16 \\
RFA-Bucketing & 92.89$\pm$0.67 & 92.84$\pm$0.73 & 92.16$\pm$0.46 & 88.29$\pm$2.49 & 89.00$\pm$0.27 & 86.17$\pm$0.86 & 71.12$\pm$2.49 & 54.82$\pm$1.41 \\
CwMed & 92.09$\pm$0.28 & 91.64$\pm$0.20 & 91.03$\pm$0.14 & 90.25$\pm$0.23 & 88.12$\pm$0.45 & 86.09$\pm$0.32 & 82.68$\pm$0.54 & 72.82$\pm$3.87 \\
Huber Aggregator & 92.48$\pm$0.17 & 92.17$\pm$0.36 & 91.55$\pm$0.49 & 88.35$\pm$1.42 & 87.31$\pm$0.66 & 78.66$\pm$1.75 & 65.22$\pm$1.99 & 54.84$\pm$2.56 \\
FedAVG & 92.65$\pm$0.03 & 92.23$\pm$0.13 & 89.56$\pm$0.19 & 77.71$\pm$0.61 & 88.05$\pm$0.29 & 80.04$\pm$0.13 & 67.74$\pm$0.88 & 56.11$\pm$0.37 \\
\hline
\multicolumn{9}{|c|}{\textbf{Attack: Inverse Gradient}} \\
\hline
FedLAW & 92.40$\pm$0.19 & 91.88$\pm$0.46 & 91.83$\pm$0.33 & 91.62$\pm$0.33 & 88.71$\pm$0.55 & 88.45$\pm$0.18 & 86.86$\pm$0.15 & 87.41$\pm$1.39 \\
Bulyan & 91.59$\pm$0.38 & 91.26$\pm$0.16 & 90.29$\pm$0.27 & 89.11$\pm$0.33 & 87.09$\pm$0.06 & 85.88$\pm$0.23 & 82.54$\pm$0.96 & 83.80$\pm$1.83 \\
Bulyan-Bucketing & 91.31$\pm$0.27 & 91.18$\pm$0.14 & 90.39$\pm$0.59 & 88.44$\pm$1.41 & 85.89$\pm$0.97 & 85.97$\pm$1.13 & 82.06$\pm$4.05 & 81.44$\pm$2.81 \\
Krum & 81.99$\pm$3.34 & 80.43$\pm$4.33 & 75.72$\pm$1.59 & 76.88$\pm$1.10 & 68.89$\pm$1.25 & 66.25$\pm$5.60 & 64.39$\pm$3.56 & 61.93$\pm$8.41 \\
Trimmed Mean & 91.38$\pm$0.32 & 90.63$\pm$0.08 & 88.49$\pm$1.00 & 84.35$\pm$1.52 & 84.49$\pm$0.75 & 77.76$\pm$2.30 & 66.53$\pm$3.91 & 52.38$\pm$1.55 \\
CClip & 84.98$\pm$3.19 & 73.78$\pm$1.31 & 10.00$\pm$0.00 & 10.00$\pm$0.00 & 10.00$\pm$0.00 & 10.00$\pm$0.00 & 10.00$\pm$0.00 & 10.00$\pm$0.00 \\
CClip-Bucketing & 83.02$\pm$1.36 & 73.73$\pm$0.18 & 10.00$\pm$0.00 & 10.00$\pm$0.00 & 10.00$\pm$0.00 & 10.00$\pm$0.00 & 10.00$\pm$0.00 & 10.00$\pm$0.00 \\
RFA & 91.82$\pm$0.29 & 90.70$\pm$0.15 & 87.85$\pm$4.97 & 60.88$\pm$3.61 & 84.92$\pm$1.08 & 76.15$\pm$1.00 & 65.85$\pm$2.18 & 10.00$\pm$0.00 \\
RFA-Bucketing & 92.18$\pm$0.04 & 90.32$\pm$0.13 & 89.14$\pm$0.88 & 59.05$\pm$0.26 & 84.54$\pm$0.87 & 76.69$\pm$0.18 & 67.39$\pm$0.94 & 10.00$\pm$0.00 \\
CwMed & 91.41$\pm$0.29 & 90.66$\pm$0.44 & 88.90$\pm$0.61 & 83.89$\pm$1.93 & 85.33$\pm$0.41 & 80.72$\pm$2.19 & 68.27$\pm$0.93 & 52.11$\pm$2.89 \\
Huber Aggregator & 91.71$\pm$0.32 & 84.81$\pm$3.37 & 65.26$\pm$1.16 & 57.09$\pm$1.55 & 82.08$\pm$1.34 & 73.28$\pm$1.32 & 64.00$\pm$0.94 & 55.40$\pm$1.30 \\
FedAVG & 84.16$\pm$1.05 & 74.06$\pm$0.44 & 10.00$\pm$0.00 & 10.00$\pm$0.00 & 10.00$\pm$0.00 & 10.00$\pm$0.00 & 10.00$\pm$0.00 & 10.00$\pm$0.00 \\
\hline
\multicolumn{9}{|c|}{\textbf{Attack: Backdoor}} \\
\hline
FedLAW & 92.32$\pm$0.14 & 92.64$\pm$0.30 & 92.19$\pm$0.13 & 92.46$\pm$0.27 & 89.69$\pm$0.42 & 89.17$\pm$1.77 & 88.81$\pm$0.71 & 87.88$\pm$1.16 \\
Bulyan & 92.15$\pm$0.07 & 91.73$\pm$0.22 & 69.23$\pm$1.95 & 59.66$\pm$3.32 & 88.79$\pm$0.04 & 83.28$\pm$2.40 & 51.79$\pm$8.79 & 33.89$\pm$1.80 \\
Bulyan-Bucketing & 92.11$\pm$0.16 & 91.62$\pm$0.27 & 90.17$\pm$0.09 & 88.54$\pm$0.44 & 88.27$\pm$0.19 & 85.22$\pm$1.69 & 79.39$\pm$4.89 & 72.17$\pm$3.84 \\
Krum & 27.60$\pm$3.99 & 28.16$\pm$0.97 & 15.83$\pm$4.34 & 22.64$\pm$6.54 & 15.21$\pm$0.35 & 16.36$\pm$2.40 & 18.68$\pm$2.33 & 16.90$\pm$2.69 \\
Trimmed Mean & 91.74$\pm$0.07 & 91.24$\pm$0.21 & 90.40$\pm$0.16 & 88.77$\pm$0.60 & 88.51$\pm$0.58 & 86.84$\pm$0.99 & 81.08$\pm$1.04 & 74.45$\pm$2.48 \\
CClip & 92.21$\pm$0.22 & 91.66$\pm$0.17 & 90.89$\pm$0.46 & 90.36$\pm$0.19 & 88.35$\pm$0.22 & 87.87$\pm$1.03 & 84.94$\pm$0.83 & 81.45$\pm$1.11 \\
CClip-Bucketing & 92.15$\pm$0.15 & 91.31$\pm$0.12 & 91.15$\pm$0.18 & 90.38$\pm$0.31 & 88.57$\pm$0.84 & 87.17$\pm$1.19 & 84.57$\pm$0.88 & 80.52$\pm$0.73 \\
RFA & 93.72$\pm$0.27 & 92.54$\pm$0.12 & 91.22$\pm$0.42 & 90.51$\pm$0.33 & 89.27$\pm$0.53 & 87.42$\pm$0.43 & 86.53$\pm$1.22 & 79.33$\pm$2.19 \\
RFA-Bucketing & 93.48$\pm$0.37 & 92.30$\pm$0.18 & 90.99$\pm$0.22 & 90.51$\pm$0.11 & 89.55$\pm$0.41 & 86.97$\pm$1.35 & 84.31$\pm$0.60 & 80.17$\pm$3.38 \\
CwMed & 91.61$\pm$0.26 & 91.02$\pm$0.30 & 90.40$\pm$0.33 & 88.44$\pm$0.69 & 87.78$\pm$0.48 & 86.23$\pm$0.41 & 81.17$\pm$2.18 & 67.89$\pm$4.39 \\
Huber Aggregator & 92.40$\pm$0.08 & 91.48$\pm$0.28 & 90.73$\pm$0.39 & 89.93$\pm$0.35 & 88.79$\pm$0.43 & 87.42$\pm$0.29 & 84.94$\pm$1.22 & 80.80$\pm$2.65 \\
FedAVG & 91.99$\pm$0.06 & 91.34$\pm$0.19 & 91.03$\pm$0.16 & 89.93$\pm$0.24 & 88.43$\pm$0.63 & 87.13$\pm$1.17 & 85.00$\pm$0.53 & 79.74$\pm$2.65 \\
\hline
\multicolumn{9}{|c|}{\textbf{Attack: Double Attack}} \\
\hline
FedLAW & 92.57$\pm$0.05 & 92.39$\pm$0.02 & 92.34$\pm$0.46 & 92.31$\pm$0.23 & 89.93$\pm$0.21 & 89.76$\pm$0.11 & 89.35$\pm$0.24 & 87.47$\pm$0.95 \\
Bulyan & 91.98$\pm$0.11 & 91.69$\pm$0.08 & 91.22$\pm$0.13 & 90.79$\pm$0.22 & 88.80$\pm$0.22 & 87.50$\pm$0.23 & 87.01$\pm$1.13 & 85.47$\pm$0.31 \\
Bulyan-Bucketing & 92.06$\pm$0.33 & 91.59$\pm$0.23 & 91.49$\pm$0.03 & 90.57$\pm$0.68 & 88.84$\pm$0.68 & 87.83$\pm$0.68 & 84.91$\pm$0.69 & 83.50$\pm$3.44 \\
Krum & 82.24$\pm$3.66 & 80.82$\pm$2.03 & 79.28$\pm$1.60 & 80.77$\pm$2.15 & 71.20$\pm$0.52 & 71.83$\pm$2.64 & 71.25$\pm$1.57 & 62.43$\pm$5.90 \\
Trimmed Mean & 92.00$\pm$0.04 & 91.61$\pm$0.40 & 90.68$\pm$0.67 & 89.74$\pm$0.26 & 88.32$\pm$0.45 & 85.02$\pm$1.05 & 82.01$\pm$2.83 & 76.10$\pm$1.22 \\
CClip & 92.08$\pm$0.12 & 91.67$\pm$0.08 & 90.78$\pm$0.62 & 82.99$\pm$4.99 & 89.12$\pm$0.29 & 80.84$\pm$1.60 & 68.12$\pm$3.87 & 10.00$\pm$0.00 \\
CClip-Bucketing & 92.27$\pm$0.18 & 91.72$\pm$0.21 & 90.47$\pm$0.38 & 86.45$\pm$4.14 & 87.97$\pm$1.15 & 82.27$\pm$2.20 & 74.48$\pm$2.68 & 10.00$\pm$0.00 \\
RFA & 94.37$\pm$1.39 & 94.29$\pm$0.21 & 94.13$\pm$0.17 & 93.58$\pm$0.30 & 91.59$\pm$0.50 & 83.74$\pm$4.85 & 85.23$\pm$2.65 & 57.15$\pm$1.60 \\
RFA-Bucketing & 94.94$\pm$0.23 & 94.78$\pm$0.25 & 94.07$\pm$1.53 & 91.50$\pm$5.75 & 91.13$\pm$1.46 & 89.97$\pm$0.14 & 76.23$\pm$5.72 & 60.54$\pm$4.33 \\
CwMed & 91.92$\pm$0.22 & 91.03$\pm$0.10 & 90.77$\pm$0.58 & 89.63$\pm$0.56 & 87.58$\pm$0.54 & 86.18$\pm$0.86 & 82.78$\pm$2.90 & 76.71$\pm$2.63 \\
Huber Aggregator & 91.97$\pm$0.31 & 91.28$\pm$0.51 & 90.08$\pm$1.04 & 85.00$\pm$2.90 & 86.06$\pm$1.28 & 78.47$\pm$2.42 & 66.62$\pm$0.95 & 57.96$\pm$0.35 \\
FedAVG & 92.29$\pm$0.21 & 91.62$\pm$0.11 & 90.83$\pm$0.40 & 85.68$\pm$6.05 & 89.42$\pm$0.46 & 86.79$\pm$1.64 & 71.49$\pm$6.16 & 68.28$\pm$3.67 \\
\hline
\multicolumn{9}{|c|}{\textbf{Attack: Lie Attack}} \\
\hline
FedLAW & 92.72$\pm$0.04 & 92.24$\pm$0.02 & 90.70$\pm$0.08 & 86.88$\pm$4.02 & 90.15$\pm$0.05 & 89.51$\pm$0.11 & 86.30$\pm$0.14 & 70.10$\pm$2.17 \\
Bulyan & 92.16$\pm$0.02 & 91.44$\pm$0.56 & 89.41$\pm$1.36 & 50.15$\pm$33.52 & 88.80$\pm$0.37 & 86.16$\pm$1.92 & 58.92$\pm$22.98 & 21.23$\pm$13.09 \\
Bulyan-Bucketing & 92.21$\pm$0.16 & 91.88$\pm$0.03 & 90.40$\pm$0.05 & 86.94$\pm$0.05 & 89.14$\pm$0.16 & 87.53$\pm$0.68 & 81.73$\pm$2.54 & 29.98$\pm$2.07 \\
Krum & 92.32$\pm$0.17 & 90.92$\pm$0.40 & 88.06$\pm$0.53 & 39.53$\pm$11.36 & 89.49$\pm$0.33 & 83.84$\pm$1.01 & 26.56$\pm$1.91 & 11.62$\pm$0.23 \\
Trimmed Mean & 92.21$\pm$0.25 & 92.07$\pm$0.20 & 90.66$\pm$0.01 & 87.58$\pm$0.23 & 88.94$\pm$0.11 & 88.42$\pm$0.15 & 81.78$\pm$0.49 & 56.70$\pm$4.70 \\
CClip & 92.57$\pm$0.09 & 92.56$\pm$0.16 & 92.09$\pm$0.33 & 90.60$\pm$0.12 & 89.90$\pm$0.42 & 88.85$\pm$0.18 & 86.47$\pm$1.21 & 81.63$\pm$2.68 \\
CClip-Bucketing & 92.76$\pm$0.14 & 92.76$\pm$0.07 & 92.11$\pm$0.14 & 90.64$\pm$0.09 & 89.91$\pm$0.27 & 89.13$\pm$0.26 & 87.96$\pm$1.15 & 82.69$\pm$1.09 \\
RFA & 92.25$\pm$0.10 & 91.70$\pm$0.18 & 90.94$\pm$0.12 & 89.15$\pm$0.09 & 89.69$\pm$0.44 & 86.76$\pm$0.28 & 82.25$\pm$2.92 & 55.50$\pm$3.14 \\
RFA-Bucketing & 92.52$\pm$0.19 & 91.74$\pm$0.12 & 91.03$\pm$0.21 & 89.29$\pm$0.59 & 89.13$\pm$0.47 & 87.35$\pm$0.95 & 82.16$\pm$1.63 & 57.39$\pm$7.07 \\
CwMed & 92.31$\pm$0.14 & 91.56$\pm$0.15 & 90.12$\pm$0.15 & 87.06$\pm$0.33 & 87.96$\pm$0.41 & 86.36$\pm$1.03 & 79.07$\pm$2.32 & 56.03$\pm$2.68 \\
Huber Aggregator & 92.45$\pm$0.13 & 92.04$\pm$0.02 & 92.09$\pm$0.30 & 90.69$\pm$0.12 & 90.08$\pm$0.44 & 89.34$\pm$0.48 & 86.19$\pm$1.03 & 68.83$\pm$14.50 \\
FedAVG & 92.67$\pm$0.11 & 92.61$\pm$0.21 & 91.91$\pm$0.03 & 90.16$\pm$0.01 & 90.06$\pm$0.69 & 88.54$\pm$1.10 & 86.92$\pm$0.88 & 84.22$\pm$2.34 \\
\hline
\end{longtable}
}
 {\setlength{\tabcolsep}{2.5pt} 
\renewcommand{\arraystretch}{1.1} 

\scriptsize 

\begin{longtable}{|c|cccc|cccc|}
\caption{Test accuracy (\%) on CIFAR‑10 under five attack types, two non‑IID levels ($q$) and varying fractions of malicious clients. Reported as mean $\pm$ std over five runs and are also depicted in Figure \ref{fig:attack-plots-merged}.b.}
\label{tab:merged_attack_clean_cifar10}\\
\hline
\textbf{Algorithm} & \multicolumn{8}{c|}{\textbf{Fraction of Malicious Clients}} \\
\cline{2-9}
& \multicolumn{4}{c|}{\textbf{$q = 0.6$}} & \multicolumn{4}{c|}{\textbf{$q = 0.9$}} \\
\cline{2-9}
& 0.1 & 0.2 & 0.3 & 0.4 & 0.1 & 0.2 & 0.3 & 0.4 \\
\hline
\endfirsthead 

\multicolumn{9}{c}{{\bfseries Table \thetable\ (continued) from previous page}} \\
\hline
\textbf{Algorithm} & \multicolumn{8}{c|}{\textbf{Fraction of Malicious Clients}} \\
\cline{2-9}
& \multicolumn{4}{c|}{\textbf{$q = 0.6$}} & \multicolumn{4}{c|}{\textbf{$q = 0.9$}} \\
\cline{2-9}
& 0.1 & 0.2 & 0.3 & 0.4 & 0.1 & 0.2 & 0.3 & 0.4 \\
\hline
\endhead 

\hline \multicolumn{9}{|r|}{{Continued on next page}} \\ \hline
\endfoot 

\hline
\endlastfoot 

\multicolumn{9}{|c|}{\textbf{Attack: Flipping Label}} \\
\hline
FedLAW & 74.59$\pm$0.54 & 73.21$\pm$0.09 & 72.43$\pm$0.23 & 70.52$\pm$0.16 & 66.86$\pm$0.50 & 64.85$\pm$0.24 & 62.39$\pm$0.51 & 58.86$\pm$2.22 \\
Bulyan & 73.53$\pm$0.71 & 70.55$\pm$0.75 & 68.60$\pm$0.53 & 62.20$\pm$1.97 & 64.93$\pm$0.57 & 61.84$\pm$0.92 & 60.00$\pm$0.94 & 53.57$\pm$10.37 \\
Bulyan-Bucketing & 70.52$\pm$3.66 & 71.56$\pm$0.68 & 67.17$\pm$3.08 & 45.86$\pm$12.96 & 59.48$\pm$6.86 & 58.33$\pm$6.53 & 58.56$\pm$6.54 & 47.82$\pm$5.42 \\
Krum & 27.66$\pm$4.51 & 24.77$\pm$2.19 & 27.19$\pm$0.45 & 24.99$\pm$0.14 & 21.23$\pm$0.40 & 17.64$\pm$2.69 & 11.94$\pm$0.00 & 15.97$\pm$0.18 \\
Trimmed Mean & 73.12$\pm$0.03 & 71.72$\pm$0.03 & 67.39$\pm$0.30 & 51.18$\pm$0.74 & 62.24$\pm$0.33 & 59.27$\pm$0.29 & 53.02$\pm$1.97 & 43.45$\pm$2.09 \\
CClip & 72.42$\pm$0.44 & 66.19$\pm$1.13 & 60.12$\pm$5.46 & 39.98$\pm$8.15 & 61.93$\pm$0.91 & 55.58$\pm$1.91 & 47.93$\pm$2.25 & 42.22$\pm$1.57 \\
CClip-Bucketing & 72.31$\pm$0.37 & 66.89$\pm$0.87 & 55.85$\pm$6.59 & 41.97$\pm$9.59 & 61.65$\pm$0.81 & 54.95$\pm$1.17 & 49.94$\pm$2.21 & 41.34$\pm$2.04 \\
RFA & 72.61$\pm$0.75 & 69.70$\pm$2.29 & 64.16$\pm$3.89 & 50.96$\pm$4.01 & 62.66$\pm$1.07 & 57.76$\pm$0.66 & 51.42$\pm$2.16 & 42.11$\pm$0.80 \\
RFA-Bucketing & 73.81$\pm$0.63 & 71.45$\pm$1.69 & 67.51$\pm$4.59 & 53.38$\pm$1.80 & 63.10$\pm$0.78 & 57.10$\pm$1.05 & 49.06$\pm$2.86 & 40.19$\pm$0.62 \\
CwMed & 73.50$\pm$0.42 & 71.03$\pm$0.31 & 68.66$\pm$1.03 & 52.44$\pm$2.28 & 63.91$\pm$0.41 & 59.80$\pm$1.26 & 53.40$\pm$1.68 & 44.92$\pm$3.49 \\
Huber Aggregator & 73.50$\pm$0.54 & 70.80$\pm$1.04 & 66.18$\pm$0.25 & 55.77$\pm$1.04 & 62.66$\pm$0.86 & 57.13$\pm$0.24 & 51.06$\pm$5.03 & 38.95$\pm$1.61 \\
FedAVG & 72.19$\pm$0.64 & 67.63$\pm$0.61 & 52.23$\pm$7.76 & 51.21$\pm$9.39 & 61.70$\pm$0.73 & 54.91$\pm$0.78 & 48.59$\pm$1.52 & 41.24$\pm$1.78 \\
\hline
\multicolumn{9}{|c|}{\textbf{Attack: Inverse Gradient}} \\
\hline
FedLAW & 74.45$\pm$0.58 & 73.16$\pm$0.63 & 72.05$\pm$0.55 & 70.32$\pm$0.24 & 66.26$\pm$0.67 & 64.41$\pm$0.37 & 62.95$\pm$1.13 & 59.38$\pm$1.14 \\
Bulyan & 72.82$\pm$0.66 & 70.56$\pm$0.30 & 67.62$\pm$0.73 & 65.98$\pm$0.11 & 64.55$\pm$1.31 & 60.46$\pm$1.38 & 57.53$\pm$1.79 & 56.24$\pm$1.19 \\
Bulyan-Bucketing & 72.42$\pm$0.35 & 71.50$\pm$0.26 & 68.01$\pm$0.78 & 52.84$\pm$3.32 & 64.13$\pm$0.34 & 61.57$\pm$1.11 & 59.58$\pm$1.28 & 47.20$\pm$0.63 \\
Krum & 29.68$\pm$0.71 & 30.11$\pm$0.98 & 29.15$\pm$2.62 & 24.48$\pm$1.44 & 19.23$\pm$3.51 & 17.59$\pm$2.46 & 21.32$\pm$3.50 & 17.03$\pm$1.57 \\
Trimmed Mean & 71.05$\pm$0.25 & 67.18$\pm$0.39 & 59.22$\pm$0.86 & 42.44$\pm$3.84 & 61.56$\pm$1.73 & 55.27$\pm$1.55 & 48.16$\pm$1.34 & 38.17$\pm$1.37 \\
CClip & 71.40$\pm$0.29 & 67.61$\pm$0.47 & 67.09$\pm$1.48 & 10.00$\pm$0.00 & 61.17$\pm$0.73 & 55.01$\pm$1.27 & 53.53$\pm$5.57 & 10.00$\pm$0.00 \\
CClip-Bucketing & 71.44$\pm$0.61 & 67.39$\pm$1.09 & 67.20$\pm$0.63 & 10.00$\pm$0.00 & 62.89$\pm$1.58 & 56.77$\pm$0.31 & 52.09$\pm$3.76 & 10.00$\pm$0.00 \\
RFA & 70.81$\pm$0.61 & 65.37$\pm$1.74 & 10.00$\pm$0.00 & 10.00$\pm$0.00 & 60.98$\pm$2.00 & 10.00$\pm$0.00 & 10.00$\pm$0.00 & 10.00$\pm$0.00 \\
RFA-Bucketing & 71.37$\pm$1.07 & 66.71$\pm$1.17 & 10.00$\pm$0.00 & 10.00$\pm$0.00 & 61.26$\pm$1.18 & 10.00$\pm$0.00 & 10.00$\pm$0.00 & 10.00$\pm$0.00 \\
CwMed & 71.93$\pm$0.78 & 68.08$\pm$0.40 & 60.48$\pm$1.74 & 42.89$\pm$1.86 & 62.08$\pm$0.36 & 56.74$\pm$1.67 & 49.29$\pm$2.31 & 34.10$\pm$4.47 \\
Huber Aggregator & 72.00$\pm$0.78 & 67.56$\pm$0.16 & 65.55$\pm$1.39 & 10.00$\pm$0.00 & 61.96$\pm$1.68 & 56.37$\pm$0.76 & 47.93$\pm$2.93 & 10.00$\pm$0.00 \\
FedAVG & 67.83$\pm$1.58 & 10.00$\pm$0.00 & 10.00$\pm$0.00 & 10.00$\pm$0.00 & 10.00$\pm$0.00 & 10.00$\pm$0.00 & 10.00$\pm$0.00 & 10.00$\pm$0.00 \\
\hline
\multicolumn{9}{|c|}{\textbf{Attack: Backdoor}} \\
\hline
FedLAW & 74.21$\pm$0.05 & 72.89$\pm$0.14 & 71.77$\pm$0.11 & 70.10$\pm$0.38 & 66.53$\pm$0.22 & 64.73$\pm$0.90 & 63.07$\pm$0.43 & 59.90$\pm$0.17 \\
Bulyan & 73.05$\pm$0.27 & 66.66$\pm$1.16 & 47.46$\pm$0.07 & 30.93$\pm$0.32 & 62.85$\pm$0.21 & 54.52$\pm$0.84 & 35.38$\pm$2.69 & 18.62$\pm$2.53 \\
Bulyan-Bucketing & 71.61$\pm$0.34 & 69.73$\pm$0.51 & 67.06$\pm$0.72 & 60.39$\pm$0.48 & 63.53$\pm$0.33 & 58.58$\pm$1.21 & 53.70$\pm$0.91 & 48.04$\pm$0.84 \\
Krum & 25.36$\pm$1.17 & 16.47$\pm$0.35 & 15.18$\pm$1.60 & 12.12$\pm$3.10 & 13.20$\pm$3.32 & 13.76$\pm$5.08 & 9.79$\pm$0.29 & 10.06$\pm$0.08 \\
Trimmed Mean & 74.69$\pm$0.39 & 73.11$\pm$0.29 & 70.72$\pm$1.24 & 66.07$\pm$0.76 & 65.66$\pm$0.20 & 64.05$\pm$0.18 & 59.45$\pm$1.63 & 52.92$\pm$2.76 \\
CClip & 73.61$\pm$0.48 & 72.94$\pm$0.47 & 71.32$\pm$0.24 & 68.59$\pm$0.50 & 65.91$\pm$0.29 & 64.13$\pm$0.94 & 60.60$\pm$1.18 & 56.54$\pm$0.99 \\
CClip-Bucketing & 74.16$\pm$0.50 & 73.09$\pm$0.31 & 71.24$\pm$0.29 & 69.33$\pm$0.74 & 65.94$\pm$0.65 & 63.45$\pm$0.21 & 60.30$\pm$0.80 & 56.31$\pm$0.42 \\
RFA & 74.15$\pm$0.51 & 73.00$\pm$0.15 & 71.90$\pm$0.47 & 69.87$\pm$0.66 & 66.22$\pm$0.41 & 64.00$\pm$0.65 & 62.05$\pm$0.49 & 59.04$\pm$0.63 \\
RFA-Bucketing & 73.80$\pm$0.63 & 72.56$\pm$0.49 & 71.67$\pm$0.46 & 69.69$\pm$0.45 & 66.10$\pm$0.73 & 63.70$\pm$1.25 & 62.39$\pm$0.16 & 59.69$\pm$0.50 \\
CwMed & 73.83$\pm$0.44 & 72.07$\pm$0.33 & 69.79$\pm$0.64 & 65.50$\pm$2.24 & 65.44$\pm$0.64 & 63.66$\pm$1.28 & 57.91$\pm$1.28 & 52.14$\pm$0.59 \\
Huber Aggregator & 74.07$\pm$0.14 & 71.73$\pm$0.25 & 69.15$\pm$0.65 & 65.30$\pm$1.46 & 65.12$\pm$0.70 & 62.13$\pm$0.89 & 60.26$\pm$0.30 & 55.30$\pm$0.30 \\
FedAVG & 74.36$\pm$0.18 & 72.88$\pm$0.37 & 71.24$\pm$0.38 & 69.71$\pm$0.28 & 66.35$\pm$0.23 & 63.21$\pm$0.68 & 60.62$\pm$1.92 & 55.94$\pm$0.86 \\
\hline
\multicolumn{9}{|c|}{\textbf{Attack: Double Attack}} \\
\hline
FedLAW & 74.09$\pm$0.31 & 72.89$\pm$0.28 & 71.63$\pm$0.29 & 70.54$\pm$0.70 & 66.45$\pm$0.08 & 64.95$\pm$0.14 & 62.89$\pm$0.57 & 59.98$\pm$1.25 \\
Bulyan & 73.56$\pm$0.53 & 70.60$\pm$0.18 & 69.44$\pm$0.34 & 69.01$\pm$0.36 & 64.86$\pm$0.36 & 62.14$\pm$1.28 & 60.15$\pm$1.44 & 59.02$\pm$1.02 \\
Bulyan-Bucketing & 72.85$\pm$0.11 & 72.09$\pm$0.31 & 71.20$\pm$0.41 & 64.51$\pm$0.23 & 64.47$\pm$0.19 & 64.10$\pm$0.51 & 61.77$\pm$0.55 & 53.64$\pm$1.19 \\
Krum & 25.41$\pm$1.33 & 29.35$\pm$2.74 & 23.43$\pm$2.81 & 26.55$\pm$1.57 & 18.94$\pm$1.51 & 18.02$\pm$3.86 & 17.60$\pm$3.54 & 17.95$\pm$2.79 \\
Trimmed Mean & 72.72$\pm$0.67 & 69.81$\pm$1.93 & 67.26$\pm$0.72 & 62.76$\pm$0.06 & 63.30$\pm$0.13 & 60.16$\pm$0.11 & 56.54$\pm$1.20 & 50.95$\pm$1.11 \\
CClip & 72.49$\pm$0.29 & 70.71$\pm$0.35 & 67.77$\pm$0.96 & 63.34$\pm$0.78 & 63.27$\pm$0.86 & 59.31$\pm$1.02 & 52.72$\pm$0.48 & 46.01$\pm$2.03 \\
CClip-Bucketing & 72.81$\pm$0.82 & 69.91$\pm$0.32 & 67.43$\pm$1.01 & 63.02$\pm$1.23 & 63.27$\pm$1.46 & 59.48$\pm$0.88 & 53.14$\pm$1.73 & 45.99$\pm$1.49 \\
RFA & 58.99$\pm$2.02 & 64.97$\pm$2.37 & 55.80$\pm$3.36 & 10.00$\pm$0.00 & 66.19$\pm$2.71 & 59.29$\pm$1.40 & 46.31$\pm$5.51 & 10.00$\pm$0.00 \\
RFA-Bucketing & 64.33$\pm$3.61 & 70.75$\pm$3.30 & 57.14$\pm$3.17 & 10.00$\pm$0.00 & 66.90$\pm$1.67 & 60.26$\pm$2.12 & 10.00$\pm$0.00 & 10.00$\pm$0.00 \\
CwMed & 73.22$\pm$0.61 & 70.79$\pm$0.47 & 66.95$\pm$0.67 & 62.99$\pm$0.77 & 64.65$\pm$0.89 & 61.84$\pm$0.79 & 57.28$\pm$0.39 & 52.12$\pm$1.44 \\
Huber Aggregator & 73.11$\pm$0.05 & 70.17$\pm$0.06 & 67.61$\pm$0.47 & 63.06$\pm$0.86 & 64.06$\pm$0.51 & 60.20$\pm$2.26 & 53.12$\pm$1.82 & 46.73$\pm$1.73 \\
FedAVG & 71.98$\pm$0.53 & 68.73$\pm$0.15 & 65.74$\pm$0.82 & 58.39$\pm$1.36 & 62.11$\pm$1.10 & 56.41$\pm$1.07 & 46.83$\pm$1.30 & 39.48$\pm$1.05 \\
\hline
\multicolumn{9}{|c|}{\textbf{Attack: Lie Attack}} \\
\hline
FedLAW & 74.20$\pm$0.30 & 71.96$\pm$0.26 & 60.91$\pm$0.37 & 41.56$\pm$3.28 & 66.49$\pm$0.19 & 61.79$\pm$0.50 & 46.22$\pm$1.76 & 28.34$\pm$0.44 \\
Bulyan & 73.52$\pm$0.20 & 50.18$\pm$0.61 & 31.19$\pm$2.30 & 10.00$\pm$0.01 & 65.57$\pm$0.63 & 40.99$\pm$1.13 & 13.82$\pm$3.44 & 10.00$\pm$0.00 \\
Bulyan-Bucketing & 73.19$\pm$0.27 & 64.14$\pm$0.51 & 42.53$\pm$2.28 & 27.45$\pm$1.14 & 64.99$\pm$0.21 & 51.49$\pm$0.31 & 32.16$\pm$1.41 & 11.69$\pm$2.99 \\
Krum & 65.43$\pm$0.76 & 45.04$\pm$1.09 & 28.70$\pm$1.52 & 10.00$\pm$0.00 & 52.68$\pm$0.60 & 35.39$\pm$1.33 & 13.32$\pm$3.13 & 10.01$\pm$0.02 \\
Trimmed Mean & 74.66$\pm$0.25 & 69.01$\pm$0.48 & 45.98$\pm$1.21 & 27.65$\pm$0.40 & 67.00$\pm$0.68 & 57.22$\pm$0.41 & 35.47$\pm$1.48 & 10.00$\pm$0.00 \\
CClip & 74.07$\pm$0.17 & 72.78$\pm$0.17 & 63.91$\pm$0.02 & 42.54$\pm$0.27 & 66.35$\pm$0.14 & 64.08$\pm$0.59 & 49.48$\pm$0.25 & 34.11$\pm$1.04 \\
CClip-Bucketing & 74.10$\pm$0.64 & 72.80$\pm$0.32 & 64.31$\pm$0.18 & 43.05$\pm$0.74 & 66.75$\pm$0.67 & 64.39$\pm$0.01 & 52.47$\pm$0.18 & 31.59$\pm$1.50 \\
RFA & 74.27$\pm$0.12 & 65.25$\pm$0.46 & 50.53$\pm$0.62 & 33.52$\pm$2.06 & 66.18$\pm$0.06 & 52.98$\pm$1.13 & 39.49$\pm$1.08 & 22.14$\pm$1.86 \\
RFA-Bucketing & 74.47$\pm$0.04 & 66.18$\pm$0.48 & 50.92$\pm$0.15 & 35.05$\pm$1.12 & 66.18$\pm$0.54 & 54.56$\pm$1.37 & 40.71$\pm$0.27 & 25.56$\pm$0.28 \\
CwMed & 72.04$\pm$0.08 & 58.56$\pm$0.77 & 43.48$\pm$0.81 & 21.37$\pm$2.06 & 60.80$\pm$0.19 & 47.00$\pm$2.13 & 31.23$\pm$3.98 & 10.00$\pm$0.01 \\
Huber Aggregator & 74.33$\pm$0.10 & 68.08$\pm$0.18 & 51.65$\pm$0.29 & 36.16$\pm$1.98 & 67.08$\pm$0.91 & 61.23$\pm$0.14 & 45.58$\pm$1.24 & 26.49$\pm$2.58 \\
FedAVG & 74.30$\pm$0.06 & 73.51$\pm$0.20 & 64.04$\pm$0.88 & 44.70$\pm$2.87 & 66.54$\pm$0.99 & 64.30$\pm$0.41 & 51.71$\pm$0.80 & 33.04$\pm$2.57 \\
\hline
\end{longtable}
}
\subsection{Computational Complexity}\label{susec:comput}
\textbf{Communication Overhead between Server and Clients:}
As demonstrated in Table~\ref{tbl:communication_rounds_split}, state-of-the-art Byzantine-robust federated learning methods typically do not update aggregation weights, incurring zero communication rounds for this step. In contrast, our proposed method, FedLAW, optimizes the aggregation weights \(\wb\) alongside the global model parameters \(\thetab\), requiring an additional 20 communication rounds for \(\wb\) updates. Since Byzantine attacks are assumed to occur early in training and 
\(\wb\) converges quickly, we restrict its updates to the first 20 rounds, ensuring a fixed and limited communication overhead.

To ensure a fair comparison with existing methods and maintain a standardized benchmark, we align the number of communication rounds for updating \(\thetab\) with those used by state-of-the-art approaches in our numerical study. This setup allows for a direct evaluation of FedLAW's performance under equivalent conditions for \(\thetab\) updates. However, as highlighted in Remark~\ref{rem:comcl} and illustrated in Figure~\ref{fig:cifar_convergence_all}, FedLAW achieves the target accuracy with fewer \(\thetab\)-update rounds compared to other methods, owing to the joint optimization of \(\wb\) and \(\thetab\). This flexibility enhances convergence speed, effectively offsetting the additional 20 rounds for \(\wb\) updates and potentially reducing the total communication cost in practical settings.

Empirically, we observe that the learned weights change most during the early phase (roughly the first 10--20\% of communication rounds); after this, they remain essentially stable, so fixing \(\wb\) has negligible impact on robustness or accuracy while keeping FedLAW's long-term communication cost close to that of FedAvg. When continued adaptivity is desired (e.g., under non-stationary data or late-onset attacks), \(\wb\) can be updated beyond this warm-up phase or periodically refreshed, at the price of a modest additional communication cost.

Finally, we note that the CIFAR-10 dataset requires more \(\thetab\)-update rounds than MNIST due to its higher complexity, which necessitates additional training effort to achieve robust convergence across all methods.

\definecolor{BulyanBuck}{rgb}{0.00,0.55,0.80}
\definecolor{CwMedBlue}{rgb}{0.00,0.30,0.80}
\definecolor{HuberBrown}{rgb}{0.65,0.30,0.15}
\definecolor{CClipMagenta}{rgb}{0.80,0.00,0.50}
\definecolor{CClipBuck}{rgb}{0.55,0.00,0.55}
\definecolor{RfaGreen}{rgb}{0.10,0.50,0.10}
\definecolor{RfaBuck}{rgb}{0.05,0.35,0.25}

\pgfplotsset{
  convplot/.style={
    width=0.49\textwidth, height=6cm,
    xmin=0, xmax=400, ymin=0, ymax=65,
    xlabel={Comm.\ round},
    ylabel={Test acc.\ (\%)},
    enlarge y limits=0.05,
    grid=both, grid style={gray!20},
    legend style={
        draw=none,
        legend columns=6, 
        font=\scriptsize 
    },
  }
}

\tikzset{
  extlegend/.style={
    legend columns=4,
    draw=black!20, rounded corners=2pt,
    font=\scriptsize,
    fill=white, fill opacity=0.9,
    /tikz/every even column/.append style={column sep=0.4em},
  }
}

\pgfplotsset{legend columns=4}

\begin{figure*}[t]
\centering
\resizebox{\textwidth}{!}{%
\begin{tikzpicture}

\begin{groupplot}[
  group style={
    group size=2 by 1,
    horizontal sep=1.1cm,
    ylabels at=edge left,         
  },
  convplot,
  legend to name=convlegend       
]


\nextgroupplot[
  title={\textbf{Inverse gradient}},
  ylabel={Test acc. (\%)},        
]
  \MeanBand{FedLAWColor}{inv_fedlaw}{tikz_data/compare_convergence_v2/cc_cifar10_sparse_capped_inverse_q0.9.dat}
  \MeanBand{BulyanColor}{inv_bulyan}{tikz_data/compare_convergence_v2/cc_cifar10_bulyan_inverse_q0.9.dat}
  \MeanBand{KrumColor}{inv_krum}{tikz_data/compare_convergence_v2/cc_cifar10_krum_inverse_q0.9.dat}
  \MeanBand{TrimmedColor}{inv_trim}{tikz_data/compare_convergence_v2/cc_cifar10_trimmed_inverse_q0.9.dat}
  \MeanBand{BulyanBuckColor}{inv_bulyanbuck}{tikz_data/compare_convergence_v2/cc_cifar10_bulyan_bucketing_inverse_q0.9.dat}
  \MeanBand{CwMedColor}{inv_cwmed}{tikz_data/compare_convergence_v2/cc_cifar10_cwmed_inverse_q0.9.dat}

\nextgroupplot[title={\textbf{Backdoor}}]
  \MeanBand{FedLAWColor}{bk_fedlaw}{tikz_data/compare_convergence_v2/cc_cifar10_sparse_capped_backdoor_q0.9.dat}
  \MeanBand{BulyanColor}{bk_bulyan}{tikz_data/compare_convergence_v2/cc_cifar10_bulyan_backdoor_q0.9.dat}
  \MeanBand{KrumColor}{bk_krum}{tikz_data/compare_convergence_v2/cc_cifar10_krum_backdoor_q0.9.dat}
  \MeanBand{TrimmedColor}{bk_trim}{tikz_data/compare_convergence_v2/cc_cifar10_trimmed_backdoor_q0.9.dat}

  \MeanBand{CClipColor}{bk_cclip}{tikz_data/compare_convergence_v2/cc_cifar10_cclip_backdoor_q0.9.dat}
  \MeanBand{CClipBuckColor}{bk_cclipbuck}{tikz_data/compare_convergence_v2/cc_cifar10_cclip_bucketing_backdoor_q0.9.dat}
  \MeanBand{BulyanBuckColor}{bk_bulyanbuck}{tikz_data/compare_convergence_v2/cc_cifar10_bulyan_bucketing_backdoor_q0.9.dat}
  \MeanBand{RFAColor}{bk_rfa}{tikz_data/compare_convergence_v2/cc_cifar10_rfa_backdoor_q0.9.dat}
  \MeanBand{RFABuckColor}{bk_rfabuck}{tikz_data/compare_convergence_v2/cc_cifar10_rfa_bucketing_backdoor_q0.9.dat}
  \MeanBand{CwMedColor}{bk_cwmed}{tikz_data/compare_convergence_v2/cc_cifar10_cwmed_backdoor_q0.9.dat}
  \MeanBand{HuberColor}{bk_huber}{tikz_data/compare_convergence_v2/cc_cifar10_huber_backdoor_q0.9.dat}


\addlegendimage{FedLAWColor, very thick}
\addlegendentry{\textbf{FedLAW (Ours)}}

\addlegendimage{BulyanColor, very thick}
\addlegendentry{Bulyan}

\addlegendimage{BulyanBuckColor, very thick}
\addlegendentry{Bulyan + Bucketing}

\addlegendimage{KrumColor, very thick}
\addlegendentry{Krum}

\addlegendimage{TrimmedColor, very thick}
\addlegendentry{Trimmed Mean}

\addlegendimage{CwMedColor, very thick}
\addlegendentry{Coordinate-wise Median}

\addlegendimage{HuberColor, very thick}
\addlegendentry{Huber Aggregator}

\addlegendimage{CClipColor, very thick}
\addlegendentry{CClip}

\addlegendimage{CClipBuckColor, very thick}
\addlegendentry{CClip + Bucketing}

\addlegendimage{RFAColor, very thick}
\addlegendentry{RFA}

\addlegendimage{RFABuckColor, very thick}
\addlegendentry{RFA + Bucketing}

\end{groupplot}

\vspace{-2cm}
\node at ($(group c1r1.north)!0.5!(group c2r1.north) + (0,1.3cm)$)
      {\pgfplotslegendfromname{convlegend}};

\end{tikzpicture}
\vspace{-0.25cm}
}
\caption{Convergence on CIFAR‑10 under two adversarial settings (\(q=0.9\), 40\% malicious clients; see Section~\ref{sec:numstudy} for details).
Each panel shows the average test accuracy over 5 independent runs; shaded regions denote \(\pm1\)std.
FedLAW uses two communication rounds per model update, so 400 rounds correspond to 200 global epochs. All other methods run for 400 global epochs. We report results for one data poisoning attack (backdoor) and one model attack (inverse gradient), which are representative of the broader attack space. Methods excluded from the inverse gradient plot did not converge.} 

\label{fig:cifar_convergence_all}
\end{figure*}
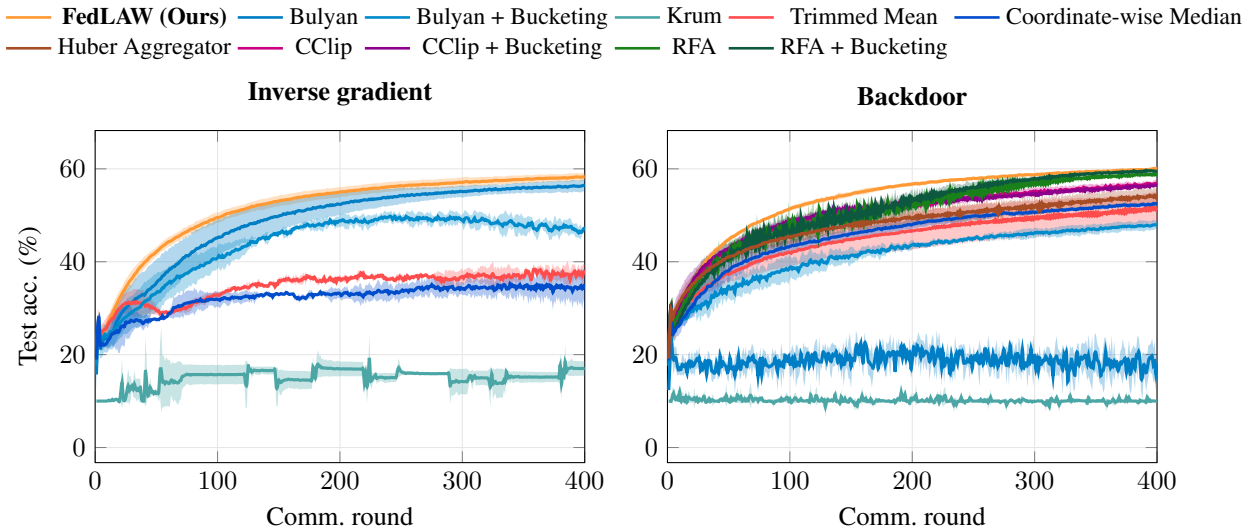

\begin{table}[t]
\centering
\caption{Number of communication rounds for updating $\thetab$ and $\wb$.}
\label{tbl:communication_rounds_split}
\vspace{-0.5em}
\renewcommand{\arraystretch}{1.15}
\setlength{\tabcolsep}{4pt}
\scriptsize
\begin{adjustbox}{max width=\columnwidth,center}
\begin{tabular}{|c|c|c|c|}
\hline
\textbf{Algorithm} & \textbf{Dataset} & \textbf{Rounds for $\thetab$} & \textbf{Rounds for $\wb$} \\
\hline \hline
FedLAW (ours)    & MNIST   & 200 & 20 \\
                 & CIFAR10 & 400 & 20 \\
\hline
Bulyan           & MNIST   & 200 & 0 \\
                 & CIFAR10 & 400 & 0 \\
\hline
Krum             & MNIST   & 200 & 0 \\
                 & CIFAR10 & 400 & 0 \\
\hline
Trimmed Mean     & MNIST   & 200 & 0 \\
                 & CIFAR10 & 400 & 0 \\
\hline
FedAVG           & MNIST   & 200 & 0 \\
                 & CIFAR10 & 400 & 0 \\
\hline
\end{tabular}
\end{adjustbox}
\end{table}

\textbf{Memory Complexity of Server-Side Weight Update}:
In our method, the server updates both the global model parameters and the aggregation weights. The weight update involves computing the vector
\begin{equation}
    \hb_k = \wb_k+\beta\alpha\Gb_k^T\Tilde{\Gb}_{k+1}\wb_k-\beta\fb(\Tilde{\thetab}_{k+1})
\end{equation}
where $\wb_k \in \Rbb^n$ are the aggregation weights, $\Gb_k = [\nabla_{\theta} f_1(\thetab_k), \cdots, \nabla_{\theta} f_n(\thetab_k)]\in\Rbb^{d\times n}$, $\tilde{\Gb}_{k+1} = [\nabla_{\theta} f_1(\tilde{\thetab}_{k+1}), \cdots, \nabla_{\theta} f_n(\tilde{\thetab}_{k+1})]\in\Rbb^{d\times n}$ are gradient matrices, and $\fb(\tilde{\thetab}_{k+1}) \in \Rbb^n$ denotes the losses of all clients. The vector $\hb_k$ is then projected onto the sparse unit-capped simplex.

To analyze the memory requirements for computing $\mathbf{h}_k$, we summarize the necessary data stored on the server in Table~\ref{tbl:aggreweight}.

\begin{table}[h]
\centering
\begin{tabular}{|c|c|l|c|}
\hline
\textbf{Symbol} & \textbf{Shape} & \textbf{Purpose} & \textbf{Memory Cost} \\
\hline
$\mathbf{w}_k$ & $n$ & Aggregation weights & $\Oc(n)$ \\
$\Gb_k$ & $d \times n$ & Previous round gradients & $\Oc(d n)$ \\
$\tilde{\Gb}_{k+1}$ & $d \times n$ & Current round gradients & $\Oc(d n)$ \\
$\fb(\tilde{\theta}_{k+1})$ & $n$ & client losses & $\Oc(n)$ \\
Intermediate $\zb = \tilde{\Gb}_{k+1} \wb_k$ & $d$ & Matrix-vector product scratch & $\Oc(d)$ \\
Projection buffer & $n$ & Sparse simplex projection & $\Oc(n)$ \\
\hline
\end{tabular}
\caption{Memory components required for computing aggregation weights.}
\label{tbl:aggreweight}
\end{table}

The dominant memory cost arises from storing the two gradient matrices $\Gb_k$ and $\tilde{\Gb}_{k+1}$, each of size $\Oc(d n)$. The other components contribute lower-order terms, resulting in an overall memory complexity of $\Oc(d n)$.

Note that forming the product $\Gb_k^T \tilde{\Gb}_{k+1} \mathbf{w}_k$ naively would require constructing an $\Oc(n^2)$ matrix. However, this can be avoided by computing it in two steps:
\begin{enumerate}
    \item Compute $\zb = \tilde{\Gb}_{k+1} \mathbf{w}_k$ with cost $\Oc(d n)$.
    \item Then compute $\Gb_k^T \zb$ with cost $\Oc(d n)$.
\end{enumerate}

In summary, the memory required for server-side weight updates scales linearly with both the number of model parameters and the number of clients. Therefore, the total memory requirement per round is ${\Oc(d n)}$.

\textbf{Computational Complexity of Projection onto the sparse unit capped simplex}: According to \eqref{eq:fgklhj}, the projection onto the sparse unit-capped simplex involves the following steps:
\begin{itemize}
    \item \textbf{Sparsity enforcement:} $\hb_{\lambda} = P_{L_s}(\hb_k)$, selecting the top-$s$ elements from $\hb_k \in \mathbb{R}^n$. This step can be performed in $\Oc(n\min(s,\log n))$ \citep{10.5555/3042817.3042920}.
    \item \textbf{Support selection:} $\Sc^* = \text{supp}(\hb_{\lambda})$, identifying the indices of the top-$s$ elements, which requires $\Oc(n)$ time.
    \item \textbf{Unit-capped simplex projection:} ${\wb_{k+1}}_{\Sc^*} = \Pc_{\Delta^+_t}({\hb_{\lambda}}_{\Sc^*}), \quad {\wb_{k+1}}_{(\Sc^*)^{\complement}} = 0$. This projection onto the unit-capped simplex has a computational complexity of $\Oc(s^2)$ \citep{wang2015projectioncappedsimplex}.
\end{itemize}

The total complexity of the projection step is 
$\Oc(n\min(s,\log n) + s^2)$.

\begin{table*}[t]
\centering
\setlength{\tabcolsep}{2pt}  
\begin{adjustbox}{max width=\textwidth,center}
\renewcommand{\arraystretch}{1.25}
\setlength{\tabcolsep}{4pt}
\scriptsize                                 
\begin{tabular}{|c|c|c|cccc|cccc|}
\hline
\multirow{3}{*}{\textbf{Attack Type}} &
\multirow{3}{*}{$\boldsymbol{q}$} &
\multirow{3}{*}{\textbf{Frac.\ Mal.}} &
\multicolumn{4}{c|}{\textbf{MNIST}} &
\multicolumn{4}{c|}{\textbf{CIFAR-10}} \\
\cline{4-11}
 & & & \textbf{Prec.} & \textbf{Rec.} & \textbf{F1} & \textbf{Acc.} &
       \textbf{Prec.} & \textbf{Rec.} & \textbf{F1} & \textbf{Acc.} \\
\hline
\multirow{8}{*}{Flipping Label}
& 0.6 & 0.4 & 0.990 ± 0.010 & 0.998 ± 0.006 & 0.994 ± 0.006 & 0.995 ± 0.005 & 0.976 ± 0.055 & 1.000 & 0.987 ± 0.029 & 0.989 ± 0.025 \\
&     & 0.3 & 0.990 ± 0.022 & 1.000 & 0.995 ± 0.011 & 0.997 ± 0.007 & 0.971 ± 0.065 & 1.000 & 0.984 ± 0.035 & 0.990 ± 0.023 \\
&     & 0.2 & 0.963 ± 0.051 & 1.000 & 0.980 ± 0.027 & 0.992 ± 0.011 & 0.959 ± 0.091 & 1.000 & 0.977 ± 0.051 & 0.990 ± 0.023 \\
&     & 0.1 & 0.982 ± 0.041 & 1.000 & 0.990 ± 0.021 & 0.998 ± 0.005 & 0.964 ± 0.101 & 1.000 & 0.979 ± 0.059 & 0.995 ± 0.014 \\
\cline{2-11}
& 0.9 & 0.4 & 0.921 ± 0.052 & 0.948 ± 0.072 & 0.934 ± 0.060 & 0.946 ± 0.049 & 0.954 ± 0.048 & 0.950 ± 0.054 & 0.952 ± 0.051 & 0.961 ± 0.041 \\
&     & 0.3 & 0.963 ± 0.045 & 0.966 ± 0.047 & 0.965 ± 0.046 & 0.979 ± 0.028 & 0.937 ± 0.058 & 0.987 ± 0.014 & 0.960 ± 0.026 & 0.975 ± 0.017 \\
&     & 0.2 & 0.970 ± 0.021 & 0.975 ± 0.025 & 0.972 ± 0.023 & 0.989 ± 0.009 & 0.932 ± 0.073 & 0.965 ± 0.029 & 0.947 ± 0.044 & 0.978 ± 0.019 \\
&     & 0.1 & 0.972 ± 0.041 & 1.000 & 0.986 ± 0.021 & 0.997 ± 0.005 & 0.948 ± 0.064 & 0.939 ± 0.082 & 0.944 ± 0.073 & 0.989 ± 0.014 \\
\hline
\multirow{8}{*}{Inverse Gradient}
& 0.6 & 0.4 & 0.880 ± 0.045 & 0.874 ± 0.052 & 0.877 ± 0.049 & 0.902 ± 0.039 & 0.956 ± 0.063 & 0.973 ± 0.061 & 0.964 ± 0.060 & 0.971 ± 0.048 \\
&     & 0.3 & 0.824 ± 0.166 & 0.823 ± 0.165 & 0.824 ± 0.165 & 0.895 ± 0.098 & 0.975 ± 0.037 & 1.000 & 0.987 ± 0.019 & 0.992 ± 0.012 \\
&     & 0.2 & 0.913 ± 0.140 & 0.917 ± 0.138 & 0.915 ± 0.139 & 0.965 ± 0.056 & 0.963 ± 0.082 & 1.000 & 0.980 ± 0.045 & 0.991 ± 0.020 \\
&     & 0.1 & 0.913 ± 0.194 & 0.930 ± 0.157 & 0.921 ± 0.177 & 0.983 ± 0.039 & 0.968 ± 0.092 & 1.000 & 0.981 ± 0.053 & 0.996 ± 0.012 \\
\cline{2-11}
& 0.9 & 0.4 & 0.899 ± 0.072 & 0.904 ± 0.064 & 0.902 ± 0.068 & 0.922 ± 0.054 & 0.997 ± 0.006 & 0.997 ± 0.006 & 0.997 ± 0.006 & 0.998 ± 0.005 \\
&     & 0.3 & 0.833 ± 0.071 & 0.833 ± 0.071 & 0.833 ± 0.071 & 0.899 ± 0.043 & 0.969 ± 0.048 & 0.997 ± 0.008 & 0.982 ± 0.029 & 0.989 ± 0.018 \\
&     & 0.2 & 0.825 ± 0.071 & 0.825 ± 0.071 & 0.825 ± 0.071 & 0.929 ± 0.029 & 0.943 ± 0.087 & 0.985 ± 0.034 & 0.963 ± 0.061 & 0.984 ± 0.026 \\
&     & 0.1 & 0.707 ± 0.033 & 0.857 ± 0.131 & 0.774 ± 0.073 & 0.953 ± 0.008 & 0.901 ± 0.140 & 0.990 ± 0.022 & 0.939 ± 0.088 & 0.986 ± 0.021 \\
\hline
\multirow{8}{*}{Backdoor Attack}
& 0.6 & 0.4 & 0.979 ± 0.018 & 1.000 & 0.989 ± 0.009 & 0.991 ± 0.008 & 0.968 ± 0.068 & 1.000 & 0.983 ± 0.037 & 0.985 ± 0.033 \\
&     & 0.3 & 0.980 ± 0.030 & 1.000 & 0.990 ± 0.015 & 0.994 ± 0.010 & 1.000 & 1.000 & 1.000 & 1.000 \\
&     & 0.2 & 0.970 ± 0.023 & 1.000 & 0.985 ± 0.012 & 0.994 ± 0.005 & 0.963 ± 0.090 & 1.000 & 0.979 ± 0.050 & 0.991 ± 0.023 \\
&     & 0.1 & 0.900 ± 0.039 & 1.000 & 0.947 ± 0.022 & 0.989 ± 0.005 & 1.000 & 1.000 & 1.000 & 1.000 \\
\cline{2-11}
& 0.9 & 0.4 & 0.997 ± 0.006 & 1.000 & 0.998 ± 0.003 & 0.999 ± 0.003 & 0.966 ± 0.064 & 1.000 & 0.981 ± 0.035 & 0.984 ± 0.031 \\
&     & 0.3 & 0.988 ± 0.016 & 1.000 & 0.994 ± 0.008 & 0.996 ± 0.005 & 0.972 ± 0.061 & 1.000 & 0.985 ± 0.033 & 0.990 ± 0.022 \\
&     & 0.2 & 0.959 ± 0.035 & 1.000 & 0.979 ± 0.018 & 0.991 ± 0.008 & 0.968 ± 0.095 & 1.000 & 0.981 ± 0.056 & 0.991 ± 0.027 \\
&     & 0.1 & 0.915 ± 0.085 & 1.000 & 0.954 ± 0.048 & 0.990 ± 0.011 & 1.000 & 1.000 & 1.000 & 1.000 \\
\hline
\multirow{8}{*}{Double Attack}
& 0.6 & 0.4 & 0.874 ± 0.010 & 0.874 ± 0.010 & 0.874 ± 0.010 & 0.899 ± 0.008 & 0.855 ± 0.093 & 0.897 ± 0.130 & 0.875 ± 0.111 & 0.899 ± 0.088 \\
&     & 0.3 & 0.825 ± 0.026 & 0.825 ± 0.026 & 0.825 ± 0.026 & 0.896 ± 0.015 & 0.856 ± 0.042 & 0.909 ± 0.070 & 0.881 ± 0.054 & 0.927 ± 0.032 \\
&     & 0.2 & 0.807 ± 0.028 & 0.800 ± 0.025 & 0.803 ± 0.026 & 0.921 ± 0.011 & 0.826 ± 0.029 & 0.960 ± 0.065 & 0.887 ± 0.041 & 0.952 ± 0.017 \\
&     & 0.1 & 0.696 ± 0.037 & 0.707 ± 0.027 & 0.701 ± 0.031 & 0.940 ± 0.008 & 0.681 ± 0.015 & 0.990 ± 0.022 & 0.807 ± 0.018 & 0.952 ± 0.005 \\
\cline{2-11}
& 0.9 & 0.4 & 0.804 ± 0.133 & 0.810 ± 0.132 & 0.807 ± 0.132 & 0.845 ± 0.106 & 0.848 ± 0.100 & 0.899 ± 0.141 & 0.872 ± 0.119 & 0.897 ± 0.095 \\
&     & 0.3 & 0.804 ± 0.039 & 0.804 ± 0.039 & 0.804 ± 0.039 & 0.882 ± 0.025 & 0.848 ± 0.129 & 0.888 ± 0.162 & 0.867 ± 0.145 & 0.920 ± 0.086 \\
&     & 0.2 & 0.756 ± 0.039 & 0.750 ± 0.043 & 0.753 ± 0.041 & 0.901 ± 0.016 & 0.776 ± 0.121 & 0.823 ± 0.164 & 0.799 ± 0.141 & 0.918 ± 0.057 \\
&     & 0.1 & 0.697 ± 0.133 & 0.697 ± 0.133 & 0.697 ± 0.133 & 0.939 ± 0.027 & 0.764 ± 0.082 & 0.970 ± 0.045 & 0.852 ± 0.055 & 0.966 ± 0.015 \\
\hline
\end{tabular}
\end{adjustbox}
\vspace{0.3em}
\caption{Precision, recall, F1 score, and accuracy of \textsc{FedLAW} under different attack types, values of $q$, and fractions of malicious clients on MNIST and CIFAR-10. Results are reported as the mean ± standard deviation over 5 independent runs.}
\label{tab:fedlaw_results_comparison}
\end{table*}

{ \subsection{Empirical size of $\varepsilon_k$.}\label{subsec:eprsizeps}
In this part, 
we explicitly measured $\varepsilon_k$ in Theorem \ref{thm:byzres} under a high malicious fraction and extreme heterogeneity (malicious client fraction $0.4$, non-IID parameter $q = 0.9$, backdoor attack).
For each communication round $k$ over the course of training (up to $400$ epochs), we computed $\varepsilon_k$ and aggregated the results over five random seeds.

\begin{itemize}
    
\item  \textbf{CIFAR-10} ($400$ global epochs), the average value of $\varepsilon_k$ over training is approximately $0.010$, with median $0.003$.
The per-round mean $\varepsilon_k$ always lies in the interval $[0.001, 0.065]$; about $96.5\%$ of the rounds satisfy $\varepsilon_k < 0.05$ and all rounds satisfy $\varepsilon_k < 0.1$.

\item  \textbf{MNIST} ($200$ global epochs), the average $\varepsilon_k$ is approximately $0.040$, with median $0.034$.
Here the per-round mean $\varepsilon_k$ lies in $[0.028, 0.155]$; around $85.5\%$ of the rounds satisfy $\varepsilon_k < 0.05$ and $99\%$ satisfy $\varepsilon_k < 0.1$.
Individual runs can exhibit a short transient in the very first rounds where $\varepsilon_k$ reaches up to $\approx 0.33$, but this effect quickly decays and does not persist.
\end{itemize}

Overall, these measurements show that, even under a high malicious fraction and extreme data heterogeneity, $\varepsilon_k$ remains on the order of $10^{-2}$–$10^{-1}$ throughout training (e.g., means $\approx 0.01$ on
CIFAR-10 and $\approx 0.04$ on MNIST), with almost all rounds below $0.05$–$0.1$. This is one to two orders of magnitude smaller than the typical training loss values (which are $\mathcal{O}(1)$), and is thus consistent with treating $\varepsilon_k$ as a lower-order term in our analysis.}

{
\subsection{Additional experiments on CIFAR-100}

We further evaluate FedLAW on a more challenging image-classification benchmark with many classes. We consider CIFAR-100 (100 classes) with $n=200$ clients, using the same model architecture and training protocol as in the CIFAR-10 experiments. The data are split in a strongly non-IID manner using two heterogeneity levels, $q=0.6$ and $q=0.9$, and we vary the fraction of malicious clients from $10\%$ to $40\%$. As representative attacks, we consider flipping label, inverse gradient, backdoor, double attack, and Lie attack, applied to all aggregators under comparison. The detailed results are summarized in Table~\ref{tab:cifar_attacks_mean_std}.

Under the flipping label attack with $q=0.6$, FedLAW remains stable as the malicious fraction increases, decreasing moderately from $42.27\%\pm0.27$ (10\%) to $37.50\%\pm0.15$ (40\%), while several robust baselines degrade substantially and some collapse close to random guessing ($\approx 1\%$ on 100 classes). For the more extreme non-IID setting $q=0.9$, all methods degrade, and FedLAW drops from $32.12\%\pm0.70$ (10\%) to $21.49\%\pm0.81$ (40\%).

Under the inverse-gradient attack with $q=0.6$, FedLAW consistently attains test accuracies around $38$--$43\%$ across all malicious fractions, while several robust baselines degrade substantially and, in some cases, collapse close to random guessing ($\approx 1\%$ on 100 classes). For example, at $40\%$ malicious clients and $q=0.6$, FedLAW reaches $38.02\%\pm0.12$, whereas several baselines suffer dramatic degradation and, in some cases, even collapse to accuracies close to random guessing ($\approx 1\%$ on 100 classes). For the more extreme non-IID setting $q=0.9$, the inverse-gradient attack is significantly more destructive for all methods; FedLAW remains competitive for moderate malicious fractions (up to $20\%$), but performance deteriorates for very high malicious rates (e.g., $30$--$40\%$).

Under the backdoor attack, FedLAW remains accurate and stable across all configurations. For $q=0.6$, its accuracy stays around $38$--$43\%$ even at $40\%$ malicious clients, and for $q=0.9$ it remains best or near-best among all baselines. For the double attack, FedLAW remains stable for $q=0.6$ (about $38$--$42\%$ across all malicious fractions) and is best or near-best for $q=0.9$ up to $30\%$ malicious clients, but drops at $40\%$ malicious clients. 

The Lie attack is the most challenging setting for all methods: accuracies drop sharply as the malicious fraction increases, and several baselines collapse to near-random performance. FedLAW also degrades substantially under Lie at high malicious rates (e.g., from $40.94\%\pm0.30$ at $10\%$ to $7.51\%\pm0.30$ at $40\%$ for $q=0.6$, and to $\approx 1\%$ at $40\%$ for $q=0.9$). Overall, these experiments indicate that FedLAW continues to provide strong robustness on a harder, high-class dataset with strong data heterogeneity and large fractions of adversarial clients.

{\setlength{\tabcolsep}{2.5pt} 
\renewcommand{\arraystretch}{1.1} 

\scriptsize 

\begin{longtable}{|c|cccc|cccc|}
\caption{Test accuracy (\%) on CIFAR‑100 under five attack types, two non‑IID levels ($q$) and varying fractions of malicious clients. Reported as mean $\pm$ std over five runs.}
\label{tab:cifar_attacks_mean_std}\\
\hline
\textbf{Algorithm} & \multicolumn{8}{c|}{\textbf{Fraction of Malicious Clients}} \\
\cline{2-9}
& \multicolumn{4}{c|}{\textbf{$q = 0.6$}} & \multicolumn{4}{c|}{\textbf{$q = 0.9$}} \\
\cline{2-9}
& 0.1 & 0.2 & 0.3 & 0.4 & 0.1 & 0.2 & 0.3 & 0.4 \\
\hline
\endfirsthead 

\multicolumn{9}{c}{{\bfseries Table \thetable\ (continued) from previous page}} \\
\hline
\textbf{Algorithm} & \multicolumn{8}{c|}{\textbf{Fraction of Malicious Clients}} \\
\cline{2-9}
& \multicolumn{4}{c|}{\textbf{$q = 0.6$}} & \multicolumn{4}{c|}{\textbf{$q = 0.9$}} \\
\cline{2-9}
& 0.1 & 0.2 & 0.3 & 0.4 & 0.1 & 0.2 & 0.3 & 0.4 \\
\hline
\endhead 

\hline \multicolumn{9}{|r|}{{Continued on next page}} \\ \hline
\endfoot 

\hline
\endlastfoot 

\multicolumn{9}{|c|}{\textbf{Attack: Flipping Label}} \\
\hline
FedLAW & 42.27$\pm$0.27 & 40.60$\pm$0.32 & 39.42$\pm$0.25 & 37.50$\pm$0.15 & 32.12$\pm$0.70 & 30.74$\pm$0.68 & 25.25$\pm$1.22 & 21.49$\pm$0.81 \\
Bulyan & 41.32$\pm$0.09 & 38.46$\pm$0.09 & 36.35$\pm$0.22 & 34.12$\pm$0.24 & 27.35$\pm$0.04 & 14.77$\pm$0.24 & 7.52$\pm$0.65 & 5.98$\pm$0.17 \\
Bulyan-Bucketing & 40.61$\pm$0.49 & 40.31$\pm$0.28 & 37.85$\pm$0.06 & 31.83$\pm$0.08 & 24.38$\pm$0.13 & 23.15$\pm$0.25 & 21.51$\pm$0.12 & 19.06$\pm$0.59 \\
Krum & 4.73$\pm$0.95 & 1.00$\pm$0.00 & 4.50$\pm$0.11 & 1.00$\pm$0.00 & 1.00$\pm$0.00 & 1.00$\pm$0.00 & 1.00$\pm$0.00 & 1.00$\pm$0.00 \\
Trimmed Mean & 40.71$\pm$0.21 & 38.65$\pm$0.73 & 35.14$\pm$0.33 & 28.04$\pm$0.52 & 30.10$\pm$0.20 & 25.83$\pm$0.36 & 23.62$\pm$0.39 & 18.55$\pm$1.76 \\
CClip & 40.88$\pm$0.13 & 38.27$\pm$0.49 & 34.36$\pm$0.33 & 28.11$\pm$0.86 & 31.58$\pm$0.94 & 28.26$\pm$0.42 & 24.52$\pm$0.53 & 19.81$\pm$1.13 \\
CClip-Bucketing & 41.01$\pm$0.52 & 38.45$\pm$0.28 & 34.36$\pm$0.58 & 28.41$\pm$0.18 & 31.25$\pm$0.76 & 27.15$\pm$0.19 & 24.87$\pm$0.88 & 20.17$\pm$0.63 \\
RFA & 41.58$\pm$0.27 & 39.00$\pm$0.39 & 34.92$\pm$0.70 & 28.11$\pm$0.88 & 31.85$\pm$0.79 & 30.07$\pm$0.52 & 24.79$\pm$0.43 & 20.95$\pm$1.38 \\
RFA-Bucketing & 41.54$\pm$0.24 & 39.07$\pm$0.55 & 35.20$\pm$0.35 & 27.86$\pm$0.67 & 31.33$\pm$0.54 & 28.71$\pm$1.01 & 24.92$\pm$0.50 & 19.70$\pm$1.16 \\
CwMed & 40.93$\pm$0.25 & 38.26$\pm$0.20 & 34.40$\pm$0.43 & 28.47$\pm$0.36 & 25.19$\pm$0.66 & 24.60$\pm$0.55 & 21.53$\pm$0.72 & 18.22$\pm$0.38 \\
Huber Aggregator & 40.95$\pm$0.24 & 39.45$\pm$0.24 & 35.63$\pm$0.10 & 27.89$\pm$0.57 & 31.09$\pm$0.25 & 27.81$\pm$0.43 & 23.32$\pm$0.66 & 18.20$\pm$0.35 \\
FedAVG & 40.92$\pm$0.27 & 38.25$\pm$0.12 & 33.91$\pm$0.34 & 26.90$\pm$0.30 & 31.12$\pm$0.18 & 27.20$\pm$0.55 & 23.75$\pm$0.48 & 19.70$\pm$0.53 \\
\hline
\multicolumn{9}{|c|}{\textbf{Attack: Inverse Gradient}} \\
\hline
FedLAW & 42.51$\pm$0.05 & 41.39$\pm$0.49 & 40.05$\pm$0.27 & 38.02$\pm$0.12 & 30.51$\pm$0.31 & 27.48$\pm$1.41 & 21.45$\pm$1.08 & 18.42$\pm$1.89 \\
Bulyan & 36.00$\pm$0.96 & 35.19$\pm$0.51 & 34.87$\pm$0.28 & 32.04$\pm$0.25 & 7.33$\pm$0.57 & 5.49$\pm$0.73 & 3.86$\pm$0.12 & 2.61$\pm$0.17 \\
Bulyan-Bucketing & 39.28$\pm$0.21 & 38.53$\pm$0.27 & 35.06$\pm$0.41 & 25.58$\pm$0.07 & 22.77$\pm$0.32 & 20.95$\pm$0.58 & 17.66$\pm$1.06 & 12.90$\pm$0.90 \\
Krum & 4.29$\pm$1.40 & 4.90$\pm$1.90 & 3.64$\pm$0.00 & 4.21$\pm$0.36 & 1.00$\pm$0.00 & 1.00$\pm$0.00 & 1.00$\pm$0.00 & 1.00$\pm$0.00 \\
Trimmed Mean & 37.90$\pm$0.35 & 33.13$\pm$0.18 & 27.78$\pm$0.40 & 20.45$\pm$0.53 & 28.04$\pm$0.13 & 22.61$\pm$0.83 & 19.12$\pm$1.65 & 12.92$\pm$3.06 \\
CClip & 37.33$\pm$0.07 & 31.84$\pm$0.62 & 29.47$\pm$0.80 & 23.40$\pm$0.09 & 27.08$\pm$0.19 & 1.00$\pm$0.00 & 1.00$\pm$0.00 & 1.00$\pm$0.00 \\
CClip-Bucketing & 38.16$\pm$0.41 & 32.62$\pm$1.29 & 28.99$\pm$0.78 & 23.80$\pm$1.25 & 28.33$\pm$0.93 & 1.00$\pm$0.00 & 1.00$\pm$0.00 & 1.00$\pm$0.00 \\
RFA & 39.21$\pm$0.48 & 34.19$\pm$1.02 & 30.14$\pm$0.53 & 1.00$\pm$0.00 & 31.31$\pm$1.10 & 1.00$\pm$0.00 & 1.00$\pm$0.00 & 1.00$\pm$0.00 \\
RFA-Bucketing & 39.10$\pm$0.83 & 35.19$\pm$0.52 & 30.03$\pm$0.31 & 1.00$\pm$0.00 & 29.50$\pm$0.28 & 1.00$\pm$0.00 & 1.00$\pm$0.00 & 1.00$\pm$0.00 \\
CwMed & 39.07$\pm$0.50 & 33.43$\pm$0.19 & 27.40$\pm$0.34 & 21.23$\pm$0.35 & 22.60$\pm$0.52 & 19.80$\pm$0.82 & 16.70$\pm$0.88 & 11.19$\pm$1.59 \\
Huber Aggregator & 39.25$\pm$0.69 & 34.16$\pm$0.20 & 29.86$\pm$0.18 & 1.00$\pm$0.00 & 29.32$\pm$0.09 & 26.92$\pm$0.00 & 1.00$\pm$0.00 & 1.00$\pm$0.00 \\
FedAVG & 37.56$\pm$0.25 & 31.46$\pm$0.43 & 1.00$\pm$0.00 & 1.00$\pm$0.00 & 1.00$\pm$0.00 & 1.00$\pm$0.00 & 1.00$\pm$0.00 & 1.00$\pm$0.00 \\
\hline
\multicolumn{9}{|c|}{\textbf{Attack: Backdoor}} \\
\hline
FedLAW & 43.43$\pm$0.64 & 41.95$\pm$0.41 & 40.02$\pm$0.11 & 38.22$\pm$0.53 & 31.92$\pm$0.12 & 30.83$\pm$0.36 & 28.51$\pm$0.61 & 26.37$\pm$0.68 \\
Bulyan & 39.61$\pm$0.15 & 31.59$\pm$0.24 & 7.63$\pm$0.26 & 1.82$\pm$0.29 & 25.57$\pm$0.80 & 1.29$\pm$0.08 & 1.00$\pm$0.00 & 1.00$\pm$0.00 \\
Bulyan-Bucketing & 39.08$\pm$0.31 & 36.74$\pm$0.26 & 33.21$\pm$0.55 & 27.77$\pm$0.87 & 22.27$\pm$0.67 & 12.17$\pm$1.63 & 4.52$\pm$0.46 & 1.44$\pm$0.10 \\
Krum & 1.00$\pm$0.00 & 1.00$\pm$0.00 & 1.00$\pm$0.00 & 1.00$\pm$0.00 & 1.00$\pm$0.00 & 1.00$\pm$0.00 & 1.00$\pm$0.00 & 1.00$\pm$0.00 \\
Trimmed Mean & 42.32$\pm$0.10 & 40.87$\pm$0.10 & 37.14$\pm$0.62 & 30.07$\pm$2.64 & 30.20$\pm$0.30 & 25.23$\pm$0.61 & 12.31$\pm$2.13 & 2.14$\pm$0.08 \\
CClip & 41.88$\pm$0.41 & 40.21$\pm$0.54 & 37.38$\pm$1.55 & 32.73$\pm$3.21 & 30.99$\pm$0.46 & 28.58$\pm$0.87 & 25.58$\pm$2.64 & 21.30$\pm$1.65 \\
CClip-Bucketing & 41.85$\pm$0.20 & 40.63$\pm$0.33 & 37.44$\pm$1.00 & 34.15$\pm$2.59 & 31.48$\pm$0.98 & 29.23$\pm$1.20 & 26.76$\pm$1.32 & 21.73$\pm$2.15 \\
RFA & 42.18$\pm$0.24 & 40.87$\pm$0.27 & 39.65$\pm$0.39 & 37.50$\pm$0.31 & 32.72$\pm$0.29 & 31.09$\pm$0.33 & 28.33$\pm$0.54 & 25.00$\pm$1.08 \\
RFA-Bucketing & 42.46$\pm$0.75 & 41.23$\pm$0.25 & 39.73$\pm$0.65 & 37.49$\pm$0.27 & 31.75$\pm$0.28 & 29.24$\pm$0.54 & 28.12$\pm$0.59 & 25.47$\pm$0.89 \\
CwMed & 41.28$\pm$0.55 & 39.25$\pm$0.16 & 32.76$\pm$1.20 & 28.47$\pm$0.24 & 21.12$\pm$1.14 & 12.85$\pm$1.55 & 3.50$\pm$2.24 & 2.14$\pm$1.05 \\
Huber Aggregator & 42.45$\pm$0.36 & 40.29$\pm$0.08 & 36.52$\pm$1.21 & 31.16$\pm$2.91 & 31.32$\pm$0.12 & 28.20$\pm$0.85 & 25.09$\pm$0.95 & 20.60$\pm$0.56 \\
FedAVG & 41.39$\pm$0.06 & 39.14$\pm$0.30 & 37.66$\pm$0.32 & 35.92$\pm$0.88 & 30.91$\pm$0.30 & 28.80$\pm$0.37 & 26.78$\pm$1.02 & 24.35$\pm$0.91 \\
\hline
\multicolumn{9}{|c|}{\textbf{Attack: Double Attack}} \\
\hline
FedLAW & 42.13$\pm$0.23 & 41.28$\pm$0.15 & 39.98$\pm$0.19 & 38.02$\pm$0.16 & 35.44$\pm$0.23 & 31.24$\pm$1.26 & 29.30$\pm$1.16 & 20.39$\pm$0.74 \\
Bulyan & 39.92$\pm$0.16 & 35.01$\pm$0.85 & 29.28$\pm$0.13 & 24.79$\pm$0.51 & 26.66$\pm$0.24 & 15.78$\pm$0.50 & 7.62$\pm$0.54 & 8.63$\pm$1.55 \\
Bulyan-Bucketing & 39.79$\pm$0.14 & 37.85$\pm$0.10 & 34.20$\pm$0.35 & 29.89$\pm$0.46 & 25.32$\pm$0.94 & 23.63$\pm$0.10 & 23.15$\pm$0.11 & 20.86$\pm$0.37 \\
Krum & 5.03$\pm$0.07 & 3.89$\pm$1.05 & 3.40$\pm$0.80 & 5.22$\pm$0.16 & 1.00$\pm$0.00 & 1.00$\pm$0.00 & 1.00$\pm$0.00 & 1.00$\pm$0.00 \\
Trimmed Mean & 39.74$\pm$0.42 & 36.55$\pm$0.23 & 33.23$\pm$0.39 & 29.17$\pm$0.78 & 29.18$\pm$0.20 & 25.91$\pm$1.21 & 21.63$\pm$0.44 & 18.29$\pm$1.27 \\
CClip & 40.33$\pm$0.47 & 37.12$\pm$0.06 & 34.60$\pm$0.36 & 30.74$\pm$0.19 & 30.46$\pm$0.35 & 27.31$\pm$0.33 & 24.19$\pm$0.46 & 1.00$\pm$0.00 \\
CClip-Bucketing & 40.90$\pm$0.21 & 37.86$\pm$0.19 & 33.95$\pm$0.33 & 30.25$\pm$0.27 & 30.87$\pm$0.11 & 27.33$\pm$0.37 & 23.71$\pm$0.41 & 1.00$\pm$0.00 \\
RFA & 40.79$\pm$0.71 & 33.73$\pm$0.88 & 1.00$\pm$0.00 & 1.00$\pm$0.00 & 35.61$\pm$0.93 & 31.66$\pm$0.76 & 29.11$\pm$0.45 & 1.00$\pm$0.00 \\
RFA-Bucketing & 39.01$\pm$0.52 & 33.61$\pm$0.47 & 1.00$\pm$0.00 & 1.00$\pm$0.00 & 35.42$\pm$0.30 & 31.23$\pm$1.18 & 27.46$\pm$0.60 & 1.00$\pm$0.00 \\
CwMed & 40.46$\pm$0.75 & 37.53$\pm$0.76 & 33.84$\pm$0.09 & 29.25$\pm$0.52 & 25.34$\pm$0.38 & 23.17$\pm$0.84 & 20.01$\pm$2.17 & 16.81$\pm$1.31 \\
Huber Aggregator & 40.23$\pm$0.32 & 37.29$\pm$0.01 & 34.21$\pm$0.10 & 29.75$\pm$0.19 & 29.94$\pm$0.34 & 26.63$\pm$0.15 & 22.93$\pm$0.38 & 1.00$\pm$0.00 \\
FedAVG & 39.43$\pm$0.35 & 34.02$\pm$0.19 & 29.07$\pm$0.69 & 23.64$\pm$0.64 & 29.14$\pm$0.94 & 25.04$\pm$0.85 & 18.95$\pm$0.53 & 1.00$\pm$0.00 \\
\hline
\multicolumn{9}{|c|}{\textbf{Attack: Lie Attack}} \\
\hline
FedLAW & 40.94$\pm$0.30 & 36.86$\pm$0.71 & 20.33$\pm$0.43 & 7.51$\pm$0.30 & 29.74$\pm$0.73 & 16.59$\pm$1.41 & 8.85$\pm$1.02 & 1.00$\pm$0.00 \\
Bulyan & 41.10$\pm$0.50 & 12.26$\pm$0.14 & 1.00$\pm$0.00 & 1.00$\pm$0.00 & 26.91$\pm$1.07 & 1.00$\pm$0.00 & 1.00$\pm$0.00 & 1.00$\pm$0.00 \\
Bulyan-Bucketing & 40.48$\pm$0.16 & 23.07$\pm$1.05 & 7.34$\pm$0.54 & 1.00$\pm$0.00 & 22.02$\pm$0.23 & 1.43$\pm$0.63 & 1.00$\pm$0.00 & 1.00$\pm$0.00 \\
Krum & 27.68$\pm$0.23 & 8.97$\pm$0.84 & 1.00$\pm$0.00 & 1.00$\pm$0.00 & 1.00$\pm$0.00 & 1.00$\pm$0.00 & 1.00$\pm$0.00 & 1.00$\pm$0.00 \\
Trimmed Mean & 42.44$\pm$0.22 & 30.50$\pm$0.32 & 8.07$\pm$0.52 & 1.00$\pm$0.01 & 30.18$\pm$0.57 & 9.63$\pm$1.14 & 1.00$\pm$0.00 & 1.00$\pm$0.00 \\
CClip & 42.35$\pm$0.37 & 37.21$\pm$7.22 & 23.07$\pm$8.44 & 9.34$\pm$0.81 & 32.16$\pm$2.08 & 23.89$\pm$7.91 & 14.42$\pm$0.59 & 1.00$\pm$0.00 \\
CClip-Bucketing & 41.84$\pm$0.72 & 37.00$\pm$6.36 & 22.59$\pm$8.34 & 5.80$\pm$4.22 & 31.90$\pm$1.49 & 23.35$\pm$8.43 & 9.50$\pm$7.36 & 1.00$\pm$0.00 \\
RFA & 41.58$\pm$0.32 & 30.81$\pm$0.74 & 16.88$\pm$0.31 & 1.00$\pm$0.00 & 29.69$\pm$0.47 & 15.70$\pm$1.05 & 1.24$\pm$0.24 & 1.00$\pm$0.00 \\
RFA-Bucketing & 42.52$\pm$0.49 & 31.32$\pm$0.39 & 17.09$\pm$0.65 & 1.01$\pm$0.02 & 30.34$\pm$0.73 & 15.96$\pm$0.13 & 1.32$\pm$0.53 & 1.00$\pm$0.00 \\
CwMed & 35.37$\pm$0.16 & 14.52$\pm$0.54 & 5.18$\pm$0.27 & 1.00$\pm$0.00 & 15.94$\pm$0.18 & 2.57$\pm$0.27 & 1.00$\pm$0.00 & 1.00$\pm$0.00 \\
Huber Aggregator & 42.33$\pm$0.44 & 28.39$\pm$0.32 & 11.96$\pm$0.38 & 1.00$\pm$0.01 & 31.60$\pm$0.84 & 17.91$\pm$3.04 & 1.03$\pm$0.02 & 1.00$\pm$0.00 \\
FedAVG & 43.08$\pm$0.28 & 40.85$\pm$0.12 & 28.25$\pm$0.58 & 9.01$\pm$0.94 & 32.46$\pm$0.33 & 29.16$\pm$0.78 & 16.06$\pm$0.64 & 1.00$\pm$0.00 \\
\hline
\end{longtable}
}
}

{
\begin{figure}[t]
\centering
\resizebox{0.7\linewidth}{!}{%
\begin{tikzpicture}
\begin{axis}[
    width=0.9\linewidth,
    height=0.55\linewidth,
    xlabel={Global round $k$},
    ylabel={Mean malicious weight},
    xmin=0, xmax=40,
    ymin=0, ymax=0.0052,
    legend pos=north east,
    legend style={font=\small},
    ticklabel style={font=\small},
    label style={font=\small},
    ytick distance=0.001,
    scaled y ticks=false,
    grid=both,
]

\addplot+[
    mark=*,
    thick,
    error bars/.cd,
      y dir=both,
      y explicit,
]
coordinates {
    (0,5e-3) +- (0,0)
    (1,4.88573902e-03) +- (0,4.22751429e-05)
    (10,2.72740429e-03) +- (0,1.74975203e-04)
    (15,1.01271689e-03) +- (0,3.74785686e-04)
    (20,1.31262114e-04) +- (0,6.56310568e-05)
    (22,2.23608288e-05) +- (0,1.11804144e-05)
    (25,0.00000000e+00) +- (0,0.00000000e+00)
    (28,0.00000000e+00) +- (0,0.00000000e+00)
    (30,0.00000000e+00) +- (0,0.00000000e+00)
    (35,0.00000000e+00) +- (0,0.00000000e+00)
    (40,0.00000000e+00) +- (0,0.00000000e+00)
};
\addlegendentry{$\rho = 0$}

\addplot+[
    mark=square*,
    thick,
    error bars/.cd,
      y dir=both,
      y explicit,
]
coordinates {
    (0,5e-3) +- (0,0)
    (1,4.89163413e-03) +- (0,4.33125471e-05)
    (10,3.34608892e-03) +- (0,1.80079425e-04)
    (15,1.66044820e-03) +- (0,5.26924211e-04)
    (20,3.02772946e-04) +- (0,7.52710687e-05)
    (22,4.49365912e-04) +- (0,1.12341384e-04)
    (25,9.85476687e-05) +- (0,4.92738343e-05)
    (28,1.96935132e-05) +- (0,9.84675660e-06)
    (30,1.69810965e-06) +- (0,8.49054825e-07)
    (35,0.00000000e+00) +- (0,0.00000000e+00)
    (40,0.00000000e+00) +- (0,0.00000000e+00)
};
\addlegendentry{$\rho = 0.1$}

\addplot+[
    mark=diamond*,
    very thick,
    error bars/.cd,
      y dir=both,
      y explicit,
]
coordinates {
    (0,5e-3) +- (0,0)
    (1,4.97752458e-03) +- (0,4.54572472e-05)
    (10,3.32660323e-03) +- (0,1.13644168e-04)
    (15,2.32011871e-03) +- (0,3.47170838e-04)
    (20,1.15343840e-03) +- (0,4.81719632e-04)
    (22,7.96512745e-04) +- (0,3.90236172e-04)
    (25,3.80773212e-04) +- (0,1.90386606e-04)
    (28,9.79823281e-05) +- (0,4.89911640e-05)
    (30,9.85596234e-06) +- (0,4.92798117e-06)
    (35,0.00000000e+00) +- (0,0.00000000e+00)
    (40,0.00000000e+00) +- (0,0.00000000e+00)
};
\addlegendentry{$\rho = 0.3$}

\addplot+[
    mark=triangle*,
    thick,
    error bars/.cd,
      y dir=both,
      y explicit,
]
coordinates {
    (0,5e-3) +- (0,0)
    (1,5.07841519e-03) +- (0,5.49725164e-05)
    (10,4.70332358e-03) +- (0,5.16216478e-04)
    (15,3.49813748e-03) +- (0,1.10658681e-03)
    (20,1.45412524e-03) +- (0,1.45412524e-03)
    (22,9.86958942e-04) +- (0,9.86958942e-04)
    (25,2.98673946e-04) +- (0,2.98673946e-04)
    (28,0.00000000e+00) +- (0,0.00000000e+00)
    (30,0.00000000e+00) +- (0,0.00000000e+00)
    (35,0.00000000e+00) +- (0,0.00000000e+00)
    (40,0.00000000e+00) +- (0,0.00000000e+00)
};
\addlegendentry{$\rho = 1$}

\end{axis}
\end{tikzpicture}
}
\caption{Evolution of the average malicious aggregation weight under the mimic--misaligned attack on MNIST with a three-layer network, 200 clients, $q = 0.9$, and $40\%$ malicious clients, for different values of the loss under-reporting parameter $\rho$. Curves show the mean over five runs, with error bars indicating $\pm$ one standard deviation. Increasing $\rho$ raises malicious weights slightly in the early rounds but only mildly delays their collapse to zero.}
\label{fig:mimic_misaligned_weights}
\end{figure}

\subsection{Empirical study of mimic--misaligned attack}
\label{app:mimic-misaligned}

We evaluate FedLAW under a model-poisoning ``mimic--misaligned'' attack on the MNIST dataset using a three-layer network, with $200$ clients, highly non-IID data ($q = 0.9$), and a malicious fraction of $0.4$. For each malicious client $j$, we replace its update by $-g_j + \eta u_j$, where $u_j$ is a random unit vector and $\eta = 0.1$, and we under-report its loss as $\hat{f}_j = \mu_f - \rho \sigma_f$ with $\rho \in \{0, 0.1, 0.3, 1\}$, where $\mu_f$ and $\sigma_f$ denote the mean and standard deviation of the honest clients' losses. We restrict to $\rho \le 1$, which keeps the cheated losses within one standard deviation of the honest mean, and thus in a regime where an attacker remains plausibly stealthy. Larger values (e.g., $\rho \ge 1.5$ or $2$) correspond to losses more than $1.5$--$2$ standard deviations below the mean and can be easily filtered by a simple $z$-score or MAD-based preprocessing at the server. All other training hyperparameters follow the configuration in Table~\ref{tab:best_hparams}.

Figure~\ref{fig:mimic_misaligned_weights} shows the evolution of the mean malicious aggregation weight over the global rounds. In all cases, the mean weight rapidly decays from the initial uniform value (approximately $5 \times 10^{-3}$) to below $10^{-4}$ and then remains essentially at zero. Increasing $\rho$ raises malicious weights slightly in the early rounds and only delays this collapse by a few rounds (from roughly round $22$ for $\rho = 0$ to about round $28$ for $\rho = 0.3$ and $\rho = 1$), but does not prevent the malicious weights from converging to zero. This is consistent with the Layer~2 analysis: mild loss under-reporting can temporarily compensate for the misalignment penalty, but as the weights adapt and reinforce misaligned clients' contribution to the aggregated gradient, their misalignment signal grows and their weights are eventually driven to zero.

}


\end{document}